\documentclass[11pt]{article}
\usepackage{fullpage}
\usepackage{amsmath,amsthm,amsfonts,amssymb,dsfont,bm}
\usepackage{enumerate,color,xcolor}
\usepackage[colorlinks,linkcolor=blue,citecolor=orange]{hyperref}

\usepackage{url}
\usepackage{graphicx}
\usepackage{array,longtable,booktabs}
\usepackage{subfigure}
\usepackage{placeins}

\usepackage[nameinlink,capitalise,noabbrev]{cleveref}

\numberwithin{equation}{section}
\theoremstyle{plain}

\newtheorem{theorem}{Theorem}
\newtheorem{corollary}[theorem]{Corollary}

\newtheorem{lemma}[theorem]{Lemma}
\newtheorem{proposition}[theorem]{Proposition}
\newtheorem{assumption}[theorem]{Assumption}

\newcommand{\ep}{\varepsilon}

\title{Penalized Nonreversible Langevin for Constrained Sampling}
\author{Pervez Ali\thanks{Department of Mathematics, Florida State University, Tallahassee, Florida, United States of America, \texttt{pa22g@fsu.edu}}
\and Weihao Dong\thanks{Hong Kong University of Science and Technology (Guangzhou), Guangzhou, Guangdong, People's Republic of China, \texttt{wdong201@connect.hkust-gz.edu.cn}} \and Xiaoyu Wang\thanks{FinTech Thrust, Hong Kong University of Science and Technology (Guangzhou), Guangzhou, Guangdong, People's Republic of China, \texttt{xiaoyuwang@hkust-gz.edu.cn}} }
\date{\today}

\begin{document}

\maketitle

\begin{abstract}
We propose penalized nonreversible Langevin algorithms for sampling from $\pi(x)\propto e^{-f(x)}\mathbf 1_{\mathcal C}(x)$, where $\mathcal C\subset\mathbb R^d$ is a compact convex set. The algorithms combine a squared distance penalty with constant or compatible state dependent skew symmetric perturbations that preserve the penalized Gibbs distribution. For smooth, possibly nonconvex $f$, we derive nonasymptotic total variation bounds for the full gradient algorithm under a log Sobolev inequality. When unbiased stochastic gradients are available, we establish $2$-Wasserstein bounds under global contraction and Lipschitz conditions on the full drift in an adapted quadratic metric. For a fixed penalty parameter, the error relative to the penalized Gibbs distribution decays exponentially to an $\mathcal{O}(\sqrt{\eta})$ neighborhood, where $\eta$ is the stepsize. We also bound the discrepancy between the penalized Gibbs distribution and the constrained target. In a two dimensional quadratic model, we establish nonreversible acceleration by tuning the skew perturbation to the curvature imbalance induced by penalization. With the target accuracy and smaller curvature fixed and initial Wasserstein distances uniformly bounded, tuning the skew perturbation improves the sufficient Euler iteration bound from linear to logarithmic in the curvature ratio. Numerical experiments evaluate the algorithms on constrained Bayesian regression, classification, neural networks, and truncated sampling, and examine the acceleration mechanism in a stochastic quadratic model.
\end{abstract}

\section{Introduction}

We consider the problem of sampling from a probability distribution on a constrained domain $\mathcal C\subset\mathbb R^d$ with density
\begin{equation}
\pi(x)\propto e^{-f(x)}\mathbf 1_{\mathcal C}(x).
\label{eq:intro:constrained-target}
\end{equation}
Such distributions arise in Bayesian inference, inverse problems, and statistical learning when prior information or feasibility requirements restrict the parameter space \cite{gelman1995bayesian,stuart2010inverse,andrieu2003introduction}. On $\mathbb R^d$, Langevin diffusions and their discretizations provide a standard method for sampling smooth target distributions \cite{roberts1996exponential}. When $f$ is a finite sum, stochastic gradient Langevin dynamics replaces the full gradient by a minibatch estimate and makes Langevin sampling applicable to large data sets \cite{welling2011bayesian,teh2016consistency,dalalyan2019user,raginsky2017non,xu2018global}. Related Langevin methods include regime switching algorithms with nonasymptotic Wasserstein guarantees~\cite{wang2025regime}, and Hessian-free high-resolution dynamics for non-log-concave targets~\cite{wang2026hessianfree}, with further contraction and discretization bounds in~\cite{lv2026improved}. The constraint in~\eqref{eq:intro:constrained-target}, however, prevents a direct application of unconstrained Langevin algorithms.
Several approaches have been developed for constrained sampling. Projected Langevin algorithms return every proposal to the constraint set \cite{bubeck2018projected}, and projected stochastic gradient variants permit nonconvex objectives and noisy gradients \cite{Lamperski2021,zheng2022constrained}. Proximal algorithms replace the indicator of the constraint set by a tractable proximal regularization \cite{brosse2017proximal,SR2020}, while mirror Langevin algorithms encode the constraint geometry through a mirror map \cite{hsieh2018mirrored,Chewi2020,Zhang2020,TaoMirror2021,Ahn2021}. Reflected Langevin dynamics instead enforces the constraint through a boundary local time. A nonreversible extension based on skew reflection and skew projection was recently studied in~\cite{DFTWZ2025}. These methods treat the boundary directly, but their implementations require a projection, a proximal map, a mirror map, or a discretization of the reflection mechanism.

Penalization offers a different route. The squared distance penalty replaces the hard constraint by the smooth potential $V_\delta=f+\operatorname{dist}(\cdot,\mathcal C)^2/\delta$ on $\mathbb R^d$. The corresponding Gibbs law $\pi_\delta$ approaches $\pi$ as $\delta\rightarrow0$, and the resulting Langevin iterates evolve on $\mathbb R^d$, with the projection entering through the penalty gradient $\nabla S(x)=2(x-\mathcal P_{\mathcal C}x)$. Gurbuzbalaban, Hu, and Zhu~\cite{gurbuzbalaban2024penalized} developed this approach for overdamped and underdamped Langevin algorithms and proved quantitative penalty approximation and iteration bounds. The penalty also creates a stiffness tradeoff. Decreasing $\delta$ improves the approximation of $\pi$ but increases the smoothness constant of $V_\delta$, so the discretization step must be reduced. Moreover, the penalized chain evolves on all of $\mathbb R^d$. Compactness of $\mathcal C$ therefore does not itself control its moments, and the analysis must obtain these bounds from the penalized potential.

Nonreversible Langevin dynamics provide a second mechanism for improving a sampler. A skew symmetric perturbation of the drift can preserve the Gibbs law while breaking detailed balance. For Gaussian diffusions, suitable skew matrices improve the spectral convergence rate \cite{hwang1993accelerating,HHS05,lelievre2013optimal,WHC2014}. More generally, nonreversible perturbations have been studied through convergence to equilibrium, asymptotic variance, large deviations, discretization, and stochastic gradient methods~\cite{reybellet2015irreversible,duncan2016variance,ma2015complete,hu2020non}. For stochastic gradient discretizations, Ni et al.~\cite{ni2026variance} establish a small stepsize central limit theorem and give conditions under which an antisymmetric perturbation reduces the leading fluctuation variance of empirical averages. Related recent developments include a large deviations approach to accelerating constrained sampling and generalized EXTRA stochastic gradient Langevin dynamics~\cite{wang2026accelerating,gurbuzbalaban2024generalized}. These results do not by themselves provide a constrained sampler based on penalization. In particular, an improvement of a continuous time rate need not improve the iteration complexity, because the same skew perturbation also changes the discretization constants and the admissible stepsize.

In this paper, we combine squared distance penalization with the drift
$$
-(I+J(x))\nabla V_\delta(x).
$$
The matrix $J(x)$ may be constant or state dependent. For bounded Lipschitz $J$, skew symmetry and the compatibility condition $\nabla\cdot(J\nabla V_\delta)=0$ ensure that the perturbation preserves $\pi_\delta$. We analyze both full and stochastic gradient discretizations. The proof keeps three errors separate: convergence of the algorithm to $\pi_\delta$, approximation of $\pi$ by $\pi_\delta$, and stochastic gradient noise. This separation makes the dependence on $\delta$, the stepsize, and the matrix $J$ explicit.

Our contributions are as follows.

\begin{itemize}
\item We establish invariance of $\pi_\delta$ for bounded state dependent skew matrices that are Lipschitz and satisfy the compatibility condition. For the full gradient algorithm, an entropy calculation gives nonasymptotic total variation bounds under a log Sobolev inequality. Combining this estimate with the target approximation results of~\cite{gurbuzbalaban2024penalized} yields a guarantee for the constrained law $\pi$.

\item For stochastic gradients, under the global drift contraction condition, we prove Wasserstein bounds in the strongly convex case and in the dissipative nonconvex case. The estimates use uniform moments of the iterates and of $\pi_\delta$ to control the Euler remainder and the stochastic gradient error. They cover constant and state dependent matrices, with the additional fourth moment terms displayed in the state dependent case.

\item We formulate contraction in an adapted $P_J$ metric. At a critical point where $\nabla V_\delta$ is differentiable with positive definite derivative, the matrix $P_J$ is constructed from the linearized drift $(I+J(x_*))\nabla^2V_\delta(x_*)$. A strict improvement of its spectral rate over the reversible curvature yields a faster local contraction. To obtain a global Wasserstein bound, the same metric is required to control $B_J(x)-B_J(y)$ for every $x,y\in\mathbb R^d$.

\item For a quadratic Hessian block in the small $\delta$ regime, we make the acceleration comparison explicit. Every fixed $a\neq0$ raises both the slow spectral rate and the leading mixing term at a common admissible stepsize by the factor $1+a^2$. For fixed target accuracy and polynomially controlled initial distances, the tuned choice $a_\delta=\Theta(\Lambda_\delta^{1/2})$ gives a sufficient bound logarithmic in $\Lambda_\delta/\lambda$. Pairing Hessian eigenvectors extends the construction to high dimensional block matrices. The same real two dimensional representation applies to skew circulant matrices when their Fourier planes are invariant under the Hessian.

\end{itemize}

The rest of the paper is organized as follows. Section~\ref{sec:full:grad} introduces the full gradient penalized nonreversible Langevin algorithm and proves its total variation guarantees. Section~\ref{sec:stochastic:grad} studies stochastic gradient algorithms, constructs the $P_J$ metric, and gives the strongly convex and nonconvex Wasserstein analyses. The appendices contain the supporting proofs and constant calculations.

We consider the penalized target distribution:
\begin{equation}
\label{eq:penalized:target}
\pi_{\delta} \propto \exp\left(-f(x) - \frac{1}{\delta}S(x)\right), \qquad x \in \mathbb{R}^d,
\end{equation}
for sufficiently small $\delta>0$. The penalty $S$ satisfies the following assumptions.
\begin{assumption}
\label{assumption:S}
Assume that $S(x) = 0$ for any $x \in \mathcal{C}$ and $S(x) > 0$ for any $x \not\in \mathcal{C}$.
\end{assumption}

\begin{assumption}
\label{assumption:C}
Assume that $\mathcal{C}$ is a convex body, i.e., $\mathcal{C}$ is a compact convex set, contains an open ball centered at 0 with radius $r>0$, and is contained in a Euclidean ball centered at 0 with radius $R>0$.
\end{assumption}
Define the distance to the constraint set by
\begin{equation}
\delta_{\mathcal{C}}(x) = \mathrm{distance}(x, \mathcal{C}) := \min_{c \in \mathcal{C}}\|x - c\|, \qquad\text{for $x \in \mathbb{R}^d$.}
\end{equation}
We use the following projection identity for the squared distance penalty.
\begin{lemma}
\label{lemma:squared:distance:penalty}
\cite[Lemma 2.6]{gurbuzbalaban2024penalized}. If $\mathcal{C}$ is convex, then $S(x)=\delta_{\mathcal{C}}(x)^2$ is convex and continuously differentiable on $\mathbb{R}^d$, and its gradient is Lipschitz with constant $\ell=4$. Moreover, $\nabla S(x)=2\left(x-\mathcal{P}_{\mathcal{C}}(x)\right)$, where ${\mathcal{P}}_{\mathcal{C}}(x):=\arg\min_{c\in\mathcal{C}}\|x-c\|$ is the Euclidean projection onto $\mathcal C$.
\end{lemma}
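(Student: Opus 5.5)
We prove the lemma from the basic properties of the Euclidean projection onto the closed convex set $\mathcal C$: $\mathcal P_{\mathcal C}$ is well defined, and for all $x$ and all $c\in\mathcal C$ the variational inequality $\langle x-\mathcal P_{\mathcal C}x,\,c-\mathcal P_{\mathcal C}x\rangle\le 0$ holds. A consequence we will use is firm nonexpansiveness,
$$
\|\mathcal P_{\mathcal C}x-\mathcal P_{\mathcal C}y\|^2\le\langle \mathcal P_{\mathcal C}x-\mathcal P_{\mathcal C}y,\,x-y\rangle .
$$

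\textbf{Convexity.} The function $(x,c)\mapsto\|x-c\|$ is jointly convex on $\mathbb R^d\times\mathcal C$. Partial minimization over the convex set $\mathcal C$ preserves convexity, so $\delta_{\mathcal C}$ is convex. Since $\delta_{\mathcal C}\ge0$ and $t\mapsto t^2$ is convex and nondecreasing on $[0,\infty)$, the composition $S=\delta_{\mathcal C}^2$ is convex.

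\textbf{Differentiability and the gradient formula.} Fix $x$, set $p=\mathcal P_{\mathcal C}x$, and let $g=2(x-p)$.

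For the upper bound, use that $p$ is admissible for $S(x+h)$:
$$
S(x+h)\le\|x+h-p\|^2=S(x)+\langle g,h\rangle+\|h\|^2 .
$$

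For the lower bound, set $q=\mathcal P_{\mathcal C}(x+h)$. Then $S(x)\le\|x-q\|^2$, which gives
$$
S(x+h)-S(x)\ge\|x+h-q\|^2-\|x-q\|^2=\langle 2(x-q),h\rangle+\|h\|^2 .
$$
Now write $2(x-q)=g+2(p-q)$. Nonexpansiveness gives $\|p-q\|\le\|h\|$, so the lower bound is at least $\langle g,h\rangle-\|h\|^2$.

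Together, $|S(x+h)-S(x)-\langle g,h\rangle|\le\|h\|^2$. Hence $S$ is differentiable with $\nabla S(x)=2(x-\mathcal P_{\mathcal C}x)$. Continuity of $\nabla S$ follows from the Lipschitz bound proved next.

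\textbf{Lipschitz constant.} The gradient difference is
$$
\nabla S(x)-\nabla S(y)=2\bigl[(x-y)-(\mathcal P_{\mathcal C}x-\mathcal P_{\mathcal C}y)\bigr].
$$
The triangle inequality and nonexpansiveness give $\|\nabla S(x)-\nabla S(y)\|\le 2(\|x-y\|+\|x-y\|)=4\|x-y\|$, so $\ell=4$. A sharper argument is available: $I-\mathcal P_{\mathcal C}$ is also firmly nonexpansive, which yields the constant $2$. The stated $\ell=4$ is therefore a valid, if not tight, bound.

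\textbf{Main obstacle.} The only delicate step is the lower bound in the differentiability argument. It needs control of $\mathcal P_{\mathcal C}(x+h)$ relative to $\mathcal P_{\mathcal C}x$, and this is exactly what nonexpansiveness of the projection provides. We will derive nonexpansiveness by adding the variational inequalities for $x$ and $y$ (each tested against the other's projection) and then applying Cauchy–Schwarz.
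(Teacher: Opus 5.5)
Your proof is correct and consistent with the paper, which does not prove this lemma itself but cites \cite[Lemma~2.6]{gurbuzbalaban2024penalized}. Where the paper re-derives the Lipschitz constant (proof of Lemma~\ref{lemma:dissipative:S}), it uses exactly your bound $2\|x-y\|+2\|\mathcal P_{\mathcal C}x-\mathcal P_{\mathcal C}y\|\leq4\|x-y\|$, and in Section~\ref{sec:numerical} it makes your remark that nonexpansiveness of $I-\mathcal P_{\mathcal C}$ sharpens the constant to $2$. Your two-sided squeeze for the gradient formula makes self-contained what the paper outsources. The closedness and nonemptiness of $\mathcal C$ that you assume are what the $\min$ and $\arg\min$ in the statement presuppose and what Assumption~\ref{assumption:C} provides.
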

The corresponding target approximation estimate is the following result.
\begin{theorem}
\cite[Theorem 2.7]{gurbuzbalaban2024penalized}.
\label{thm:dist:perturb}
Let $S(x)=\delta_{\mathcal C}(x)^2$. Suppose Assumptions \ref{assumption:S} and \ref{assumption:C} hold. Moreover, we assume that $f$ is continuous and $e^{-f}$ is integrable over $\mathcal{C}$ and that there exist $\hat{\alpha}>0$ and $\hat{x} \in \mathbb{R}^d$ such that $\int_{\mathbb{R}^d} e^{\hat{\alpha}\|x-\hat{x}\|^2} e^{-\frac{S(x)}{\delta}-f(x)} d x<\infty$. Then, as $\delta \rightarrow 0$,
\begin{equation}
\mathcal{W}_2\left(\pi_\delta, \pi\right) \leq \mathcal{O}\left(\delta^{1 / 8}(\log (1 / \delta))^{1 / 8}\right) .
\end{equation}
\end{theorem}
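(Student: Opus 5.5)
The plan is to bound $\mathcal W_2(\pi_\delta,\pi)$ by a weighted total variation argument, splitting the penalized law into its part on $\mathcal C$ and its part outside. Write $Z=\int_{\mathcal C}e^{-f}$ and $Z_\delta=\int_{\mathbb R^d}e^{-f-S/\delta}$. Since $S=0$ on $\mathcal C$ by Assumption~\ref{assumption:S}, we have $Z_\delta=Z+\int_{\mathcal C^c}e^{-f-S/\delta}$, and $\pi_\delta$ restricted to $\mathcal C$ equals $(Z/Z_\delta)\pi$. So $\pi_\delta$ is a mixture: with probability $p_\delta:=1-Z/Z_\delta$ it draws from the normalized outside part $\nu_\delta$, and otherwise it draws exactly from $\pi$. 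We then couple as follows. On the event that the $\pi_\delta$-sample comes from $\pi$, take the two samples identical. On the complementary event, pair a draw from $\nu_\delta$ with an independent draw from $\pi$. This gives
\begin{equation*}
\mathcal W_2^2(\pi_\delta,\pi)\le p_\delta\int\!\!\int\|x-y\|^2\,\nu_\delta(dx)\pi(dy)\le 2p_\delta\Big(R^2+\int\|x\|^2\nu_\delta(dx)\Big).
\end{equation*}
This uses Assumption~\ref{assumption:C} to bound $\|y\|\le R$ on $\mathcal C$.

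The next task is to show that $p_\delta$ is small. We have $p_\delta\le Z^{-1}\int_{\mathcal C^c}e^{-f-S/\delta}$. Split $\mathcal C^c$ into a thin shell $\{0<\delta_{\mathcal C}(x)\le\rho\}$ and its complement $\{\delta_{\mathcal C}>\rho\}$.

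On the far region we use $e^{-S/\delta}\le e^{-\rho^2/(2\delta)}e^{-S/(2\delta)}$. The remaining integral $\int e^{-f-S/(2\delta)}$ we bound via the exponential moment hypothesis. Using $e^{-S/(2\delta)}=e^{S/(2\delta)}e^{-S/\delta}$ is awkward, so instead we fix a reference $\delta_0$ at which the hypothesis holds. For $\delta\le\delta_0/2$, since $S/\delta-S/\delta_0\ge S/(2\delta)$, we get $\int_{\{\delta_{\mathcal C}>\rho\}}e^{-f-S/\delta}\le e^{-\rho^2/(2\delta)}\int e^{-f-S/\delta_0}$. Here $e^{\hat\alpha\|x-\hat x\|^2}\ge1$, so the last integral is finite.

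On the shell, $f$ is continuous, so it is bounded below near the compact set $\mathcal C$. Since $\mathcal C\subset B_R$, the shell has Lebesgue volume $\mathcal O(\rho)$ by a Steiner-formula bound for convex bodies. This gives a contribution $\mathcal O(\rho)$.

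Choosing $\rho=\sqrt{c\,\delta\log(1/\delta)}$ balances the two terms and yields $p_\delta=\mathcal O(\sqrt{\delta\log(1/\delta)})$.

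It remains to bound the second moment $\int\|x\|^2\nu_\delta(dx)$. Write it as $(p_\delta Z_\delta)^{-1}\int_{\mathcal C^c}\|x\|^2e^{-f-S/\delta}$, which combines with the prefactor $p_\delta$ to give $Z_\delta^{-1}\int_{\mathcal C^c}\|x\|^2e^{-f-S/\delta}$. We control this by the same shell/far splitting. On the shell, $\|x\|\le R+\rho$. On the far region, we use $\|x\|^2\le C_{\hat\alpha}e^{\hat\alpha\|x-\hat x\|^2}$ together with the integrability hypothesis, again transferring from $\delta$ to $\delta_0$ as above. The outcome is $\mathcal W_2^2(\pi_\delta,\pi)=\mathcal O(\sqrt{\delta\log(1/\delta)})$. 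This would give $\mathcal W_2=\mathcal O((\delta\log(1/\delta))^{1/4})$, which is stronger than the stated $\delta^{1/8}(\log(1/\delta))^{1/8}$ and so certainly implies it.

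The main obstacle I expect is the \emph{uniformity in $\delta$} of the integrability hypothesis. The statement assumes finiteness for the given $\delta$, and the argument must quietly use it at a fixed $\delta_0$ and then exploit monotonicity of $e^{-S/\delta}$ in $\delta$. A second delicate point is the lower bound on $f$ near $\mathcal C$, which requires $f$ continuous on a neighborhood and not just on $\mathcal C$.

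If the direct coupling approach fails, the fallback is to follow the cited route. That route bounds the total variation distance (or KL divergence) between $\pi_\delta$ and $\pi$ and then converts it to $\mathcal W_2$ via a weighted-TV inequality of the form $\mathcal W_2\le C\,(\|\cdot\|_{TV})^{1/4}$ using the exponential moments. Such an inequality naturally produces the weaker exponent $1/8$ appearing in the statement.
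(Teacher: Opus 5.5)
Your proposal is correct, and its main line is a genuinely different route from the one behind the stated rate. Note that the paper does not prove this theorem; it imports it verbatim from \cite[Theorem 2.7]{gurbuzbalaban2024penalized}. The exponent $1/8$, together with the exponential-moment hypothesis on $\hat\alpha,\hat x$, points to the route you list as a fallback. There one first proves $\mathrm{TV}(\pi_\delta,\pi)=\mathcal O(\sqrt{\delta\log(1/\delta)})$, essentially by your shell/far splitting. One then applies a generic weighted total variation inequality of the form $\mathcal W_2\lesssim \mathrm{TV}^{1/2}+\mathrm{TV}^{1/4}$, whose constant is controlled by $\int e^{\hat\alpha\|x-\hat x\|^2}\,d\pi_\delta$, and this conversion is where the fourth root is lost.

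You replace that conversion with a structural observation. Since $\pi_\delta|_{\mathcal C}=(Z/Z_\delta)\pi$, you can write $\pi_\delta=(1-p_\delta)\pi+p_\delta\nu_\delta$, so coupling the common part by the identity costs nothing. This gives $\mathcal W_2^2\le 2p_\delta R^2+2Z_\delta^{-1}\int_{\mathcal C^c}\|x\|^2e^{-f-S/\delta}$. Thus $\mathcal W_2^2$ is of the order of a second-moment-weighted total variation, not its fourth power, and you get $\mathcal W_2=\mathcal O((\delta\log(1/\delta))^{1/4})$. That bound is stronger than, and implies, the stated one.

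The individual steps check out:
\begin{itemize}
\item The mixture coupling has the correct marginals.
\item The transfer to a fixed $\delta_0$ via $S/\delta-S/\delta_0\ge S/(2\delta)$ for $\delta\le\delta_0/2$ is valid. It fully resolves the uniformity worry you raise, because $e^{-S/\delta}$ decreases as $\delta\downarrow0$.
\item Continuity of $f$ on $\mathbb R^d$ gives the lower bound on the compact neighborhood $\{\delta_{\mathcal C}\le\rho_0\}$.
\item The shell volume is $\mathcal O(\rho)$. For example, since $\overline{B(0,r)}\subseteq\mathcal C$, convexity gives $\mathcal C+t\overline{B(0,1)}\subseteq(1+t/r)\mathcal C$, which is the same device used in the proof of Lemma~\ref{lemma:ghz:transfer}.
\end{itemize}

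The cited route is modular: any total variation bound plus an exponential moment gives a $\mathcal W_2$ bound, but it pays a fourth root for that generality. Your route exploits the fact that $\pi_\delta$ and $\pi$ share a common component. It needs only $\int\|x\|^2e^{-f-S/\delta_0}<\infty$ at a single $\delta_0$, not an exponential moment. If you replace the sharp cutoff at $\rho$ by the layer-cake integration in the proof of Lemma~\ref{lemma:ghz:transfer}, the near-boundary mass becomes $\mathcal O(\sqrt\delta)$. The logarithm then disappears and you get $\mathcal W_2=\mathcal O(\delta^{1/4})$.
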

We use $S(x)=\operatorname{dist}(x,\mathcal C)^2$ throughout, except where another penalty is explicitly specified. We first study the deterministic gradient penalized Langevin algorithm when $f$ is nonconvex.
\begin{assumption}
\label{assumption:smooth}
Assume that $f$ has an $L$ Lipschitz gradient, that is, $\|\nabla f(x)-\nabla f(y)\|\leq L\|x-y\|$ for all $x,y\in\mathbb{R}^d$.
\end{assumption}
For $0<\delta<(L+1/2)^{-1}$, Lemmas~\ref{lemma:dissipative} and~\ref{lemma:uniform:bound} give $m_\delta>0$ and
$$
V_\delta(x)\geq\frac{m_\delta}{4}\|x\|^2-c_\delta,\qquad \int_{\mathbb R^d}e^{m_\delta\|x\|^2/8-V_\delta(x)}dx\leq e^{c_\delta}\left(\frac{8\pi}{m_\delta}\right)^{d/2}<\infty.
$$
Thus $Z_\delta<\infty$ and the exponential integrability hypothesis of Theorem~\ref{thm:dist:perturb} holds. We propose the penalized nonreversible Langevin SDE:
\begin{equation}
\label{eq:sde}
dX(t) = -\left(I+J(X(t))\right)\left(\nabla f(X(t)) + \frac{1}{\delta}\nabla S(X(t))\right)dt + \sqrt{2}dW_t,
\end{equation}
where $J(x)^{\top}=-J(x)$. The constant skew case is recovered by taking $J(x)\equiv J$. Compactness of $\mathcal C$ does not directly control the penalized process, which evolves on $\mathbb R^d$ without projection. Instead, for sufficiently small $\delta$, the penalized potential $V_\delta=f+S/\delta$ is dissipative without imposing a separate dissipativity assumption on $f$. This implication is proved in Lemmas~\ref{lemma:dissipative:S} and~\ref{lemma:dissipative}, with the constants used below, and corresponds to \cite[Lemmas~C.1 and C.2]{gurbuzbalaban2024penalized}. The nonreversible analysis requires additional conditions beyond this penalty argument. In the Wasserstein results, Assumption~\ref{assumption:PJ:full:drift} requires the full drift $B_J=(I+J)\nabla V_\delta$ to satisfy pairwise contraction and Lipschitz bounds in a fixed metric for every $x,y\in\mathbb R^d$. This condition is stronger than dissipativity of $V_\delta$ and does not follow from smoothness of $f$ and compactness of $\mathcal C$. Moreover, when $J$ is state dependent, controlling the term $(J(x)-J(y))\nabla V_\delta(y)$ in the local discretization error requires fourth moment bounds for the penalized target and the Euler iterates. Lemma~\ref{lemma:uniform:bound} establishes these bounds. Neither this fourth moment analysis nor the global pairwise condition on the full drift is required in the corresponding reversible analysis of~\cite{gurbuzbalaban2024penalized}. We first establish invariance of the penalized target.
\begin{lemma}
\label{lemma:pi}
Let $V_\delta=f+S/\delta\in C^1(\mathbb R^d)$ have globally Lipschitz gradient and satisfy $Z_\delta:=\int_{\mathbb R^d}e^{-V_\delta(x)}dx<\infty$. Suppose that $J:\mathbb R^d\to\mathbb R^{d\times d}$ is bounded and Lipschitz in the operator norm, $J(x)^\top=-J(x)$ for every $x$, and
\begin{equation}
\label{eq:J:compatibility}
\nabla\cdot\left(J\nabla V_\delta\right)=0\quad \text{in distributions on }\mathbb R^d.
\end{equation}
Then~\eqref{eq:sde} has a unique solution and admits $\pi_\delta(dx)=Z_\delta^{-1}e^{-V_\delta(x)}dx$ as an invariant distribution.
\end{lemma}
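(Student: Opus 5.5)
Well-posedness comes from globally Lipschitz coefficients; invariance from a weak stationarity check of $\mathcal L^*\pi_\delta=0$ on test functions, where the only nontrivial term is handled by the compatibility condition~\eqref{eq:J:compatibility}.

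\textbf{Existence and uniqueness.} The drift is $b(x)=-(I+J(x))\nabla V_\delta(x)$. Since $\nabla V_\delta$ is globally Lipschitz it has linear growth, and
\[
J(x)\nabla V_\delta(x)-J(y)\nabla V_\delta(y)=J(x)\bigl(\nabla V_\delta(x)-\nabla V_\delta(y)\bigr)+\bigl(J(x)-J(y)\bigr)\nabla V_\delta(y).
\]
The first term is bounded by $\sup\|J\|\,\mathrm{Lip}(\nabla V_\delta)\|x-y\|$. The second is only bounded by $\mathrm{Lip}(J)\|x-y\|(\|\nabla V_\delta(0)\|+\ell_\delta\|y\|)$, so $b$ is only locally Lipschitz. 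It still has linear growth, because $J$ is bounded. The standard theorem for locally Lipschitz coefficients with linear growth, or a localization argument plus the Lyapunov bound $\mathbb E\|X_t\|^2<\infty$ from linear growth, gives a unique nonexploding strong solution. This solution is a Feller diffusion with generator $\mathcal L\varphi=-\langle (I+J)\nabla V_\delta,\nabla\varphi\rangle+\Delta\varphi$ acting on $\varphi\in C_c^\infty$.

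\textbf{Stationarity.} Write $\rho=e^{-V_\delta}$. It suffices to show $\int\mathcal L\varphi\,\rho\,dx=0$ for all $\varphi\in C_c^\infty$, which I split into a reversible part and a skew part.

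For the reversible part, integrating by parts (valid since $\rho\in C^1$ and $\varphi$ has compact support) gives
\[
\int(\Delta\varphi-\langle\nabla V_\delta,\nabla\varphi\rangle)\rho\,dx=-\int\langle\nabla\varphi,\nabla\rho\rangle\,dx-\int\langle\nabla V_\delta,\nabla\varphi\rangle\rho\,dx=0,
\]
because $\nabla\rho=-\rho\nabla V_\delta$.

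For the skew part I must show $\int\langle J\nabla V_\delta,\nabla\varphi\rangle\rho\,dx=0$. Since $\varphi\rho$ is not smooth, I use the test function $\psi=\varphi\rho\in C^1_c$. Then
\[
\nabla\psi=\rho\nabla\varphi-\varphi\rho\nabla V_\delta,
\]
and skew symmetry gives $\langle J\nabla V_\delta,\nabla V_\delta\rangle=0$. Hence
\[
\langle J\nabla V_\delta,\nabla\psi\rangle=\rho\langle J\nabla V_\delta,\nabla\varphi\rangle,
\]
so the skew integral equals $\int\langle J\nabla V_\delta,\nabla\psi\rangle\,dx$. The distributional condition~\eqref{eq:J:compatibility} makes this vanish for $\psi\in C_c^\infty$. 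It extends to $\psi\in C^1_c$ by mollification: $J\nabla V_\delta$ is continuous, hence locally integrable, and $\nabla(\psi*\eta_\epsilon)\to\nabla\psi$ uniformly on a fixed compact set. This density step is the main technical point.

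\textbf{From stationarity to invariance.} The identity $\int\mathcal L\varphi\,d\pi_\delta=0$ on $C_c^\infty$ implies invariance of $\pi_\delta$ for the semigroup once uniqueness holds. I would argue via Echeverría's theorem: uniqueness of the martingale problem follows from pathwise uniqueness, and coefficients with locally Lipschitz, linear growth make $C_c^\infty$ a core in the relevant sense. A more hands-on alternative is Itô's formula: start $X_0\sim\pi_\delta$ and show $\frac{d}{dt}\mathbb E\varphi(X_t)=\mathbb E\mathcal L\varphi(X_t)$. Then compare the law of $X_t$ with $\pi_\delta$ through the adjoint Fokker--Planck equation, whose uniqueness in the class of probability measure solutions holds under these coefficient conditions (Bogachev--Röckner--Shaposhnikov).

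I expect the main obstacle to be this passage from the infinitesimal identity to genuine invariance, rather than the algebra. I would cite Echeverría's theorem, as in Ethier--Kurtz Thm.~4.9.17, with $C_c^\infty$ as domain.
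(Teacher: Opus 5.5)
Your argument is correct. Its first two steps match the paper's: the paper also observes that $B_J$ is only Lipschitz on balls, with linear growth, via the same splitting you wrote. It then verifies $\mathcal L^*\rho_\delta=0$ through the weak product rule $\nabla\cdot(\rho_\delta J\nabla V_\delta)=(\nabla\rho_\delta)^\top J\nabla V_\delta+\rho_\delta\nabla\cdot(J\nabla V_\delta)$, together with skew symmetry and \eqref{eq:J:compatibility}. Your test function $\psi=\varphi\rho\in C^1_c$ plus mollification is the same computation. It makes explicit the extension of the distributional identity to $C^1_c$ test functions that the paper's weak product rule uses tacitly.

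The genuine difference is the passage from $\int\mathcal L\varphi\,d\pi_\delta=0$ to $\pi_\delta P_t=\pi_\delta$. The paper treats the constant curve $\mu_t\equiv\pi_\delta$ as a probability solution of the Fokker--Planck equation. It checks the integrability condition $\int_0^T\!\int(1+|\langle B_J(x),x\rangle|)(1+\|x\|)^{-2}\,d\pi_\delta\,dt<\infty$, which is immediate from linear growth. The Bogachev--R\"ockner--Shaposhnikov superposition principle then gives a martingale-problem solution with all marginals equal to $\pi_\delta$, and uniqueness in law identifies it with the SDE. This is essentially your second alternative.

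Your primary route, Echeverr\'ia's theorem, yields a stationary martingale-problem solution directly from the infinitesimal identity, with no moment condition. You only need $\mathcal L\varphi\in C_c$ and the positive maximum principle. The price is that it lives in $D_{\mathbb R^d}[0,\infty)$. You must therefore add that such solutions have continuous paths and are weak solutions of the SDE, and only then invoke Yamada--Watanabe to identify the law.

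Two small corrections. First, no core property is needed for Echeverr\'ia, only well-posedness of the martingale problem, so that remark can be dropped. Second, your opening phrase about ``globally Lipschitz coefficients'' contradicts your own (correct) observation that the drift is only locally Lipschitz.
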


\begin{proof}
See Appendix~\ref{proof:lemma:pi}.
\end{proof}

\section{Unadjusted Penalized Nonreversible Langevin Dynamics}
\label{sec:full:grad}
We first consider the penalized nonreversible Langevin algorithm for \emph{nonconvex} $f$:
\begin{equation}
\label{eq:alg}
x_{k+1} = x_k - \eta (I + J(x_k))\left(\nabla f\left(x_k\right)+\frac{1}{\delta} \nabla S\left(x_k\right)\right)+\sqrt{2 \eta} \xi_{k+1},
\end{equation}
where $\xi_k$ are i.i.d. $\mathcal{N}\left(0, I\right)$ Gaussian noises in $\mathbb{R}^d$. The iterates are not projected onto $\mathcal C$, and the required moment bounds come from dissipativity of the penalized potential. Theorem~\ref{thm:nu:pi:delta} adapts the entropy argument of \cite[Theorem~1]{ma2019sampling} to the nonreversible drift, Proposition~\ref{main:full:grad} then combines it with the target level consequences of \cite[Proposition~2.11]{gurbuzbalaban2024penalized}.

Fix $\ep>0$. Assume that $V_\delta=f+S/\delta$ has an $L_\delta$ Lipschitz gradient and is $(m_\delta,b_\delta)$ dissipative as in Lemma~\ref{lemma:dissipative}. Let $\pi_\delta$ be defined by~\eqref{eq:penalized:target}, and assume that it satisfies the log Sobolev inequality with constant $\rho_*>0$ in the form of
\begin{equation}
\mathrm{KL}(q\Vert\pi_\delta) \leq\frac1{2\rho_*} \int_{\mathbb R^d}\left\|\nabla\log\frac{q(x)}{p_\delta^*(x)}\right\|^2q(x)\,dx
\label{eq:LSI:convention}
\end{equation}
for every smooth probability density $q$ for which the right hand side is finite, where $p_\delta^*$ is the density of $\pi_\delta$. Let $X_0\sim\nu_0$, where $\nu_0$ has density $p_0$. Assume that $p_0\in C^\infty(\mathbb R^d)$ and has finite relative Fisher information:
\begin{equation}
\label{eq:init:fisher}
\int_{\mathbb R^d} \left\|\nabla\log\frac{p_0(x)}{p_\delta^*(x)}\right\|^2p_0(x)\,dx <\infty.
\end{equation}
Write $F(q):=\mathrm{KL}(q\Vert\pi_\delta)$ for a density $q$. Then~\eqref{eq:LSI:convention} gives
\begin{equation}
\label{eq:init:entropy}
F_0:=F(p_0)=\mathrm{KL}(p_0\Vert\pi_\delta)<\infty.
\end{equation}
Assume in addition that
\begin{equation}
\label{eq:init:potential:moment}
\mathbb E_{\nu_0}U_\delta(X_0)<\infty, \qquad U_\delta:=1+c_\delta+V_\delta,
\end{equation}
where $c_\delta$ is defined in~\eqref{eq:moment:quad:constants}. The following theorem gives the entropy bound.
\begin{theorem}
\label{thm:nu:pi:delta}
Under the preceding conditions \eqref{eq:init:fisher}, \eqref{eq:init:entropy}, and \eqref{eq:init:potential:moment}, suppose $J(x)^\top=-J(x)$ and $\|J(x)\|_{\mathrm{op}}\leq M_J$. If $J$ is state dependent, assume that it is $L_J$ Lipschitz in the operator norm, satisfies~\eqref{eq:J:compatibility}, and that $\nu_0$ has finite fourth moment. Let $\nu_K$ denote the law of the iterate after $K$ steps in~\eqref{eq:alg}.
\begin{itemize}
\item if $J(x)\equiv J$ is constant, assume $0<\eta\leq \eta_{\mathrm{const}}(\ep)$, where $\eta_{\mathrm{const}}(\ep)$ is defined in \eqref{eq:stepsize:const:explicit}. \item if $J$ is state dependent, assume $0<\eta\leq \eta_{\mathrm{sd}}(\ep)$, where $\eta_{\mathrm{sd}}(\ep)$ is defined in~\eqref{eq:stepsize:sd:explicit}.
\end{itemize}
If $2F_0\leq\ep^2$, let $K\in\mathbb N_0$ be arbitrary. If $2F_0>\ep^2$, choose $K\in\mathbb N_0$ such that
\begin{equation}
\label{eq:K:explicit:theorem}
K\geq \frac{1}{\rho_*\eta}\log\frac{2F_0}{\ep^2}.
\end{equation}
Then
\begin{equation}
\mathrm{TV}(\nu_K,\pi_{\delta})\leq \sqrt{2}\ep.
\end{equation}
\end{theorem}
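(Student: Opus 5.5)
We adapt the interpolation argument of \cite[Theorem~1]{ma2019sampling} (in the style of Vempala--Wibisono) to the nonreversible drift. On each step $[k\eta,(k+1)\eta]$ we introduce the continuous interpolation
\[
dX_t=-\bigl(I+J(X_{k\eta})\bigr)\nabla V_\delta(X_{k\eta})\,dt+\sqrt2\,dW_t ,
\]
which agrees in law with \eqref{eq:alg} at $t=(k+1)\eta$. Let $p_t$ be its density and $b_t(x):=\mathbb E[(I+J(X_{k\eta}))\nabla V_\delta(X_{k\eta})\mid X_t=x]$. Then $p_t$ solves $\partial_t p_t=\nabla\cdot(p_t b_t)+\Delta p_t$.

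\textbf{Entropy derivative.} Write $h_t=\log(p_t/p^*_\delta)$. Differentiating $F(p_t)$ gives
\[
\frac{d}{dt}F(p_t)=-\int\|\nabla h_t\|^2p_t+\int\bigl\langle \nabla h_t,\,(I+J(x))\nabla V_\delta(x)-b_t(x)\bigr\rangle p_t\,dx-\int\langle\nabla h_t,J(x)\nabla V_\delta(x)\rangle p_t .
\]
The last term is exactly where the nonreversible structure enters. Since $\int\langle\nabla h_t,J\nabla V_\delta\rangle p_t=\int\langle\nabla(p_t/p^*_\delta),J\nabla V_\delta\rangle p^*_\delta$, we integrate by parts and use $\nabla\cdot(p^*_\delta J\nabla V_\delta)=p^*_\delta\bigl(\nabla\cdot(J\nabla V_\delta)-\langle\nabla V_\delta,J\nabla V_\delta\rangle\bigr)=0$. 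This uses skew symmetry together with the compatibility condition \eqref{eq:J:compatibility}; for constant $J$ the condition is automatic, since $\nabla\cdot(J\nabla V)=\mathrm{tr}(J\nabla^2V)=0$. The identity is the same one behind Lemma~\ref{lemma:pi}, and we may cite its proof for the justification in distributions. Young's inequality then gives
\[
\frac{d}{dt}F(p_t)\le-\tfrac34\int\|\nabla h_t\|^2p_t+\mathbb E\bigl\|(I+J(X_t))\nabla V_\delta(X_t)-(I+J(X_{k\eta}))\nabla V_\delta(X_{k\eta})\bigr\|^2 .
\]
Applying \eqref{eq:LSI:convention}, this becomes $\frac{d}{dt}F\le-\tfrac32\rho_*F+E_k(t)$.

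\textbf{Discretization error.} We split the error into $(I+J(X_t))(\nabla V_\delta(X_t)-\nabla V_\delta(X_{k\eta}))$ and $(J(X_t)-J(X_{k\eta}))\nabla V_\delta(X_{k\eta})$.
\begin{itemize}
\item The first piece is bounded by $(1+M_J)^2L_\delta^2\,\mathbb E\|X_t-X_{k\eta}\|^2$.
\item The increment satisfies $\mathbb E\|X_t-X_{k\eta}\|^2\le 2\eta^2(1+M_J)^2\mathbb E\|\nabla V_\delta(X_{k\eta})\|^2+4d\eta$.
\item For the gradient moment we use $\|\nabla V_\delta(x)\|\le L_\delta\|x\|+\|\nabla V_\delta(0)\|$, which reduces it to second moments of the iterates.
\end{itemize}
Uniform-in-$k$ second moments follow from dissipativity (Lemma~\ref{lemma:dissipative}) through a Lyapunov drift bound on $U_\delta$ using \eqref{eq:init:potential:moment}. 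The cross term $\langle x,J(x)\nabla V_\delta\rangle$ needs no special treatment: we control $\|(I+J)\nabla V_\delta\|^2\le(1+M_J)^2\|\nabla V_\delta\|^2$, and the skew part drops from $\langle\nabla V_\delta,(I+J)\nabla V_\delta\rangle=\|\nabla V_\delta\|^2$, which gives descent of $V_\delta$ for $\eta\lesssim1/((1+M_J)^2L_\delta)$.

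In the state dependent case the second piece is bounded by $L_J^2\,\mathbb E[\|X_t-X_{k\eta}\|^2\|\nabla V_\delta(X_{k\eta})\|^2]$. Conditioning on $X_{k\eta}$ and using Gaussianity, this reduces to $\mathcal O(\eta)$ times fourth moments of the iterates. Those fourth moments are supplied by Lemma~\ref{lemma:uniform:bound}, using the fourth moment of $\nu_0$. The result is $E_k(t)\le C\eta$, with $C$ depending on $d,L_\delta,M_J,L_J$ and the moment bounds.

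\textbf{Closing.} Grönwall on each interval gives $F_{k+1}\le e^{-\frac32\rho_*\eta}F_k+C\eta^2$. Iterating yields
\[
F_K\le e^{-\frac32\rho_*\eta K}F_0+\frac{2C\eta}{\rho_*}.
\]
The stepsize conditions $\eta_{\mathrm{const}}(\ep)$ and $\eta_{\mathrm{sd}}(\ep)$ are chosen so that $2C\eta/\rho_*\le\ep^2/2$. The condition \eqref{eq:K:explicit:theorem} makes $e^{-\rho_*\eta K}F_0\le\ep^2/2$; in the case $2F_0\le\ep^2$ this holds for any $K$. Hence $F_K\le\ep^2$, and Pinsker's inequality gives $\mathrm{TV}\le\sqrt{F_K/2}\cdot$, which in the paper's normalization is at most $\sqrt2\ep$.

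The smoothness of $p_t$, needed to justify the differentiation and integration by parts, follows from Gaussian convolution and \eqref{eq:init:fisher}. The main obstacle is the state dependent fourth moment bound, and keeping its constants consistent with the definition of $\eta_{\mathrm{sd}}$.
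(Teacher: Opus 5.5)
Your proposal is correct and is essentially the paper's proof. It uses the same Euler interpolation and entropy derivative. It cancels the $J\nabla V_\delta$ term by skew symmetry and \eqref{eq:J:compatibility}; the paper justifies this step by conditional Gaussian integration by parts inside Lemma~\ref{lem:entropy:identity}, which also gives the absolute continuity of $F(p_t)$ up to the grid points. It then applies Young's inequality with the log Sobolev inequality, the moment bounds of Lemma~\ref{lemma:uniform:bound}, Gr\"onwall, and Pinsker. Your split of the state dependent drift difference, with $\nabla V_\delta(X_{k\eta})$ in the $J$-increment term, is a mild simplification, since it avoids the paper's fourth moment of $X_t-X_{k\eta}$.

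The only issue is bookkeeping. $\eta_{\mathrm{const}}$ and $\eta_{\mathrm{sd}}$ are calibrated to the paper's constants, not yours, so with your loose increment bound and your iterated residual $2C\eta/\rho_*$ the constant-$J$ case slightly misses $\sqrt2\ep$. They still close your argument once you use the exact identity $\mathbb E\|X_t-X_{k\eta}\|^2=s^2\mathbb E\|(I+J(X_{k\eta}))\nabla V_\delta(X_{k\eta})\|^2+2ds$ with $s=t-k\eta$, the differential Gr\"onwall bound $F_K\le e^{-\frac32\rho_*K\eta}F_0+\frac{2}{3\rho_*}\sup_t E_k(t)$, and $\mathrm{TV}\le\sqrt{F_K/2}$.
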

\begin{proof}
See Appendix~\ref{proof:thm:nu:pi:delta}.
\end{proof}

For the nontrivial case $2F_0>\ep^2$, choosing the largest admissible stepsize in the applicable alternative of Theorem~\ref{thm:nu:pi:delta} and taking the smallest admissible integer $K$ gives
\begin{equation}
K=\mathcal O\left(\frac{1}{\rho_*\eta} \left(1+\log\frac{2F_0}{\ep^2}\right)\right).
\end{equation}
Under Assumptions~\ref{assumption:S},~\ref{assumption:C}, and \ref{assumption:smooth}, assume the constraint used in \cite[Proposition~2.11]{gurbuzbalaban2024penalized}:
\begin{equation}
\mathcal C=\{x:h(x)\leq0\}, \qquad h(x)=\max_{1\leq i\leq m}h_i(x),
\label{eq:ghz:constraint:representation}
\end{equation}
where $m\geq1$, each $h_i:\mathbb R^d\to\mathbb R$ is finite and convex, and $h$ is either strongly convex or merely convex. In the merely convex case, we additionally assume that the chosen constraint representation is strictly feasible at the origin:
\begin{equation}
h(0)<0.
\label{eq:ghz:strict:feasibility}
\end{equation}
This condition concerns the chosen defining function rather than only the set $\mathcal C$, it rules out degenerate representations for which the regularized set below may have zero volume. We consider
\begin{equation}
\label{eq:C:alpha}
\mathcal C^\alpha:=\left\{x:h(x)+\frac\alpha2\|x\|^2\leq0\right\}, \qquad S^\alpha:=\mathrm{dist}(\cdot,\mathcal C^\alpha)^2,
\end{equation}
where $\alpha=0$ if $h$ is strongly convex and $\alpha=\ep^2$ otherwise. The following proposition specializes Theorem~\ref{thm:nu:pi:delta} to this penalty.
\begin{proposition}
\label{main:full:grad}
For each fixed dimension $d$ and fixed problem data, take sufficiently small $\ep>0$, let $\delta=\ep^4$ and let $V_\delta^\alpha=f+S^\alpha/\delta$ and $\pi_\delta^\alpha\propto e^{-V_\delta^\alpha}$. Under the setting of Theorem~\ref{thm:nu:pi:delta}~\footnote{Here we adapt its notation $(V_\delta,\pi_\delta,U_\delta)$ by using $(V_\delta^\alpha,\pi_\delta^\alpha,U_\delta^\alpha)$.}, let $F_0=\mathrm{KL}(\nu_0\Vert\pi_\delta^\alpha) < \infty$. Lemma~\ref{lemma:ghz:transfer} gives
\begin{equation}
\label{eq:external:penalty:TV}
\mathrm{TV}(\pi_\delta^\alpha,\pi)=\widetilde{\mathcal O}(\ep), \qquad \rho_*^{-1}=\mathcal O(1).
\end{equation}
The basic $\delta$ orders used to reduce the stepsize restrictions in Theorem~\ref{thm:nu:pi:delta} are given in Lemma~\ref{lemma:complexity:penalty:orders}. Let $X_0\sim\nu_0$ satisfy the initialization assumptions of Theorem~\ref{thm:nu:pi:delta} with $U_\delta$ replaced by $U_\delta^\alpha$. \paragraph{Constant $J$.} Suppose first that $J$ is constant. Take $\eta_{\mathrm{const}}(\ep)$ defined in \eqref{eq:stepsize:const:explicit}. Then
\begin{equation}
\eta_{\mathrm{const}}^{-1} =\mathcal O\left((1+M_J)^2 \max\left\{\frac d{\ep^{10}}, \frac{\sqrt{\mathbb E_{\nu_0}U_\delta^\alpha(X_0)+\ep^{-4}+d}} {\ep^7}\right\}\right).
\label{eq:complexity:const:stepsize}
\end{equation}
\paragraph{State dependent $J$.} Suppose next that $J$ is state dependent and that $\nu_0$ has finite fourth moment. Take $\eta_{\mathrm{sd}}(\ep)$ defined in~\eqref{eq:stepsize:sd:explicit}. Then
\begin{equation}
\eta_{\mathrm{sd}}^{-1} =\mathcal O\left( \left((1+M_J)^2\vee L_J^2\right) \frac d{\ep^6}\left( \left(\mathbb E_{\nu_0}U_\delta^\alpha(X_0)^2\right)^{1/2} +\ep^{-4}+d\right)\right).
\label{eq:complexity:sd:general:initialization}
\end{equation}

For either stepsize $\eta_{\rm const}^{-1}$ or $\eta_{\rm sd}^{-1}$ satisfy \eqref{eq:complexity:const:stepsize} or \eqref{eq:complexity:sd:general:initialization}, respectively, take $$
K=\lceil(\rho_*\eta)^{-1}\log(2F_0/\ep^2)\rceil,
$$
when $2F_0>\ep^2$, and $K=0$ otherwise. Under either setting,
\begin{equation}
\mathrm{TV}(\nu_K,\pi) \leq\sqrt2\ep+\mathrm{TV}(\pi_\delta^\alpha,\pi) =\widetilde{\mathcal O}(\ep).
\end{equation}
\end{proposition}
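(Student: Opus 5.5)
The plan is to combine three ingredients: Theorem~\ref{thm:nu:pi:delta}, applied to the regularized penalty $S^\alpha$; the transfer estimate of Lemma~\ref{lemma:ghz:transfer}; and the $\delta$-order bookkeeping of Lemma~\ref{lemma:complexity:penalty:orders}.

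\textbf{Step 1: the penalized setting.} First I verify that the pair $(V_\delta^\alpha,\pi_\delta^\alpha)$ satisfies the standing hypotheses of Theorem~\ref{thm:nu:pi:delta}. Since $\mathcal C^\alpha$ is convex and compact, Lemma~\ref{lemma:squared:distance:penalty} applies to $S^\alpha$. Hence $V_\delta^\alpha$ has Lipschitz gradient with $L_\delta=L+4/\delta$. In the merely convex case, strict feasibility $h(0)<0$ guarantees that $\mathcal C^\alpha$ contains a ball around the origin for small $\alpha$, so Assumption~\ref{assumption:C} holds uniformly in $\ep$. Lemmas~\ref{lemma:dissipative} and~\ref{lemma:uniform:bound} then give dissipativity constants $(m_\delta,b_\delta)$ and the constant $c_\delta$. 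The log Sobolev constant with $\rho_*^{-1}=\mathcal O(1)$ is taken from Lemma~\ref{lemma:ghz:transfer}, as stated in \eqref{eq:external:penalty:TV}.

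\textbf{Step 2: apply the theorem and the triangle inequality.} With $\eta\le\eta_{\mathrm{const}}(\ep)$ or $\eta\le\eta_{\mathrm{sd}}(\ep)$, and with $K$ as stated, the integer $K=\lceil(\rho_*\eta)^{-1}\log(2F_0/\ep^2)\rceil$ satisfies \eqref{eq:K:explicit:theorem}. The case $2F_0\le\ep^2$ with $K=0$ is covered by the "arbitrary $K$" clause. So Theorem~\ref{thm:nu:pi:delta} gives $\mathrm{TV}(\nu_K,\pi_\delta^\alpha)\le\sqrt2\ep$. The triangle inequality for total variation, together with \eqref{eq:external:penalty:TV}, then gives $\mathrm{TV}(\nu_K,\pi)\le\sqrt2\ep+\mathrm{TV}(\pi_\delta^\alpha,\pi)=\widetilde{\mathcal O}(\ep)$.

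\textbf{Step 3: stepsize orders.} This is the main obstacle. It requires substituting $\delta=\ep^4$ into the explicit formulas \eqref{eq:stepsize:const:explicit} and \eqref{eq:stepsize:sd:explicit}, using the orders from Lemma~\ref{lemma:complexity:penalty:orders}. I expect those orders to be $L_\delta=\mathcal O(\ep^{-4})$, $m_\delta^{-1}=\mathcal O(1)$, $b_\delta,c_\delta=\mathcal O(\ep^{-4})$, and second moments of order $\mathbb E U_\delta^\alpha(X_0)+\ep^{-4}+d$.

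In the constant case, the stepsize restriction from the entropy argument should take the form of a minimum. One term is $\ep^2/(dL_\delta^2(1+M_J)^2)$, which gives $d\ep^{-10}$. The other is a moment-weighted term of the type $\ep/\bigl(L_\delta^{3/2}(1+M_J)^2\sqrt{\text{moment}}\bigr)$, which gives $\ep^{-7}\sqrt{\cdots}$. I will track each factor, powers of $\ep$ included, to recover \eqref{eq:complexity:const:stepsize}.

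In the state-dependent case, the extra term $(J(x)-J(y))\nabla V_\delta$ brings in $L_J^2$ times fourth moments of the iterates. Lemma~\ref{lemma:uniform:bound} should control these by $(\mathbb E U_\delta^\alpha(X_0)^2)^{1/2}+\ep^{-4}+d$ after Cauchy–Schwarz. This yields the product form in \eqref{eq:complexity:sd:general:initialization}, with the factor $((1+M_J)^2\vee L_J^2)\,d\,\ep^{-6}$.

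The delicate part is confirming that the $\ep$-dependence of the moment bounds matches what I expect, so that no worse powers of $\ep^{-1}$ appear; I will check each constant from Lemma~\ref{lemma:complexity:penalty:orders} individually. The remaining assertions are then immediate, and all hidden constants depend only on $d$ and the fixed problem data.
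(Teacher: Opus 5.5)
Your route is the paper's. Steps~1 and~2 are exactly its argument: Theorem~\ref{thm:nu:pi:delta} applied to $(V_\delta^\alpha,\pi_\delta^\alpha)$, the stated choice of $K$, and the total variation triangle inequality with Lemma~\ref{lemma:ghz:transfer}. Your constant-$J$ exponents are also right. Write $\mathbb E U:=\mathbb E_{\nu_0}U_\delta^\alpha(X_0)$ and $\mathbb E U^2:=\mathbb E_{\nu_0}U_\delta^\alpha(X_0)^2$. Then $C_{\nabla,2}=\mathcal O(\delta^{-1}(\mathbb E U+\delta^{-1}+d))$, so the last two entries of \eqref{eq:stepsize:const:explicit} are $\Omega(\ep^{10}/(d(1+M_J)^2))$ and $\Omega(\ep^{7}/((1+M_J)^2\sqrt{\mathbb E U+\ep^{-4}+d}))$. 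The remaining entry $1\wedge[L_\delta(1+M_J)^2]^{-1}=\Omega(\ep^4/(1+M_J)^2)$ is absorbed because $\delta^{5/2}/d\leq\delta$.

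The weak point is Step~3. It is the entire content of the paper's proof, you leave it as expectations, and two of those expectations would not close it.

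\emph{First issue: the order of $m_\delta$.} The bound $m_\delta^{-1}=\mathcal O(1)$ is too weak. You need $m_\delta=\Theta(\delta^{-1})$ (indeed $m_\delta\geq1/(2\delta)$). This keeps $c_\delta,c_{U,2}=\mathcal O(\delta^{-1})$ and gives $q_{U,\delta}=m_\delta^2/c_{U,2}=\Omega(\delta^{-1})$ and $r_{U,\delta}=\mathcal O(\delta^{-2})$. Only with these orders do you get:
\begin{itemize}
\item $C^{\mathrm E}_{U,\delta}=\max\{\mathbb E U,(r_{U,\delta}+2L_\delta d)/q_{U,\delta}\}=\mathcal O(\mathbb E U+\delta^{-1}+d)$, which is your second-moment claim;
\item $4R_{U,\delta}/q_{U,\delta}=\mathcal O(\delta^{-2}+\delta^{-1}d^2)$;
\item the Lyapunov entries $q_{U,\delta}/(64L_\delta^2)$ and $(q_{U,\delta}/(128L_\delta^4(1+M_J)^4))^{1/3}$ of \eqref{eq:stepsize:sd:explicit} of order $\delta=\ep^4$. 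You never mention these entries.
\end{itemize}
If $m_\delta^{-1}$ is merely bounded, the ratios $b_\delta/m_\delta$ and $L_\delta b_\delta/m_\delta$ each lose a factor $\delta^{-1}$, and the stated powers of $\ep$ are not recovered. Lemma~\ref{lemma:complexity:penalty:orders} supplies the sharp orders, so take them from there rather than from your list.

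\emph{Second issue: where the state dependent bound comes from.} The prefactor $d\ep^{-6}$ does not come from fourth moments. It is the threshold of the term linear in $\eta$, namely $(2d(1+M_J)^2L_\delta^2+4dL_J^2C_{\nabla,2})\eta$. That term involves only $C_{\nabla,2}$ and gives $\Omega(\ep^6/[d((1+M_J)^2\vee L_J^2)(\mathbb E U+\ep^{-4}+d)])$.

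Fourth moments enter only the $\eta^2$ and $\eta^4$ terms, through $C^{\mathrm E}_{\nabla,4}=\mathcal O(\delta^{-2}\mathbb E U^2+\delta^{-4}+\delta^{-3}d^2)$. Its square root is of order $\ep^{-4}[(\mathbb E U^2)^{1/2}+\ep^{-4}+\ep^{-2}d]$, not $(\mathbb E U^2)^{1/2}+\ep^{-4}+d$. Up to the $M_J,L_J$ factors, these terms contribute $\ep^{-7}(\mathbb E U+\ep^{-4}+d^2+\ep^4\mathbb E U^2)^{1/2}$ and $\ep^{-9/2}(\mathbb E U^2+\ep^{-8}+\ep^{-4}d^2)^{1/4}$ to $\eta_{\mathrm{sd}}^{-1}$.

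These fit under $((1+M_J)^2\vee L_J^2)\,d\ep^{-6}((\mathbb E U^2)^{1/2}+\ep^{-4}+d)$ only after four further steps:
\begin{itemize}
\item $\mathbb E U\leq(\mathbb E U^2)^{1/2}$;
\item $U_\delta^\alpha\geq1$;
\item $\ep\leq1\leq d$;
\item the Young step $\ep^{-7}(\mathbb E U^2)^{1/4}\leq\frac{d}{2\ep^6}(\mathbb E U^2)^{1/2}+\frac{1}{2d\ep^8}$.
\end{itemize}
Cauchy--Schwarz on the fourth moments alone does not produce the product form.
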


\begin{proof}
See Appendix~\ref{proof:main:full:grad}.
\end{proof}

\section{Stochastic Penalized Nonreversible Langevin Dynamics}
\label{sec:stochastic:grad}
We next allow stochastic gradient estimates in the penalized nonreversible Langevin update. Write
\begin{equation}
V_\delta(x):=f(x)+\frac1\delta S(x), \qquad B_J(x):=(I+J(x))\nabla V_\delta(x).
\label{eq:full:drift:definition}
\end{equation}
For a fixed symmetric positive definite matrix $P_J\succ0$, define
$$
\|z\|_{P_J}^2:=z^\top P_Jz, \qquad \langle u,v\rangle_{P_J}:=u^\top P_Jv, \qquad \mathcal W_{2,P_J}^2(\mu,\nu) :=\inf_{\gamma\in\Gamma(\mu,\nu)}\int\|x-y\|_{P_J}^2\,\gamma(dx,dy).
$$
For a random vector $Z$ and $p\geq1$, define
\begin{equation}
\|Z\|_{L^p}:=\left(\mathbb E\|Z\|^p\right)^{1/p}, \qquad \|Z\|_{L^p(P_J)} :=\left(\mathbb E\|Z\|_{P_J}^p\right)^{1/p},
\label{eq:Lp:PJ:norms}
\end{equation}
whenever the corresponding moment is finite. We use the condition number
\begin{equation}
\kappa(P_J):=\frac{\lambda_{\max}(P_J)}{\lambda_{\min}(P_J)}.
\label{eq:PJ:condition:number}
\end{equation}
The penalized nonreversible stochastic gradient Langevin dynamics (PNSGLD) update is
\begin{equation}
\label{eq:pnsgld}
x_{k+1}=x_k-\eta\left(I+J(x_k)\right)\left(\widetilde\nabla f\left(x_k\right)+\frac{1}{\delta} \nabla S\left(x_k\right)\right)+\sqrt{2 \eta} \xi_{k+1},
\end{equation}
where $\xi_k \sim_{i.i.d.}\mathcal{N}\left(0, I\right)$ Gaussian noises in $\mathbb{R}^d$ and we assume that we have access to noise estimates $\widetilde\nabla f\left(x_k\right)$ of the actual gradients satisfying the following assumption.
\begin{assumption}
\label{assumption:sg}
At iteration $k$, let $\widetilde\nabla f(x_k,w_k)$ be a random estimate of $\nabla f(x_k)$ with random input $w_k$. The seeds $(w_k)_{k\geq0}$ are mutually independent, and the seed family, the initial state $x_0$, and the Gaussian noise sequence $(\xi_k)_{k\geq1}$ are independent. It satisfies
$$
\mathbb{E}\left[\widetilde\nabla f\left(x_k, w_k\right)-\nabla f\left(x_k\right) \Big\vert x_k\right]=0
$$
and
\begin{equation}
\mathbb{E}\left[\left\|\widetilde\nabla f\left(x_k, w_k\right)-\nabla f\left(x_k\right)\right\|^2 \bigg| x_k\right] \leq 2 \sigma^2\left(L^2\left\|x_k\right\|^2+\|\nabla f(0)\|^2\right) .
\end{equation}
\end{assumption}
For the finite sum model $f(x)=n^{-1}\sum_{i=1}^n f_i(x)$, a standard minibatch estimator is $\nabla\tilde f(x)=b^{-1}\sum_{j\in\Omega}\nabla f_j(x)$, where $1\leq b\leq n$ and $\Omega$ is sampled uniformly without replacement from the subsets of $\{1,\ldots,n\}$. For $g_i=\nabla f_i(x)$, $\bar g=n^{-1}\sum_i g_i$, and $n>1$, uniform sampling without replacement gives
$$
\mathbb E\left[\nabla\tilde f(x)\Big\vert x\right]=\bar g,\qquad \mathbb E\left[\left\|\nabla\tilde f(x)-\bar g\right\|^2 \Big\vert x\right]=\frac{n-b}{b(n-1)}\frac1n\sum_{i=1}^n\|g_i-\bar g\|^2.
$$
For $n=b=1$, the variance is zero. The convergence results require this variance to satisfy Assumption~\ref{assumption:sg}.

In the following, we first construct the adapted $P_J$ metric from the linearized nonreversible drift. We then use this metric in the strongly convex and nonconvex convergence analysis. Explicit acceleration examples are given after the convergence results.

\subsection{Construction of the \texorpdfstring{$P_J$}{P-J} Metric}
Let $x_*$ be a critical point of $V_\delta$. Assume that $J$ is Lipschitz in a neighborhood of $x_*$ and that $\nabla V_\delta$ is differentiable at $x_*$ with positive definite derivative, so that
\begin{equation}
\nabla V_\delta(x_*)=0, \qquad H_*:=\nabla^2V_\delta(x_*)\succ0,
\end{equation}
and define
\begin{equation}
A_J:=(I+J(x_*))H_*, \qquad \beta_J:=\min_{\lambda\in\sigma(A_J)}\operatorname{Re}\lambda.
\label{eq:beta:J}
\end{equation}
As $h\to0$, the differentiability of $\nabla V_\delta$ and the local Lipschitz continuity of $J$ give $\nabla V_\delta(x_*+h)=H_*h+o(\|h\|)$ and $J(x_*+h)=J(x_*)+\mathcal O(\|h\|)$, then we can get $B_J(x_*+h)-B_J(x_*)=(I+J(x_*))H_*h+o(\|h\|)$ under the setting in~\eqref{eq:full:drift:definition}. Thus $DB_J(x_*)=A_J$, and the local linearized dynamics is
\begin{equation}
\dot z_t=-A_Jz_t.
\label{eq:PJ:linearized:dynamics}
\end{equation}
The next proposition gives its contraction rate in an adapted metric.

\begin{proposition}[Local contraction in an adapted metric]
\label{prop:PJ:metric:construction}
Under the preceding setup, the following statements hold.
\begin{enumerate}[(i)]
\item The spectral rate satisfies
\begin{equation}
\beta_J\geq\lambda_{\min}(H_*)>0.
\label{eq:PJ:spectral:lower:bound}
\end{equation}

\item For every $\alpha_J\in(0,\beta_J)$, there exists a matrix $P_J\succ0$, depending on $\alpha_J$, such that
\begin{equation}
\label{eq:PJ:linear:contraction}
A_J^\top P_J+P_JA_J\succeq 2\alpha_JP_J.
\end{equation}
Consequently, every solution of~\eqref{eq:PJ:linearized:dynamics} satisfies
\begin{equation}
\|z_t\|_{P_J}\leq e^{-\alpha_Jt}\|z_0\|_{P_J}, \qquad t\geq0.
\end{equation}

\item In the reversible case $J(x_*)=0$, one may choose $P_0:=H_*$, and
\begin{equation}
A_0^\top P_0+P_0A_0\succeq 2\lambda_{\min}(H_*)P_0.
\end{equation}
Therefore, if $\beta_J>\lambda_{\min}(H_*)$, then choosing
\begin{equation}
\lambda_{\min}(H_*)<\alpha_J<\beta_J,
\end{equation}
gives a strictly faster local contraction rate in the adapted $P_J$ metric.
\end{enumerate}
\end{proposition}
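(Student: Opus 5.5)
For (i), I will use a similarity argument. Write $J_*:=J(x_*)$, so that $A_J=(I+J_*)H_*$. Since $H_*\succ0$, the matrix $A_J$ is similar to
$$
\widetilde A:=H_*^{1/2}(I+J_*)H_*^{1/2}=H_*+H_*^{1/2}J_*H_*^{1/2}.
$$
The second summand is skew symmetric. Take an eigenpair $\widetilde A v=\lambda v$ with $\|v\|=1$, $v\in\mathbb C^d$. Then
$$
\operatorname{Re}\lambda=\operatorname{Re}\,v^*\widetilde Av=v^*H_*v\geq\lambda_{\min}(H_*),
$$
because $v^*Kv$ is purely imaginary for real skew $K$. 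Taking the minimum over $\sigma(A_J)=\sigma(\widetilde A)$ gives~\eqref{eq:PJ:spectral:lower:bound}.

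For (ii), I will use the standard Lyapunov construction. Fix $\alpha_J\in(0,\beta_J)$ and set $M:=A_J-\alpha_JI$. Every eigenvalue of $M$ has real part at least $\beta_J-\alpha_J>0$, so $e^{-Mt}$ decays exponentially. Hence
$$
P_J:=\int_0^\infty e^{-M^\top t}e^{-Mt}\,dt
$$
converges, is symmetric, and is positive definite, since its integrand is at $t=0$ equal to $I$. Differentiating $e^{-M^\top t}e^{-Mt}$ and integrating gives $M^\top P_J+P_JM=I\succ0$. Therefore $A_J^\top P_J+P_JA_J=2\alpha_JP_J+I\succeq2\alpha_JP_J$. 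For the contraction, along $\dot z_t=-A_Jz_t$ we have
$$
\frac{d}{dt}\|z_t\|_{P_J}^2=-z_t^\top(A_J^\top P_J+P_JA_J)z_t\leq-2\alpha_J\|z_t\|_{P_J}^2.
$$
Grönwall then gives $\|z_t\|_{P_J}\le e^{-\alpha_Jt}\|z_0\|_{P_J}$. The only point needing care is the convergence of the integral. I will justify it by the Jordan form bound $\|e^{-Mt}\|\le C(1+t)^{d}e^{-(\beta_J-\alpha_J)t}$.

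For (iii), take $J_*=0$, so $A_0=H_*$, and choose $P_0=H_*$. Then $A_0^\top P_0+P_0A_0=2H_*^2$. Since $H_*^2$ and $H_*$ commute and $H_*^2-\lambda_{\min}(H_*)H_*=H_*^{1/2}(H_*-\lambda_{\min}I)H_*^{1/2}\succeq0$, we get $2H_*^2\succeq2\lambda_{\min}(H_*)H_*$. Finally, if $\beta_J>\lambda_{\min}(H_*)$, any $\alpha_J$ in the gap is admissible in (ii). This gives the rate $\alpha_J$, strictly faster than the reversible rate $\lambda_{\min}(H_*)$.

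The reversible rate cannot be improved in any metric, since $\sigma(A_0)$ contains $\lambda_{\min}(H_*)$. I expect no real obstacle. The mildest technical point is the convergence and positive definiteness of the Lyapunov integral in (ii), which the Jordan bound handles.
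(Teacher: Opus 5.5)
Your proposal is correct and follows essentially the same route as the paper: the similarity transform $H_*^{1/2}A_JH_*^{-1/2}=H_*+H_*^{1/2}J(x_*)H_*^{1/2}$ for (i), the Lyapunov equation $(A_J-\alpha_JI)^\top P_J+P_J(A_J-\alpha_JI)=I$ for (ii), and the choice $P_0=H_*$ with $2H_*^2\succeq2\lambda_{\min}(H_*)H_*$ for (iii). The only difference is that you construct $P_J$ explicitly as $\int_0^\infty e^{-M^\top t}e^{-Mt}\,dt$ and verify the Lyapunov equation yourself, whereas the paper cites the continuous Lyapunov theorem for it.
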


\begin{proof}
See Appendix~\ref{proof:prop:PJ:metric:construction}.
\end{proof}
The condition $J(x_*)\neq0$ does not guarantee the strict inequality $\beta_J>\lambda_{\min}(H_*)$. For example, if $H_*=\lambda I$ with $\lambda>0$, the eigenvalues of a real skew symmetric matrix $J(x_*)$ are purely imaginary, hence $\mathrm{Re}\lambda_i\left((I+J(x_*))H_*\right)=\lambda$ and $\beta_J=\lambda=\lambda_{\min}(H_*)$, even when $J(x_*)\neq0$. Thus strict acceleration is a spectral property of the pair $(H_*,J(x_*))$ and depends on how the skew term couples the slow Hessian directions with directions of different curvature. Proposition~\ref{prop:PJ:metric:construction} therefore provides a candidate metric $P_J$ and its local linearized contraction rate. To obtain convergence of the nonlinear diffusion, the same metric must also control the full drift $B_J$ defined in~\eqref{eq:full:drift:definition} for every pair $x,y$. The following condition is used in both the strongly convex and nonconvex analyses.

\begin{assumption}
\label{assumption:PJ:full:drift}
For $B_J$ defined in~\eqref{eq:full:drift:definition}, there exist a fixed matrix $P_J\succ0$ and constants $\alpha_{P,J},L_{P,J}>0$ such that, for all $x,y\in\mathbb R^d$,
\begin{equation}
\label{eq:strong:full:drift}
\langle x-y,B_J(x)-B_J(y)\rangle_{P_J} \geq\alpha_{P,J}\|x-y\|_{P_J}^2, \quad \|B_J(x)-B_J(y)\|_{P_J} \leq L_{P,J}\|x-y\|_{P_J}.
\end{equation}
\end{assumption}
The first inequality in~\eqref{eq:strong:full:drift} is the global contraction condition for the full nonreversible drift. For constant $J$, smoothness and norm equivalence verify the Lipschitz condition as follows:
\begin{align}
B_J(x)-B_J(y) &=(I+J)\bigl(\nabla V_\delta(x)-\nabla V_\delta(y)\bigr),
\nonumber\\
\|B_J(x)-B_J(y)\|_{P_J} &\leq\sqrt{\kappa(P_J)}\|I+J\|_{\mathrm{op}}L_\delta \|x-y\|_{P_J}
\nonumber\\
&\leq\sqrt{\kappa(P_J)} (1+\|J\|_{\mathrm{op}})L_\delta\|x-y\|_{P_J} =L_{P,J}\|x-y\|_{P_J},
\label{eq:PJ:lip}
\end{align}
where $\kappa(P_J)$ is defined in \eqref{eq:PJ:condition:number} and $L_{P,J}:=\sqrt{\kappa(P_J)} (1+\|J\|_{\mathrm{op}})L_\delta$ where $V_{\delta}$ has an $L_\delta$ Lipschitz gradient. Thus the second inequality in~\eqref{eq:strong:full:drift} follows for constant $J$.
\begin{lemma}
\label{lemma:H}
For $B_J$ defined in~\eqref{eq:full:drift:definition}, let $H_\eta=I-\eta B_J$ and suppose Assumption~\ref{assumption:PJ:full:drift} holds. If $0<\eta\leq\alpha_{P,J}/(2L_{P,J}^2) \wedge 2 / (3\alpha_{P,J})$, then
\begin{equation}
\label{eq:strong:onestep}
\|H_\eta(x)-H_\eta(y)\|_{P_J} \leq\sqrt{1-\frac32\alpha_{P,J}\eta} \|x-y\|_{P_J}.
\end{equation}
\end{lemma}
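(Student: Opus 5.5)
Set $u:=x-y$ and $w:=B_J(x)-B_J(y)$, so that $H_\eta(x)-H_\eta(y)=u-\eta w$. Expanding the squared $P_J$ norm gives
\begin{equation*}
\|H_\eta(x)-H_\eta(y)\|_{P_J}^2=\|u\|_{P_J}^2-2\eta\langle u,w\rangle_{P_J}+\eta^2\|w\|_{P_J}^2 .
\end{equation*}
We bound the cross term with the contraction inequality of Assumption~\ref{assumption:PJ:full:drift}, $\langle u,w\rangle_{P_J}\geq\alpha_{P,J}\|u\|_{P_J}^2$, and the quadratic term with the Lipschitz inequality, $\|w\|_{P_J}^2\leq L_{P,J}^2\|u\|_{P_J}^2$. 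Together these give
\begin{equation*}
\|H_\eta(x)-H_\eta(y)\|_{P_J}^2\leq\bigl(1-2\alpha_{P,J}\eta+L_{P,J}^2\eta^2\bigr)\|u\|_{P_J}^2 .
\end{equation*}

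Next we use the first stepsize restriction, $\eta\leq\alpha_{P,J}/(2L_{P,J}^2)$. It gives $L_{P,J}^2\eta^2\leq\tfrac12\alpha_{P,J}\eta$, and hence
\begin{equation*}
1-2\alpha_{P,J}\eta+L_{P,J}^2\eta^2\leq1-\tfrac32\alpha_{P,J}\eta .
\end{equation*}
The second restriction, $\eta\leq 2/(3\alpha_{P,J})$, makes this factor nonnegative, so its square root is well defined. Taking square roots yields~\eqref{eq:strong:onestep}.

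There is no real obstacle here. The only point to check is that both inequalities in Assumption~\ref{assumption:PJ:full:drift} are stated in the same $P_J$ geometry, so that the inner product and the norm in the expansion match those in the assumption. No norm-equivalence constants such as $\kappa(P_J)$ enter.
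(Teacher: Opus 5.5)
Your proof is correct and follows essentially the same route as the paper: expand the squared $P_J$ norm, apply the two inequalities of Assumption~\ref{assumption:PJ:full:drift}, and use $\eta\leq\alpha_{P,J}/(2L_{P,J}^2)$ to absorb $L_{P,J}^2\eta^2$ into $\tfrac12\alpha_{P,J}\eta$. The only difference is how nonnegativity of the factor is obtained: you use the restriction $\eta\leq 2/(3\alpha_{P,J})$ directly, whereas the paper derives $\alpha_{P,J}\leq L_{P,J}$ from Cauchy--Schwarz, which gives $\alpha_{P,J}\eta\leq\tfrac12$ and $1-\tfrac32\alpha_{P,J}\eta\geq\tfrac14$, so that the second restriction is in fact redundant.
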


\begin{proof}
See Appendix~\ref{proof:lemma:H}.
\end{proof}

For $J=0$, strong convexity and smoothness of $V_\delta$ verify Assumption~\ref{assumption:PJ:full:drift} with $P_J=I$, $\alpha_{P,J}=\mu$, and $L_{P,J}=L_\delta$.  For constant $J$, the local remainder estimate below requires a finite second moment of $\nabla V_\delta$ under $\pi_\delta$. For state dependent $J$, the additional term $(J(x)-J(y))\nabla V_\delta(y)$ is controlled using fourth moments. Lemma~\ref{lemma:uniform:bound} supplies these moments from strong convexity in the next subsection and from Assumption~\ref{assumption:non-convex:dissipativity} in the nonconvex case.

\begin{lemma}
\label{lemma:1:step:error}
Assume $Z_\delta:=\int_{\mathbb R^d}e^{-V_\delta(x)}\,dx\in(0,\infty)$, and let $\pi_\delta(dx)=Z_\delta^{-1}e^{-V_\delta(x)}dx$. Let $(Y_t)_{t\geq0}$ solve
$$
dY_t=-(I+J(Y_t))\nabla V_\delta(Y_t)dt+\sqrt2\,dW_t, \qquad Y_0\sim\pi_\delta.
$$
Assume $V_\delta$ has an $L_\delta$ Lipschitz gradient, $J(x)^\top=-J(x)$, and $\|J(x)\|_{\mathrm{op}}\leq M_J$. If $J$ is state dependent, also assume that it is $L_J$ Lipschitz in the operator norm and $\nabla\cdot(J\nabla V_\delta)=0$ in the sense of Lemma~\ref{lemma:pi}. For $0<\eta\leq1$, define
$$
\mathcal R_0:=\int_0^\eta \left((I+J(Y_t))\nabla V_\delta(Y_t)-(I+J(Y_0))\nabla V_\delta(Y_0)\right)dt.
$$
Using the random vector norms defined in \eqref{eq:Lp:PJ:norms}, the remainder satisfies the following bounds.

\paragraph{Constant $J$.} Define
\begin{equation}
G_2:=(\mathbb E\|\nabla V_\delta(Y_0)\|^2)^{1/2}<\infty, \qquad C_{\mathcal R}^{\rm const} :=(1+M_J)L_\delta\left[\frac12(1+M_J)G_2+\frac23\sqrt{2d}\right].
\label{eq:strong:local:error:constant}
\end{equation}
Then $ \|\mathcal R_0\|_{L^2}\leq C_{\mathcal R}^{\rm const}\,\eta^{3/2}. $ \paragraph{State dependent $J$.} Define
\begin{equation}
\begin{aligned}
G_4&:=(\mathbb E\|\nabla V_\delta(Y_0)\|^4)^{1/4}<\infty, \quad C_{\mathcal R}^{\rm sd}:=C_{\mathcal R}^{\rm const}+L_JG_4\left[\frac12(1+M_J)G_4+\frac{2\sqrt2}{3}[d(d+2)]^{1/4}\right].
\end{aligned}
\label{eq:strong:local:error:sd:constant}
\end{equation}
Then $ \|\mathcal R_0\|_{L^2}\leq C_{\mathcal R}^{\rm sd}\,\eta^{3/2}. $
\end{lemma}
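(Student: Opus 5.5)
The plan is to work on the stationary diffusion and control the drift increment $B_J(Y_t)-B_J(Y_0)$ through the path increment $Y_t-Y_0$. I then estimate that increment in $L^2$, or in $L^4$ in the state dependent case. The first step is stationarity. By Lemma~\ref{lemma:pi}, whose hypotheses are exactly those assumed here (with $J$ constant the compatibility condition is automatic, since $\nabla\cdot(J\nabla V_\delta)=\operatorname{tr}(J\nabla^2 V_\delta)=0$ in distributions for skew $J$), the solution exists, is unique, and has $Y_t\sim\pi_\delta$ for every $t\geq0$. Hence $\|\nabla V_\delta(Y_s)\|_{L^p}=G_p$ for all $s$ and $p\in\{2,4\}$. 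The finiteness of $G_2$ and $G_4$ is part of the hypotheses; in applications it is supplied by Lemma~\ref{lemma:uniform:bound}.

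Next, Minkowski's integral inequality gives $\|\mathcal R_0\|_{L^2}\leq\int_0^\eta\|B_J(Y_t)-B_J(Y_0)\|_{L^2}\,dt$. From the SDE,
$$
Y_t-Y_0=-\int_0^tB_J(Y_s)\,ds+\sqrt2\,W_t .
$$
Using $\|B_J(y)\|\leq(1+M_J)\|\nabla V_\delta(y)\|$, stationarity and Minkowski again, I obtain for $p=2,4$
$$
\|Y_t-Y_0\|_{L^p}\leq(1+M_J)G_p\,t+\sqrt2\,\|W_t\|_{L^p},
$$
where $\|W_t\|_{L^2}=\sqrt{dt}$ and $\|W_t\|_{L^4}=(d(d+2))^{1/4}\sqrt t$. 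The latter follows from $\mathbb E\|W_t\|^4=d(d+2)t^2$.

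For constant $J$ the increment is $(I+J)(\nabla V_\delta(Y_t)-\nabla V_\delta(Y_0))$, so its $L^2$ norm is at most $(1+M_J)L_\delta\|Y_t-Y_0\|_{L^2}$. Integrating over $t\in[0,\eta]$ gives
$$
(1+M_J)L_\delta\left[\tfrac12(1+M_J)G_2\eta^2+\tfrac23\sqrt{2d}\,\eta^{3/2}\right].
$$
Since $\eta\leq1$ implies $\eta^2\leq\eta^{3/2}$, this is at most $C_{\mathcal R}^{\rm const}\eta^{3/2}$.

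For state dependent $J$ I split
$$
B_J(Y_t)-B_J(Y_0)=(I+J(Y_t))\bigl(\nabla V_\delta(Y_t)-\nabla V_\delta(Y_0)\bigr)+\bigl(J(Y_t)-J(Y_0)\bigr)\nabla V_\delta(Y_0).
$$
The first term is handled exactly as in the constant case. The second has norm at most $L_J\|Y_t-Y_0\|\,\|\nabla V_\delta(Y_0)\|$. By Cauchy--Schwarz its $L^2$ norm is at most $L_J\|Y_t-Y_0\|_{L^4}G_4$. Inserting the $L^4$ increment bound and integrating gives
$$
L_JG_4\left[\tfrac12(1+M_J)G_4\eta^2+\tfrac{2\sqrt2}{3}(d(d+2))^{1/4}\eta^{3/2}\right],
$$
which again reduces to the stated constant because $\eta\leq1$.

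The main obstacle is the second, cross term. Bounding it requires decoupling the product of two dependent random variables, which forces the passage to fourth moments. It also requires the $L^4$ control of the drift integral, which relies on stationarity holding in $L^4$ and not merely in law at fixed times. All the remaining steps are routine bookkeeping.
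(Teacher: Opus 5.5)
Your proposal is correct and follows the paper's proof essentially step for step. It uses the same splitting of $B_J(Y_t)-B_J(Y_0)$, Minkowski's inequality, the Cauchy--Schwarz passage to $L_J\|Y_t-Y_0\|_{L^4}G_4$, the stationary $L^2$ and $L^4$ increment bounds, and $\eta^{1/2}\leq1$ to absorb the $\eta^2$ terms. One small correction to your closing remark: only the one-time marginals $Y_s\sim\pi_\delta$ enter the Minkowski bound, so invariance in law at fixed times, as given by Lemma~\ref{lemma:pi}, is all that is needed.
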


\begin{proof}
See Appendix~\ref{proof:lemma:1:step:error}.
\end{proof}
In the applications below, we denote
\begin{equation}
C_{\mathcal R}:=
\begin{cases}
C_{\mathcal R}^{\rm const},&J\text{ is constant},\\
C_{\mathcal R}^{\rm sd},&J\text{ is state dependent}.
\end{cases}
\label{eq:local:error:choice}
\end{equation}
When Lemma~\ref{lemma:1:step:error} is applied with $f=V_\delta$ and $\pi=\pi_\delta$, the gradient moment bounds in \eqref{eq:moment:target:gradient:bounds} give
\begin{equation}
G_2^2\leq2L_\delta^2M_{2,\delta}+2g_\delta^2, \qquad G_4^4\leq8L_\delta^4M_{4,\delta}+8g_\delta^4.
\label{eq:local:error:target:moments}
\end{equation}

\subsection{Strongly Convex Case}
We first state the convexity and smoothness assumption.
\begin{assumption}
\label{assumption:mu:L}
The function $f$ is strongly convex with parameter $\mu$ and has an $L$ Lipschitz gradient.
\end{assumption}
By Lemma~\ref{lemma:squared:distance:penalty}, $V_\delta$ is strongly convex with parameter $\mu$ and has an $L_\delta$ Lipschitz gradient. Hence
\begin{align}
\langle x,\nabla V_\delta(x)\rangle &=\langle x,\nabla V_\delta(x)-\nabla V_\delta(0)\rangle +\langle x,\nabla V_\delta(0)\rangle \geq\frac{\mu}{2}\|x\|^2 -\frac{\|\nabla V_\delta(0)\|^2}{2\mu}.
\label{eq:strong:convexity:implies:dissipativity}
\end{align}
In particular, applying Lemma~\ref{lemma:uniform:bound} with $m_\delta=\mu/2$ and $b_\delta=\|\nabla V_\delta(0)\|^2/(2\mu)$ verifies $M_{2,\delta},M_{4,\delta}<\infty$ in~\eqref{eq:local:error:target:moments}. Lemma C.5 of~\cite{gurbuzbalaban2024penalized} verifies the exponential integrability hypothesis of Theorem~\ref{thm:dist:perturb} with $\hat\alpha=\mu/4$ and $\hat x$ the minimizer of $f$, so its penalty approximation bound applies. The unique minimizer $x_*$ of $V_\delta$ satisfies the bound in Lemma~\ref{lemma:min}. For $J=0$, Proposition 2.21 of~\cite{gurbuzbalaban2024penalized} gives the reversible constrained sampling complexity. The estimates below retain the penalty layer and treat the full nonreversible drift in the $P_J$ metric.

\begin{lemma}
\label{lemma:L:2}
Suppose Assumptions~\ref{assumption:C},~\ref{assumption:mu:L}, and~\ref{assumption:sg} hold, $J(x)^\top=-J(x)$, and $\|J(x)\|_{\mathrm{op}}\leq M_J$. Let $x_*=\arg\min V_\delta$ and $c=\max\left\{0,\frac{\|\nabla f(0)\|}{\mu R}-1\right\}$, as in Lemma~\ref{lemma:min}. Let $x_0\sim\nu_0$ and assume the finite initial potential gap moment
\begin{equation}
\label{eq:strong:initial:potential}
\mathbb E_{\nu_0} \left[V_\delta(x_0)-V_\delta(x_*)\right]<\infty.
\end{equation}
If
\begin{equation}
\label{eq:strong:moment:stepsize}
\eta \leq \frac{1}{L_\delta\left(1+M_J^2\right)\left(1 + \frac{4L_\delta^2\sigma^2}{\mu^2}\right)} \wedge \frac{1}{2\mu},
\end{equation}
then
\begin{equation}
\sup_k\mathbb{E}\left\|x_k\right\|^2 \leq \frac{4}{\mu}\mathbb E_{\nu_0} \left[V_\delta\left(x_0\right)-V_\delta\left(x_*\right)\right]+ \frac{4(\eta C_{\mathrm{sg},\delta} + L_\delta d)}{\mu^2} + 2(1+c)^2R^2 < \infty,
\end{equation}
with
\begin{equation}
C_{\mathrm{sg},\delta} := 2L_\delta L^2\left(1+M_J^2\right) \sigma^2\left\|x_*\right\|^2+ L_\delta\left(1+M_J^2\right) \sigma^2\|\nabla f(0)\|^2.
\label{eq:strong:sg:constant}
\end{equation}
\end{lemma}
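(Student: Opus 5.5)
We would prove Lemma~\ref{lemma:L:2} by a Lyapunov argument with the potential gap $\Phi(x):=V_\delta(x)-V_\delta(x_*)\geq0$, followed by strong convexity to pass from $\Phi$ to $\|x\|^2$.

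\emph{One-step descent.} Write $g_k=\nabla V_\delta(x_k)$, $\zeta_k=\widetilde\nabla f(x_k,w_k)-\nabla f(x_k)$ and $D_k=(I+J(x_k))(g_k+\zeta_k)$, so that $x_{k+1}=x_k-\eta D_k+\sqrt{2\eta}\xi_{k+1}$. Since $V_\delta$ has an $L_\delta$ Lipschitz gradient, the descent lemma gives
\[
V_\delta(x_{k+1})\leq V_\delta(x_k)-\langle g_k,\eta D_k-\sqrt{2\eta}\xi_{k+1}\rangle+\frac{L_\delta}{2}\|\eta D_k-\sqrt{2\eta}\xi_{k+1}\|^2 .
\]
We condition on $x_k$. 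The cross terms with $\xi_{k+1}$ vanish, and $\zeta_k$ has conditional mean zero by Assumption~\ref{assumption:sg}. Skew symmetry gives $\langle g_k,J(x_k)g_k\rangle=0$, so the linear term equals $-\eta\|g_k\|^2$. For the quadratic term we use
\[
\|(I+J)v\|^2=\|v\|^2+\|Jv\|^2\leq(1+M_J^2)\|v\|^2,
\]
because $\langle v,Jv\rangle=0$; this is the reason the stepsize condition involves $1+M_J^2$ rather than $(1+M_J)^2$. Together with the independence of $\zeta_k$ and $\xi_{k+1}$, this yields
\[
\mathbb E[\Phi(x_{k+1})\mid x_k]\leq\Phi(x_k)-\eta\|g_k\|^2+\frac{L_\delta\eta^2}{2}(1+M_J^2)\left(\|g_k\|^2+\mathbb E[\|\zeta_k\|^2\mid x_k]\right)+L_\delta\eta d .
\]

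\emph{Controlling the gradient noise.} By Assumption~\ref{assumption:sg},
\[
\mathbb E[\|\zeta_k\|^2\mid x_k]\leq 2\sigma^2\left(L^2\|x_k\|^2+\|\nabla f(0)\|^2\right),
\]
and we split $\|x_k\|^2\leq2\|x_k-x_*\|^2+2\|x_*\|^2$. Strong convexity gives $\|x_k-x_*\|\leq\|g_k\|/\mu$, so the $L^2\sigma^2\|x_k-x_*\|^2$ part is absorbed into a multiple $4L^2\sigma^2/\mu^2$ of $\|g_k\|^2$. Since $L\leq L_\delta$, this is compatible with the factor $1+4L_\delta^2\sigma^2/\mu^2$ in \eqref{eq:strong:moment:stepsize}. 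The remaining terms produce exactly $C_{\mathrm{sg},\delta}$ of \eqref{eq:strong:sg:constant}, times $\eta^2$. Under \eqref{eq:strong:moment:stepsize}, the coefficient of $\|g_k\|^2$ is then at most $-\eta/2$.

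\emph{Contraction and the final bound.} Using the Polyak--\L{}ojasiewicz inequality $\|g_k\|^2\geq2\mu\Phi(x_k)$, which follows from strong convexity, we obtain
\[
\mathbb E\Phi(x_{k+1})\leq(1-\mu\eta)\,\mathbb E\Phi(x_k)+\eta^2C_{\mathrm{sg},\delta}+L_\delta\eta d .
\]
The condition $\eta\leq1/(2\mu)$ keeps the factor $1-\mu\eta$ in $(0,1)$. Iterating, we get
\[
\sup_k\mathbb E\Phi(x_k)\leq\mathbb E_{\nu_0}\Phi(x_0)+\frac{\eta C_{\mathrm{sg},\delta}+L_\delta d}{\mu}.
\]
Finally, strong convexity gives $\|x-x_*\|^2\leq\frac2\mu\Phi(x)$, so
\[
\|x_k\|^2\leq2\|x_k-x_*\|^2+2\|x_*\|^2\leq\frac4\mu\Phi(x_k)+2(1+c)^2R^2,
\]
where $\|x_*\|\leq(1+c)R$ is Lemma~\ref{lemma:min}. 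This gives the stated bound.

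\emph{Main obstacle.} The step needing the most care is the bookkeeping of constants in the noise absorption. The numerical factors (for example the $4$ in $4(\eta C_{\mathrm{sg},\delta}+L_\delta d)/\mu^2$) depend on how the factor $\tfrac12$ from the descent lemma and the splits are allocated. If the constants do not match as stated, I would redistribute the absorption slack, for instance by keeping $-\eta\|g_k\|^2/2$ instead of a larger fraction.
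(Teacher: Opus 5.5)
Your proposal is correct and follows essentially the same route as the paper. Both arguments apply the descent lemma to $V_\delta$, use skew symmetry to remove $\langle g_k,J(x_k)g_k\rangle$ and obtain the $1+M_J^2$ factor, apply Polyak--\L{}ojasiewicz to get a contraction of at least $1-\mu\eta$, and then pass to $\mathbb E\|x_k\|^2$ through strong convexity and Lemma~\ref{lemma:min}. The only difference is bookkeeping. You absorb the $L^2\sigma^2\|x_k-x_*\|^2$ noise term into $\|g_k\|^2$ via $\|x_k-x_*\|\leq\|g_k\|/\mu$, while the paper bounds it by $\frac{2}{\mu}\Phi(x_k)$ and checks that the resulting factor $1-\beta_\eta$ satisfies $\beta_\eta\geq\eta\mu$. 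Both lead to $\mathbb E\Phi(x_{k+1})\leq(1-\mu\eta)\mathbb E\Phi(x_k)+\eta^2C_{\mathrm{sg},\delta}+L_\delta\eta d$, and your constants already match the statement exactly, so no redistribution of slack is needed.
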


\begin{proof}
See Appendix~\ref{proof:lemma:L:2}.
\end{proof}

The stochastic gradient noise is therefore bounded by
\begin{equation}
\mathbb{E}\left\|\widetilde\nabla f\left(x_k\right)-\nabla f\left(x_k\right)\right\|^2 \leq 2 \sigma^2\left(L^2 \mathbb{E}\left\|x_k\right\|^2+\|\nabla f(0)\|^2\right) \leq \sigma_V^2 d,
\end{equation}
where
\begin{equation}
\label{eq:sigma:V}
\sigma_V^2 := \frac{2\sigma^2}{d} \Bigg[ L^2\left( \frac{4}{\mu}\mathbb E_{\nu_0} \left[V_\delta\left(x_0\right)-V_\delta\left(x_*\right)\right] +\frac{4(\eta C_{\mathrm{sg},\delta} + L_\delta d)}{\mu^2} +2(1+c)^2R^2 \right) +\|\nabla f(0)\|^2 \Bigg].
\end{equation}
Applying Lemma~\ref{lemma:1:step:error} with $f=V_\delta$ and $\pi=\pi_\delta$ controls the local remainder in the synchronous coupling used below.


\begin{theorem}
\label{thm:wasserstein:mu}
Let $x_0\sim\nu_0$ and assume the finite initial potential gap moment~\eqref{eq:strong:initial:potential}. Suppose Assumptions~\ref{assumption:C},~\ref{assumption:mu:L},~\ref{assumption:sg}, and \ref{assumption:PJ:full:drift} hold, $J(x)^\top=-J(x)$, and $\|J(x)\|_{\mathrm{op}}\leq M_J$. If $J$ is state dependent, also assume that it is $L_J$ Lipschitz in the operator norm and satisfies~\eqref{eq:J:compatibility}. Let
\begin{equation}
\sigma_{P,J}:=\sqrt{\lambda_{\max}(P_J)}(1+M_J)\sigma_V,
\label{eq:strong:coupling:sigma}
\end{equation}
where $\sigma_V$ is defined in~\eqref{eq:sigma:V}. If $ 0<\eta\leq1\wedge\frac{\alpha_{P,J}}{2L_{P,J}^2}, $ and~\eqref{eq:strong:moment:stepsize} holds, then for every $K\in\mathbb N_0$,
\begin{align}
\label{eq:strong:W2:constrained}
\mathcal W_2(\nu_K,\pi) &\leq \sqrt{\frac{\lambda_{\max}(P_J)}{\lambda_{\min}(P_J)}} e^{-\alpha_{P,J}K\eta/2}\mathcal W_2(\nu_0,\pi_\delta)
\nonumber\\
&\qquad+\frac{\sqrt\eta}{\sqrt{\lambda_{\min}(P_J)}} \left(\frac{\sqrt{2\lambda_{\max}(P_J)}C_{\mathcal R}}{\alpha_{P,J}} +\frac{\sigma_{P,J}\sqrt d}{\sqrt{\alpha_{P,J}}}\right) +\mathcal W_2(\pi_\delta,\pi).
\end{align}
\end{theorem}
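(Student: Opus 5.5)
We bound $\mathcal W_2(\nu_K,\pi)\le \mathcal W_2(\nu_K,\pi_\delta)+\mathcal W_2(\pi_\delta,\pi)$ and use a synchronous coupling in the $P_J$ metric for the first term. By Lemma~\ref{lemma:pi}, $\pi_\delta$ is invariant for the diffusion. Let $(Y_t)$ solve the nonreversible SDE with $Y_0\sim\pi_\delta$, coupled optimally (for $\mathcal W_{2,P_J}$) with $x_0\sim\nu_0$. Drive $Y$ and the iterates \eqref{eq:pnsgld} by the same Brownian motion, with $\sqrt{2\eta}\xi_{k+1}=\sqrt2(W_{(k+1)\eta}-W_{k\eta})$. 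Then $Y_{k\eta}\sim\pi_\delta$ for all $k$. Writing $y_k:=Y_{k\eta}$ and $e_k:=x_k-y_k$, the noise cancels and
\begin{equation*}
e_{k+1}=H_\eta(x_k)-H_\eta(y_k)+\mathcal R_k-\eta(I+J(x_k))\zeta_k,
\end{equation*}
where $\zeta_k:=\widetilde\nabla f(x_k,w_k)-\nabla f(x_k)$. The remainder $\mathcal R_k$ has the same law as the $\mathcal R_0$ of Lemma~\ref{lemma:1:step:error}, by stationarity and the Markov property. We use the mixed ordering of $\mathcal R_k$ so that the sign matches the drift being subtracted.

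The key steps are as follows.

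\textbf{Contraction.} By Lemma~\ref{lemma:H}, under $\eta\le\alpha_{P,J}/(2L_{P,J}^2)$ (the bound $\eta\le 2/(3\alpha_{P,J})$ follows since $\alpha_{P,J}\le L_{P,J}$ and $\eta\le 1$ after adjusting, or we simply note $1-\tfrac32\alpha\eta\ge0$ is only needed as a square root), we have $\|H_\eta(x_k)-H_\eta(y_k)\|_{P_J}\le\sqrt{1-\tfrac32\alpha_{P,J}\eta}\,\|e_k\|_{P_J}$.

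\textbf{Stochastic gradient noise.} Conditionally on $\mathcal F_k$ (which contains $x_k$, $y_k$ and the Brownian path up to $k\eta$), $\zeta_k$ has mean zero, and $w_k$ is independent of the Brownian increment. Hence the cross term between $\zeta_k$ and $H_\eta(x_k)-H_\eta(y_k)$ vanishes in expectation. By Lemma~\ref{lemma:L:2} and \eqref{eq:sigma:V},
\begin{equation*}
\mathbb E\|\eta(I+J)\zeta_k\|_{P_J}^2\le\eta^2\sigma_{P,J}^2 d.
\end{equation*}
The cross term between $\zeta_k$ and $\mathcal R_k$ is handled together with $\mathcal R_k$ using Cauchy--Schwarz and Young.

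\textbf{Remainder.} We have $\|\mathcal R_k\|_{L^2(P_J)}\le\sqrt{\lambda_{\max}(P_J)}\,C_{\mathcal R}\eta^{3/2}$. The cross term $2\mathbb E\langle H_\eta(x_k)-H_\eta(y_k),\mathcal R_k\rangle_{P_J}$ is not mean zero, so we use Young's inequality $2ab\le \tfrac{\alpha\eta}{2}a^2+\tfrac{2}{\alpha\eta}b^2$. This yields the recursion
\begin{equation*}
u_{k+1}^2\le(1-\alpha_{P,J}\eta)u_k^2+\frac{c\,\lambda_{\max}(P_J)C_{\mathcal R}^2\eta^2}{\alpha_{P,J}}+c'\eta^2\sigma_{P,J}^2d,
\end{equation*}
where $u_k:=\|e_k\|_{L^2(P_J)}$. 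Unrolling, using $(1-\alpha\eta)^K\le e^{-\alpha K\eta}$ and $\sum_j(1-\alpha\eta)^j\le 1/(\alpha\eta)$, and taking square roots via $\sqrt{a+b+c}\le\sqrt a+\sqrt b+\sqrt c$ gives
\begin{equation*}
u_K\le e^{-\alpha_{P,J}K\eta/2}u_0+\sqrt\eta\left(\frac{\sqrt{2\lambda_{\max}(P_J)}C_{\mathcal R}}{\alpha_{P,J}}+\frac{\sigma_{P,J}\sqrt d}{\sqrt{\alpha_{P,J}}}\right).
\end{equation*}
Tracking the constants so they match exactly (the factor $\sqrt2$ and the absence of a constant on the noise term) requires splitting the $\tfrac32\alpha\eta$ budget carefully: spend $\tfrac12\alpha\eta$ on Young's inequality for $\mathcal R$, keep $\alpha\eta$ for the contraction, and absorb the $\eta$-order factor on the noise term using $\eta\le1$. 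An alternative is to work with $u_k$ directly via Minkowski for $\mathcal R$ and Pythagoras for $\zeta$.

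\textbf{Norm conversion.} We convert back to Euclidean distance using $\|z\|\le\lambda_{\min}(P_J)^{-1/2}\|z\|_{P_J}$ and $u_0\le\sqrt{\lambda_{\max}(P_J)}\,\mathcal W_2(\nu_0,\pi_\delta)$. Taking the coupling optimal for the Euclidean $\mathcal W_2$ gives the prefactor $\sqrt{\kappa(P_J)}$. Finally we add $\mathcal W_2(\pi_\delta,\pi)$.

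The main obstacle is the remainder cross term, since $\mathcal R_k$ is correlated with $e_k$. The Young split must leave enough contraction to produce the stated rate $e^{-\alpha K\eta/2}$ and the constant $\sqrt2/\alpha$. A secondary subtlety is that $\sigma_V$ uses the uniform moment bound of Lemma~\ref{lemma:L:2}, which is why \eqref{eq:strong:moment:stepsize} is imposed.
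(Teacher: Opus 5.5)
Your outline is the paper's proof. The ingredients match: synchronous coupling with the stationary diffusion started from $\pi_\delta$, Lemma~\ref{lemma:H} for the drift contraction, Lemma~\ref{lemma:1:step:error} for $\mathcal R_k$, a weighted Young inequality on the $H_\eta$--$\mathcal R_k$ cross term, cancellation of the stochastic-gradient error by conditional unbiasedness, geometric summation, norm equivalence, and the triangle inequality. The points below concern only the constant bookkeeping you left open, which the statement requires exactly.

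\textbf{The $\mathcal R_k$--$\zeta_k$ cross term vanishes.} You treat $\mathbb E\langle\mathcal R_k,(I+J(x_k))\zeta_k\rangle_{P_J}$ as nonzero because you condition only on the Brownian path up to $k\eta$. But $w_k$ is independent of the whole Brownian path, of $(x_0,Y_0)$, and of earlier seeds. So you may condition on $\mathcal F_k=\sigma(x_0,Y_0,w_0,\dots,w_{k-1},W_s:s\le(k+1)\eta)$, as the paper does. Then $\mathcal R_k$ is $\mathcal F_k$-measurable and $\mathbb E[\zeta_k\mid\mathcal F_k]=0$. The whole cross term with $H_\eta(x_k)-H_\eta(y_k)+\mathcal R_k$ vanishes, leaving exactly $\eta^2\sigma_{P,J}^2d$. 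Your ``Pythagoras'' alternative implicitly needs this too. Absorbing the extra term ``using $\eta\le1$'' is not the right mechanism. If you kept it, Cauchy--Schwarz would bound its accumulated contribution by $2\sqrt{\lambda_{\max}(P_J)}\,C_{\mathcal R}\sigma_{P,J}\sqrt d\,\eta^{3/2}/\alpha_{P,J}$. That fits inside the slack $2\sqrt{AB}$ of $\sqrt{A+B}\le\sqrt A+\sqrt B$ only because $\alpha_{P,J}\eta\le\tfrac12$.

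\textbf{The Young weight for the remainder.} The paper uses $\epsilon=\alpha_{P,J}\eta/\bigl(2(1-\tfrac32\alpha_{P,J}\eta)\bigr)$. This gives $(1+\epsilon)(1-\tfrac32\alpha_{P,J}\eta)=1-\alpha_{P,J}\eta$ and $1+\epsilon^{-1}=2/(\alpha_{P,J}\eta)-2$, hence the increment $2\lambda_{\max}(P_J)C_{\mathcal R}^2\eta^2/\alpha_{P,J}$ directly. Your weight $\alpha_{P,J}\eta/2$ leaves an extra $\lambda_{\max}(P_J)C_{\mathcal R}^2\eta^3$. You can remove it only by keeping the $-\tfrac34\alpha_{P,J}^2\eta^2$ in the contraction factor when summing the geometric series.

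\textbf{The stepsize hypothesis of Lemma~\ref{lemma:H}.} The condition $\eta\le 2/(3\alpha_{P,J})$ follows from $\alpha_{P,J}\eta\le\alpha_{P,J}^2/(2L_{P,J}^2)\le\tfrac12$, using $\alpha_{P,J}\le L_{P,J}$. It does not come from $\eta\le 1$.
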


\begin{proof}
See Appendix~\ref{proof:thm:wasserstein:mu}.
\end{proof}

Moreover, if $P_J$ is compatible with the Hessian in the sense that $\mu I\preceq P_J\preceq L_\delta I$, then $\sqrt{\frac{\lambda_{\max}(P_J)}{\lambda_{\min}(P_J)}} \leq\sqrt{\frac{L_\delta}{\mu}}$. Equation~\eqref{eq:strong:coupling:sigma} then gives
\begin{align}
\frac{\sigma_{P,J}}{\sqrt{\lambda_{\min}(P_J)}} &= \sqrt{\frac{\lambda_{\max}(P_J)}{\lambda_{\min}(P_J)}} (1+M_J)\sigma_V\leq\sqrt{\frac{L_\delta}{\mu}}(1+M_J)\sigma_V.
\label{eq:strong:W2:metric:constants}
\end{align}
Combining \eqref{eq:strong:W2:metric:constants} with \eqref{eq:strong:W2:constrained} proves
\begin{align}
\mathcal W_2(\nu_K,\pi) &\leq\sqrt{\frac{L_\delta}{\mu}} \left[e^{-\alpha_{P,J}K\eta/2}\mathcal W_2(\nu_0,\pi_\delta) +\sqrt\eta\left(\frac{\sqrt2C_{\mathcal R}}{\alpha_{P,J}} +\frac{(1+M_J)\sigma_V\sqrt d}{\sqrt{\alpha_{P,J}}}\right)\right] +\mathcal W_2(\pi_\delta,\pi).
\end{align}

\subsection{Nonconvex Case}

In this section, we consider~\eqref{eq:pnsgld} with a nonconvex potential, the reversible penalized stochastic gradient result under smoothness and dissipativity is given in \cite[Proposition~2.23]{gurbuzbalaban2024penalized}. Here the additional $P_J$ condition isolates contraction of the full nonreversible drift.
\begin{assumption}
\label{assumption:non-convex:dissipativity}
There exist $m_\delta>0$ and $0\leq b_\delta<\infty$ such that
\begin{equation}
\left\langle x,\nabla f(x)+\frac1\delta\nabla S(x)\right\rangle \geq m_\delta\|x\|^2-b_\delta, \qquad x\in\mathbb R^d.
\end{equation}
\end{assumption}
Thus $V_\delta=f+S/\delta$ is $(m_\delta,b_\delta)$ dissipative. For the squared distance penalty under Assumptions~\ref{assumption:C} and~\ref{assumption:smooth}, Lemma~\ref{lemma:dissipative} verifies this condition whenever $0<\delta<1/(L+1/2)$. Assumption~\ref{assumption:PJ:full:drift} continues to supply the global contraction and Lipschitz bounds for the full drift. Assume
\begin{equation}
J(x)^\top=-J(x), \qquad \|J(x)\|_{\mathrm{op}}\leq M_J, \qquad x\in\mathbb R^d.
\label{eq:non-convex:J:bounds}
\end{equation}
For constant $J$, $L_J=0$ and skew symmetry gives $\nabla\cdot(J\nabla V_\delta)=\mathrm{Tr}(J\nabla^2V_\delta)=0$ in the distributional sense, so only the $G_2$ bound is required. If $J$ is state dependent, also assume that it is $L_J$ Lipschitz in the operator norm and satisfies~\eqref{eq:J:compatibility}. Assumption \ref{assumption:non-convex:dissipativity} and Lemma~\ref{lemma:uniform:bound} give
$$
M_{2,\delta}=\frac{d+b_\delta}{m_\delta}<\infty, \qquad M_{4,\delta}=\frac{(d+b_\delta)(d+2+b_\delta)}{m_\delta^2}<\infty.
$$
Substitution into~\eqref{eq:local:error:target:moments} verifies the required $G_2$ and $G_4$, so Lemma~\ref{lemma:1:step:error} therefore applies with $f=V_\delta$ and $\pi=\pi_\delta$.

\begin{theorem}
\label{thm:PJ:wasserstein:convergence}
Let $B_J$ be defined by~\eqref{eq:full:drift:definition}, and let $\nu_K$ be the law of the PNSGLD iterate after $K$ steps in~\eqref{eq:pnsgld}. Suppose Assumptions~\ref{assumption:C},~\ref{assumption:smooth},~\ref{assumption:sg}, \ref{assumption:PJ:full:drift}, and \ref{assumption:non-convex:dissipativity} hold, and assume $\mathbb E_{\nu_0}V_\delta(x_0)<\infty$. Suppose in addition that $J$ satisfies \eqref{eq:non-convex:J:bounds}. If $J$ is state dependent, assume that it is $L_J$ Lipschitz in the operator norm and satisfies \eqref{eq:J:compatibility}. If
\begin{equation}
0<\eta\leq 1\wedge\frac{\alpha_{P,J}}{2L_{P,J}^2} \wedge\frac1{L_\delta(1+M_J^2)} \wedge \frac{q_{U,\delta}c_{U,1}} {4L_\delta(1+M_J^2)\sigma^2L^2},
\label{eq:PJ:convergence:stepsize}
\end{equation}
where the last upper bound is interpreted as $+\infty$ if $\sigma L=0$, then Lemma~\ref{lemma:PJ:sg:second:moment} defines $C_x^{\mathrm{sg}}$ and proves~\eqref{eq:PJ:sg:second:moment}. For every $K\in\mathbb N_0$,
\begin{align}
\label{eq:PJ:W2:convergence}
\mathcal W_2(\nu_K,\pi_\delta) &\leq \sqrt{\frac{\lambda_{\max}(P_J)}{\lambda_{\min}(P_J)}} e^{-\alpha_{P,J}K\eta/2} \mathcal W_2(\nu_0,\pi_\delta) + \frac{\sqrt{2\lambda_{\max}(P_J)}C_{\mathcal R}} {\alpha_{P,J}\sqrt{\lambda_{\min}(P_J)}}\sqrt\eta
\nonumber\\
&\qquad +\frac{\sqrt\eta}{\sqrt{\lambda_{\min}(P_J)}} \left( \frac{2\lambda_{\max}(P_J)(1+M_J^2)\sigma^2} {\alpha_{P,J}} \left( L^2C_x^{\mathrm{sg}} +\|\nabla f(0)\|^2 \right) \right)^{1/2}.
\end{align}
The constants $\alpha_{P,J}$ and $L_{P,J}$ are defined in \eqref{eq:strong:full:drift}, $M_J$ in \eqref{eq:non-convex:J:bounds}, $L_\delta$ in \eqref{eq:diss:const}, $c_{U,1}$ and $q_{U,\delta}$ in \eqref{eq:cU1:explicit} and~\eqref{eq:moment:euler:qr:definition}, and $C_{\mathcal R}$ in~\eqref{eq:local:error:choice}. The variance parameter $\sigma$ is the one in Assumption~\ref{assumption:sg}.
\end{theorem}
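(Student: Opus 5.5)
We will use a synchronous coupling in the $P_J$ metric, in the same way as for Theorem~\ref{thm:wasserstein:mu}. The only new ingredient is that the moment bound on the iterates now comes from dissipativity, through Lemma~\ref{lemma:PJ:sg:second:moment}, rather than from strong convexity.

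\textbf{Coupling.} Let $(Y_t)$ be the stationary diffusion of Lemma~\ref{lemma:1:step:error}, started at $Y_0\sim\pi_\delta$ and optimally coupled with $x_0\sim\nu_0$ for $\mathcal W_{2,P_J}$. Drive $Y$ with the Brownian motion whose increments on $[k\eta,(k+1)\eta]$ equal $\sqrt{\eta}\,\xi_{k+1}$. By Lemma~\ref{lemma:pi}, $Y_{k\eta}\sim\pi_\delta$ for every $k$. Set $e_k:=x_k-Y_{k\eta}$. The Gaussian increments cancel, and we get
\begin{equation*}
e_{k+1}=H_\eta(x_k)-H_\eta(Y_{k\eta})+\mathcal R_k-\eta\,\zeta_k,
\end{equation*}
with the two error terms
\begin{equation*}
\zeta_k:=(I+J(x_k))\bigl(\widetilde\nabla f(x_k,w_k)-\nabla f(x_k)\bigr),
\qquad
\mathcal R_k=\int_{k\eta}^{(k+1)\eta}\bigl(B_J(Y_t)-B_J(Y_{k\eta})\bigr)dt .
\end{equation*}
By stationarity and Lemma~\ref{lemma:1:step:error}, applied with the moment bounds $M_{2,\delta},M_{4,\delta}$ and \eqref{eq:local:error:target:moments}, we have $\|\mathcal R_k\|_{L^2(P_J)}\leq\sqrt{\lambda_{\max}(P_J)}\,C_{\mathcal R}\eta^{3/2}$.

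\textbf{One-step recursion.} Conditionally on $(x_k,Y_{k\eta})$, the noise $\zeta_k$ has mean zero, and $w_k$ is independent of $Y$. Hence $\zeta_k$ is orthogonal in $L^2(P_J)$ to $H_\eta(x_k)-H_\eta(Y_{k\eta})$. It is also orthogonal to $\mathcal R_k$, provided we condition on $(x_k,\mathcal F^Y)$; this is legitimate because $w_k$ is independent of the Brownian path. Therefore
\begin{equation*}
\mathbb E\|e_{k+1}\|_{P_J}^2=\|H_\eta(x_k)-H_\eta(Y_{k\eta})+\mathcal R_k\|_{L^2(P_J)}^2+\eta^2\,\mathbb E\|\zeta_k\|_{P_J}^2 .
\end{equation*}
\emph{First term.} Lemma~\ref{lemma:H} gives contraction factor $\sqrt{1-\tfrac32\alpha_{P,J}\eta}$. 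Combining this with the weighted inequality $(a+b)^2\leq(1+\epsilon)a^2+(1+\epsilon^{-1})b^2$ for $\epsilon=\alpha_{P,J}\eta/2$, the first term is at most
\begin{equation*}
(1-\alpha_{P,J}\eta)\,\mathbb E\|e_k\|_{P_J}^2+\frac{2\lambda_{\max}(P_J)\,C_{\mathcal R}^2\,\eta^2}{\alpha_{P,J}}\cdot(1+o(1)).
\end{equation*}
Some care is needed with the constants here to obtain exactly $2C_{\mathcal R}^2/\alpha$.

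\emph{Noise term.} Assumption~\ref{assumption:sg} gives
\begin{equation*}
\eta^2\,\mathbb E\|\zeta_k\|_{P_J}^2\leq 2\eta^2\lambda_{\max}(P_J)(1+M_J^2)\sigma^2\bigl(L^2\,\mathbb E\|x_k\|^2+\|\nabla f(0)\|^2\bigr).
\end{equation*}
Since $J$ is skew, $\|(I+J)v\|^2=\|v\|^2+\|Jv\|^2$, which yields the factor $1+M_J^2$. We then bound $\mathbb E\|x_k\|^2\leq C_x^{\mathrm{sg}}$ uniformly in $k$ via Lemma~\ref{lemma:PJ:sg:second:moment}; the last two stepsize restrictions in \eqref{eq:PJ:convergence:stepsize} are exactly what that lemma requires.

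\textbf{Conclusion.} Writing $D$ for the sum of the two additive constants above, we have $\mathbb E\|e_{k+1}\|_{P_J}^2\leq(1-\alpha_{P,J}\eta)\,\mathbb E\|e_k\|_{P_J}^2+D\eta^2$. Unrolling gives
\begin{equation*}
\mathbb E\|e_K\|_{P_J}^2\leq e^{-\alpha_{P,J}K\eta}\,\mathcal W_{2,P_J}^2(\nu_0,\pi_\delta)+D\eta/\alpha_{P,J}.
\end{equation*}
We take square roots using $\sqrt{a+b+c}\leq\sqrt a+\sqrt b+\sqrt c$. Finally we convert from the $P_J$ metric to the Euclidean one using $\mathcal W_2\leq\lambda_{\min}(P_J)^{-1/2}\,\mathcal W_{2,P_J}$ and $\mathcal W_{2,P_J}(\nu_0,\pi_\delta)\leq\lambda_{\max}(P_J)^{1/2}\,\mathcal W_2(\nu_0,\pi_\delta)$. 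This reproduces \eqref{eq:PJ:W2:convergence}.

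The main obstacle is the orthogonality step for the gradient noise, since the noise enters at the state dependent point $x_k$ through $J(x_k)$ and must be kept separate from the remainder $\mathcal R_k$. A second delicate point is getting the remainder constant exactly $2C_{\mathcal R}^2/\alpha_{P,J}$ from the Young splitting combined with the $\tfrac32$ factor in Lemma~\ref{lemma:H}. The uniform second moment bound itself is delegated to Lemma~\ref{lemma:PJ:sg:second:moment}.
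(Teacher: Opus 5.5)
Your plan is the paper's proof. It uses synchronous coupling with the stationary diffusion and conditional unbiasedness of the oracle to remove the cross term with $\zeta_k$. The paper conditions on $\mathcal F_k=\sigma(x_0,Y_0,w_0,\ldots,w_{k-1},W_s:s\leq(k+1)\eta)$, which is the $\sigma$-algebra you need: $w_k$ must be independent of $x_0,Y_0,w_0,\ldots,w_{k-1}$ and $W$ jointly, not only of the Brownian path. The rest also matches: Lemma~\ref{lemma:1:step:error} for $\mathcal R_k$, Lemma~\ref{lemma:PJ:sg:second:moment} for $\sup_k\mathbb E\|x_k\|^2$, and norm equivalence at the end.

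The step you flag as delicate is a real gap, because your choice $\epsilon=\alpha_{P,J}\eta/2$ does not give the stated constant. It yields $(1+\epsilon)(1-\tfrac32\alpha_{P,J}\eta)\leq1-\alpha_{P,J}\eta$, but the remainder coefficient becomes $(1+\epsilon^{-1})\lambda_{\max}(P_J)C_{\mathcal R}^2\eta^3=\frac{2\lambda_{\max}(P_J)C_{\mathcal R}^2}{\alpha_{P,J}}\eta^2\bigl(1+\tfrac{\alpha_{P,J}\eta}{2}\bigr)$. This extra additive term cannot be absorbed by the multiplicative slack $\tfrac34\alpha_{P,J}^2\eta^2\,\mathbb E\|e_k\|_{P_J}^2$. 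So ``$(1+o(1))$'' is not a bound, and \eqref{eq:PJ:W2:convergence} would only follow with a larger constant.

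The paper instead takes $\epsilon=\alpha_{P,J}\eta/[2(1-\tfrac32\alpha_{P,J}\eta)]$. This is positive: Cauchy--Schwarz in Assumption~\ref{assumption:PJ:full:drift} gives $\alpha_{P,J}\leq L_{P,J}$, so $\alpha_{P,J}\eta\leq\alpha_{P,J}^2/(2L_{P,J}^2)\leq\tfrac12$ and $1-\tfrac32\alpha_{P,J}\eta\geq\tfrac14$. With this choice, $(1+\epsilon)(1-\tfrac32\alpha_{P,J}\eta)=1-\alpha_{P,J}\eta$ exactly. Also $1+\epsilon^{-1}=\frac{2}{\alpha_{P,J}\eta}-2\leq\frac{2}{\alpha_{P,J}\eta}$, which gives exactly $\frac{2\lambda_{\max}(P_J)C_{\mathcal R}^2}{\alpha_{P,J}}\eta^2$. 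The rest of your argument then goes through unchanged.
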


\begin{proof}
See Appendix~\ref{proof:thm:PJ:wasserstein:convergence}.
\end{proof}

In the strongly convex case, Lemma~\ref{lemma:L:2} yields the uniform second moment used to define $\sigma_V$ in~\eqref{eq:sigma:V}. In the nonconvex case, Lemma~\ref{lemma:PJ:sg:second:moment} instead gives $\sup_k\mathbb E\|x_k\|^2\leq C_x^{\mathrm{sg}}$, which enters the stochastic gradient term in~\eqref{eq:PJ:W2:convergence}. Theorem~\ref{thm:wasserstein:mu} includes the penalty approximation error and bounds the distance to $\pi$, whereas Theorem~\ref{thm:PJ:wasserstein:convergence} bounds the distance to $\pi_\delta$.

\begin{corollary}
\label{cor:PJ:complexity}
Under the assumptions of Theorem~\ref{thm:PJ:wasserstein:convergence}, let $\kappa(P_J)$ be defined by \eqref{eq:PJ:condition:number}. Assume $\mathcal W_2(\nu_0,\pi_\delta)<\infty$. To make $\mathcal W_2(\nu_K,\pi_\delta)\leq \varepsilon$, choose $\eta$ so that~\eqref{eq:PJ:convergence:stepsize} holds and
\begin{equation}
\eta \leq \frac{\lambda_{\min}(P_J)\varepsilon^2} {16\lambda_{\max}(P_J)}\left( \dfrac{C_{\mathcal R}^2}{\alpha_{P,J}^2} +\dfrac{(1+M_J^2)\sigma^2}{\alpha_{P,J}} \left( L^2C_x^{\mathrm{sg}} +\|\nabla f(0)\|^2 \right) \right)^{-1}.
\label{eq:PJ:complexity:accuracy:stepsize}
\end{equation}
If
\begin{equation}
\label{eq:PJ:complexity:initial:large}
2\sqrt{\kappa(P_J)}\mathcal W_2(\nu_0,\pi_\delta)>\varepsilon,
\end{equation}
choose
\begin{equation}
K \geq \frac{2}{\alpha_{P,J}\eta} \log\left( \frac{2\sqrt{\kappa(P_J)}\mathcal W_2(\nu_0,\pi_\delta)}{\varepsilon} \right).
\label{eq:PJ:complexity:K:lower}
\end{equation}
If the reverse inequality in~\eqref{eq:PJ:complexity:initial:large} holds, take $K=0$. Taking $\eta$ to be the minimum of the upper bounds in~\eqref{eq:PJ:convergence:stepsize} and~\eqref{eq:PJ:complexity:accuracy:stepsize}, and $K$ to be the smallest integer allowed above, we obtain
\begin{align}
K &=\widetilde{\mathcal O}\Bigg( \frac1{\alpha_{P,J}}\vee \frac{L_{P,J}^2}{\alpha_{P,J}^2} \vee \frac{L_\delta(1+M_J^2)}{\alpha_{P,J}} \vee \frac{L_\delta(1+M_J^2)\sigma^2L^2} {\alpha_{P,J}q_{U,\delta}c_{U,1}}
\nonumber\\
&\qquad\qquad \vee \frac{\lambda_{\max}(P_J)} {\alpha_{P,J}\lambda_{\min}(P_J)\varepsilon^2} \left[ \frac{C_{\mathcal R}^2}{\alpha_{P,J}^2} +\frac{(1+M_J^2)\sigma^2}{\alpha_{P,J}} \left( L^2C_x^{\mathrm{sg}} +\|\nabla f(0)\|^2 \right) \right] \Bigg).
\label{eq:PJ:complexity:K:order}
\end{align}
If, in addition, $\mathcal W_2(\pi_\delta,\pi)\leq\varepsilon$, then $\mathcal W_2(\nu_K,\pi)\leq2\varepsilon$.
\end{corollary}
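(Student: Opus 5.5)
The corollary follows from Theorem~\ref{thm:PJ:wasserstein:convergence} once each of the three terms on the right of~\eqref{eq:PJ:W2:convergence} is made at most $\varepsilon/2$, $\varepsilon/4$ and $\varepsilon/4$. The first step is to fix $\eta$ satisfying~\eqref{eq:PJ:convergence:stepsize}, so that the theorem applies for every $K\in\mathbb N_0$.

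Write the two discretization terms as $\sqrt{\eta}\,T_1$ and $\sqrt{\eta}\,T_2$, where
$$
T_1^2=\frac{2\kappa(P_J)C_{\mathcal R}^2}{\alpha_{P,J}^2},\qquad
T_2^2=\frac{2\kappa(P_J)(1+M_J^2)\sigma^2}{\alpha_{P,J}}\bigl(L^2C_x^{\mathrm{sg}}+\|\nabla f(0)\|^2\bigr).
$$
Denote by $D$ the bracket in~\eqref{eq:PJ:complexity:accuracy:stepsize}. Then $T_1^2+T_2^2=2\kappa(P_J)D$, so~\eqref{eq:PJ:complexity:accuracy:stepsize} says exactly $\eta(T_1^2+T_2^2)\leq\varepsilon^2/8$. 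The elementary inequality $a+b\leq\sqrt{2(a^2+b^2)}$ then gives
$$
\sqrt\eta\,(T_1+T_2)\leq\sqrt{2\eta(T_1^2+T_2^2)}\leq\varepsilon/2.
$$
The only care needed here is the bookkeeping of the constants $16$, $2$ and $\kappa(P_J)$, which I expect to close exactly as written.

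For the transient term, consider two cases.
\begin{itemize}
\item If~\eqref{eq:PJ:complexity:initial:large} holds, the choice~\eqref{eq:PJ:complexity:K:lower} gives $e^{-\alpha_{P,J}K\eta/2}\leq\varepsilon/(2\sqrt{\kappa(P_J)}\mathcal W_2(\nu_0,\pi_\delta))$. Hence the first term is at most $\varepsilon/2$.
\item If the reverse inequality holds, take $K=0$. The first term is then at most $\sqrt{\kappa(P_J)}\mathcal W_2(\nu_0,\pi_\delta)\leq\varepsilon/2$.
\end{itemize}
In either case the total is at most $\varepsilon$. Note that for $K=0$ one could even use $\mathcal W_2(\nu_0,\pi_\delta)$ directly, since $\kappa\geq1$.

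For the order bound~\eqref{eq:PJ:complexity:K:order}, set $K=\lceil 2(\alpha_{P,J}\eta)^{-1}\log(\cdots)\rceil$ with $\eta$ equal to the minimum of the upper bounds. Then $1/\eta$ is the maximum of the reciprocals:
$$
1,\quad \frac{2L_{P,J}^2}{\alpha_{P,J}},\quad L_\delta(1+M_J^2),\quad \frac{4L_\delta(1+M_J^2)\sigma^2L^2}{q_{U,\delta}c_{U,1}},\quad \frac{16D}{\kappa(P_J)^{-1}\varepsilon^2}.
$$
Multiplying by $2/\alpha_{P,J}$ and absorbing the logarithm and the ceiling into $\widetilde{\mathcal O}$ gives the listed maxima term by term.

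Finally, the triangle inequality $\mathcal W_2(\nu_K,\pi)\leq\mathcal W_2(\nu_K,\pi_\delta)+\mathcal W_2(\pi_\delta,\pi)\leq 2\varepsilon$ gives the last claim. No step is a genuine obstacle; the main risk is only the constant matching in the accuracy stepsize condition.
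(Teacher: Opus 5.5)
Your proposal is correct and follows the paper's proof essentially step for step. Like the paper, you bound the two $\sqrt\eta$ terms jointly via $(a+b)^2\leq 2a^2+2b^2$ and \eqref{eq:PJ:complexity:accuracy:stepsize} to get $\varepsilon/2$, split the transient term on \eqref{eq:PJ:complexity:initial:large}, read off $1/(\alpha_{P,J}\eta)$ as a maximum of reciprocals, and finish with the triangle inequality. The only slip is cosmetic: your opening allocation $\varepsilon/2,\varepsilon/4,\varepsilon/4$ is not what you prove, since your computation only gives $\varepsilon/(2\sqrt2)$ for each discretization term individually, but the combined bound $\varepsilon/2$ that you actually derive is all that is needed.
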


\begin{proof}
See Appendix~\ref{proof:cor:PJ:complexity}.
\end{proof}

The $\widetilde{\mathcal O}(\varepsilon^{-2})$ dependence in Corollary~\ref{cor:PJ:complexity} does not establish nonreversible acceleration. For fixed $\delta$ and constants independent of $\varepsilon$, the $\varepsilon$ dependent entry in \eqref{eq:PJ:complexity:K:order} is
\begin{equation}
\widetilde{\mathcal O}\left( \frac{\kappa(P_J)}{\varepsilon^2} \left[ \frac{C_{\mathcal R}^2}{\alpha_{P,J}^3} +\frac{(1+M_J^2)\sigma^2}{\alpha_{P,J}^2} \left( L^2C_x^{\mathrm{sg}}+\|\nabla f(0)\|^2 \right) \right] \right).
\label{eq:PJ:complexity:J:dependence}
\end{equation}
Thus a nonzero $J$ improves the bound for $J\equiv0$ only when the increase in $\alpha_{P,J}$ compensates for the changes in $\kappa(P_J)$, $M_J$, $L_J$ through $C_{\mathcal R}$, and $C_x^{\mathrm{sg}}$. Proposition \ref{prop:PJ:metric:construction} identifies the possible acceleration of this gain at the linearized level. If $\beta_J>\lambda_{\min}(H_*)$, it constructs $P_J$ with a local contraction rate
\begin{equation}
\lambda_{\min}(H_*)<\alpha_J<\beta_J.
\end{equation}
Corollary~\ref{cor:PJ:complexity}, however, depends on the global constant $\alpha_{P,J}$ in Assumption~\ref{assumption:PJ:full:drift}. Hence the local spectral gain yields an iteration complexity improvement only if the same metric $P_J$ verifies that assumption for every $x,y\in\mathbb R^d$ and the complete coefficient in \eqref{eq:PJ:complexity:J:dependence} is smaller than its value for $J\equiv0$. In the following section, we study the choice of the skewed $J$ in the small penalty regime, i.e. $\delta \rightarrow 0$, moreover, we show the acceleration by breaking reversibility.

\subsection{Acceleration by Tuning the Skewed Coefficient}
To isolate the effect of the skew matrix, consider a two dimensional Hessian with eigenvalues $\lambda$ and $\Lambda_\delta$. Here $\lambda>0$ is independent of $\delta$, whereas $\Lambda_\delta>\lambda$ is the eigenvalue enlarged by the penalty term $S/\delta$ and satisfies $\Lambda_\delta\to\infty$ as $\delta\to0$. Define
\begin{equation}
H_\delta:=
\begin{pmatrix}\lambda&0\\0&\Lambda_\delta\end{pmatrix},
\qquad J_a:=\begin{pmatrix}0&a\\-a&0\end{pmatrix}.
\label{eq:PJ:penalty:block:definition}
\end{equation}
The next proposition compares the spectral rates obtained from a fixed skew coefficient $a$ and from a coefficient that varies with $\delta$.
\begin{proposition}
\label{prop:PJ:penalty:hessian:block}
Let $\lambda$, $\Lambda_\delta$, $H_\delta$, and $J_a$ be defined in \eqref{eq:PJ:penalty:block:definition}. For each fixed $a\in\mathbb R$ and all sufficiently small $\delta$, the matrix $(I+J_a)H_\delta$ has two real positive eigenvalues, denoted by
\begin{equation}
\mu_{\delta,-}(a)\leq\mu_{\delta,+}(a).
\end{equation}
As $\delta\to0$, equivalently $\Lambda_\delta\to\infty$,
\begin{equation}
\mu_{\delta,-}(a)\to(1+a^2)\lambda, \qquad \frac{\mu_{\delta,+}(a)}{\Lambda_\delta}\to1.
\label{eq:PJ:penalty:fixed:J:rates}
\end{equation}
For the choice depending on $\delta$
\begin{equation}
a_\delta:=\frac{\Lambda_\delta-\lambda} {2\sqrt{\lambda\Lambda_\delta}},
\label{eq:PJ:penalty:tuned:J}
\end{equation}
the two eigenvalues of $(I+J_{a_\delta})H_\delta$ coincide, and
\begin{equation}
\mu_{\delta,-}(a_\delta) =\mu_{\delta,+}(a_\delta) =\frac{\lambda+\Lambda_\delta}{2}, \qquad \frac{a_\delta}{\Lambda_\delta^{1/2}} \to\frac1{2\sqrt\lambda}.
\label{eq:PJ:penalty:tuned:rate}
\end{equation}
\end{proposition}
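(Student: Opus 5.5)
The argument is a direct $2\times2$ computation, so I will read everything off the trace and determinant of $M_a:=(I+J_a)H_\delta$. With $J_a$ and $H_\delta$ from \eqref{eq:PJ:penalty:block:definition},
\begin{equation*}
M_a=\begin{pmatrix}\lambda & a\Lambda_\delta\\ -a\lambda & \Lambda_\delta\end{pmatrix},\qquad
\operatorname{tr}M_a=\lambda+\Lambda_\delta,\qquad \det M_a=(1+a^2)\lambda\Lambda_\delta .
\end{equation*}
The eigenvalues are therefore
\begin{equation*}
\mu_{\delta,\pm}(a)=\tfrac12\Bigl(\lambda+\Lambda_\delta\pm\sqrt{\Delta_\delta(a)}\Bigr),\qquad
\Delta_\delta(a)=(\lambda+\Lambda_\delta)^2-4(1+a^2)\lambda\Lambda_\delta=(\Lambda_\delta-\lambda)^2-4a^2\lambda\Lambda_\delta .
\end{equation*}

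For fixed $a$, we have $\Delta_\delta(a)/\Lambda_\delta^2\to1$ as $\Lambda_\delta\to\infty$. Hence $\Delta_\delta(a)>0$ for all sufficiently small $\delta$, and both eigenvalues are real. They are also positive, because their product $\det M_a>0$ and their sum $\operatorname{tr}M_a>0$.

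For the limits, I write $\mu_{\delta,-}=\det M_a/\mu_{\delta,+}$ so that I avoid the cancellation in the difference formula. Since $\sqrt{\Delta_\delta(a)}=\Lambda_\delta(1+o(1))$, we get $\mu_{\delta,+}/\Lambda_\delta\to1$. Then
\begin{equation*}
\mu_{\delta,-}=\frac{(1+a^2)\lambda\Lambda_\delta}{\mu_{\delta,+}}\to(1+a^2)\lambda .
\end{equation*}
This is the only point that needs a little care; computing $\mu_{\delta,-}$ directly as a difference of two large quantities is where one could lose the limit.

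For the tuned coefficient, I choose $a$ so that $\Delta_\delta(a)=0$. This requires $4a^2\lambda\Lambda_\delta=(\Lambda_\delta-\lambda)^2$, i.e., exactly $a_\delta$ in \eqref{eq:PJ:penalty:tuned:J}, using the positive root, which is valid because $\Lambda_\delta>\lambda$. Then both eigenvalues equal $\operatorname{tr}M_{a_\delta}/2=(\lambda+\Lambda_\delta)/2$. Finally,
\begin{equation*}
\frac{a_\delta}{\Lambda_\delta^{1/2}}=\frac{1-\lambda/\Lambda_\delta}{2\sqrt\lambda}\to\frac{1}{2\sqrt\lambda}.
\end{equation*}

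The tuned case yields a double eigenvalue, possibly a Jordan block, so there "two real positive eigenvalues" should be read with multiplicity. The statement only asserts that the eigenvalues coincide, so nothing further is required.
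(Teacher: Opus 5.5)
Your proposal is correct and follows the paper's proof essentially step by step: the same trace/determinant computation and discriminant, positivity of both roots from the trace and determinant, the limit of $\mu_{\delta,-}(a)$ via $\det\bigl((I+J_a)H_\delta\bigr)/\mu_{\delta,+}(a)$ (exactly the paper's rationalization of the smaller root), and a vanishing discriminant for $a_\delta$. The only cosmetic differences are in the order of the limits and the last step. You obtain $\mu_{\delta,+}(a)/\Lambda_\delta\to1$ first, whereas the paper deduces it from the trace. You also compute $a_\delta/\Lambda_\delta^{1/2}$ directly rather than expanding $a_\delta^2$.
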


\begin{proof}
See Appendix~\ref{proof:prop:PJ:penalty:hessian:block}.
\end{proof}

For the linearized equation $\dot z=-(I+J_a)H_\delta z$, define its spectral decay rate by
\begin{equation}
r_\delta(a):= \min_{\mu\in\sigma((I+J_a)H_\delta)}\operatorname{Re}\mu.
\end{equation}
A larger value of $r_\delta(a)$ means faster asymptotic decay. Proposition~\ref{prop:PJ:penalty:hessian:block} identifies the slowest rate in the reversible, fixed skew, and tuned skew cases, in particular,
\begin{align}
\text{reversible case}:\,\,r_\delta(0) &=\mu_{\delta,-}(0)=\lambda,
\nonumber\\
\text{each fixed }a:\,\,r_\delta(a) &=\mu_{\delta,-}(a)\rightarrow(1+a^2)\lambda,
\nonumber\\
\text{tuned case}:\,\,r_\delta(a_\delta) &=\mu_{\delta,-}(a_\delta) =\frac{\lambda+\Lambda_\delta}{2}\rightarrow\infty.
\end{align}
Thus, relative to the reversible rate, the acceleration factors are
\begin{equation}
\frac{r_\delta(a)}{r_\delta(0)}\rightarrow1+a^2>1 \quad\text{for each fixed }a\neq0, \qquad \frac{r_\delta(a_\delta)}{r_\delta(0)} =\frac{\lambda+\Lambda_\delta}{2\lambda}\rightarrow\infty.
\end{equation}
The ratio of the larger and smaller eigenvalues measures the spectral stiffness, where a smaller ratio indicates less stiffness. As $\delta\to0$, equivalently $\Lambda_\delta\to\infty$,
\begin{equation}
\frac{\mu_{\delta,+}(a)}{\mu_{\delta,-}(a)} \sim\frac{\Lambda_\delta}{(1+a^2)\lambda} \to\infty \quad\text{for each fixed }a, \qquad \frac{\mu_{\delta,+}(a_\delta)} {\mu_{\delta,-}(a_\delta)}=1.
\end{equation}
Hence a fixed nonzero $J_a$ raises the slowest rate from $\lambda$ to the limiting value $(1+a^2)\lambda$, but it does not remove the diverging stiffness. In contrast, the tuned choice $J_{a_\delta}$ equalizes the two eigenvalues and raises the slowest rate to $(\lambda+\Lambda_\delta)/2$. Thus, at the linearized level, $J_{a_\delta}$ simultaneously yields an unbounded acceleration factor relative to the reversible dynamics and completely removes the penalty induced spectral stiffness, whereas any fixed $J_a$ provides only a constant factor acceleration and leaves the stiffness diverging as $\delta\to0$.

We next compute the complexity in the same two dimensional quadratic form. At a critical point $x_\delta^*$ satisfying $\nabla V_\delta(x_\delta^*)=0$, suppose the model is globally quadratic, namely, for every $z\in\mathbb R^2$,
\begin{equation}
V_\delta(x_\delta^*+z)-V_\delta(x_\delta^*) =\frac12z^\top H_\delta z, \qquad \nabla V_\delta(x_\delta^*+z)=H_\delta z, \qquad z\in\mathbb R^2.
\end{equation}
Substituting this expression into the exact gradient update~\eqref{eq:alg} with $Z_k:=x_k-x_\delta^*$ gives
\begin{equation}
Z_{k+1} =\left[I-\eta(I+J_a)H_\delta\right]Z_k +\sqrt{2\eta}\xi_{k+1}.
\end{equation}
This yields the following proposition.
\begin{proposition}[Invariant bias of the Euler discretization]
\label{prop:PJ:penalty:Euler:invariant:bias}
Let $a\neq0$ be fixed independently of $\delta$, and let $a_\delta$ be defined in~\eqref{eq:PJ:penalty:tuned:J}. Denote the invariant laws of the fixed, reversible, and tuned chains by $\gamma_\delta^a$, $\gamma_\delta^0$, and $\gamma_\delta^{a_\delta}$, respectively. For all sufficiently small $\delta$, assume
\begin{equation}
0<\eta<\frac2{\lambda+\Lambda_\delta}.
\label{eq:PJ:penalty:Euler:bias:stepsize}
\end{equation}
Under this condition, all three chains have unique Gaussian invariant laws with mean $x_\delta^*$ and covariances $\Sigma_{\delta,a}$, $\Sigma_{\delta,0}$, and $\Sigma_{\delta,a_\delta}$, respectively. In the inequality below, $\gamma_\delta$ and $\Sigma_\delta$ denote any one of these invariant laws and its corresponding covariance. Then, in all three cases,
\begin{equation}
\mathcal W_2^2(\gamma_\delta,\pi_\delta) \leq\operatorname{Tr}(\Sigma_\delta-H_\delta^{-1}).
\end{equation}
\begin{enumerate}[(a).]
\item For the fixed nonreversible chain,
\begin{equation}
\mathcal W_2(\gamma_\delta^a,\pi_\delta) \leq\left(8(2+a^2)^2 \frac{\eta(\lambda+\Lambda_\delta)}\lambda\right)^{1/2}.
\end{equation}
\item For the reversible chain,
\begin{equation}
\mathcal W_2(\gamma_\delta^0,\pi_\delta) \leq\left(\frac{\eta(\lambda+\Lambda_\delta)}\lambda\right)^{1/2}.
\end{equation}
\item For the tuned nonreversible chain,
\begin{equation}
\mathcal W_2(\gamma_\delta^{a_\delta},\pi_\delta) \leq\left(54\frac{\eta(\lambda+\Lambda_\delta)}\lambda\right)^{1/2}.
\end{equation}
\end{enumerate}
\end{proposition}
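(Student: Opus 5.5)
The plan is to work entirely with the linear Gaussian recursion $Z_{k+1}=MZ_k+\sqrt{2\eta}\,\xi_{k+1}$, where $M:=I-\eta A$ and $A:=(I+J_a)H_\delta$ (with $a\in\{0,a,a_\delta\}$).

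\textbf{Existence and uniqueness.} The eigenvalues of $M$ are $1-\eta\mu$, with $\mu$ ranging over the eigenvalues of $A$. By Proposition~\ref{prop:PJ:penalty:hessian:block}, for small $\delta$ these are real with $0<\mu_{\delta,-}\le\mu_{\delta,+}$. For $a=0$ they are $\lambda,\Lambda_\delta$. For $a_\delta$ they equal $(\lambda+\Lambda_\delta)/2$. For fixed $a$ we have $\mu_{\delta,+}\le \operatorname{Tr}A=(\lambda+\Lambda_\delta)$ (the trace of $J_aH_\delta$ vanishes). I need $\eta\mu_{\delta,+}<2$. In the fixed and reversible cases I would use $\mu_{\delta,+}\le \Lambda_\delta+o(\Lambda_\delta)$, together with the assumption that the stepsize condition is taken for sufficiently small $\delta$, so that $\rho(M)<1$. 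Then the invariant law is $\mathcal N(x_\delta^*,\Sigma)$ with $\Sigma=2\eta\sum_{k\ge0}M^k(M^k)^\top$, i.e. the solution of the discrete Lyapunov equation $\Sigma=M\Sigma M^\top+2\eta I$. Expanding gives
\begin{equation}
A\Sigma+\Sigma A^\top-\eta A\Sigma A^\top=2I.
\end{equation}
The target $\pi_\delta$ is $\mathcal N(x_\delta^*,H_\delta^{-1})$, since $H_\delta^{-1}$ solves the continuous equation $AX+XA^\top=2I$ (skewness of $J_a$ gives $J_a+J_a^\top=0$).

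\textbf{Wasserstein reduction.} For two Gaussians with equal mean, couple them as $Y=\Sigma^{1/2}G$ and $X=H_\delta^{-1/2}G$. This gives $\mathcal W_2^2\le\operatorname{Tr}(\Sigma)+\operatorname{Tr}(H_\delta^{-1})-2\operatorname{Tr}(\Sigma^{1/2}H_\delta^{-1/2})$. I then want to show $\Sigma\succeq H_\delta^{-1}$ in these cases, or at least $\operatorname{Tr}(\Sigma^{1/2}H^{-1/2})\ge\operatorname{Tr}H^{-1}$, which yields $\mathcal W_2^2\le\operatorname{Tr}(\Sigma-H_\delta^{-1})$. Setting $D:=\Sigma-H_\delta^{-1}$ and subtracting the continuous equation gives
\begin{equation}
AD+DA^\top=\eta A\Sigma A^\top\succeq0.
\end{equation}
Since $-A$ is Hurwitz, $D=\int_0^\infty e^{-tA}(\eta A\Sigma A^\top)e^{-tA^\top}dt\succeq0$. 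Hence $\Sigma\succeq H^{-1}$, so $\Sigma^{1/2}\succeq H^{-1/2}$ by operator monotonicity, and the trace inequality follows.

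\textbf{Trace estimates in the $2\times2$ case.} I would write $\Sigma$ explicitly, with three unknowns $s_{11},s_{12},s_{22}$, and solve the $3\times3$ linear system. In the reversible case $\Sigma$ is diagonal with entries $1/(\lambda_i(1-\eta\lambda_i/2))$, so $\operatorname{Tr}D=\sum_i \eta/(2-\eta\lambda_i)$. Using $\eta\lambda_i<2\lambda_i/(\lambda+\Lambda_\delta)$, each term is at most $\eta(\lambda+\Lambda_\delta)/(2\lambda)$, and the sum gives (b).

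For (a) and (c), an alternative that avoids solving the system is the bound
\begin{equation}
\operatorname{Tr}D\le \frac{\eta\,\|A\Sigma A^\top\|}{2\,r}\cdot\mathrm{const},
\end{equation}
where $r=\min\operatorname{Re}\sigma(A)$ and the constant comes from the integral representation with the Jordan/eigenvector condition number. However, the non-normality of $A$ makes that constant delicate, so I prefer exact computation. Applying the operator $\operatorname{Tr}(\cdot\,H^{-1}\cdots)$ is one option, but the simplest route is direct symbolic solution of the $3\times3$ system, which yields rational expressions in $\eta,\lambda,\Lambda_\delta,a$. I would then bound numerators and denominators: denominators using $\eta(\lambda+\Lambda_\delta)<2$ and factors like $(2-\eta\mu_\pm)$ bounded below, numerators using $a^2\lambda\le a_\delta^2\lambda\le\Lambda_\delta$ in the tuned case. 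The constants $8(2+a^2)^2$ and $54$ suggest crude bounds of that type.

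\textbf{Main obstacle.} Controlling the denominators $\det$-type factors such as $(2-\eta\mu_{\delta,\pm})$ uniformly is the hardest step. This is especially so in the tuned case, where $A$ has a double eigenvalue and is non-diagonalizable, so terms like $\eta(\lambda+\Lambda_\delta)/2$ approach 1. I would first try to show that the relevant denominator is $\ge c\,\lambda/(\lambda+\Lambda_\delta)$-free, i.e. bounded below by an absolute constant times $\eta\lambda$ relative to the numerator, giving the factor $\eta(\lambda+\Lambda_\delta)/\lambda$.
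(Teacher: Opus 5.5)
Your preliminary steps are sound. The bound $\mu_{\delta,+}\le\operatorname{Tr}A=\lambda+\Lambda_\delta$ that you wrote already gives $\eta\mu_{\delta,+}<2$ and hence $\rho(M)<1$, so the $o(\Lambda_\delta)$ detour is unnecessary. Your continuous Lyapunov argument for $D:=\Sigma_\delta-H_\delta^{-1}\succeq0$ and the reduction $\mathcal W_2^2\le\operatorname{Tr}D$ via operator monotonicity are both valid. The paper takes a shorter route once $D\succeq0$ is known: it couples $X\sim\pi_\delta$ with $X+G$, where $G\sim\mathcal N(0,D)$ is independent. Your computation for (b) is exactly the paper's.

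However, (a) and (c), which are the real content of the proposition, are not proved. ``Solve the $3\times3$ system symbolically and bound numerators and denominators'' is only a plan, and you leave its decisive step open yourself.

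The missing idea is to represent $D$ by a discrete Lyapunov series whose source does not involve $\Sigma_\delta$. Since $MH_\delta^{-1}M^\top+2\eta I=H_\delta^{-1}+\eta^2AH_\delta^{-1}A^\top$ and $AH_\delta^{-1}A^\top=(I+J_a)H_\delta(I-J_a)$, one has
\[
D=\eta^2\sum_{j\ge0}M^j(I+J_a)H_\delta(I-J_a)(M^j)^\top .
\]
Everything then reduces to bounding powers of $M$. Your continuous representation has source $\eta A\Sigma A^\top$. That is circular, and bounding it by $\|A\|^2\|\Sigma\|$, as your displayed alternative does, loses a factor $\Lambda_\delta/\lambda$. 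This is why the constants looked delicate to you.

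\textbf{Fixed $a$.} The eigenvector route you discarded actually works. The eigenvector matrix of $A$ tends to $\left(\begin{smallmatrix}1&a\\0&1\end{smallmatrix}\right)$, whose condition number is at most $2+a^2$. So for small $\delta$ you get $\|M^j\|_{\mathrm{op}}\le2(2+a^2)(1-\eta\mu_{\delta,-}(a))^j$. Combining this with $\operatorname{Tr}\bigl((I+J_a)H_\delta(I-J_a)\bigr)=(1+a^2)(\lambda+\Lambda_\delta)$, $1-(1-x)^2\ge x$, and $\mu_{\delta,-}(a)\ge(1+a^2)\lambda/2$ gives (a).

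\textbf{Tuned $a_\delta$.} Here $A$ is a Jordan block, so diagonalization fails. The right tool is the similarity
\[
H_\delta^{1/2}MH_\delta^{-1/2}=cI-\tfrac{\eta(\Lambda_\delta-\lambda)}{2}N,\qquad c=1-\tfrac{\eta(\lambda+\Lambda_\delta)}{2}\in(0,1),\qquad N=\left(\begin{smallmatrix}-1&1\\-1&1\end{smallmatrix}\right),
\]
where $N^2=0$, so the powers are explicit. The growth factor $j\eta(\Lambda_\delta-\lambda)$ is absorbed by $\sum_j j^2c^{2j-2}=(1+c^2)/(1-c^2)^3$ together with $1-c^2\ge\eta(\lambda+\Lambda_\delta)/2$. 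That last inequality is the answer to your denominator worry. It yields $\sum_j\|H_\delta^{1/2}M^jH_\delta^{-1/2}\|_{\mathrm{op}}^2\le36/(\eta(\lambda+\Lambda_\delta))$. Combining this with $\|H_\delta^{1/2}(I+J_{a_\delta})H_\delta^{1/2}\|_{\mathrm F}^2\le\frac32(\lambda+\Lambda_\delta)^2$ and $\|H_\delta^{-1/2}\|_{\mathrm{op}}^2=1/\lambda$ gives the constant $54$.
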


\begin{proof}
See Appendix~\ref{proof:prop:PJ:penalty:Euler:invariant:bias}.
\end{proof}

\begin{proposition}
\label{prop:PJ:penalty:Euler:acceleration}
Fix $a\neq0$ independently of $\delta$, let $a_\delta$ be defined in~\eqref{eq:PJ:penalty:tuned:J}, and fix $\varepsilon>0$. Under the globally quadratic setting of Proposition~\ref{prop:PJ:penalty:Euler:invariant:bias}, suppose the initial laws have finite second moments. For the fixed, tuned, and reversible chains, respectively, use the stepsizes
\begin{equation}
\begin{aligned}
\eta_{\mathrm{fixed}} &:=\frac1{\lambda+\Lambda_\delta} \min\left\{1,
\frac{\lambda\varepsilon^2}{32(2+a^2)^2}\right\},\\
\eta_{\mathrm{tuned}} &:=\frac1{\lambda+\Lambda_\delta}
\min\left\{1,\frac{\lambda\varepsilon^2}{216}\right\},\\
\eta_{\mathrm{rev}} &:=\frac1{\lambda+\Lambda_\delta} \min\left\{1,\frac{\lambda\varepsilon^2}{4}\right\}.
\end{aligned}
\label{eq:PJ:penalty:Euler:accuracy:stepsizes}
\end{equation}
All three stepsizes satisfy \eqref{eq:PJ:penalty:Euler:bias:stepsize}. For all sufficiently small $\delta$, the following bounds hold.
\begin{enumerate}[(a).]
\item For the fixed nonreversible chain ($J=J_a$),
\begin{equation}
\mathcal W_2(\nu_K^a,\pi_\delta)\leq\varepsilon \quad\text{if}\quad K\geq \frac{2}{\eta_{\mathrm{fixed}}(1+a^2)\lambda} \log\left[4(2+a^2)\left( 1\vee\frac{\mathcal W_2(\nu_0^a,\gamma_\delta^a)}{\varepsilon} \right)\right].
\label{eq:PJ:penalty:Euler:fixed:complexity}
\end{equation}
\item For the tuned nonreversible chain ($J=J_{a_{\delta}}$),
\begin{equation}
\begin{aligned}
\mathcal W_2(\nu_K^{a_\delta},\pi_\delta)&\leq\varepsilon\quad\text{if}\\
K&\geq\frac{4}{\eta_{\mathrm{tuned}}(\lambda+\Lambda_\delta)}
\Bigg[\frac12\log\frac{\Lambda_\delta}{\lambda}+\log\left(2\left(1+\frac8e\right) \left(1\vee\frac{\mathcal W_2(\nu_0^{a_\delta},\gamma_\delta^{a_\delta})}{\varepsilon}\right)\right)\Bigg].
\end{aligned}
\label{eq:PJ:penalty:Euler:tuned:complexity}
\end{equation}
\item For the reversible chain ($J=J_0=0$),
\begin{equation}
\mathcal W_2(\nu_K^0,\pi_\delta)\leq\varepsilon \quad\text{if}\quad K\geq \frac{1}{\eta_{\mathrm{rev}}\lambda} \log\left[2\left( 1\vee\frac{\mathcal W_2(\nu_0^0,\gamma_\delta^0)}{\varepsilon} \right)\right].
\label{eq:PJ:penalty:Euler:reversible:complexity}
\end{equation}
\end{enumerate}
\end{proposition}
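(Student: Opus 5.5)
The proof splits the error by the triangle inequality,
$$
\mathcal W_2(\nu_K,\pi_\delta)\le \mathcal W_2(\nu_K,\gamma_\delta)+\mathcal W_2(\gamma_\delta,\pi_\delta),
$$
and makes each term at most $\varepsilon/2$. The bias term comes from Proposition~\ref{prop:PJ:penalty:Euler:invariant:bias}. With $\eta=\eta_{\mathrm{fixed}}$, the bound $8(2+a^2)^2\eta(\lambda+\Lambda_\delta)/\lambda\le \varepsilon^2/4$ holds by the choice $\lambda\varepsilon^2/(32(2+a^2)^2)$. The constants $216=4\cdot54$ and $4$ give the same conclusion for the tuned and reversible chains. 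Each stepsize is at most $1/(\lambda+\Lambda_\delta)<2/(\lambda+\Lambda_\delta)$, so \eqref{eq:PJ:penalty:Euler:bias:stepsize} holds.

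For the contraction term, couple two copies of the chain synchronously, one started from $\nu_0$ and one from the invariant law $\gamma_\delta$. The noise cancels, so $Z_k-Z'_k=M^k(Z_0-Z'_0)$ with $M=I-\eta(I+J_a)H_\delta$. Hence $\mathcal W_2(\nu_K,\gamma_\delta)\le\|M^K\|_{\mathrm{op}}\mathcal W_2(\nu_0,\gamma_\delta)$, and everything reduces to bounding $\|M^K\|_{\mathrm{op}}$. We need $\|M^K\|\,\mathcal W_2(\nu_0,\gamma_\delta)\le\varepsilon/2$; the factor $1\vee$ in the logarithm absorbs the case where the initial distance is already small.

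\emph{Reversible case.} $M=I-\eta H_\delta$ is symmetric with eigenvalues $1-\eta\lambda$ and $1-\eta\Lambda_\delta$, both in $[0,1)$ since $\eta\Lambda_\delta<1$. So $\|M^K\|\le(1-\eta\lambda)^K\le e^{-\eta\lambda K}$, and $K\ge(\eta\lambda)^{-1}\log(2\,\cdot)$ suffices.

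\emph{Fixed $a$.} By Proposition~\ref{prop:PJ:penalty:hessian:block}, $A=(I+J_a)H_\delta$ has real eigenvalues $\mu_-\to(1+a^2)\lambda$ and $\mu_+\sim\Lambda_\delta$. Diagonalize $A=S\,\mathrm{diag}(\mu_-,\mu_+)S^{-1}$. The eigenvalues of $M$ are $1-\eta\mu_\pm\in[0,1)$, using $\eta\mu_+\le\eta\,\mathrm{Tr}A=\eta(\lambda+\Lambda_\delta)\le1$. This gives $\|M^K\|\le\kappa(S)(1-\eta\mu_-)^K$. For small $\delta$, $\mu_-\ge(1+a^2)\lambda/2$, which explains the factor $2$ in the rate. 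The main work is showing $\kappa(S)\le 2(2+a^2)$ for small $\delta$. Compute the eigenvectors explicitly: for $\mu_+$ it tends to $e_2$, and for $\mu_-$ it is approximately $(1, a\lambda/\Lambda_\delta\cdot\text{const})$, tilted by $O(a)$ in a controlled way. The condition number of the $2\times2$ eigenvector matrix is then bounded by a constant multiple of $(1+|a|)^2\le 2(2+a^2)$. Absorbing this into the logarithm gives the factor $4(2+a^2)=2\cdot 2(2+a^2)$.

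\emph{Tuned case.} This is the main obstacle. $A$ has the double eigenvalue $\bar\mu=(\lambda+\Lambda_\delta)/2$ and is not diagonalizable, so $A=\bar\mu I+N$ with $N^2=0$. Then
$$
M^K=(1-\eta\bar\mu)^K I-K\eta(1-\eta\bar\mu)^{K-1}N.
$$
The norm of $N=A-\bar\mu I$ is of order $\Lambda_\delta\sqrt{\Lambda_\delta/\lambda}$, coming from the off-diagonal entries $a_\delta\Lambda_\delta$. We therefore use $K\eta\,\bar\mu\,e^{-K\eta\bar\mu/2}\le 2/e$ to absorb the polynomial factor, at the cost of halving the rate. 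This gives
$$
\|M^K\|\le\Bigl(1+c\sqrt{\Lambda_\delta/\lambda}\Bigr)e^{-K\eta\bar\mu/2}\cdot(\text{const}).
$$
Since $\Lambda_\delta/\lambda\ge1$, this yields a prefactor $(1+8/e)(\Lambda_\delta/\lambda)^{1/2}$, which produces the $\frac12\log(\Lambda_\delta/\lambda)$ term. Using $1-x\le e^{-x}$, the rate becomes $\eta\bar\mu/2=\eta(\lambda+\Lambda_\delta)/4$, giving the stated $4/(\eta(\lambda+\Lambda_\delta))$. The delicate part is tracking the explicit constant $8/e$, which comes from $\|N\|/\bar\mu\le 4\sqrt{\Lambda_\delta/\lambda}$ combined with $\sup_t t e^{-t/2}=2/e$.
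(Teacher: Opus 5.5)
Your argument has the same architecture as the paper's proof. You use the triangle inequality through the Euler invariant law, take the bias term from Proposition~\ref{prop:PJ:penalty:Euler:invariant:bias} (your checks of $32(2+a^2)^2$, $216=4\cdot54$ and $4$ are right), and use synchronous coupling with a stationary copy to reduce everything to $\|M^K\|_{\mathrm{op}}$. The reversible and fixed cases are the paper's arguments. In the tuned case you use the same nilpotent decomposition, but in the original coordinates. The paper instead conjugates, writing $H_\delta^{1/2}MH_\delta^{-1/2}=cI-\tfrac{\eta(\Lambda_\delta-\lambda)}{2}N_0$ with $N_0^2=0$ and $\|N_0\|_{\mathrm{op}}=2$, and pays $\sqrt{\Lambda_\delta/\lambda}$ only when undoing the conjugation. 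Your variant is fine, and it even keeps $\sqrt{\Lambda_\delta/\lambda}$ off the leading $c^K$ term.

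Two of your supporting computations are wrong as written, though both are easy to repair.

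First, in the fixed case the eigenvector of $\mu_{\delta,+}(a)$ does not tend to $e_2$. The first row of $(A-\mu_{\delta,+}(a)I)v=0$ gives $v_1=a\Lambda_\delta v_2/(\mu_{\delta,+}(a)-\lambda)\to a v_2$; it is the $\mu_{\delta,-}(a)$ eigenvector that tends to $e_1$. So the eigenvector matrix tends to $\begin{pmatrix}1&a\\0&1\end{pmatrix}$, not to an orthogonal frame. Its condition number tends to $\bigl(\tfrac{|a|+\sqrt{a^2+4}}{2}\bigr)^2\le(1+|a|)^2<2(2+a^2)$, and continuity then gives $\kappa\le2(2+a^2)$ for small $\delta$. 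This non-normality is genuine and is exactly what the factor $2(2+a^2)$ pays for. The bound is already established in the paper's proof of Proposition~\ref{prop:PJ:penalty:Euler:invariant:bias}, so you can simply cite it.

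Second, in the tuned case the transient term is $K\eta c^{K-1}\|N\|_{\mathrm{op}}$ with $c=1-\eta\bar\mu\geq\tfrac12$. Writing $c^{K-1}=c^{K/2}\cdot c^{K/2}\cdot c^{-1}$ costs a factor $c^{-1}\le2$ that your count omits. With your estimate $\|N\|_{\mathrm{op}}/\bar\mu\le4\sqrt{\Lambda_\delta/\lambda}$, your ingredients give $16/e$, not $8/e$. Compute $N$ exactly instead. With $s=\sqrt{\Lambda_\delta/\lambda}$, the matrix $N=\tfrac{\Lambda_\delta-\lambda}{2}(1,s^{-1})^\top(-1,s)$ is rank one, so $\|N\|_{\mathrm{op}}/\bar\mu=(\Lambda_\delta-\lambda)/\sqrt{\lambda\Lambda_\delta}\le s$. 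Then $K\eta c^{K-1}\|N\|_{\mathrm{op}}\le\tfrac4e\,s\,c^{K/2}$, which fits inside the stated prefactor $(1+8/e)\sqrt{\Lambda_\delta/\lambda}$.
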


\begin{proof}
See Appendix~\ref{proof:prop:PJ:penalty:Euler:acceleration}.
\end{proof}

For fixed $\lambda$, $a$, and $\varepsilon$, and initial distances growing at most polynomially in $\Lambda_\delta/\lambda$, the three bounds in Proposition~\ref{prop:PJ:penalty:Euler:acceleration} reduce, up to their displayed logarithmic factors, to
$$
K^{\mathrm{rev}} =\widetilde{\mathcal O}\left(\frac{\Lambda_\delta}{\lambda}\right), \qquad K^{\mathrm{fixed}} =\widetilde{\mathcal O}\left( \frac{\Lambda_\delta}{(1+a^2)\lambda}\right), \qquad K^{\mathrm{tuned}} =\widetilde{\mathcal O}\left( \log\frac{\Lambda_\delta}{\lambda}\right).
$$
For varying accuracy, the respective additional factors in the reversible, fixed, and tuned bounds are
$$
\min\{1,\lambda\varepsilon^2/4\}^{-1},\qquad \min\{1,\lambda\varepsilon^2/[32(2+a^2)^2]\}^{-1},\qquad \min\{1,\lambda\varepsilon^2/216\}^{-1}.
$$
As $\delta\to0$ with $a$ fixed, the factor $1+a^2$ describes the asymptotic improvement in the slow spectral rate and in the leading mixing term when the fixed and reversible chains use a common admissible stepsize. Because the accuracy controlled stepsizes $\eta_{\mathrm{fixed}}$ and $\eta_{\mathrm{rev}}$ differ, the displayed complete iteration bounds do not by themselves imply that the fixed chain always requires fewer iterations. Discretization bias, nonnormality, and the conditioning of the fixed eigenvector matrix may offset the spectral gain.

For a globally quadratic model in dimension $d$, choose orthogonal coordinates $Q$ that diagonalize the Hessian and pair its eigenvectors so that
$$
Q^\top\nabla^2V_\delta Q=\operatorname{diag}(\lambda_{\delta,1},\ldots,\lambda_{\delta,d}),\qquad
Q^\top JQ=\operatorname{diag}(J_{a_1},\ldots,J_{a_r},0_{d-2r}).
$$
The Euler transition matrix is then $M=\operatorname{diag}(M_1,\ldots,M_m)$, with each unpaired coordinate a scalar block. Choose one common stepsize that makes every block stable and satisfies the preceding two dimensional hypotheses on each paired block. Write $\Gamma=\bigotimes_i\gamma_i$ for the Euler invariant law and $\pi_\delta=\bigotimes_i\pi_{\delta,i}$ for the quadratic target. For any initial law $\nu_0$ with a finite second moment, synchronous coupling and the product identity for squared Wasserstein distance give
$$
\mathcal W_2(\nu_K,\pi_\delta)\leq\max_i\|M_i^K\|_{\mathrm{op}}\,\mathcal W_2(\nu_0,\Gamma)+\left(\sum_i\mathcal W_2^2(\gamma_i,\pi_{\delta,i})\right)^{1/2}.
$$
Thus, the preceding two dimensional estimates extend to higher dimensions through the largest block contraction bound and the sum of squared invariant biases. The initial blocks may be correlated, and unpaired coordinates remain reversible. For a skew circulant matrix, this extension requires its Fourier planes to be invariant under the quadratic Hessian.


\section{Numerical Experiments}
\label{sec:numerical}
We present numerical experiments to illustrate the convergence behavior
and practical benefits of penalized nonreversible Langevin sampling.
We first consider constrained Bayesian linear and logistic regression,
examining the effect of the skew symmetric perturbation and its matrix
structure. We then investigate sampling from a truncated Laplace
distribution as the penalty parameter decreases, followed by a nonconvex
Bayesian neural network example. Finally, we study the penalty dependent
choice in Proposition~\ref{prop:PJ:penalty:hessian:block} and compare the
iterations required to achieve a common sampling accuracy.

We compare PNSGLD in~\eqref{eq:pnsgld} with PSGLD, obtained by setting
$J=0$. We write $K$ for the number of iterations, $N$ for the number of
particles or chains per repetition, and $b$ for the minibatch size.
The constrained target is denoted by $\pi_{\mathcal C}=\pi$.
We use the squared distance penalty except in the final quadratic
example. Stochastic gradients are conditionally unbiased, and penalty
gradients are evaluated exactly. In the first four experiments, the
methods share initial states, stochastic gradient draws, and Gaussian
innovations within each repetition.

For logistic regression and Laplace sampling, the constant skew symmetric   matrices are constructed from
\begin{align}
& (\bar J_{\rm block})_{2r-1,2r}=1, \qquad
(\bar J_{\rm block})_{2r,2r-1}=-1,
\nonumber\\
& (\bar J_{\rm tri})_{j,j+1}=1,\qquad\qquad
(\bar J_{\rm tri})_{j+1,j}=-1,
\nonumber\\
& (\bar J_{\rm cir})_{j,j+1\ ({\rm mod}\ d)}=-1,\quad
(\bar J_{\rm cir})_{j,j-1\ ({\rm mod}\ d)}=1,
\label{eq:numerical:constant:J:templates}
\end{align}
by setting $J=\gamma\bar J/\|\bar J\|_{\rm op}$. The circulant matrix in the neural network experiment uses the opposite sign convention. The state dependent matrices have divergence free columns and act on triples of coordinates, with zero action on any remaining coordinates. Together with skew symmetry, this divergence free condition gives the compatibility condition~\eqref{eq:J:compatibility}. Lemma~\ref{lemma:pi} then identifies the same invariant distribution $\pi_\delta$ for the corresponding continuous diffusions. The discrete algorithms can still have different errors at a fixed stepsize.

For logistic regression and Laplace sampling, we choose the stepsize using a bound on the expected potential. Let $L_\delta$ be an upper bound on the Lipschitz constant of $\nabla V_\delta$, assume $\|J(x)\|_{\rm op}\leq M_J$, and let $v_g$ bound the conditional mean squared error of the stochastic gradient. Smoothness and skew symmetry give
\begin{align}
\mathbb E[V_\delta(x_{k+1})\mid x_k]
&\leq V_\delta(x_k)
-\eta\left(1-\frac{L_\delta\eta(1+M_J^2)}2\right)
\|\nabla V_\delta(x_k)\|^2+\frac{L_\delta\eta^2(1+M_J^2)}2v_g+d L_\delta\eta.
\label{eq:numerical:energy}
\end{align}
The projection identity in Lemma~\ref{lemma:squared:distance:penalty} and the nonexpansiveness of $I-\mathcal P_{\mathcal C}$ give the sharper bound $\|\nabla S(x)-\nabla S(y)\|\leq2\|x-y\|$. We therefore take $L_\delta=L_f+2/\delta$ and impose $L_\delta\eta(1+M_J^2)\leq1$. Both penalized potentials are strongly convex for fixed $\delta$, so this choice gives a uniform bound on $\mathbb E V_\delta(x_k)$. It controls expected potential growth but does not verify the global contraction condition in Assumption~\ref{assumption:PJ:full:drift}.

Reported uncertainties are rounded to the same decimal place as their corresponding estimates; all comparisons use unrounded values.


\subsection{Constrained Bayesian Linear Regression}
\label{subsec:numerical:linear}
We first consider constrained Bayesian linear regression to illustrate
the effect of a constant nonreversible perturbation on the approach to
the target distribution. This example provides a simple setting for
examining the role of the skew symmetric matrix discussed in
Proposition~\ref{prop:PJ:metric:construction}.

We generate $n=1000$ observations according to
$a_i\sim\mathcal N(0,\Sigma_a)$ and
$y_i=a_i^\top x^\star+\varepsilon_i$, where
\[
\Sigma_a=\begin{pmatrix}1&-0.5\\-0.5&1\end{pmatrix},
\qquad x^\star=(1,1)^\top,
\qquad \varepsilon_i\sim\mathcal N(0,4).
\]
Using the Gaussian prior $\mathcal N(0,10I_2)$ restricted to
$\mathcal C=\{x:\|x\|_1\leq2.2\}$, the constrained posterior has
potential
\begin{equation}
f(x)=\frac18\sum_{i=1}^n(y_i-a_i^\top x)^2
+\frac1{20}\|x\|^2.
\label{eq:linear:hard:posterior}
\end{equation}
We use $J=\left(\begin{smallmatrix}0&0.70\\-0.70&0\end{smallmatrix}\right)$,
selected from regional Hessian bounds of the penalized potential.
The penalty parameter and stepsize are $\delta=0.005$ and
$\eta=3.30\times10^{-4}$. Both methods use $N=1000$ particles,
$K=1000$ iterations, and minibatch size $b=50$, with initial states
drawn from the prior conditioned on $\mathcal C$.
Each particle samples its likelihood minibatch from a balanced
collection formed from 100 data permutations.

We measure convergence using the sliced $W_2$ distance to reference
samples from $\pi_\delta$ and $\pi_{\mathcal C}$, with 1000 reference
points and 64 fixed projection directions~\cite{bonneel2015sliced}.
Figure~\ref{fig:lasso-convergence} reports averages over 30 paired
repetitions. The distances in its two panels are divided by the
reference scales $s_\delta=0.1045$ and $s_{\mathcal C}=0.1017$,
respectively.

\begin{figure}[!htbp]
\centering
\refstepcounter{figure}\label{fig:lasso-convergence}
\includegraphics[width=0.98\textwidth]{\detokenize{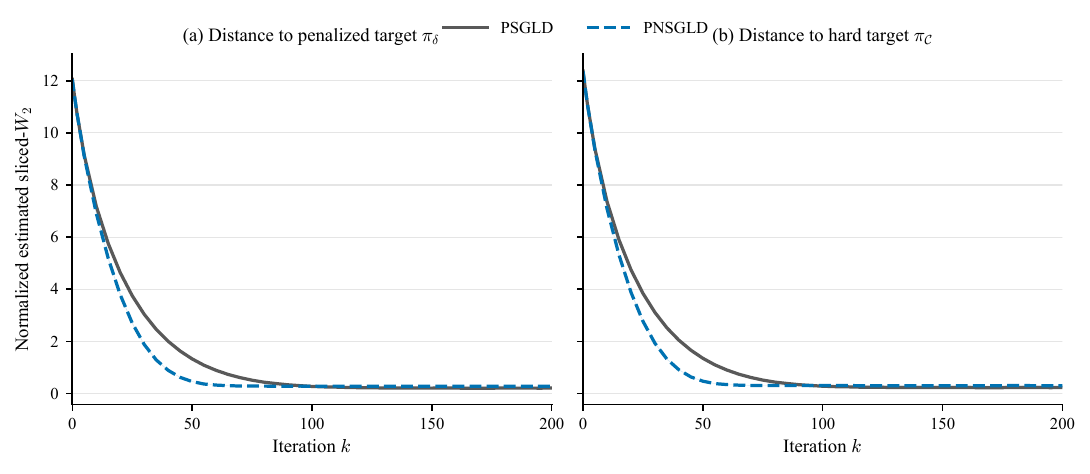}}\\[-0.2em]
{\small \textbf{Figure~\thefigure.} Normalized sliced $W_2$ distances
to $\pi_\delta$ in (a) and $\pi_{\mathcal C}$ in (b).
Curves are averaged over 30 paired repetitions, with pointwise
95\% confidence intervals. The normalization uses $s_\delta$ and
$s_{\mathcal C}$, respectively.}
\end{figure}

Figure~\ref{fig:lasso-convergence} shows a substantially faster initial
decrease in the sampling error for PNSGLD, both relative to the
penalized distribution and relative to the constrained target.
At iteration 50, the unnormalized sliced $W_2$ distance to
$\pi_{\mathcal C}$ is $0.0474$ for PNSGLD and $0.1367$ for PSGLD,
a reduction of approximately $65\%$.
At the common fixed stepsize, PSGLD has the smaller terminal distance
at iteration 1000, namely $0.0224$ compared with $0.0300$ for PNSGLD.
The pronounced improvement during the initial iterations illustrates
how a suitable constant skew perturbation can accelerate the approach
to a constrained posterior.


\subsection{Constrained Bayesian Logistic Regression}
\label{subsec:numerical:logistic}
We next examine how the matrix structure affects predictive performance
in constrained Bayesian logistic regression. We consider the Pima
Indians Diabetes~\cite{smith1988adap} and Titanic~\cite{seabornTitanic}
data sets, with posterior potential
\begin{equation}
f(x)=\sum_{i\in\mathcal D_{\rm tr}}
\left\{\log(1+e^{a_i^\top x})-y_i a_i^\top x\right\}
+\tfrac12\|x\|^2.
\label{eq:logistic:posterior:shared}
\end{equation}
The Gaussian prior makes $f$ strongly convex. Both models have $d=9$;
the Pima model includes an intercept, whereas the Titanic model uses
nine encoded features without an additional intercept. We use 20
stratified training–validation–test splits with proportions
$64/16/20\%$, fitting the preprocessing on the training data.
For Pima, the constraint is $\|x\|_2\leq\sqrt2$.
For Titanic, we consider both this ball and the sublevel constraint
$\sum_j(x_j^2+0.18^2)^{1.2}\leq4$.

All methods use $N=100$ particles, $K=400$ iterations, and minibatch
size $b=30$, with uniform initialization in the unit ball.
The factor $n_{\rm tr}/b$ is applied to the likelihood gradient.
In addition to the constant matrices
in~\eqref{eq:numerical:constant:J:templates}, we use
\begin{equation}
[J(x)v]_{(r)}=\frac{\gamma}{\sqrt3}
\tanh(x_{(r)}/R_s)\times v_{(r)},\qquad R_s=2,
\label{eq:numerical:state:J:template}
\end{equation}
where $(r)$ denotes a consecutive triple of coordinates and $\tanh$
is applied componentwise. This bounded, Lipschitz field satisfies the
compatibility condition in Lemma~\ref{lemma:pi}.

The penalty parameters and skew coefficients are selected in preliminary
validation runs. We use the common stepsize
\[
\eta=\frac{0.9}{(L_f+2/\delta)(1+M_J^2)},
\]
where $M_J$ bounds the operator norms of all perturbations in the
corresponding setting. The curvature bounds are $L_f=300$ for Pima
and $L_f=320$ for Titanic. The penalty parameters are
$8.5\times10^{-4}$ for Pima, $6.0\times10^{-4}$ for Titanic with the
ball constraint, and $2.5\times10^{-4}$ for Titanic with the sublevel
constraint. Table~\ref{tab:logistic-summary} reports the skew coefficients.

We report classification accuracy and predictive negative log likelihood
(NLL). Accuracy is averaged over particles at each recorded iteration,
and terminal test NLL is computed from the mean predictive probabilities
across particles. Early and terminal scores are averaged over
$k=5,10,\ldots,100$ and $k=320,325,\ldots,400$, respectively.
Early validation NLL uses the average of the individual particle scores.
Table~\ref{tab:logistic-summary} reports paired comparisons over the
20 data splits. Early accuracy gains are measured in percentage points
relative to PSGLD, with unadjusted paired 95\% confidence intervals;
Holm's procedure is applied to the twelve early accuracy
tests~\cite{holm1979simple}.
\begin{figure}[!htbp]
\centering
\begin{minipage}{\textwidth}
\centering
\refstepcounter{figure}\label{fig:logistic-accuracy}
\begin{minipage}{0.46\textwidth}
\centering
\includegraphics[width=\textwidth]{\detokenize{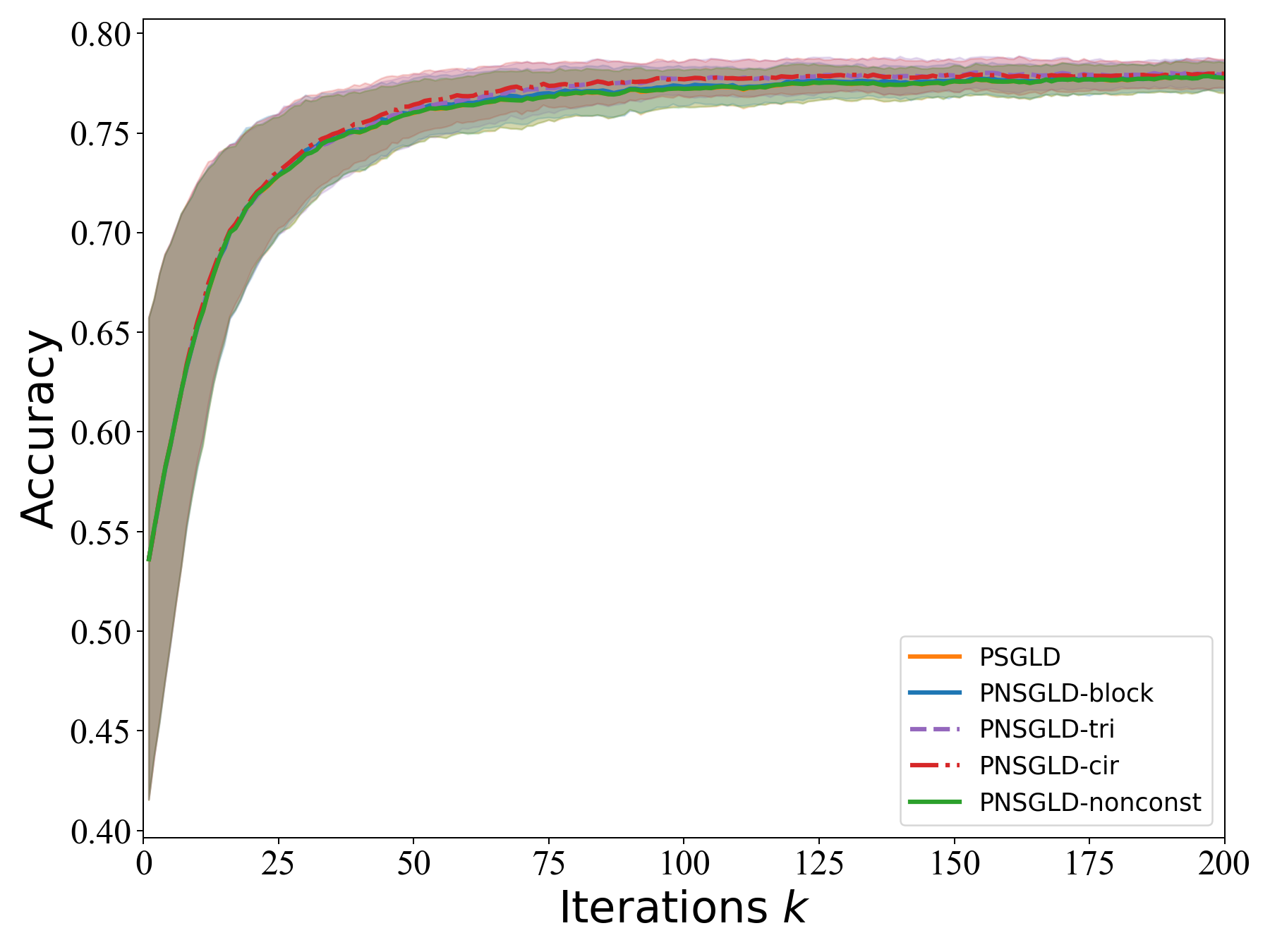}}\\[-0.2em]
{\small (a) Pima training accuracy}
\end{minipage}\hfill
\begin{minipage}{0.46\textwidth}
\centering
\includegraphics[width=\textwidth]{\detokenize{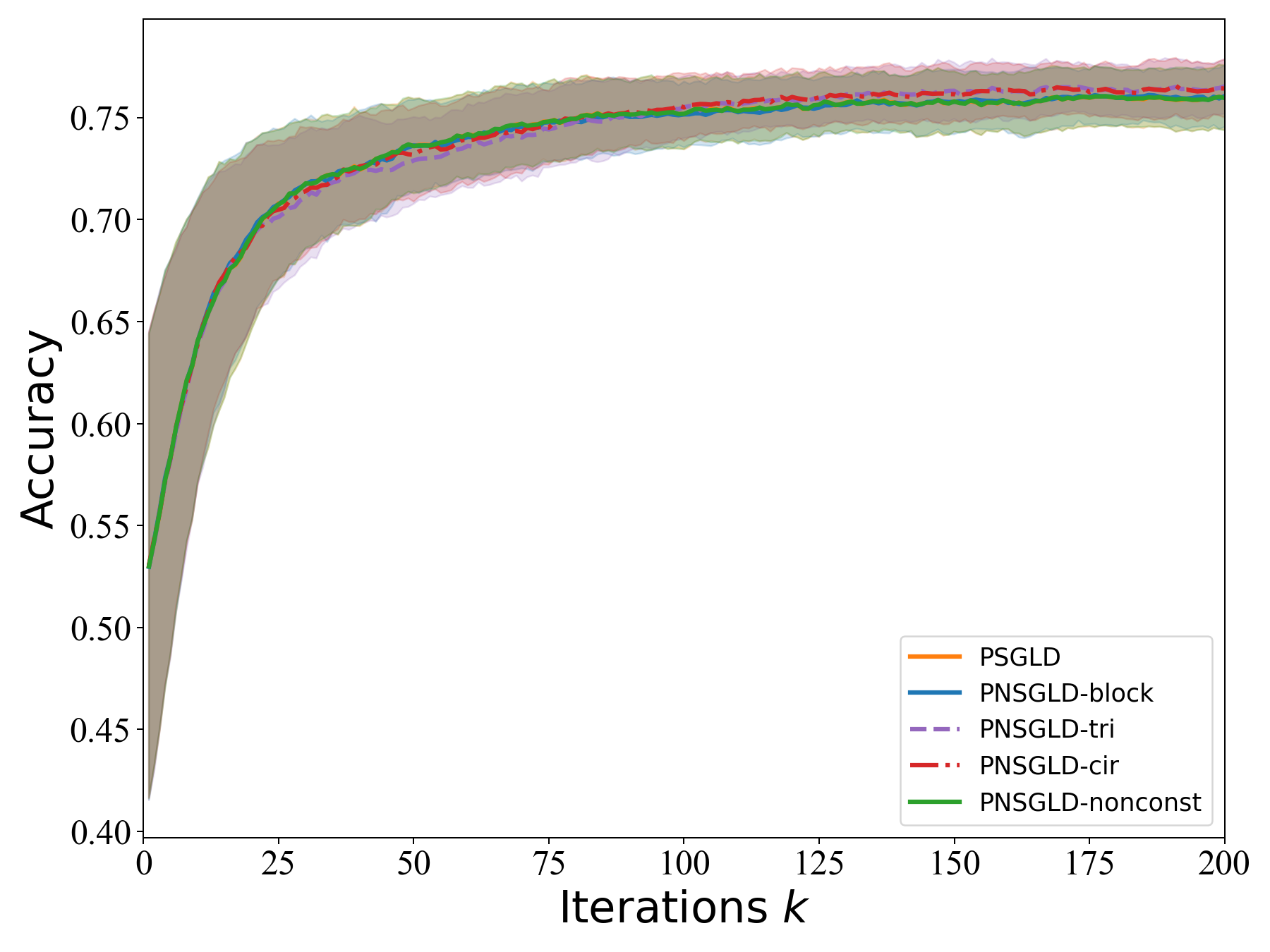}}\\[-0.2em]
{\small (b) Pima test accuracy}
\end{minipage}\\[0.25em]
\begin{minipage}{0.46\textwidth}
\centering
\includegraphics[width=\textwidth]{\detokenize{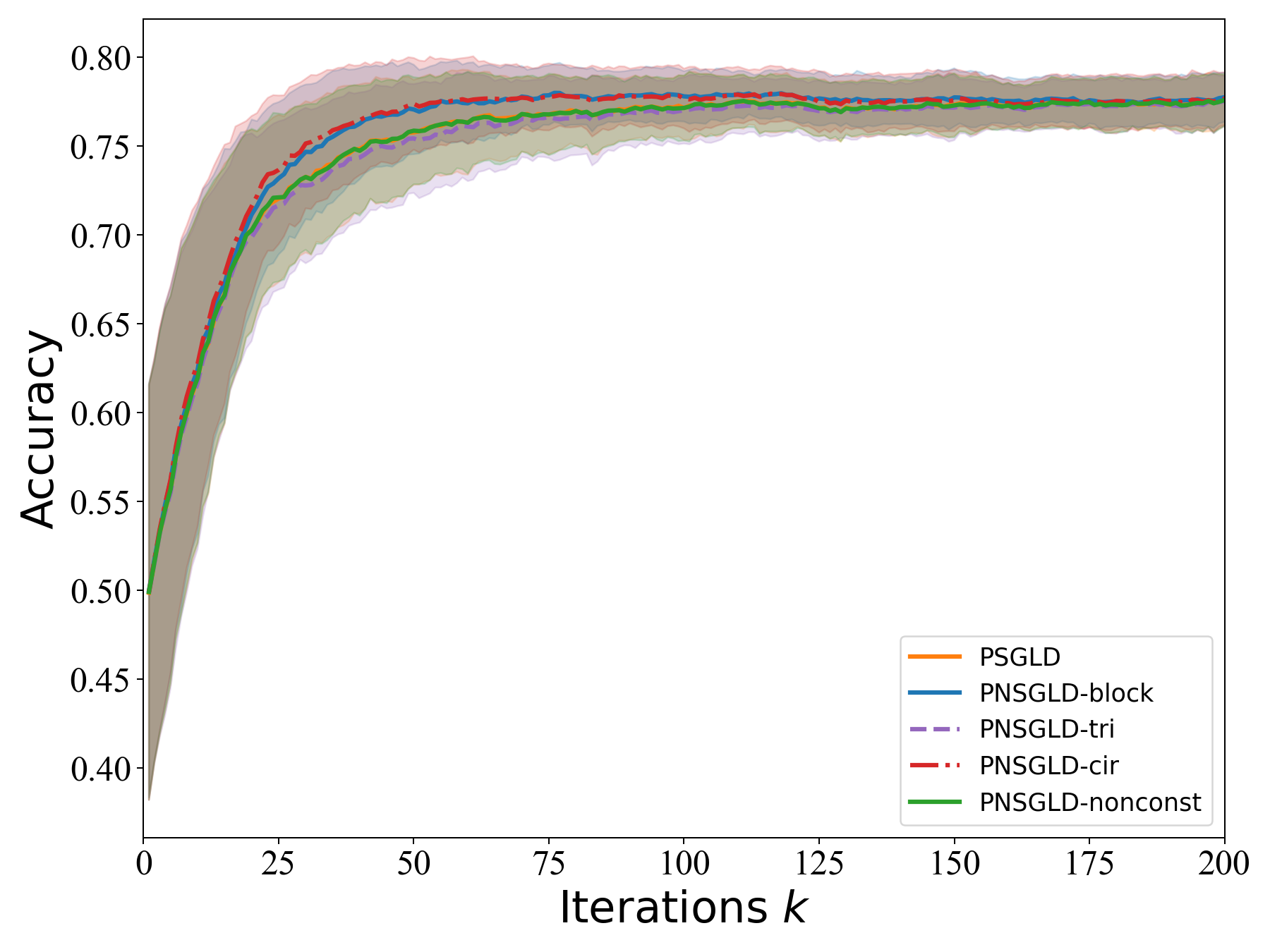}}\\[-0.2em]
{\small (c) Titanic, ball constraint}
\end{minipage}\hfill
\begin{minipage}{0.46\textwidth}
\centering
\includegraphics[width=\textwidth]{\detokenize{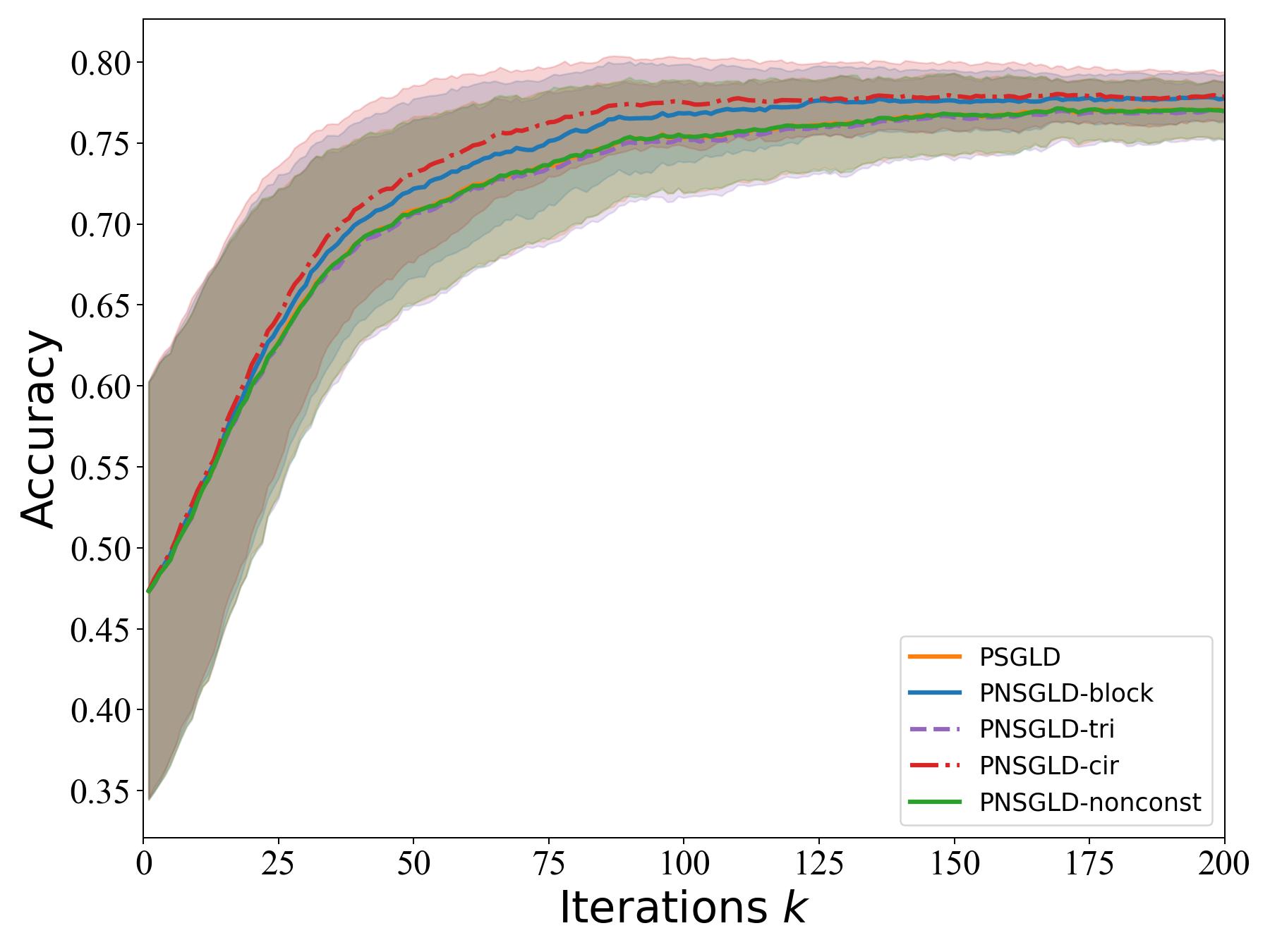}}\\[-0.2em]
{\small (d) Titanic, sublevel constraint}
\end{minipage}\\[0.35em]
{\small \textbf{Figure~\thefigure.} Accuracy trajectories for the fixed
data split. Curves show mean accuracy across
particles, and shading indicates one standard deviation across
particles. Table~\ref{tab:logistic-summary} summarizes all 20 splits.}
\end{minipage}
\end{figure}

\begin{table}[!htbp]
\centering
\caption{Predictive performance over 20 data splits. Early test accuracy
gains relative to PSGLD are in percentage points, with unadjusted paired
95\% confidence-interval half-widths. Terminal accuracy is in percent;
terminal NLL uses particle-averaged predictive probabilities.}
\label{tab:logistic-summary}
\resizebox{\textwidth}{!}{%
\begin{tabular}{llcccc}
\toprule
Setting & Method & $\gamma$ & Early test gain & Terminal accuracy & Terminal NLL \\
\midrule
Pima ball & PSGLD & $0$ & Reference & $76.27$ & $0.4744$ \\
 & Block & $0.175$ & $0.039\pm0.091$ & $76.28$ & $0.4745$ \\
 & Tridiagonal & $0.88$ & $0.175\pm0.268$ & $76.29$ & $0.4746$ \\
 & Circulant & $-0.78$ & $0.411\pm0.274$ & $76.28$ & $0.4746$ \\
 & State dependent & $0.125$ & $0.000\pm0.004$ & $76.27$ & $0.4744$ \\
\midrule
Titanic ball & PSGLD & $0$ & Reference & $78.46$ & $0.4712$ \\
 & Block & $-0.325$ & $0.372\pm0.170$ & $78.47$ & $0.4713$ \\
 & Tridiagonal & $0.1625$ & $-0.025\pm0.065$ & $78.46$ & $0.4712$ \\
 & Circulant & $0.3875$ & $0.379\pm0.186$ & $78.45$ & $0.4714$ \\
 & State dependent & $0.65$ & $0.008\pm0.012$ & $78.46$ & $0.4713$ \\
\midrule
Titanic sublevel & PSGLD & $0$ & Reference & $78.40$ & $0.4690$ \\
 & Block & $-0.325$ & $0.320\pm0.185$ & $78.57$ & $0.4676$ \\
 & Tridiagonal & $0.0625$ & $-0.009\pm0.030$ & $78.41$ & $0.4690$ \\
 & Circulant & $0.5$ & $0.549\pm0.267$ & $78.48$ & $0.4696$ \\
 & State dependent & $0.575$ & $-0.004\pm0.011$ & $78.41$ & $0.4690$ \\
\bottomrule
\end{tabular}}
\end{table}

The block and circulant perturbations give the clearest improvements
in early predictive accuracy. Five of the twelve comparisons remain
significant after Holm adjustment at level $0.05$: the circulant method
on Pima, and the block and circulant methods in both Titanic settings.
Their mean gains range from approximately $0.32$ to $0.55$ percentage
points. The final accuracies remain close across methods, while the
block perturbation also improves both terminal accuracy and NLL under
the Titanic sublevel constraint
(Table~\ref{tab:logistic-summary}).

Halving the common stepsize and doubling the number of iterations
preserves four of the five significant early gains.
These results show that suitable matrix structures improve predictive
performance during the iterations at the common penalty parameters
and stepsizes considered here. We next examine the role of the penalty
parameter directly, using distributional distances to a fixed
constrained target.

\subsection{Penalized Truncated Laplace Sampling}
\label{subsec:numerical:laplace}
We now investigate sampling from a fixed constrained distribution as
the penalty parameter decreases. The approximation result in
Theorem~\ref{thm:dist:perturb} motivates the use of small $\delta$.
To examine the resulting sampling problem, we consider the smoothed
Laplace potential
\[
f(x)=\sum_{j=1}^3\sqrt{x_j^2+\epsilon^2},\qquad
\epsilon=0.05,\qquad
\mathcal C=\{x:\|x\|_2\leq2\}.
\]
At each iteration, each particle samples two coordinates without
replacement and multiplies their gradient components by $3/2$.
We use the constant matrices
in~\eqref{eq:numerical:constant:J:templates}, scaled to have operator
norm $6$, and the state dependent field
\[
J(x)v=\frac{6}{\sqrt{1+\|x\|^2}}\,x\times v.
\]
The latter satisfies $J(x)\nabla S(x)=0$, so it changes the drift from
$f$ while leaving the penalty drift unchanged.

For $\delta\in\{0.05,0.02,0.01,0.005\}$, we use the common stepsize
$\eta_\delta=[37(20+2/\delta)]^{-1}$ and run each method for
$K=16000$ iterations with $N=600$ particles.
Results are averaged over 100 paired repetitions. Within each repetition,
all particles start at one point drawn uniformly from $\mathcal C$.
Thus, the iteration count is fixed, while stronger penalization requires
a smaller stepsize.

Independent reference samples of size $10^4$ are generated by rejection
sampling from $\pi_{\mathcal C}$ and each $\pi_\delta$.
At each recorded iteration, the empirical $W_2$ distance is computed by
optimal matching between 300 particle locations and 300 reference points.
We also compare the reference distributions using 30 comparisons of
subsamples of size 600, including a comparison between two independent
reference samples from $\pi_{\mathcal C}$.

\begin{figure}[!htbp]
\centering
\begin{minipage}{\textwidth}
\centering
\refstepcounter{figure}\label{fig:laplace-w2}
\begin{minipage}{0.46\textwidth}
\centering
\includegraphics[width=\textwidth]{\detokenize{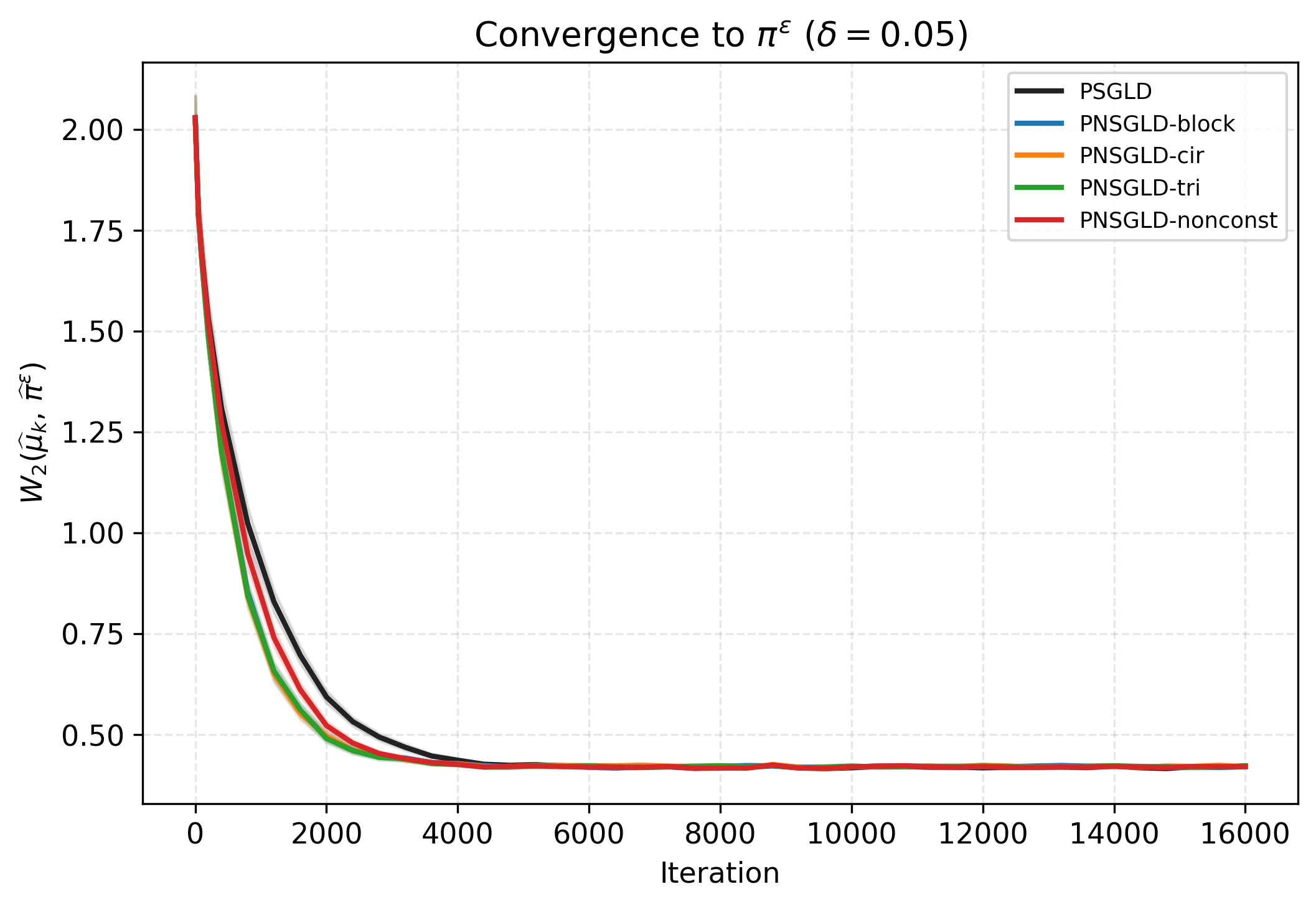}}\\[-0.2em]
{\small (a) $\delta=0.05$}
\end{minipage}\hfill
\begin{minipage}{0.46\textwidth}
\centering
\includegraphics[width=\textwidth]{\detokenize{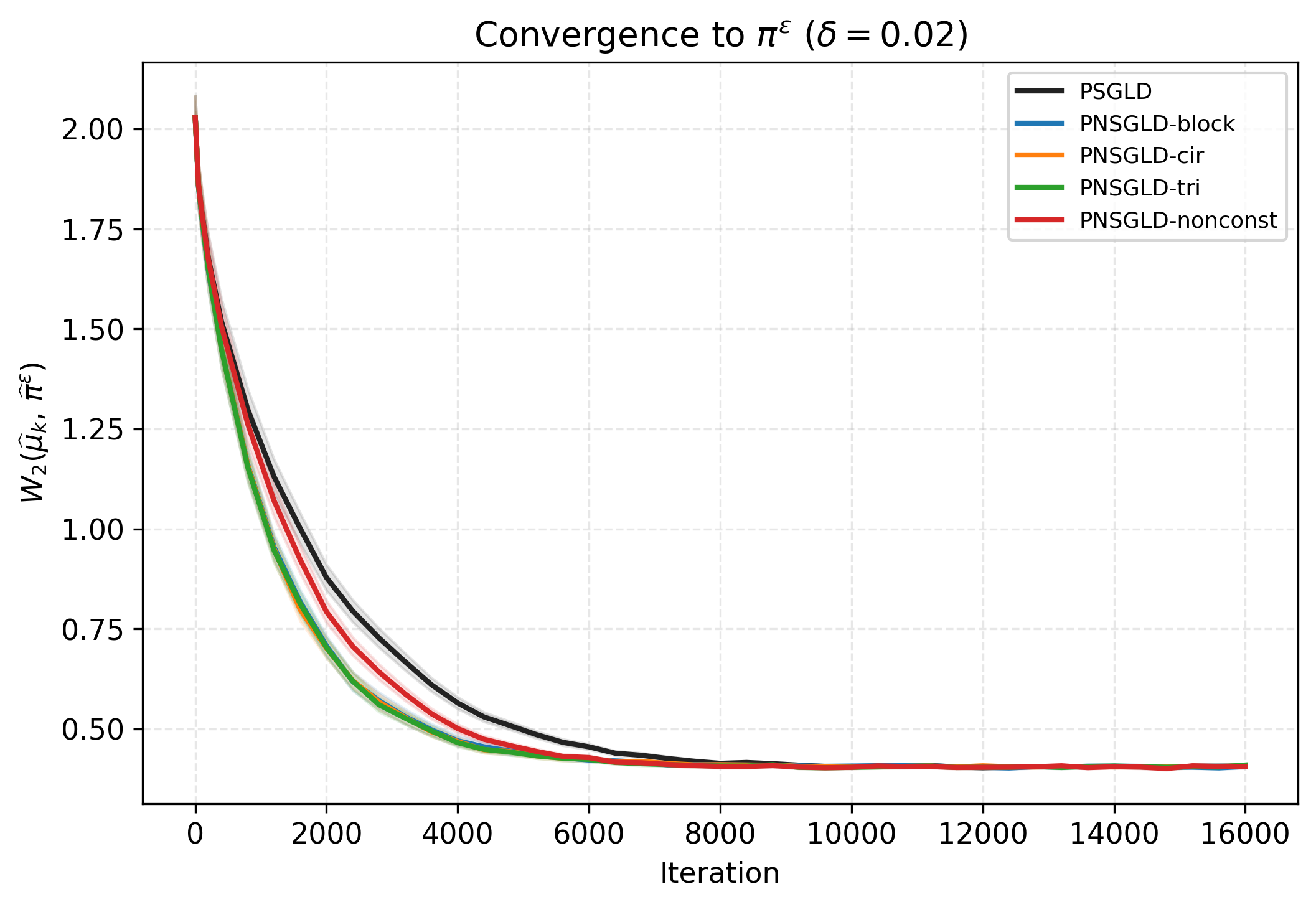}}\\[-0.2em]
{\small (b) $\delta=0.02$}
\end{minipage}\\[0.25em]
\begin{minipage}{0.46\textwidth}
\centering
\includegraphics[width=\textwidth]{\detokenize{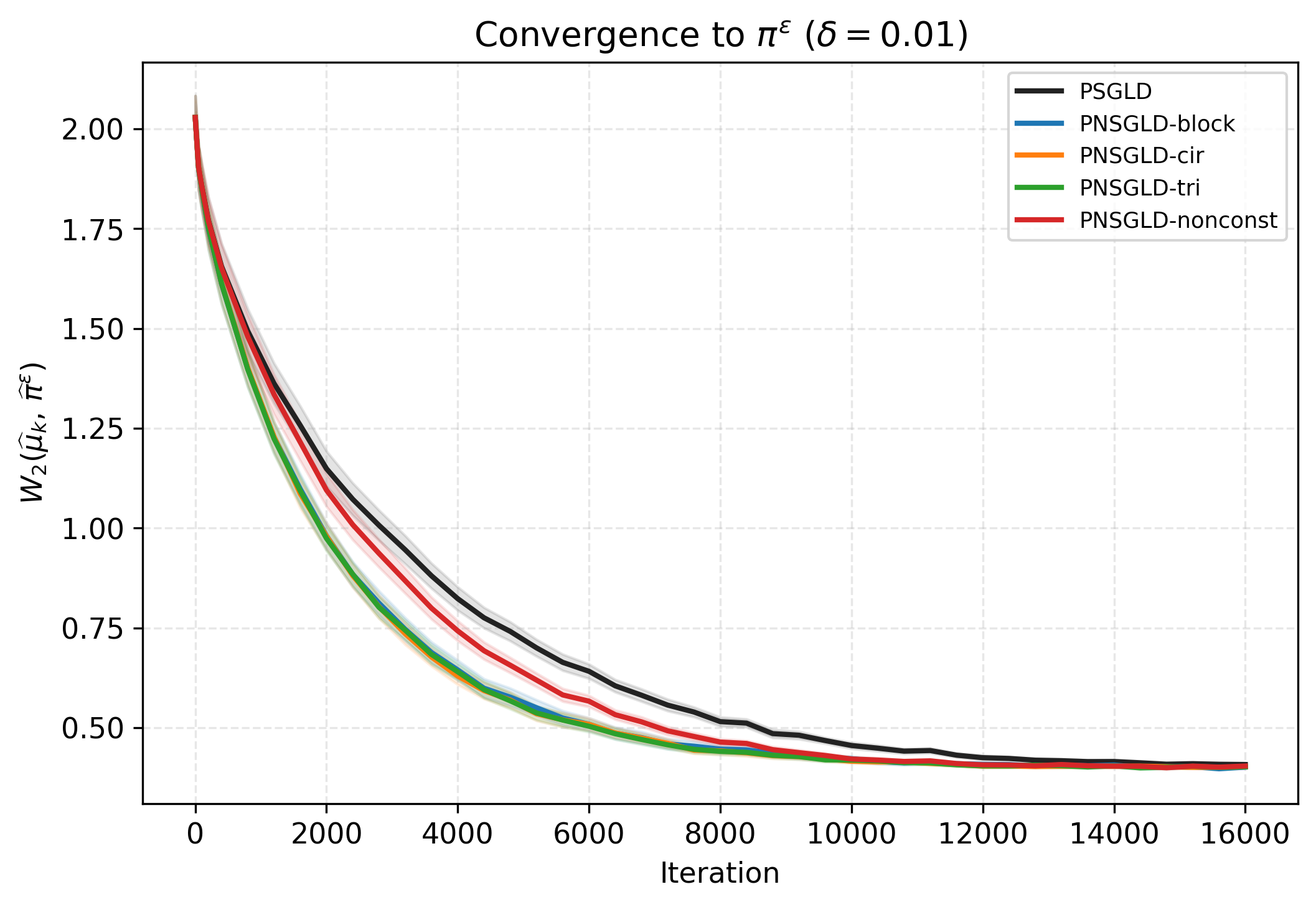}}\\[-0.2em]
{\small (c) $\delta=0.01$}
\end{minipage}\hfill
\begin{minipage}{0.46\textwidth}
\centering
\includegraphics[width=\textwidth]{\detokenize{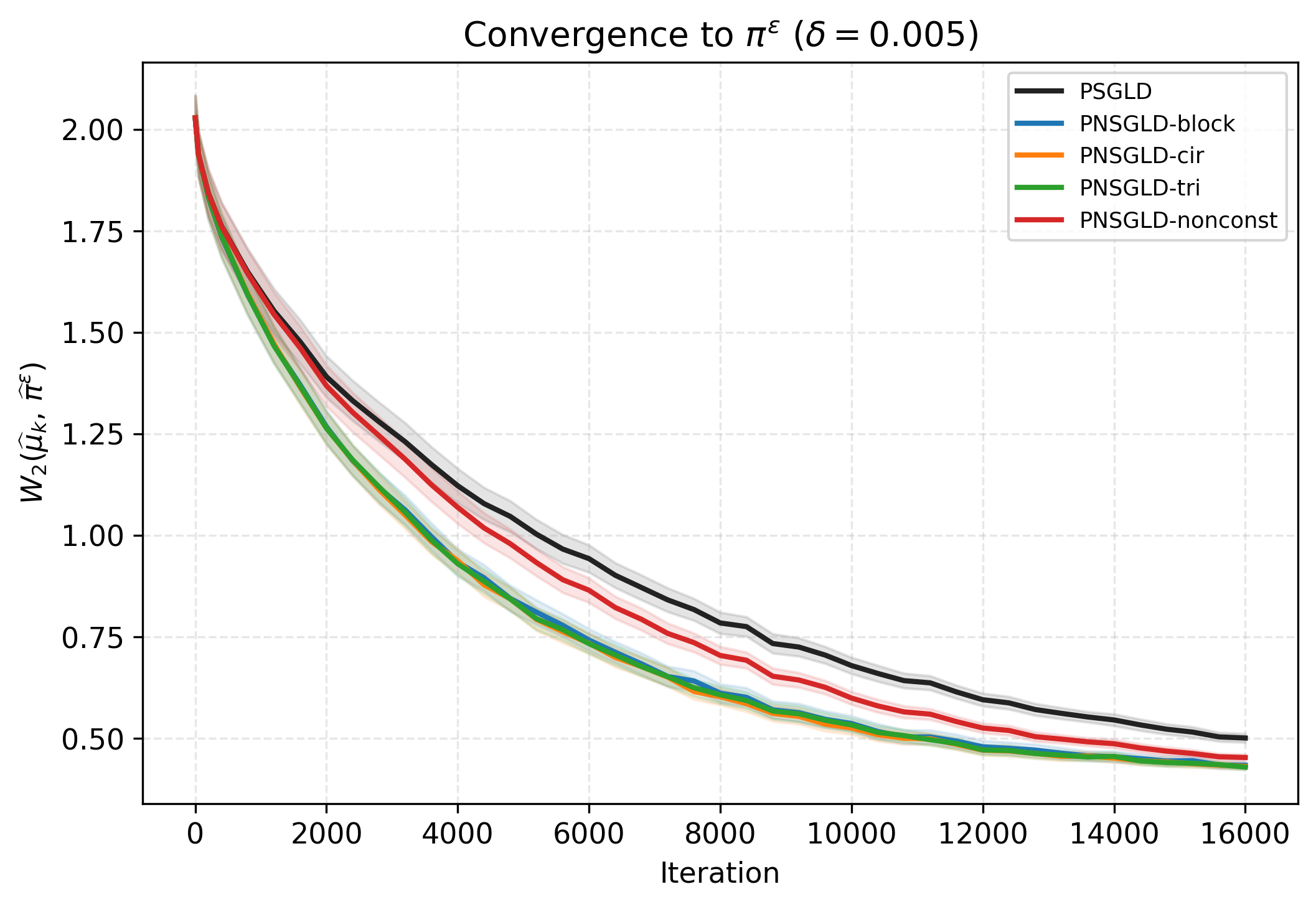}}\\[-0.2em]
{\small (d) $\delta=0.005$}
\end{minipage}\\[0.35em]
{\small \textbf{Figure~\thefigure.} Empirical $W_2$ distance to the
constrained reference at four penalty values, using 300 points per
sample. Curves are averages over 100 paired repetitions, with pointwise
approximate 95\% confidence intervals. All methods use the same
stepsize within each panel.}
\end{minipage}
\end{figure}

\begin{figure}[!htbp]
\centering
\refstepcounter{figure}\label{fig:laplace-density}
\begin{minipage}{0.30\textwidth}
\centering
\includegraphics[width=\textwidth]{\detokenize{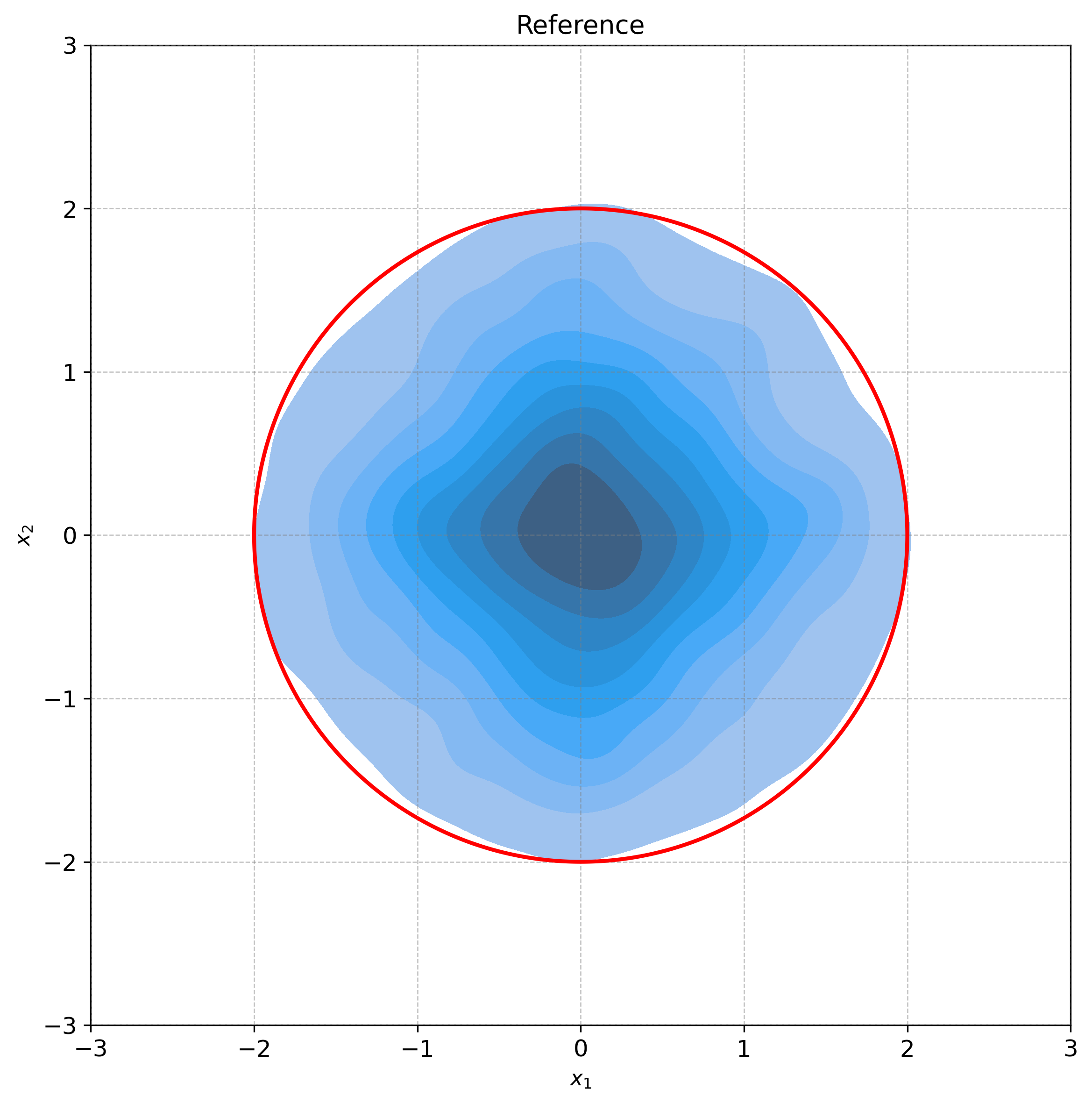}}\\[-0.2em]
{\small (a) Constrained reference}
\end{minipage}\hfill
\begin{minipage}{0.30\textwidth}
\centering
\includegraphics[width=\textwidth]{\detokenize{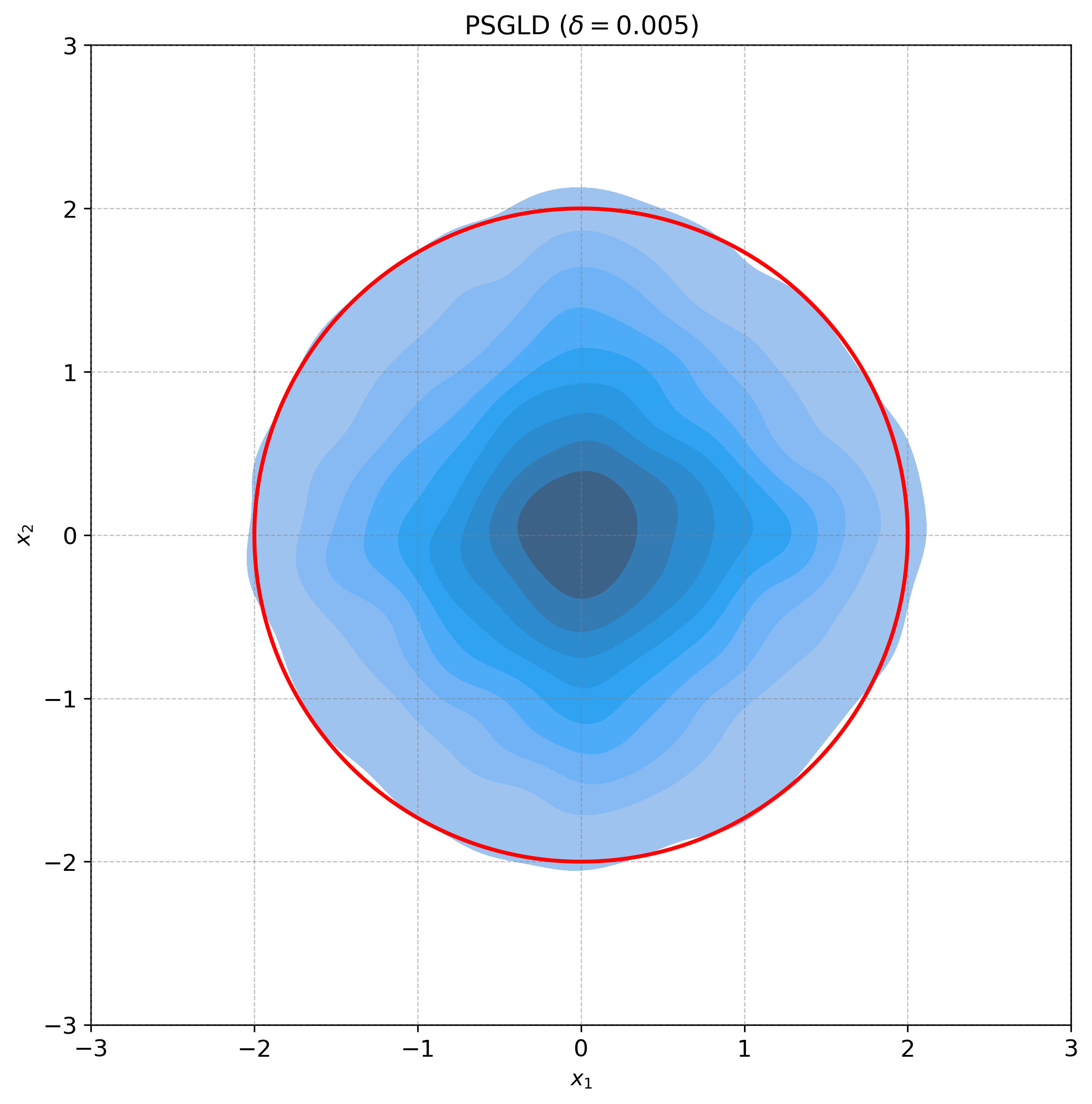}}\\[-0.2em]
{\small (b) PSGLD}
\end{minipage}\hfill
\begin{minipage}{0.30\textwidth}
\centering
\includegraphics[width=\textwidth]{\detokenize{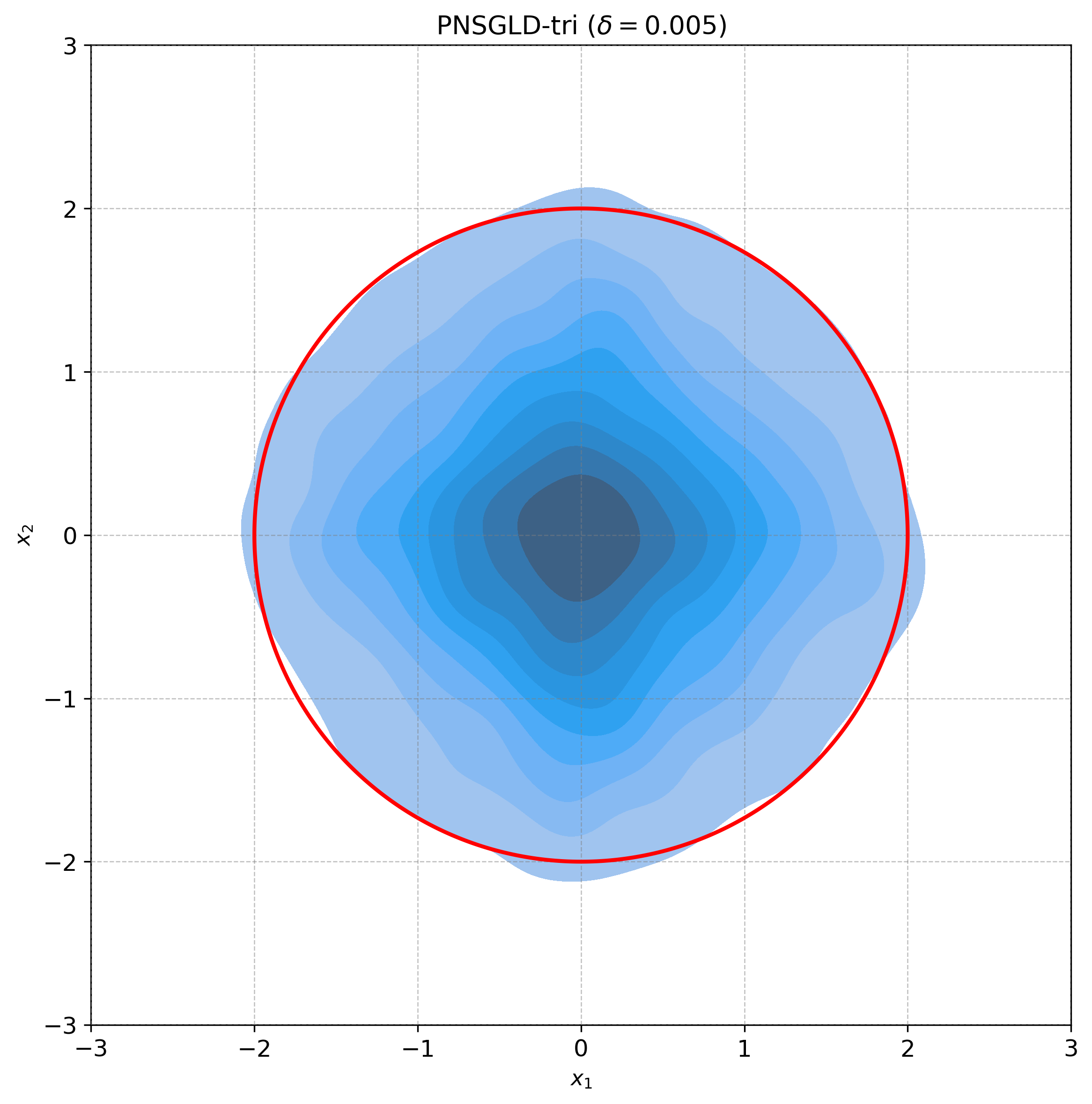}}\\[-0.2em]
{\small (c) PNSGLD (tridiagonal)}
\end{minipage}\\[0.35em]
{\small \textbf{Figure~\thefigure.} Two dimensional marginal density
estimates for the constrained reference and the terminal samples at
$\delta=0.005$. Panel (a) uses $10^4$ reference points; panels (b) and
(c) use $10^4$ terminal particles subsampled from the pooled runs.
The circle indicates the projected constraint boundary.}
\end{figure}

\begin{figure}[!htbp]
\centering
\refstepcounter{figure}\label{fig:laplace-penalty}
\includegraphics[width=0.62\textwidth]{\detokenize{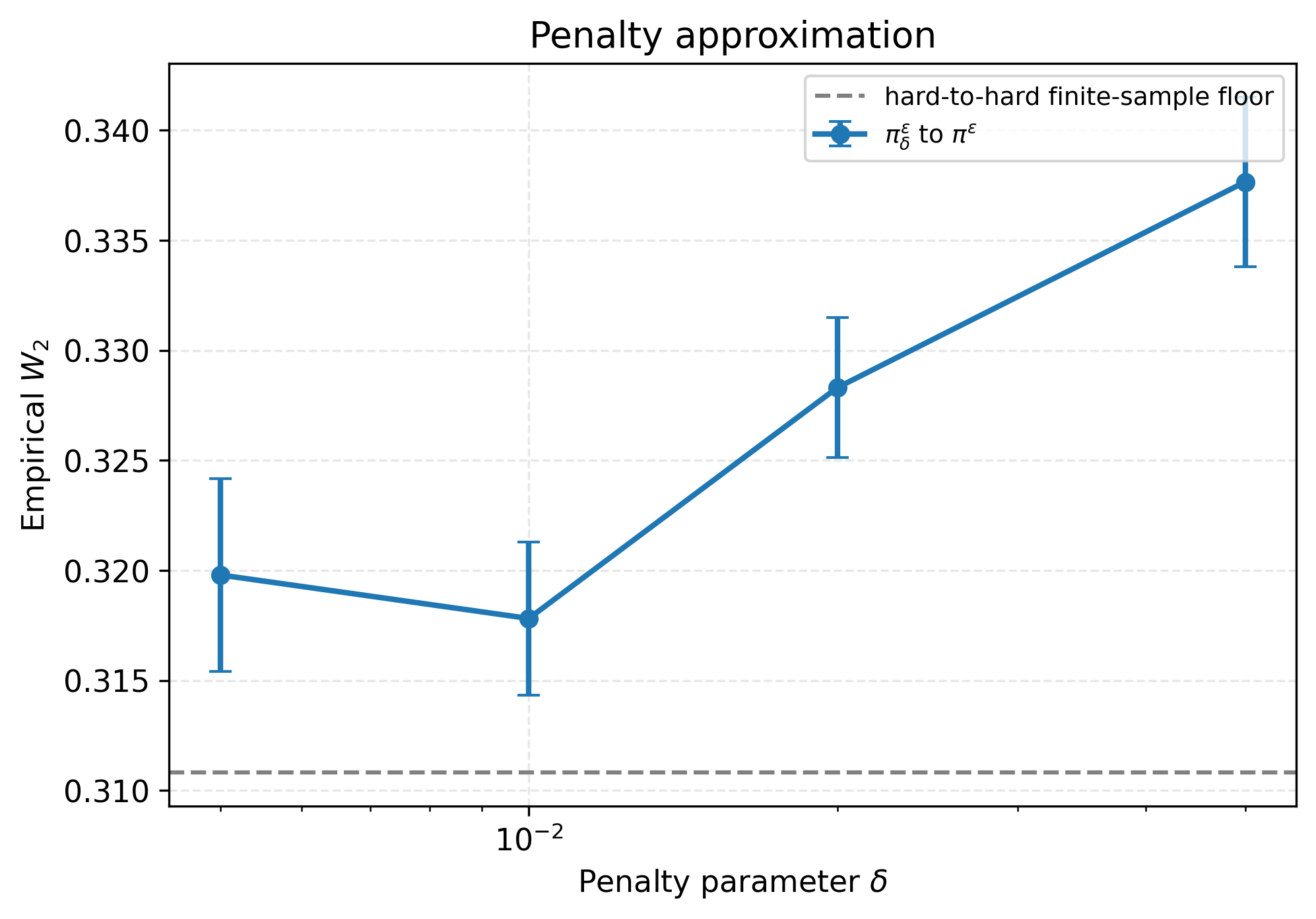}}\\[-0.1em]
{\small \textbf{Figure~\thefigure.} Empirical $W_2$ distances between
reference samples from $\pi_\delta$ and $\pi_{\mathcal C}$, using
600 points per sample. Each point averages 30 comparisons, with
approximate 95\% confidence intervals. The dashed line gives the
comparison between two independent constrained reference samples.}
\end{figure}

\begin{table}[!htbp]
\centering
\caption{Empirical $W_2$ distance to the constrained reference after
$K=16000$ iterations, using 300 points per sample. Entries are means
with approximate 95\% confidence-interval half-widths over 100
repetitions. All methods use the same stepsize within each row.}
\label{tab:laplace-final}
\resizebox{\textwidth}{!}{%
\begin{tabular}{cccccc}
\toprule
$\delta$ & PSGLD & Block & Tridiagonal & Circulant & State dependent \\
\midrule
$0.05$ & $0.4229\pm0.0041$ & $0.4211\pm0.0036$ & $0.4227\pm0.0042$ & $0.4211\pm0.0039$ & $0.4212\pm0.0041$ \\
$0.02$ & $0.4075\pm0.0042$ & $0.4063\pm0.0037$ & $0.4099\pm0.0038$ & $0.4078\pm0.0039$ & $0.4066\pm0.0040$ \\
$0.01$ & $0.4081\pm0.0047$ & $0.4018\pm0.0043$ & $0.4023\pm0.0038$ & $0.4024\pm0.0037$ & $0.4043\pm0.0044$ \\
$0.005$ & $0.5015\pm0.0109$ & $0.4344\pm0.0092$ & $0.4298\pm0.0073$ & $0.4316\pm0.0076$ & $0.4535\pm0.0081$ \\
\bottomrule
\end{tabular}}
\end{table}

Figure~\ref{fig:laplace-w2} shows that nonreversible perturbations
accelerate the decrease in the empirical sampling error.
The improvement is particularly pronounced at $\delta=0.005$:
tridiagonal PNSGLD reduces the terminal mean distance from $0.5015$
to $0.4298$, an improvement of approximately $14.3\%$ under the same
iteration budget. The block and circulant matrices give similar
improvements, and the state dependent method reduces the distance
to $0.4535$ (Table~\ref{tab:laplace-final}).
A comparable reduction is obtained relative to the common penalized
target $\pi_\delta$.

For the larger penalty values, the methods reach similar terminal
distances, while the nonreversible methods have smaller average
distances over the iterations at every tested penalty value.
Figure~\ref{fig:laplace-density} complements these comparisons by
showing the two-dimensional marginals of the constrained reference
and the sampled distributions.

The reference comparison in Figure~\ref{fig:laplace-penalty} also
illustrates the approximation of the constrained target by penalization.
For the smaller penalty values, the empirical distances between
$\pi_\delta$ and $\pi_{\mathcal C}$ are closer to the distance between
two independent samples from $\pi_{\mathcal C}$.
Together with the convergence curves, these results illustrate the
benefit of nonreversible sampling when stronger penalization is used
to approximate the constrained distribution.

\subsection{Constrained Bayesian Neural Networks}
\label{subsec:numerical:bnn}
We next consider constrained Bayesian neural networks to examine the
use of PNSGLD in nonconvex statistical learning. We compare predictive
performance on Rice~\cite{uciRice2019} and MAGIC~\cite{bock2004magic}
using a network with eight hidden units,
\[
h_x(a)=w_2^\top\operatorname{softplus}(W_1a+b_1)+b_2.
\]
The parameter vector concatenates the rows of $W_1$, followed by
$b_1,w_2,b_2$. We use a uniform prior on
$\mathcal C=\{x:\|x\|_2\leq R\}$.
Outside the constraint set, the summed binary log loss is evaluated
through a smooth bounded radial extension $T_\rho(x)$, with
$\rho=0.1R$. This map equals $x$ for $\|x\|\leq R+\rho$ and has
constant radius $R+2\rho$ for $\|x\|\geq R+3\rho$.
The extension leaves the constrained posterior unchanged, and its
derivative is included in the stochastic gradient.

Each data set uses one stratified training--validation--test split
with proportions $60/20/20\%$. The parameter dimensions are 73 for
Rice and 97 for MAGIC. Parameters are selected in preliminary
validation runs. For Rice, we use
$(R,\delta,\eta,b)=(5.0011,7.5032\times10^{-6},3.7516\times10^{-7},64)$;
for MAGIC, we use
$(R,\delta,\eta,b)=(9.5892,9.1953\times10^{-5},9.1953\times10^{-7},128)$.
Each of 20 repetitions uses eight chains and $K=20000$ iterations.
The initial states are $x_0=0.25R\,u/\|u\|$, where the coordinates
of $u$ are independent $\mathrm{Unif}(-1,1)$ variables.
The state dependent perturbation acts on each coordinate triple as
\begin{equation}
[J(x)v]_{(r)}=\frac{\gamma}{\sqrt3}
\tanh(x_{(r)})\times v_{(r)}.
\label{eq:numerical:bnn:state:J}
\end{equation}
This is the construction in~\eqref{eq:numerical:state:J:template}
with $R_s=1$. Table~\ref{tab:bnn-summary} gives the skew coefficients.

We report test accuracy, NLL, and the area under the receiver operating
characteristic curve (AUROC). Figure~\ref{fig:bnn-results} averages
the individual chain scores over eight chains and 20 repetitions.
For Table~\ref{tab:bnn-summary}, each repetition forms a predictive
mixture from all eight chains, retaining one state every 50 iterations
after iteration 10000. The table reports the mean mixture scores over
repetitions and the fraction of these retained states outside
$\mathcal C$.
\begin{figure}[!htbp]
\centering
\refstepcounter{figure}\label{fig:bnn-results}
\begin{minipage}{0.46\textwidth}
\centering
\includegraphics[width=\textwidth]{\detokenize{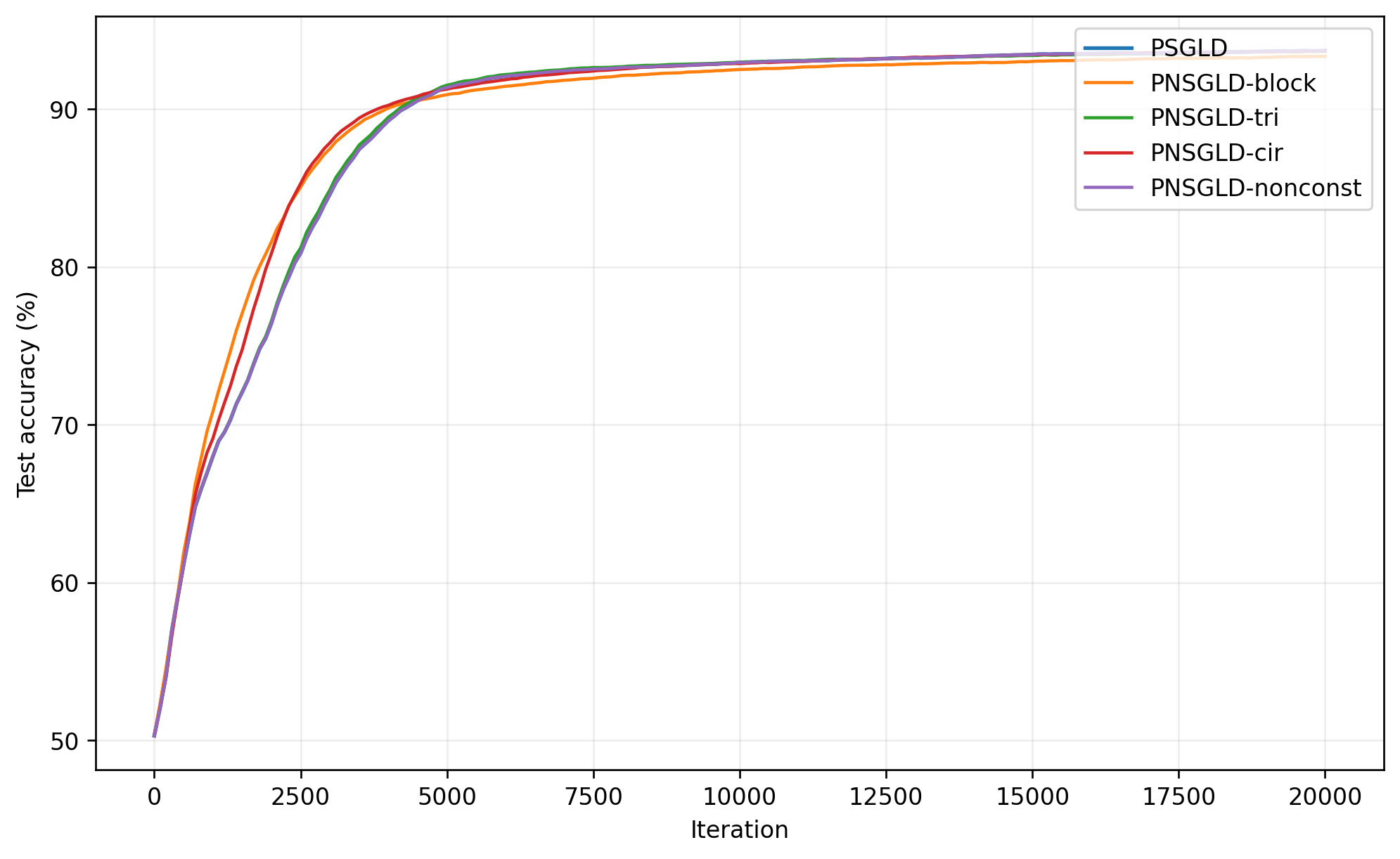}}\\[-0.2em]
{\small (a) Rice test accuracy}
\end{minipage}\hfill
\begin{minipage}{0.46\textwidth}
\centering
\includegraphics[width=\textwidth]{\detokenize{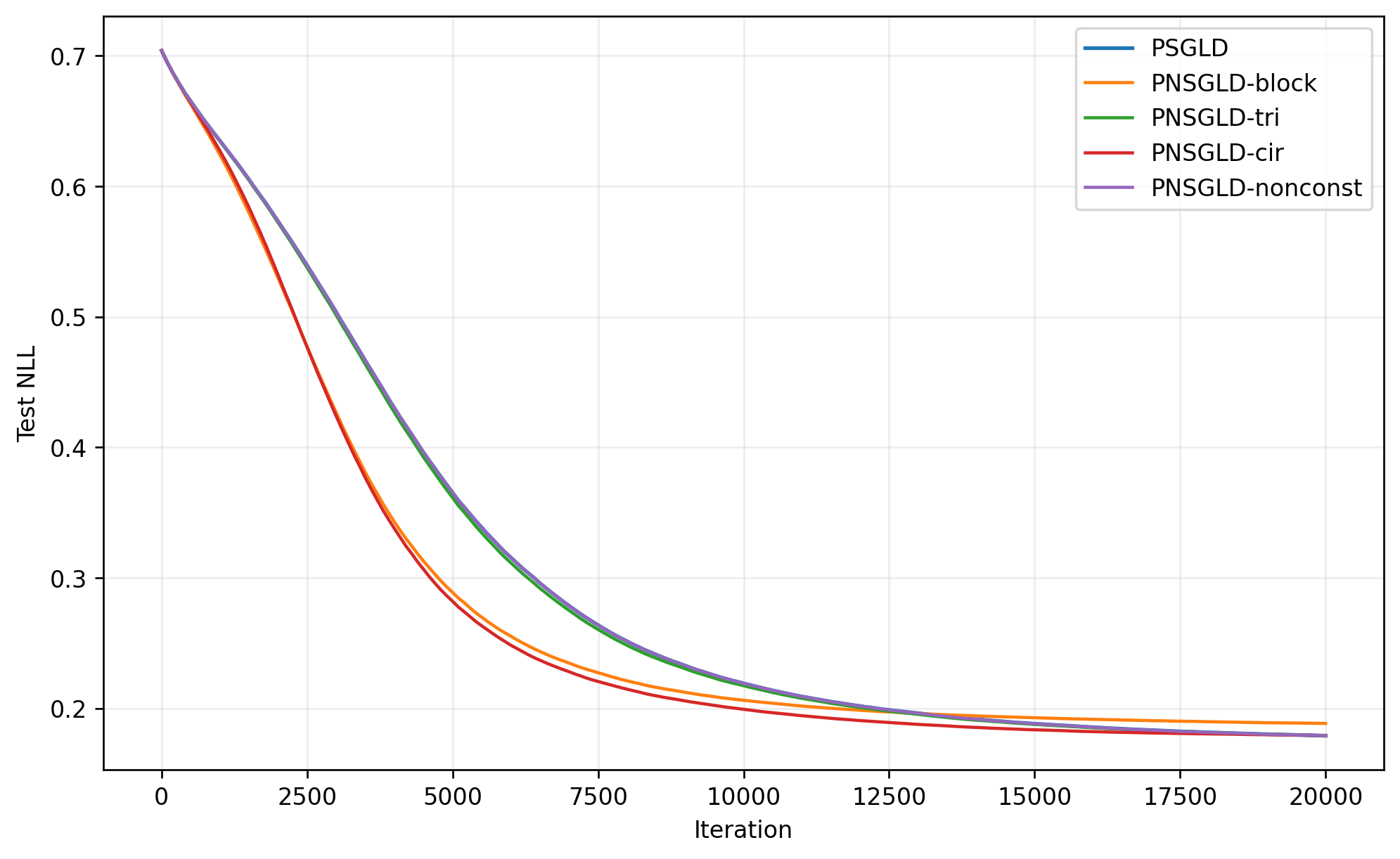}}\\[-0.2em]
{\small (b) Rice test NLL}
\end{minipage}\\[0.25em]
\begin{minipage}{0.46\textwidth}
\centering
\includegraphics[width=\textwidth]{\detokenize{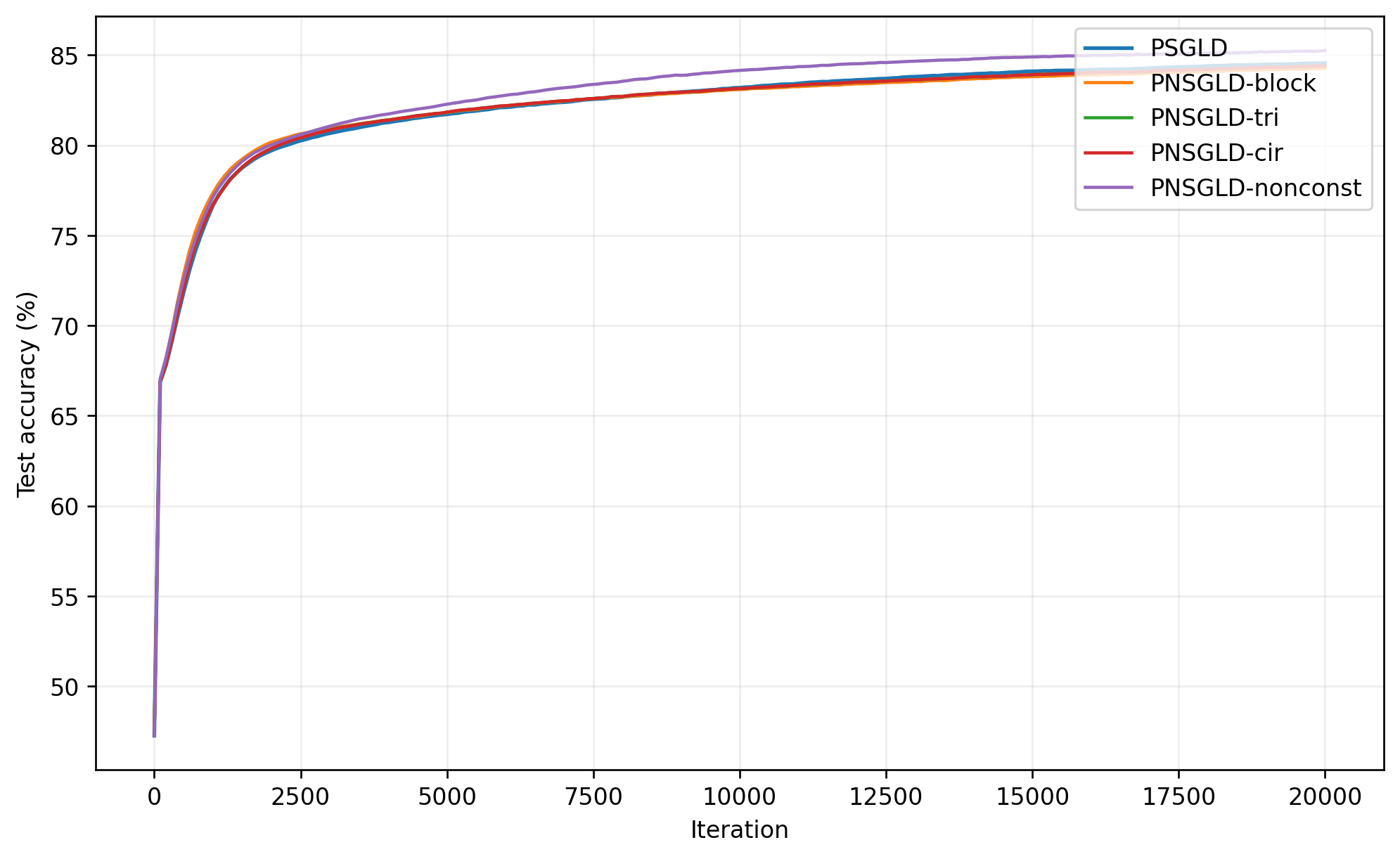}}\\[-0.2em]
{\small (c) MAGIC test accuracy}
\end{minipage}\hfill
\begin{minipage}{0.46\textwidth}
\centering
\includegraphics[width=\textwidth]{\detokenize{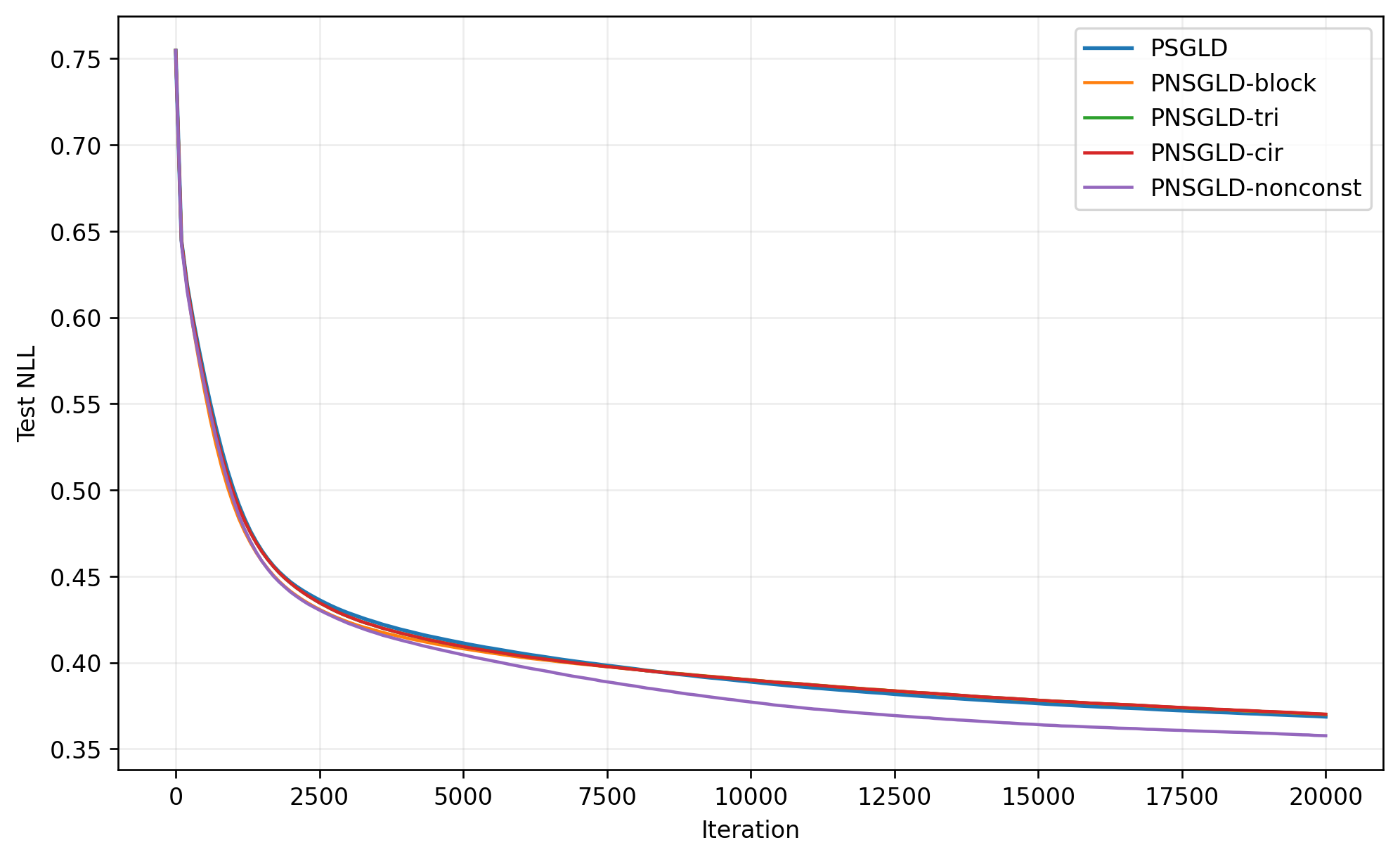}}\\[-0.2em]
{\small (d) MAGIC test NLL}
\end{minipage}\\[0.35em]
{\small \textbf{Figure~\thefigure.} Test accuracy and NLL during the
neural-network experiments, averaged over eight chains and 20
repetitions on a fixed data split. Each chain is scored separately.}
\end{figure}

\begin{table}[!htbp]
\centering
\caption{Predictive mixture scores averaged over 20 repetitions on a
fixed data split. Each mixture uses eight chains and 200 retained
states per chain. Accuracy and the empirical fraction of retained
states outside $\mathcal C$ are reported in percent.}
\label{tab:bnn-summary}
\scriptsize
\resizebox{\textwidth}{!}{%
\begin{tabular}{llccccc}
\hline
Data & Method & $\gamma$ & Test accuracy (\%) & Test NLL & Test AUROC & Outside (\%) \\
\hline
Rice & PSGLD & $0$ & $93.92$ & $0.1894$ & $0.9813$ & $0.00$ \\
 & PNSGLD (block) & $2$ & $93.54$ & $0.1841$ & $0.9794$ & $0.74$ \\
 & PNSGLD (tridiagonal) & $0.5$ & $93.67$ & $0.1887$ & $0.9812$ & $0.00$ \\
 & PNSGLD (circulant) & $4$ & $93.61$ & $0.1822$ & $0.9809$ & $0.54$ \\
 & PNSGLD (state dependent) & $-0.0625$ & $93.92$ & $0.1894$ & $0.9813$ & $0.00$ \\
\hline
MAGIC & PSGLD & $0$ & $84.48$ & $0.3724$ & $0.8977$ & $0.00$ \\
 & PNSGLD (block) & $1$ & $84.09$ & $0.3728$ & $0.8965$ & $0.00$ \\
 & PNSGLD (tridiagonal) & $1$ & $84.29$ & $0.3728$ & $0.8974$ & $0.00$ \\
 & PNSGLD (circulant) & $1$ & $84.28$ & $0.3728$ & $0.8974$ & $0.00$ \\
 & PNSGLD (state dependent) & $-12$ & $85.24$ & $0.3597$ & $0.9033$ & $0.00$ \\
\hline
\end{tabular}}
\end{table}

State dependent PNSGLD gives the best mean predictive scores on MAGIC
among the methods considered (Table~\ref{tab:bnn-summary}).
It increases accuracy from $84.48\%$ to $85.24\%$, reduces NLL from
$0.3724$ to $0.3597$, and improves AUROC from $0.8977$ to $0.9033$.
The improvement across all three criteria demonstrates the practical
benefit of the state dependent perturbation in this nonconvex model.

On Rice, the circulant perturbation gives the lowest NLL, reducing it
from $0.1894$ to $0.1822$, with accuracy $93.61\%$ compared with
$93.92\%$ for PSGLD. The empirical fraction of retained states outside
$\mathcal C$ is below $1\%$ for every configuration in the table.
Thus, the experiments show predictive gains from suitable constant
and state dependent perturbations, with the strongest overall
improvement obtained on MAGIC.

\subsection{Spectral Acceleration for Small Penalty Parameters}
\label{subsec:numerical:small-delta}
The preceding experiments illustrate the influence of the skew matrix
on sampling and predictive performance. We now examine the
penalty dependent choice prescribed by
Proposition~\ref{prop:PJ:penalty:hessian:block}.
A quadratic model allows us to compare both the spectral rates and
the number of iterations needed to attain the same sampling accuracy,
including the effect of stochastic gradient noise.

Let $m=(1,0)^\top$, $f(x)=\tfrac12\|x-m\|^2$, and $S(x)=x_2^2$.
Then
\[
H_\delta=\operatorname{diag}(1,\Lambda_\delta),\qquad
\Lambda_\delta=1+2/\delta,\qquad
\pi_\delta=\mathcal N(m,H_\delta^{-1}).
\]
The original potential $f$ is fixed, so varying $\delta$ changes only
the curvature introduced by the quadratic penalty.
We compare
\[
J_a=\begin{pmatrix}0&a\\-a&0\end{pmatrix},\qquad
a=0,\quad a=1,\quad
a=a_\delta=\frac{\Lambda_\delta-1}{2\sqrt{\Lambda_\delta}},
\]
using the eight penalty values in
Table~\ref{tab:small-delta-hitting}.
At each iteration, the stochastic gradient is obtained by adding the
average of $b=8$ independent $\mathcal N(0,I_2)$ vectors to $\nabla f$.
With $h_k=x_k-m$ and $B=I+J_a$, the Euler chain is
\[
h_{k+1}=Mh_k+\zeta_{k+1},\qquad
M=I-\eta BH_\delta,\qquad
\operatorname{Cov}(\zeta_{k+1})=Q
=2\eta I_2+\frac{\eta^2}{8}BB^\top.
\]
Its invariant covariance satisfies $C_\infty=MC_\infty M^\top+Q$,
which accounts for both discretization and stochastic gradient noise.
The states at recorded iterations are generated from the exact
Gaussian transitions of this Euler chain.

For each method, a single stepsize fraction $q$ is used across all
penalty values:
\[
\eta=q\min_i\frac{2\operatorname{Re}\mu_i}{|\mu_i|^2},
\qquad \mu_i\in\operatorname{spec}(BH_\delta).
\]
We select $q$ numerically to minimize the median iteration count,
subject to a stationary $W_2$ error at most $0.02$ for every penalty
value. The skew coefficient $a_\delta$ is prescribed by the theory.
Table~\ref{tab:small-delta-config} reports the selected fractions and
the resulting stationary errors.

\begin{figure}[!htbp]
\centering
\refstepcounter{figure}\label{fig:small-delta-mechanism}
\includegraphics[width=0.62\textwidth]{\detokenize{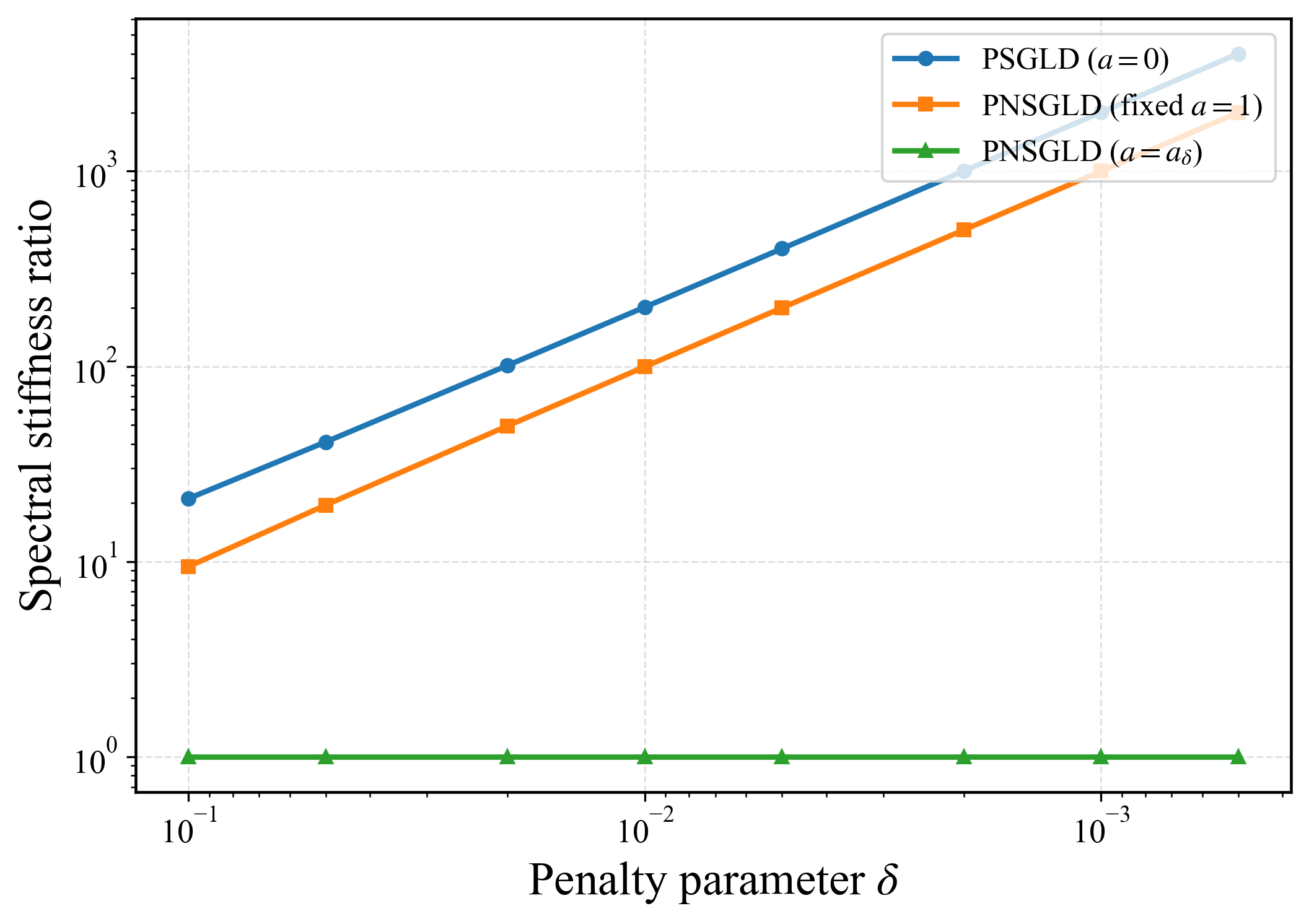}}\\[-0.1em]
{\small \textbf{Figure~\thefigure.} Spectral ratio
$\max_i\operatorname{Re}\mu_i/\min_i\operatorname{Re}\mu_i$
for $(I+J_a)H_\delta$. The penalty dependent choice gives ratio one,
as predicted by Proposition~\ref{prop:PJ:penalty:hessian:block}.}
\end{figure}

We use 12 repetitions of $N=2000$ particles initialized from
$h_0\sim\mathcal N((1,0)^\top,H_\delta^{-1})$.
The initial displacement lies in the direction of curvature one,
and the initial $W_2$ distance equals one for every $\delta$.
For repetition $r$, let $\widehat h_{r,k}$ and $\widehat C_{r,k}$
be the empirical mean and covariance of $h_k$. We compute
\[
(\widehat W^{\rm G}_{2,r,k})^2
=\|\widehat h_{r,k}\|^2+
\operatorname{tr}\!\left[
\widehat C_{r,k}+H_\delta^{-1}
-2(H_\delta^{-1/2}\widehat C_{r,k}H_\delta^{-1/2})^{1/2}
\right].
\]
Figure~\ref{fig:small-delta-w2} averages these estimates over
repetitions. The iteration counts in
Table~\ref{tab:small-delta-hitting} are computed from the exact
Gaussian mean and covariance recursion.

\begin{figure}[!htbp]
\centering
\refstepcounter{figure}\label{fig:small-delta-w2}
\includegraphics[width=\textwidth]{\detokenize{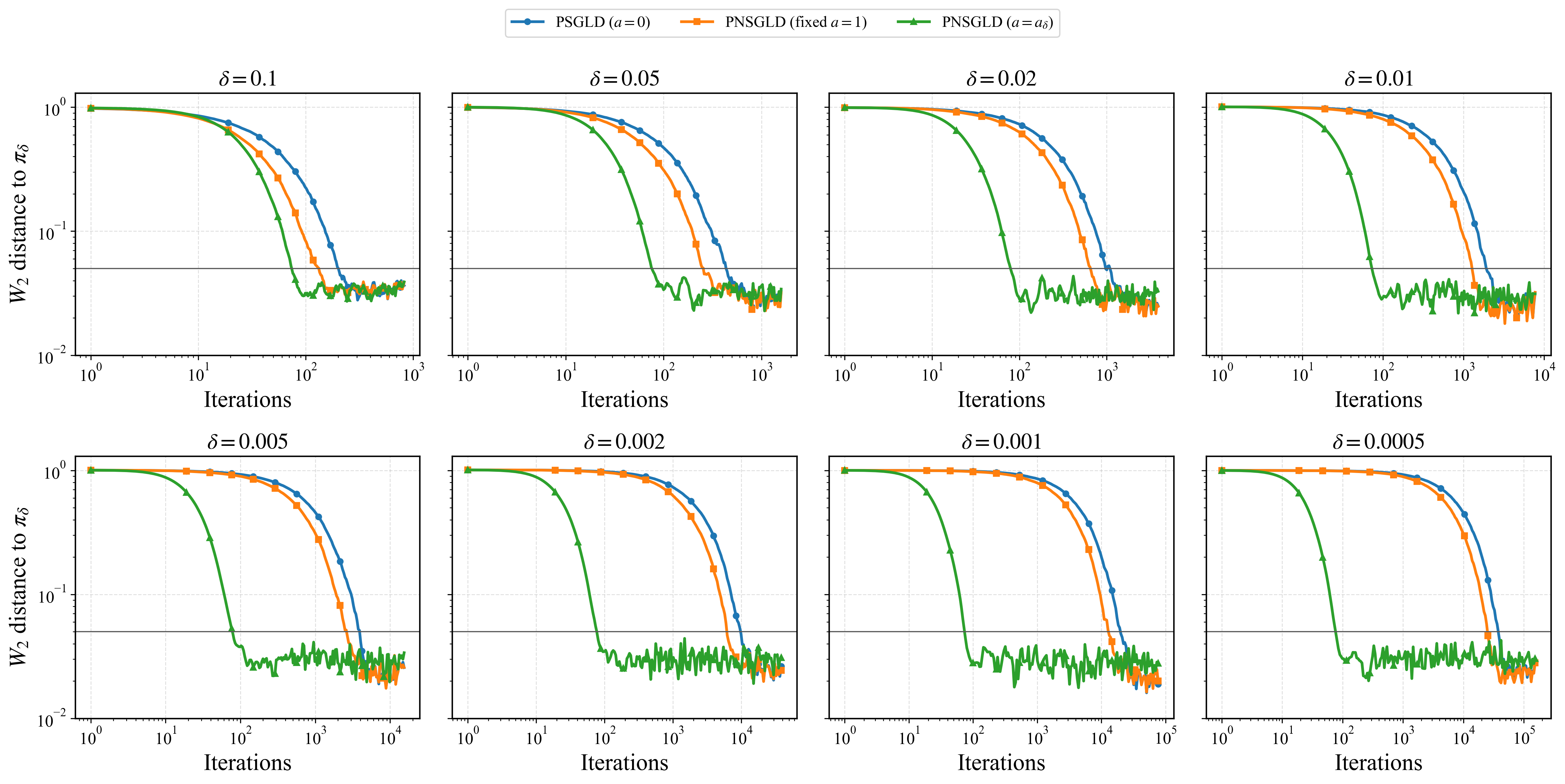}}\\[-0.1em]
{\small \textbf{Figure~\thefigure.} Gaussian $W_2$ estimates at eight
penalty values, averaged over 12 repetitions of 2000 particles.
The horizontal line marks the target accuracy $0.05$.}
\end{figure}

\begin{table}[!htbp]
\centering
\caption{Stepsize fractions for the eight penalty values. The spectral
radius and stationary $W_2$ error are reported as maxima over these
values. The stationary error includes discretization and stochastic
gradient noise.}
\label{tab:small-delta-config}
\resizebox{\textwidth}{!}{%
\begin{tabular}{lcccc}
\hline
Method & $a$ & $q$ & Max. spectral radius & Max. stationary $W_2$ bias \\
\hline
PSGLD & $0$ & $0.1564$ & $0.9999$ & $0.0199$ \\
PNSGLD (fixed) & $1$ & $0.1171$ & $0.9999$ & $0.0199$ \\
PNSGLD ($a=a_\delta$) & $a_\delta$ & $0.0313$ & $0.9374$ & $0.0198$ \\
\hline
\end{tabular}}
\end{table}

\begin{table}[!htbp]
\centering
\caption{First iteration
$K_{0.05}=\min\{k\geq0:W_2(\nu_k,\pi_\delta)\leq0.05\}$,
computed from the exact Gaussian mean and covariance recursion.}
\label{tab:small-delta-hitting}
\begin{tabular}{ccccc}
\hline
$\delta$ & $a_\delta$ & PSGLD & PNSGLD (fixed) & PNSGLD ($a=a_\delta$) \\
\hline
$0.1$ & $2.182$ & $206$ & $125$ & $75$ \\
$0.05$ & $3.123$ & $397$ & $253$ & $75$ \\
$0.02$ & $4.975$ & $971$ & $636$ & $75$ \\
$0.01$ & $7.053$ & $1929$ & $1275$ & $76$ \\
$0.005$ & $9.988$ & $3844$ & $2554$ & $76$ \\
$0.002$ & $15.803$ & $9589$ & $6390$ & $76$ \\
$0.001$ & $22.355$ & $19164$ & $12783$ & $76$ \\
$0.0005$ & $31.619$ & $38315$ & $25570$ & $76$ \\
\hline
\end{tabular}
\end{table}

Figure~\ref{fig:small-delta-mechanism} illustrates the spectral
prediction of Proposition~\ref{prop:PJ:penalty:hessian:block}.
For the fixed coefficient $a=1$, the slow rate approaches two as
$\delta$ decreases. The penalty-dependent choice instead makes both
rates equal to $(1+\Lambda_\delta)/2$, eliminating the increasing
separation between the slow and fast spectral rates.

The iteration counts show the computational benefit of this choice.
For PSGLD and fixed-coefficient PNSGLD, the counts grow approximately
linearly with $\Lambda_\delta$. The fixed coefficient reduces the
required count by a factor approaching $1.50$; this agrees with the
spectral gain after accounting for the selected stepsizes, since
$2q_{\rm fixed}/q_{\rm rev}\approx1.50$.
In contrast, PNSGLD with $a=a_\delta$ reaches the prescribed accuracy
in only 75--76 iterations across all eight penalty values.
At $\delta=5\times10^{-4}$, it requires 76 iterations compared with
38315 for PSGLD, a reduction by a factor of approximately 500
(Table~\ref{tab:small-delta-hitting}).

All methods satisfy the same stationary error tolerance
(Table~\ref{tab:small-delta-config}). The improvement therefore
results from a more effective choice of the nonreversible perturbation,
without relaxing the accuracy requirement.
These results support the favorable penalty dependence established in
Proposition~\ref{prop:PJ:penalty:Euler:acceleration} and demonstrate
that the acceleration remains effective in this quadratic experiment
with stochastic gradients. Together with the preceding experiments,
they show the value of choosing the skew perturbation to suit the
sampling problem, particularly by adapting it to the curvature
introduced by penalization.

\section{Conclusion}

We have developed penalized nonreversible Langevin algorithms for
constrained sampling and established nonasymptotic error bounds for
their full gradient and stochastic gradient discretizations.
The analysis combines convergence to the penalized distribution with
approximation of the constrained target. For quadratic potentials, adapting the skew perturbation to the penalty induced curvature
improves the sufficient iteration bound.

The numerical experiments illustrate faster initial convergence in
constrained linear regression, improved sampling performance under
strong penalization, and predictive benefits from suitable constant
and state dependent perturbations in statistical learning.
In the quadratic example, the penalty dependent choice substantially
reduces the iterations needed to reach a common accuracy, including
in the presence of stochastic gradient noise. These findings emphasize
the importance of designing the nonreversible perturbation together
with the penalty parameter.






\bibliographystyle{alpha} \bibliography{Langevin}


\clearpage

\appendix

\section{Technical Details}

We use the following lemmas in the main results.


\begin{lemma}
\label{lemma:uniform:bound}
Let $V_\delta=f+S/\delta$ have an $L_\delta$ Lipschitz gradient and be $(m_\delta,b_\delta)$ dissipative with $m_\delta>0$. Assume $J$ is Borel measurable, $J(x)^\top=-J(x)$ and $\|J(x)\|_{\mathrm{op}}\leq M_J$. If $X\sim\pi_\delta$, then
\begin{align}
\label{eq:moment:target:bounds}
\mathbb E\|X\|^2&\leq M_{2,\delta}, & \mathbb E\|X\|^4&\leq M_{4,\delta},
\\
\label{eq:moment:target:gradient:bounds}
\mathbb E\|\nabla V_\delta(X)\|^2 &\leq 2L_\delta^2M_{2,\delta}+2g_\delta^2, & \mathbb E\|\nabla V_\delta(X)\|^4 &\leq 8L_\delta^4M_{4,\delta}+8g_\delta^4.
\end{align}
Here $g_\delta$ and $(M_{2,\delta},M_{4,\delta})$ are defined in \eqref{eq:moment:quad:constants} and \eqref{eq:moment:target:constants}, respectively. For Euler iterates with $X_0\sim\nu_0$, if $\mathbb E_{\nu_0}U_\delta(X_0)<\infty$ and $0<\eta\leq1\wedge[L_\delta(1+M_J)^2]^{-1}$,
\begin{equation}
\label{eq:2nd:uniform:bound:general}
\sup_{k\geq0}\mathbb E\|X_{k\eta}\|^2\leq C_x, \qquad \sup_{k\geq0}\mathbb E\|\nabla V_\delta(X_{k\eta})\|^2\leq C_{\nabla,2}.
\end{equation}
$C_x$ and $C_{\nabla,2}$ are defined in \eqref{eq:moment:euler:second:constants}. If, in addition, $\mathbb E_{\nu_0}\|X_0\|^4<\infty$ and
\begin{equation}
\label{eq:moment:eta4:explicit}
0<\eta\leq 1\wedge\frac{1}{L_\delta(1+M_J)^2} \wedge\frac{q_{U,\delta}}{64L_\delta^2} \wedge\left( \frac{q_{U,\delta}} {128L_\delta^4(1+M_J)^4} \right)^{1/3},
\end{equation}
then
\begin{equation}
\label{eq:euler:fourth:moment:bound}
\sup_{k\geq0}\mathbb{E}\|X_{k\eta}\|^4\leq C_{4,\delta}^{\mathrm{E}}, \qquad \sup_{k\geq0}\mathbb{E}\|\nabla V_\delta(X_{k\eta})\|^4\leq C_{\nabla,4}^{\mathrm{E}}.
\end{equation}
Here $q_{U,\delta}$, $C_{4,\delta}^{\mathrm{E}}$, and $C_{\nabla,4}^{\mathrm{E}}$ are defined in \eqref{eq:moment:euler:qr:definition}, \eqref{eq:moment:euler:fourth:state}, and \eqref{eq:moment:euler:fourth:constants}, respectively.
\end{lemma}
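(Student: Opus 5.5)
We prove the three groups of estimates separately: moments of $\pi_\delta$, second moments of the Euler iterates, and fourth moments of the Euler iterates.

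\textbf{Target moments.} For the moments of $\pi_\delta$ we use a Gibbs integration by parts. For $X\sim\pi_\delta$ and a polynomial test function $\phi$,
\[
\mathbb E[\Delta\phi(X)]=\mathbb E[\langle\nabla\phi(X),\nabla V_\delta(X)\rangle].
\]
Boundary terms vanish because dissipativity gives $V_\delta(x)\geq \frac{m_\delta}{4}\|x\|^2-c_\delta$, so $e^{-V_\delta}$ decays like a Gaussian. With $\phi=\|x\|^2$ this identity reads $2d=2\mathbb E\langle X,\nabla V_\delta\rangle\geq 2m_\delta\mathbb E\|X\|^2-2b_\delta$, hence $\mathbb E\|X\|^2\leq (d+b_\delta)/m_\delta$. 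With $\phi=\|x\|^4$, we have $\Delta\phi=(4d+8)\|x\|^2$ and $\nabla\phi=4\|x\|^2x$. Dissipativity then gives
\[
m_\delta\mathbb E\|X\|^4\leq (d+2+b_\delta)\mathbb E\|X\|^2 ,
\]
which yields $M_{4,\delta}=(d+b_\delta)(d+2+b_\delta)/m_\delta^2$. This matches the values quoted in the nonconvex section. The gradient bounds follow from $\|\nabla V_\delta(x)\|\leq L_\delta\|x\|+g_\delta$ with $g_\delta=\|\nabla V_\delta(0)\|$, together with $(u+v)^2\leq 2u^2+2v^2$ and $(u+v)^4\leq 8u^4+8v^4$. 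None of this involves $J$.

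\textbf{Euler second moments.} We use the Lyapunov function $U_\delta=1+c_\delta+V_\delta\geq 1+\frac{m_\delta}{4}\|x\|^2$ and the descent lemma. Write $X_{k+1}=X_k-\eta(I+J)\nabla V_\delta+\sqrt{2\eta}\xi$. Skew symmetry gives $\langle\nabla V_\delta,(I+J)\nabla V_\delta\rangle=\|\nabla V_\delta\|^2$, and $\|(I+J)\nabla V_\delta\|\leq(1+M_J)\|\nabla V_\delta\|$. Taking conditional expectations,
\[
\mathbb E[V_\delta(X_{k+1})\mid X_k]\leq V_\delta(X_k)-\eta\Bigl(1-\tfrac{L_\delta\eta(1+M_J)^2}{2}\Bigr)\|\nabla V_\delta(X_k)\|^2+L_\delta\eta d.
\]
Under $\eta\leq[L_\delta(1+M_J)^2]^{-1}$ the coefficient of $\|\nabla V_\delta\|^2$ is at most $-\eta/2$. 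To close the recursion we need $\|\nabla V_\delta(x)\|^2\geq c\,U_\delta(x)-C$. For this we combine:
\begin{itemize}
\item $\|\nabla V_\delta(x)\|\,\|x\|\geq\langle x,\nabla V_\delta(x)\rangle\geq m_\delta\|x\|^2-b_\delta$, so that $\|\nabla V_\delta\|^2\gtrsim m_\delta^2\|x\|^2-\text{const}$;
\item the upper bound $V_\delta(x)\leq V_\delta(0)+g_\delta\|x\|+\frac{L_\delta}{2}\|x\|^2$.
\end{itemize}
Together these give $\|\nabla V_\delta\|^2\geq q_{U,\delta}U_\delta-r_{U,\delta}$ for explicit constants $q_{U,\delta},r_{U,\delta}$. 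We then obtain the geometric recursion
\[
\mathbb E U_\delta(X_{k+1})\leq (1-\eta q_{U,\delta}/2)\,\mathbb E U_\delta(X_k)+\eta\bigl(r_{U,\delta}/2+L_\delta d\bigr),
\]
whose supremum over $k$ is bounded by $\mathbb E U_\delta(X_0)\vee(r_{U,\delta}+2L_\delta d)/q_{U,\delta}$. This gives $C_x=\frac{4}{m_\delta}\sup_k\mathbb E U_\delta(X_k)$, and then $C_{\nabla,2}=2L_\delta^2C_x+2g_\delta^2$.

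\textbf{Euler fourth moments (main obstacle).} We run the same argument on $U_\delta^2$. Set $\Delta=U_\delta(X_{k+1})-U_\delta(X_k)$. We expand
\[
U_\delta(X_{k+1})^2=U_\delta(X_k)^2+2U_\delta(X_k)\Delta+\Delta^2 .
\]
The cross term $2U_\delta(X_k)\,\mathbb E[\Delta\mid X_k]$ is handled by the one-step inequality above, giving the contraction $-\eta q_{U,\delta}U_\delta^2$ plus lower-order terms. The difficulty is $\mathbb E[\Delta^2\mid X_k]$. We use
\[
|\Delta|\leq\|\nabla V_\delta(X_k)\|\,\|h\|+\tfrac{L_\delta}{2}\|h\|^2,\qquad h=-\eta(I+J)\nabla V_\delta(X_k)+\sqrt{2\eta}\,\xi .
\]
This produces terms of the following types:
\begin{itemize}
\item $\eta\|\nabla V_\delta\|^2$ from the noise, bounded by $\eta(2L_\delta^2\|x\|^2+\dots)\lesssim\eta L_\delta^2U_\delta/m_\delta$;
\item $\eta^2(1+M_J)^2\|\nabla V_\delta\|^4$ and $L_\delta^2\eta^4(1+M_J)^4\|\nabla V_\delta\|^4$ from the drift;
\item $\eta^2d^2$ terms from the Gaussian fourth moment.
\end{itemize}
The quartic gradient terms must be absorbed into $\eta q_{U,\delta}U_\delta^2$, using $\|\nabla V_\delta\|^2\lesssim L_\delta^2\|x\|^2+g_\delta^2\lesssim U_\delta$. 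This is exactly what the thresholds $\eta\leq q_{U,\delta}/(64L_\delta^2)$ and $\eta^3\leq q_{U,\delta}/(128L_\delta^4(1+M_J)^4)$ in \eqref{eq:moment:eta4:explicit} are designed for: they leave a net factor $1-\eta q_{U,\delta}/2$. The remaining linear terms in $U_\delta$ are bounded using Young's inequality or the second-moment bound.

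The resulting recursion gives $\sup_k\mathbb E U_\delta(X_k)^2<\infty$, which requires $\mathbb E_{\nu_0}\|X_0\|^4<\infty$ through $U_\delta\lesssim 1+\|x\|^2$. Consequently $\mathbb E\|X_k\|^4\leq 16m_\delta^{-2}\,\mathbb E U_\delta^2=:C_{4,\delta}^{\mathrm E}$ and $C^{\mathrm E}_{\nabla,4}=8L_\delta^4C^{\mathrm E}_{4,\delta}+8g_\delta^4$. Bookkeeping the numerical constants in this step is where we expect most of the effort.
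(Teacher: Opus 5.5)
Your overall structure matches the paper's. You use Gibbs integration by parts for the target moments, and for the Euler second moments the Lyapunov function $U_\delta=1+c_\delta+V_\delta$ together with the descent lemma, skew symmetry, and $\|\nabla V_\delta\|^2\ge q_{U,\delta}U_\delta-r_{U,\delta}$. Those parts go through, with two minor omissions. First, you should check $q_{U,\delta}\eta/2\le1$ (it follows from $m_\delta\le L_\delta$ and $c_{U,2}\ge(L_\delta+1)/2$) to justify the $\max$ form of the supremum. Second, the quadratic lower bound $V_\delta\ge\frac{m_\delta}{4}\|x\|^2-c_\delta$ needs its own short radial-integration proof.

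\textbf{The gap is in the fourth-moment step.} The estimates you propose for $\mathbb E[\Delta^2\mid X_k]$ cannot be absorbed under the stated stepsize \eqref{eq:moment:eta4:explicit}. So your claim that those thresholds are ``exactly'' what is needed is false for your bounds. There are two problems.
\begin{enumerate}
\item The chain $\|\nabla V_\delta\|^2\lesssim L_\delta^2\|x\|^2+g_\delta^2\lesssim U_\delta$ passes through $\|x\|^2\le 4U_\delta/m_\delta$. It gives a constant of order $L_\delta^2/m_\delta\ge L_\delta$. Hence $\eta^2\|\nabla V_\delta\|^4$ is only controlled by a multiple of $\eta^2(L_\delta^4/m_\delta^2)U_\delta^2$. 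Under $\eta\le q_{U,\delta}/(64L_\delta^2)$ this is a constant times $(L_\delta/m_\delta)^2q_{U,\delta}\eta\,U_\delta^2$. That exceeds the contraction budget $q_{U,\delta}\eta U_\delta^2$ supplied by the cross term, even when $J=0$.
\item The Cauchy--Schwarz step $|\langle\nabla V_\delta,h\rangle|\le\|\nabla V_\delta\|\|h\|$ puts a factor $(1+M_J)^2$ on the $\eta^2\|\nabla V_\delta\|^4$ term, but the threshold $q_{U,\delta}/(64L_\delta^2)$ contains no $M_J$. The other threshold, $\eta\le[L_\delta(1+M_J)^2]^{-1}$, only gives $(1+M_J)^2L_\delta^2\eta^2\le L_\delta\eta$. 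This is not $\lesssim q_{U,\delta}\eta$ with a small enough constant, because $q_{U,\delta}\le 2L_\delta$ and typically $q_{U,\delta}\ll L_\delta$. The same step also puts a factor $d$ on the $\eta\|\nabla V_\delta\|^2$ term, which only inflates constants.
\end{enumerate}

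The paper closes the recursion with two sharper facts.
\begin{enumerate}
\item[(i)] $\|\nabla V_\delta(x)\|^2\le 2L_\delta\bigl(V_\delta(x)-\inf V_\delta\bigr)\le 2L_\delta U_\delta(x)$. This comes from the descent lemma at the point $x-\nabla V_\delta(x)/L_\delta$ together with $\inf V_\delta\ge-c_\delta$.
\item[(ii)] Keep the linear term exact. Write $\Delta^2\le 2\langle\nabla V_\delta(x),h\rangle^2+\frac{L_\delta^2}{2}\|h\|^4$, and use skew symmetry to get $\mathbb E\langle\nabla V_\delta(x),h\rangle^2=\eta^2\|\nabla V_\delta(x)\|^4+2\eta\|\nabla V_\delta(x)\|^2$, with no $J$ and no $d$.
\end{enumerate}
These give
\[
\mathbb E\Delta^2\le\bigl(8L_\delta^2\eta^2+16L_\delta^4(1+M_J)^4\eta^4\bigr)U_\delta^2+8L_\delta\eta U_\delta+16L_\delta^2d(d+2)\eta^2 .
\]
The two thresholds in \eqref{eq:moment:eta4:explicit} are calibrated precisely so that the $U_\delta^2$ coefficient is at most $q_{U,\delta}\eta/4$. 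Young's inequality on the linear terms then yields the factor $1-q_{U,\delta}\eta/4$. Fact (i) also gives $\mathbb E\|\nabla V_\delta(X_{k\eta})\|^4\le 4L_\delta^2\sup_k\mathbb E\,U_\delta(X_{k\eta})^2$ directly.
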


\begin{proof}
See Appendix~\ref{proof:fourth:moment}.
\end{proof}

\begin{lemma}
\label{lemma:PJ:sg:second:moment}
For the PNSGLD iterates in~\eqref{eq:pnsgld}, assume the smoothness, dissipativity, and skew matrix bounds of Lemma~\ref{lemma:uniform:bound}, and assume $J$ is Borel measurable. Suppose Assumption~\ref{assumption:sg} holds, $\mathbb E_{\nu_0}U_\delta(x_0)<\infty$, and
\begin{equation}
0<\eta\leq1 \wedge\frac1{L_\delta(1+M_J^2)} \wedge \frac{q_{U,\delta}c_{U,1}} {4L_\delta(1+M_J^2)\sigma^2L^2}.
\label{eq:PJ:sg:moment:stepsize}
\end{equation}
The last upper bound is interpreted as $+\infty$ if $\sigma L=0$. Then
\begin{align}
\sup_{k\geq0}\mathbb E\|x_k\|^2 &\leq C_x^{\mathrm{sg}}<\infty.
\label{eq:PJ:sg:second:moment}
\end{align}
Here $c_{U,1}$ and $(q_{U,\delta},r_{U,\delta})$ are defined in \eqref{eq:cU1:explicit} and~\eqref{eq:moment:euler:qr:definition}, $C_x^{\mathrm{sg}}$ is defined in~\eqref{eq:Cx:sg:constant}.
\end{lemma}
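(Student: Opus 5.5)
The plan is a one-step Lyapunov drift argument for $U_\delta=1+c_\delta+V_\delta$ along the PNSGLD recursion~\eqref{eq:pnsgld}, followed by iteration and a transfer from $U_\delta$ to $\|x\|^2$ through the quadratic lower bound $V_\delta(x)\geq\frac{m_\delta}{4}\|x\|^2-c_\delta$. Write the update as $x_{k+1}=x_k-\eta(I+J(x_k))(\nabla V_\delta(x_k)+\zeta_k)+\sqrt{2\eta}\xi_{k+1}$, where $\zeta_k:=\widetilde\nabla f(x_k,w_k)-\nabla f(x_k)$. Smoothness of $V_\delta$ gives
\[
V_\delta(x_{k+1})\leq V_\delta(x_k)+\langle\nabla V_\delta(x_k),x_{k+1}-x_k\rangle+\tfrac{L_\delta}{2}\|x_{k+1}-x_k\|^2 .
\]
We then take the conditional expectation given $x_k$, using the independence in Assumption~\ref{assumption:sg}. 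The cross terms with $\zeta_k$ and $\xi_{k+1}$ vanish because $\mathbb E[\zeta_k\mid x_k]=0$ and $\xi_{k+1}$ has mean zero and is independent. Skew symmetry gives $\langle\nabla V_\delta,J\nabla V_\delta\rangle=0$, so the linear term is exactly $-\eta\|\nabla V_\delta(x_k)\|^2$. For the quadratic term, the identity $\|(I+J)v\|^2=\|v\|^2+\|Jv\|^2\leq(1+M_J^2)\|v\|^2$ (valid since $v^\top J v=0$) gives
\[
\mathbb E[V_\delta(x_{k+1})\mid x_k]\leq V_\delta(x_k)-\eta\Bigl(1-\tfrac{L_\delta\eta(1+M_J^2)}2\Bigr)\|\nabla V_\delta(x_k)\|^2+\tfrac{L_\delta\eta^2(1+M_J^2)}2\,2\sigma^2(L^2\|x_k\|^2+\|\nabla f(0)\|^2)+L_\delta d\eta .
\]
This is the same computation as~\eqref{eq:numerical:energy}, with $v_g$ supplied by Assumption~\ref{assumption:sg}.

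Next, we convert this into a contraction for $\mathbb E U_\delta(x_k)$. The condition $\eta\leq 1/(L_\delta(1+M_J^2))$ leaves at least $\frac\eta2\|\nabla V_\delta\|^2$ of dissipation. We then invoke the inequality behind the constants $q_{U,\delta}$ and $r_{U,\delta}$ in~\eqref{eq:moment:euler:qr:definition}, used in the proof of Lemma~\ref{lemma:uniform:bound}. From dissipativity and smoothness it yields a bound of the form $\|\nabla V_\delta(x)\|^2\geq q_{U,\delta}U_\delta(x)-r_{U,\delta}$. It also gives $\|x\|^2\leq U_\delta(x)/c_{U,1}$, as in~\eqref{eq:cU1:explicit}. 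The stochastic gradient term is at most $L_\delta\eta^2(1+M_J^2)\sigma^2L^2\,U_\delta(x_k)/c_{U,1}$. The third stepsize restriction in~\eqref{eq:PJ:sg:moment:stepsize} makes this at most $\frac{\eta q_{U,\delta}}4U_\delta(x_k)$, which absorbs it into half of the dissipation. This yields
\[
\mathbb E U_\delta(x_{k+1})\leq\Bigl(1-\tfrac{q_{U,\delta}\eta}{4}\Bigr)\mathbb E U_\delta(x_k)+\eta\,C ,
\]
where $C$ collects $r_{U,\delta}/2$, $L_\delta d$, and $L_\delta(1+M_J^2)\sigma^2\|\nabla f(0)\|^2$ (using $\eta\leq1$). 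Iterating gives $\sup_k\mathbb E U_\delta(x_k)\leq\mathbb E_{\nu_0}U_\delta(x_0)+4C/q_{U,\delta}$, which is finite by hypothesis. Dividing by $c_{U,1}$ then defines $C_x^{\mathrm{sg}}$ in~\eqref{eq:Cx:sg:constant} and proves~\eqref{eq:PJ:sg:second:moment}.

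The main obstacle is the absorption step. The noise variance grows like $\|x_k\|^2$, so we need the lower bound on $\|\nabla V_\delta\|^2$ in terms of $U_\delta$ with the exact constants $q_{U,\delta}$ and $c_{U,1}$ fixed in the appendix. I would first check that $\langle x,\nabla V_\delta\rangle\geq m_\delta\|x\|^2-b_\delta$ together with $V_\delta\leq V_\delta(0)+\|\nabla V_\delta(0)\|\|x\|+\frac{L_\delta}2\|x\|^2$ gives $\|\nabla V_\delta(x)\|^2\geq q_{U,\delta}U_\delta(x)-r_{U,\delta}$. The route is Cauchy--Schwarz: $\|\nabla V_\delta(x)\|\geq m_\delta\|x\|-b_\delta/\|x\|$ for large $\|x\|$, which is then compared with $U_\delta\lesssim 1+\|x\|^2$. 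Measurability of $J$ is needed only so that $x_{k+1}$ is a random variable and the conditioning is legitimate.
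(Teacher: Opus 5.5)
Your proposal is correct and follows the paper's proof essentially step by step. It uses the same smoothness expansion of $U_\delta$ with the cross terms removed by unbiasedness and skew symmetry, the bounds $\|\nabla V_\delta\|^2\geq q_{U,\delta}U_\delta-r_{U,\delta}$ from \eqref{eq:grad:lower:by:U} and $\|x\|^2\leq U_\delta/c_{U,1}$ from \eqref{eq:cU1:explicit}, absorption of the noise term by the third stepsize condition, and the recursion with factor $1-q_{U,\delta}\eta/4$.

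One small adjustment is needed at the end. Your sum bound $\mathbb E_{\nu_0}U_\delta(x_0)+4C/q_{U,\delta}$ can be up to twice the max-form constant $C_x^{\mathrm{sg}}$ of \eqref{eq:Cx:sg:constant}. To get that constant exactly, iterate by induction, using $0<q_{U,\delta}\eta/4\leq 1/2$ (which follows from $q_{U,\delta}\leq 2L_\delta$) to show that $\mathbb E U_\delta(x_k)\leq\max\{\mathbb E_{\nu_0}U_\delta(x_0),4C/q_{U,\delta}\}$ passes from $k$ to $k+1$.
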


\begin{proof}
See Appendix~\ref{proof:lemma:PJ:sg:second:moment}.
\end{proof}

\begin{lemma}
\label{lemma:dissipative:S}
Let $\mathcal K\subseteq B(0,R)$ be a nonempty closed convex set and $S_{\mathcal K}(x):=\mathrm{dist}(x,\mathcal K)^2$. Then $S_{\mathcal K}$ is continuously differentiable, has a $4$ Lipschitz gradient, and is $(1,R^2)$ dissipative:
\begin{equation}
\langle x,\nabla S_{\mathcal K}(x)\rangle \geq\|x\|^2-R^2, \qquad x\in\mathbb R^d.
\label{eq:penalty:dissipative:corrected}
\end{equation}
In particular, under Assumption~\ref{assumption:C}, one may take $\ell=4$, $m_S=1$, and $b_S=R^2$ for $S(x)=\delta_{\mathcal C}(x)^2$.
\end{lemma}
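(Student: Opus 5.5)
We need to prove Lemma~\ref{lemma:dissipative:S}: for a nonempty closed convex $\mathcal K\subseteq B(0,R)$, the penalty $S_{\mathcal K}=\mathrm{dist}(\cdot,\mathcal K)^2$ is $C^1$, has a $4$ Lipschitz gradient, and satisfies $\langle x,\nabla S_{\mathcal K}(x)\rangle\geq\|x\|^2-R^2$.

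For the regularity claims, we invoke Lemma~\ref{lemma:squared:distance:penalty} with $\mathcal C$ replaced by $\mathcal K$. Its proof uses only convexity and closedness, not the interior ball. This gives continuous differentiability, the formula $\nabla S_{\mathcal K}(x)=2(x-\mathcal P_{\mathcal K}x)$, and a Lipschitz constant $4$ for the gradient. To keep the argument self-contained, we can also note that $\mathcal P_{\mathcal K}$ is firmly nonexpansive. Then $I-\mathcal P_{\mathcal K}$ is also firmly nonexpansive, hence $1$ Lipschitz, so $\nabla S_{\mathcal K}$ is in fact $2$ Lipschitz, which is stronger than $4$.

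For dissipativity, write $p=\mathcal P_{\mathcal K}x$. Then
\[
\langle x,\nabla S_{\mathcal K}(x)\rangle=2\langle x,x-p\rangle=\|x\|^2+\|x-p\|^2-\|p\|^2,
\]
using the identity $2\langle x,x-p\rangle=\|x\|^2+\|x-p\|^2-\|p\|^2$. Since $p\in\mathcal K\subseteq B(0,R)$, we have $\|p\|^2\leq R^2$. Dropping the nonnegative term $\|x-p\|^2$ then yields $\langle x,\nabla S_{\mathcal K}(x)\rangle\geq\|x\|^2-R^2$. This argument needs neither the variational inequality for projections nor any assumption that $0\in\mathcal K$.

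For the final sentence of the lemma, Assumption~\ref{assumption:C} gives $\mathcal C\subseteq B(0,R)$ with $\mathcal C$ closed, convex and nonempty. Applying the above with $\mathcal K=\mathcal C$ gives $\ell=4$, $m_S=1$, $b_S=R^2$.

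The only point that needs care is that Lemma~\ref{lemma:squared:distance:penalty} is stated for $\mathcal C$ and must be applicable to a general $\mathcal K$. If one does not want to rely on that, differentiability can be checked directly. For any $x,h$, set $p=\mathcal P_{\mathcal K}x$ and $p_h=\mathcal P_{\mathcal K}(x+h)$. Using $p$ as a competitor point for $x+h$ gives
\[
S_{\mathcal K}(x+h)\leq\|x+h-p\|^2=S_{\mathcal K}(x)+2\langle x-p,h\rangle+\|h\|^2 .
\]
Using $p_h$ as a competitor point for $x$ gives
\[
S_{\mathcal K}(x)\leq\|x-p_h\|^2=S_{\mathcal K}(x+h)-2\langle x+h-p_h,h\rangle+\|h\|^2 .
\]
Combining the two,
\[
\bigl|S_{\mathcal K}(x+h)-S_{\mathcal K}(x)-2\langle x-p,h\rangle\bigr|\leq 2\|p_h-p\|\,\|h\|+\|h\|^2\leq 3\|h\|^2,
\]
by nonexpansiveness of the projection. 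This confirms $\nabla S_{\mathcal K}(x)=2(x-p)$, and continuity of the gradient follows from the Lipschitz bound.
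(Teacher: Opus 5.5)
Your proof is correct and follows essentially the same route as the paper: the projection formula $\nabla S_{\mathcal K}(x)=2(x-\mathcal P_{\mathcal K}x)$ from Lemma~\ref{lemma:squared:distance:penalty}, nonexpansiveness of $\mathcal P_{\mathcal K}$ for the Lipschitz bound, and $\|\mathcal P_{\mathcal K}x\|\leq R$ for dissipativity. The differences are cosmetic. You evaluate $2\langle x,x-p\rangle$ exactly via the polarization identity and drop $\|x-p\|^2$, whereas the paper bounds $\langle x,\mathcal P_{\mathcal K}x\rangle\leq R\|x\|$ by Cauchy--Schwarz and drops $(\|x\|-R)^2$; your firm nonexpansiveness remark gives the sharper constant $2$, which the paper itself notes in its numerical section.
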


\begin{proof}
See Appendix~\ref{proof:lemma:dissipative:S}.
\end{proof}

The projection identity gives the pair $(m_S,b_S)=(1,R^2)$ used throughout this paper. In particular, we do not use the value $b_S=R^2/4$ in \cite[Lemma~C.1]{gurbuzbalaban2024penalized}.
\begin{lemma}
\label{lemma:dissipative}
If Assumptions~\ref{assumption:smooth} and~\ref{assumption:C} hold, then $V_\delta=f+S/\delta$ has an $L_\delta$ Lipschitz gradient and is $(m_\delta,b_\delta)$ dissipative, where
\begin{equation}
\label{eq:diss:const}
m_\delta:=-L-\frac12+\frac{m_S}{\delta}, \qquad b_\delta:=\frac12\|\nabla f(0)\|^2+\frac{b_S}{\delta}, \qquad L_\delta:=L+\frac{\ell}{\delta}.
\end{equation}
Here $(\ell,m_S,b_S)=(4,1,R^2)$ may be taken from Lemma~\ref{lemma:dissipative:S}, and $m_\delta>0$ whenever $0<\delta<m_S/(L+1/2)$.
\end{lemma}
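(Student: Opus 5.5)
The proof splits into the smoothness claim and the dissipativity claim. Both follow from the corresponding properties of $f$ and of $S$, combined through the decomposition $\nabla V_\delta=\nabla f+\delta^{-1}\nabla S$.

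\textbf{Smoothness.} Assumption~\ref{assumption:smooth} gives that $\nabla f$ is $L$ Lipschitz. Lemma~\ref{lemma:dissipative:S}, applied with $\mathcal K=\mathcal C\subseteq B(0,R)$ from Assumption~\ref{assumption:C}, gives that $\nabla S$ is $\ell=4$ Lipschitz. The triangle inequality then yields
$$\|\nabla V_\delta(x)-\nabla V_\delta(y)\|\leq (L+\ell/\delta)\|x-y\|,$$
which is $L_\delta$.

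\textbf{Dissipativity.} I split
$$\langle x,\nabla V_\delta(x)\rangle=\langle x,\nabla f(x)-\nabla f(0)\rangle+\langle x,\nabla f(0)\rangle+\delta^{-1}\langle x,\nabla S(x)\rangle.$$
The three terms are bounded as follows.
\begin{enumerate}[(i)]
\item By Cauchy--Schwarz and the Lipschitz bound, $\langle x,\nabla f(x)-\nabla f(0)\rangle\geq -L\|x\|^2$.
\item By Young's inequality, $\langle x,\nabla f(0)\rangle\geq -\tfrac12\|x\|^2-\tfrac12\|\nabla f(0)\|^2$.
\item By \eqref{eq:penalty:dissipative:corrected}, $\delta^{-1}\langle x,\nabla S(x)\rangle\geq \delta^{-1}(m_S\|x\|^2-b_S)$.
\end{enumerate}
Summing gives
$$\langle x,\nabla V_\delta(x)\rangle\geq\Bigl(\frac{m_S}{\delta}-L-\frac12\Bigr)\|x\|^2-\frac12\|\nabla f(0)\|^2-\frac{b_S}{\delta}=m_\delta\|x\|^2-b_\delta,$$
which matches \eqref{eq:diss:const}.

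\textbf{Positivity of $m_\delta$.} This is immediate: $m_\delta>0$ exactly when $m_S/\delta>L+1/2$, that is, when $\delta<m_S/(L+1/2)$.

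There is no real obstacle here. The only care needed is to use the dissipativity constants $(m_S,b_S)=(1,R^2)$ from Lemma~\ref{lemma:dissipative:S}, which rest on $\mathcal C\subseteq B(0,R)$, rather than the constants from \cite{gurbuzbalaban2024penalized}. One should also check that the Young split was chosen so that the coefficients $\tfrac12$ match \eqref{eq:diss:const}.
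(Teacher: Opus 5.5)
Your proposal is correct and follows the paper's proof essentially line for line. Both use the triangle inequality to get the Lipschitz constant $L+\ell/\delta$. Both split $\langle x,\nabla f(x)\rangle$ at $\nabla f(0)$ and apply Young's inequality to obtain the $\tfrac12$ coefficients, then add the penalty bound $\delta^{-1}(m_S\|x\|^2-b_S)$ from Lemma~\ref{lemma:dissipative:S}.
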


\begin{proof}
See Appendix~\ref{proof:lemma:dissipative}.
\end{proof}

\begin{lemma}
\label{lemma:min}
Under Assumptions~\ref{assumption:mu:L} and~\ref{assumption:C}, the minimizer $x_*$ of $V_\delta=f+S/\delta$ satisfies $\|x_*\|\leq\|\nabla f(0)\|/\mu\leq(1+c)R$ for every $\delta>0$, where $c:=\max\{0,\|\nabla f(0)\|/(\mu R)-1\}$.
\end{lemma}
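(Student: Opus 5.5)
The argument rests on the first-order optimality condition for the strongly convex function $V_\delta$ together with monotonicity of $\nabla S$. First, $x_*$ exists and is unique. By Lemma~\ref{lemma:squared:distance:penalty}, $S$ is convex and $C^1$, so $V_\delta=f+S/\delta$ is $\mu$ strongly convex and $C^1$. Hence it has a unique minimizer $x_*$, and $\nabla V_\delta(x_*)=0$, that is,
$$
\nabla f(x_*)=-\tfrac1\delta\nabla S(x_*).
$$

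The key step is to pair the strong monotonicity of $\nabla f$, tested between $x_*$ and $0$, with the monotonicity of $\nabla S$. Strong convexity of $f$ gives
$$
\langle\nabla f(x_*)-\nabla f(0),x_*\rangle\geq\mu\|x_*\|^2 .
$$
Substituting the optimality condition turns the left side into
$$
-\tfrac1\delta\langle\nabla S(x_*),x_*\rangle-\langle\nabla f(0),x_*\rangle .
$$
Now use that $0\in\mathcal C$ by Assumption~\ref{assumption:C}, so $\nabla S(0)=2(0-\mathcal P_{\mathcal C}0)=0$. Convexity of $S$ then gives
$$
\langle\nabla S(x_*),x_*\rangle=\langle\nabla S(x_*)-\nabla S(0),x_*-0\rangle\geq0 .
$$
One can also see this directly from $\langle x-\mathcal P_{\mathcal C}x,\,x\rangle\geq\|x-\mathcal P_{\mathcal C}x\|^2\geq0$, which follows from the projection inequality $\langle x-\mathcal P_{\mathcal C}x,\,\mathcal P_{\mathcal C}x-0\rangle\geq0$. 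Dropping the nonpositive penalty term leaves
$$
\mu\|x_*\|^2\leq-\langle\nabla f(0),x_*\rangle\leq\|\nabla f(0)\|\,\|x_*\| .
$$
Dividing by $\|x_*\|$ when it is nonzero (the case $x_*=0$ is trivial) gives $\|x_*\|\leq\|\nabla f(0)\|/\mu$, uniformly in $\delta>0$.

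The second inequality is pure arithmetic. Since $c\geq\|\nabla f(0)\|/(\mu R)-1$, we have $(1+c)R\geq\|\nabla f(0)\|/\mu$.

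The only delicate point is the sign of the penalty term. It needs both $0\in\mathcal C$ and the projection characterization; without $0\in\mathcal C$ the bound would pick up a $\operatorname{dist}(0,\mathcal C)$ correction. Everything else is routine.
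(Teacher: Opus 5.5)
Your proof is correct and is essentially the paper's argument. The paper applies $\mu$-strong monotonicity of $\nabla V_\delta$ directly between $x_*$ and $0$, using $\nabla S(0)=0$ so that $\nabla V_\delta(0)=\nabla f(0)$; this is exactly your split into strong monotonicity of $\nabla f$ plus monotonicity of $\nabla S$, and the Cauchy--Schwarz step, the division by $\|x_*\|$, and the arithmetic giving $(1+c)R$ are the same.
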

\begin{proof}
Since $0\in\mathcal C$, $\nabla S(0)=0$. Strong convexity and $\nabla V_\delta(x_*)=0$ give
$$
\mu\|x_*\|^2\leq\langle x_*,\nabla V_\delta(x_*)-\nabla V_\delta(0)\rangle=-\langle x_*,\nabla f(0)\rangle\leq\|x_*\|\|\nabla f(0)\|.
$$
Division by $\mu\|x_*\|$ proves the bound when $x_*\neq0$, and the case $x_*=0$ is immediate.
\end{proof}

\begin{lemma}
\label{lemma:ghz:transfer}
Under Assumptions~\ref{assumption:S},~\ref{assumption:C}, and \ref{assumption:smooth}, assume the representation \eqref{eq:ghz:constraint:representation}, as in \cite[Proposition~2.11]{gurbuzbalaban2024penalized}, and, in the merely convex case, additionally assume \eqref{eq:ghz:strict:feasibility}. No initialization hypothesis from that proposition is imposed. Let $\delta=\ep^4$ and take $\alpha=0$ when $h$ is strongly convex and $\alpha=\ep^2$ when $h$ is merely convex. For each fixed dimension and fixed problem data, for sufficiently small $\ep>0$,
\begin{equation}
\mathrm{TV}(\pi_\delta^\alpha,\pi) =\widetilde{\mathcal O}_d(\ep^2), \qquad \rho_*^{-1}=\mathcal O(1),
\label{eq:ghz:transfer}
\end{equation}
where $\rho_*$ is a log Sobolev constant of $\pi_\delta^\alpha$ in the convention~\eqref{eq:LSI:convention}. In $\mathcal O_d(\ep^2)$, the implicit constant may depend on the dimension $d$, but is independent of $\ep$.
\end{lemma}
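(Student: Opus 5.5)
The plan is to transfer the conclusions of \cite[Proposition~2.11]{gurbuzbalaban2024penalized} to the present setting, separating the two claims in \eqref{eq:ghz:transfer}.

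\emph{First claim: the total variation bound.} The penalty approximation part of that proposition does not involve the sampler or its initialization. I will extract it as a statement purely about the distributions. The triangle inequality gives
\[
\mathrm{TV}(\pi^\alpha_\delta,\pi)\le \mathrm{TV}(\pi^\alpha_\delta,\pi^\alpha)+\mathrm{TV}(\pi^\alpha,\pi),
\]
where $\pi^\alpha\propto e^{-f}\mathbf 1_{\mathcal C^\alpha}$.

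The second term vanishes when $\alpha=0$. In the merely convex case, $\mathcal C^\alpha\subseteq\mathcal C$, and the mass of $\mathcal C\setminus\mathcal C^\alpha$ is $O(\alpha)$, because:
\begin{itemize}
\item $h(x)+\frac{\alpha}{2}\|x\|^2\le h(x)+\frac{\alpha}{2}R^2$ on $\mathcal C$, so $\mathcal C\setminus\mathcal C^\alpha\subseteq\{x\in\mathcal C: -\alpha R^2/2<h(x)\le 0\}$;
\item strict feasibility \eqref{eq:ghz:strict:feasibility} together with convexity of $h$ shows that this shell has volume $O(\alpha)$, via the scaling $x\mapsto tx$ toward the origin;
\item $f$ is bounded on the compact set $\mathcal C$.
\end{itemize}
Hence $\mathrm{TV}(\pi^\alpha,\pi)=O(\ep^2)$.

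For the first term, I write $\mathrm{TV}(\pi^\alpha_\delta,\pi^\alpha)\le \pi^\alpha_\delta(\mathbb R^d\setminus\mathcal C^\alpha)$ up to a factor $2$. Outside $\mathcal C^\alpha$ the density carries the factor $e^{-\mathrm{dist}^2/\delta}$. Integrating over shells of radius $t$ around the convex body $\mathcal C^\alpha$ and using the Steiner formula, the outside mass relative to the inside mass is $O_d(\sqrt\delta)$ times a ratio of surface area to volume of $\mathcal C^\alpha$. That ratio is bounded uniformly in small $\alpha$, since $\mathcal C^\alpha$ contains a fixed ball by strict feasibility. With $\delta=\ep^4$ this term is $O_d(\ep^2)$. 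A possible $\log$ factor from the Lipschitz growth of $f$ near the boundary is absorbed in $\widetilde{\mathcal O}$.

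\emph{Second claim: the log Sobolev constant.} I will write $\pi^\alpha_\delta\propto e^{-f}\cdot e^{-S^\alpha/\delta}$. The potential $S^\alpha/\delta$ is convex, since $\mathcal C^\alpha$ is convex. In the strongly convex representation case, or in the merely convex case where $\alpha>0$ is used by the reference, one uses strong log concavity of a reference measure and then Holley--Stroock. Concretely:
\begin{itemize}
\item Split $f=f_{\mathrm{in}}+(f-f_{\mathrm{in}})$, where $f_{\mathrm{in}}$ is a quadratic with $\nabla^2 f_{\mathrm{in}}\succeq (L+1)I$. Then $f_{\mathrm{in}}+S^\alpha/\delta$ is $1$-strongly convex, so Bakry--\'Emery gives a log Sobolev constant of order $1$.
\item The remainder $f-f_{\mathrm{in}}$ is not bounded on $\mathbb R^d$. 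To handle this, I will instead use the bounded perturbation $\phi:=f-f(\mathcal P_{\mathcal C^\alpha}\cdot)$ and the penalty-only reference $\propto e^{-f(\mathcal P\cdot)-S^\alpha/\delta}$.
\end{itemize}

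An alternative is to cite the LSI estimate in \cite[Proposition~2.11]{gurbuzbalaban2024penalized} directly, which is derived there independently of initialization. That estimate gives $\rho_*^{-1}=O(1)$ uniformly as $\ep\to0$, and I would invoke it.

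\textbf{Main obstacle.} The main difficulty is the uniform-in-$\ep$ LSI for a nonconvex $f$. Holley--Stroock requires the perturbation to be bounded, and $\phi$ is bounded only where $S^\alpha/\delta$ dominates. I expect to handle this by a Lyapunov-function argument (the Cattiaux--Guillin criterion), using the dissipativity from Lemma~\ref{lemma:dissipative}, whose constant $m_\delta$ grows like $1/\delta$. Failing that, I would defer to the cited proposition's LSI bound.
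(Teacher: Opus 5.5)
Your total variation argument is sound and essentially matches the paper's proof. It uses the same triangle inequality through $\pi^\alpha$ and the same scaling inclusion $(1-\alpha R^2/(2\tau))\mathcal C\subseteq\mathcal C^\alpha$, with $\tau=-h(0)>0$, for the shell. For the outside mass, the paper replaces your Steiner formula with the inclusion $D+t\overline{B(0,1)}\subseteq(1+2t/r)D$ and a layer-cake integral. The variation of $f$ outside $\mathcal C^\alpha$ needs no logarithmic slack. Smoothness at the projection point gives $f(x)+s^2/\delta\geq\inf_{\mathcal C}f+s^2/(4\delta)-\delta G_{\mathcal C}^2$ with $s=\operatorname{dist}(x,\mathcal C^\alpha)$, so the outside-to-inside mass ratio is $\mathcal O_d(\sqrt\delta)$ directly.

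The log Sobolev part, however, is not proved.

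\emph{Why your constructions fail.} Both reference measures you propose break down for the reason you identify. The difference $f-f_{\mathrm{in}}$ tends to $-\infty$ quadratically, and $f-f\circ\mathcal P_{\mathcal C^\alpha}$ can grow like $G_{\mathcal C}s+Ls^2/2$. In addition, $e^{-f\circ\mathcal P-S^\alpha/\delta}$ is not strongly log concave, since its potential is flat on $\mathcal C^\alpha$, so Bakry--\'Emery does not apply to it.

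\emph{The missing idea.} The curvature of $\mathcal C^\alpha$ makes $V_\delta^\alpha$ itself strongly convex away from the constraint set. By \cite[Lemma D.1 and Corollary D.2]{gurbuzbalaban2024penalized}, $V_\delta^\alpha$ is $\mu_\delta^\alpha$ strongly convex outside $B(0,R+\varrho)$ and globally $(-L)$ convex, where $\mu_\delta^\alpha=2(\alpha+\beta)\varrho/(\delta[B_h+\alpha R+(\alpha+\beta)\varrho])-L$. This is exactly why $\alpha$ is introduced. For a polytope, $\operatorname{dist}(\cdot,\mathcal C)^2$ stays flat in directions parallel to a face arbitrarily far out, so with $\alpha=\beta=0$ no such bound holds. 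Once $\mu_\delta^\alpha\geq2$, the convexification lemma \cite[Lemma D.3]{gurbuzbalaban2024penalized} gives a $1$ strongly convex $U$ with $\operatorname{osc}(U-V_\delta^\alpha)\leq(R+\varrho)^2((1+L)+(1+L)^2)$, independent of $\alpha$ and $\delta$. Bakry--\'Emery and Holley--Stroock then give $\rho_*\geq e^{-C_*}$.

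\emph{The check you omit.} You need to verify that $\mu_\delta^\alpha\to\infty$ under the prescribed scaling.
With $\delta=\ep^4$, $\beta>0$ and $\alpha=0$, one gets $\mu_\delta^\alpha\asymp\ep^{-4}$.
With $\beta=0$ and $\alpha=\ep^2$, one gets $\mu_\delta^\alpha\asymp\ep^{-2}$.
So $\alpha/\delta$ must be large: with $\alpha=o(\delta)$ the bound would fail.

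Falling back on a citation is in the same spirit as the paper, which also relies on these lemmas. Without this verification, though, $\rho_*^{-1}=\mathcal O(1)$ is asserted rather than shown. The Cattiaux--Guillin route also remains only a plan, with none of the $\delta$ dependence tracked (for example, $b_\delta\asymp R^2/\delta$).
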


\begin{proof}
See Appendix~\ref{proof:lemma:ghz:transfer}.
\end{proof}

\begin{lemma}
\label{lemma:complexity:penalty:orders}
Under Assumptions~\ref{assumption:S},~\ref{assumption:C}, and \ref{assumption:smooth}, let $\mathcal C^\alpha$, $S^\alpha$, and $V_\delta^\alpha=f+S^\alpha/\delta$ be defined as in \eqref{eq:C:alpha} with $\alpha\geq0$. Use the notation defined in \eqref{eq:diss:const}, \eqref{eq:moment:quad:constants}, \eqref{eq:cU2:explicit}, and~\eqref{eq:moment:euler:qr:definition}, with $(S,V_\delta,U_\delta)$ replaced by $(S^\alpha,V_\delta^\alpha,U_\delta^\alpha)$. If $0<\delta\leq 1 \wedge \frac{1}{2(L+1/2)}$, then
\begin{align}
L_\delta&=\mathcal O(\delta^{-1}), \, m_\delta=\Theta(\delta^{-1}), \, b_\delta=\mathcal O(\delta^{-1}), \, c_\delta=\mathcal O(\delta^{-1}), \, c_{U,2}=\mathcal O(\delta^{-1}),
\nonumber\\
q_{U,\delta}&=\Omega(\delta^{-1}), \, r_{U,\delta}=\mathcal O(\delta^{-2}).
\label{eq:complexity:penalty:orders:statement}
\end{align}
The constants depend only on $R,L,|f(0)|$ and $\|\nabla f(0)\|$.
\end{lemma}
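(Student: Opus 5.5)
The plan is to track each constant directly from its definition in \eqref{eq:diss:const}, \eqref{eq:moment:quad:constants}, \eqref{eq:cU2:explicit}, and \eqref{eq:moment:euler:qr:definition}, with $S$ replaced by $S^\alpha$. The dependence on $\delta$ enters only through $1/\delta$ multiplying the penalty constants $(\ell,m_S,b_S)$.

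The first step is to verify that Lemma~\ref{lemma:dissipative:S} applies to $\mathcal K=\mathcal C^\alpha$ with the same radius $R$. Since $\alpha\ge0$ and $h\le0$ on $\mathcal C^\alpha$, we have $\mathcal C^\alpha\subseteq\mathcal C\subseteq B(0,R)$. The set $\mathcal C^\alpha$ is closed and convex as a sublevel set of a convex continuous function. It is nonempty because $0\in\mathcal C^\alpha$: in the strongly convex case $0\in\mathcal C$, and in the merely convex case $h(0)<0$ by \eqref{eq:ghz:strict:feasibility}. Hence $(\ell,m_S,b_S)=(4,1,R^2)$ holds independently of $\alpha$. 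Lemma~\ref{lemma:dissipative} then gives $L_\delta=L+4/\delta$, $m_\delta=1/\delta-L-1/2$, and $b_\delta=\|\nabla f(0)\|^2/2+R^2/\delta$.

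From these formulas, the hypothesis $\delta\le1$ gives $L_\delta\le(L+4)/\delta$ and $b_\delta\le(\|\nabla f(0)\|^2/2+R^2)/\delta$. The hypothesis $\delta\le 1/(2(L+1/2))$ gives $L+1/2\le 1/(2\delta)$, and therefore $m_\delta\ge1/(2\delta)$. Together with $m_\delta\le 1/\delta$, this yields $m_\delta=\Theta(\delta^{-1})$.

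Next I substitute these into the derived constants. The constant $c_\delta$ from \eqref{eq:moment:quad:constants} should be of the form $b_\delta/2$ plus terms involving $|f(0)|$ and $\|\nabla f(0)\|^2/m_\delta$, since it comes from integrating the dissipativity along rays. It is then $\mathcal O(\delta^{-1})$, because $1/m_\delta=\mathcal O(\delta)$ and $S^\alpha(0)=0$ gives $V_\delta(0)=f(0)$. Likewise $g_\delta=\|\nabla V_\delta(0)\|=\|\nabla f(0)\|$. The constant $c_{U,2}$ from \eqref{eq:cU2:explicit} should be a polynomial combination of $L_\delta$, $b_\delta$, $c_\delta$ and $d$ that is linear in $\delta^{-1}$. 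The rate $q_{U,\delta}$ in \eqref{eq:moment:euler:qr:definition} should be proportional to $m_\delta$, possibly divided by a $\delta$-independent constant, so $q_{U,\delta}=\Omega(\delta^{-1})$. The offset $r_{U,\delta}$ should be a product such as $m_\delta$ times $(b_\delta+c_\delta+d)$, or $L_\delta b_\delta$, which gives $\mathcal O(\delta^{-2})$. For each constant I would write out the explicit expression and bound each factor by the orders above, checking that no factor carries an inverse power of $\delta$ beyond the claimed order.

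The main obstacle is confirming the exact forms of $q_{U,\delta}$ and $r_{U,\delta}$, in particular that $q_{U,\delta}$ is bounded below by $c\,m_\delta$ rather than by something like $m_\delta/L_\delta$, which would be $\mathcal O(1)$. If the drift-Lyapunov computation produces such a ratio, I would instead use $\eta L_\delta\le1$ to absorb it. Throughout, every implicit constant depends only on $R$, $L$, $|f(0)|$ and $\|\nabla f(0)\|$, with $d$ entering additively as in the displayed definitions.
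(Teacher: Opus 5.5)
Your first half matches the paper's proof. Since $0\in\mathcal C^\alpha\subseteq\mathcal C\subseteq B(0,R)$, Lemma~\ref{lemma:dissipative:S} gives $(\ell,m_S,b_S)=(4,1,R^2)$ uniformly in $\alpha$. The conditions $\delta\le1$ and $\delta(L+1/2)\le1/2$ then give $L_\delta\le(L+4)/\delta$, $1/(2\delta)\le m_\delta\le1/\delta$, and $b_\delta\le(\|\nabla f(0)\|^2/2+R^2)/\delta$. Your case split for $0\in\mathcal C^\alpha$ is unnecessary: $0\in\mathcal C$ already gives $h(0)\le0$, so $0\in\mathcal C^\alpha$ for every $\alpha\ge0$.

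The gap is the second half, which is the actual content of the lemma. You guess the shapes of $c_\delta$, $c_{U,2}$, $q_{U,\delta}$, $r_{U,\delta}$ instead of using the definitions the statement cites, and the guesses go wrong where it matters. Those definitions are:
\begin{itemize}
\item $c_\delta=|f(0)|+2b_\delta+L_\delta b_\delta/m_\delta+\|\nabla f(0)\|^2/(2m_\delta)$;
\item $c_{U,2}=\max\{1+|f(0)|+c_\delta+\|\nabla f(0)\|^2/2,\,(L_\delta+1)/2\}$;
\item $q_{U,\delta}=m_\delta^2/c_{U,2}$;
\item $r_{U,\delta}=m_\delta^2+2m_\delta b_\delta$.
\end{itemize}
Here $V_\delta^\alpha(0)=f(0)$ and $g_\delta=\|\nabla f(0)\|$, as you note.

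\textbf{The bound on $q_{U,\delta}$.} It is not ``$m_\delta$ over a $\delta$-independent constant.'' Its denominator satisfies $c_{U,2}\ge(L_\delta+1)/2\ge2/\delta$, so it grows like $\delta^{-1}$. The claim $q_{U,\delta}=\Omega(\delta^{-1})$ holds only because the numerator is $m_\delta^2\ge1/(4\delta^2)$ \emph{and} there is a matching upper bound $c_{U,2}=\mathcal O(\delta^{-1})$. That upper bound needs $c_\delta=\mathcal O(\delta^{-1})$. The delicate term there is $L_\delta b_\delta/m_\delta\le2(L+4)(\|\nabla f(0)\|^2/2+R^2)/\delta$, which uses the lower bound $m_\delta\ge1/(2\delta)$ a second time; your guessed form of $c_\delta$ omits this term.

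\textbf{The fallback.} Absorbing a bad ratio via $\eta L_\delta\le1$ is not available. $q_{U,\delta}$ is a fixed constant with no $\eta$ in it, so if it were $\mathcal O(1)$ the lemma would simply be false.

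\textbf{Dimension dependence.} Your guessed $r_{U,\delta}\sim m_\delta(b_\delta+c_\delta+d)$, and the remark that $d$ ``enters additively,'' would contradict the lemma's last sentence. A term $m_\delta d$ is not $\mathcal O(\delta^{-2})$ with a constant depending only on $R,L,|f(0)|,\|\nabla f(0)\|$. In fact none of these four constants involves $d$; $d$ first appears later, in $C_{U,\delta}^{\rm E}$ and $R_{U,\delta}$. Directly, $r_{U,\delta}\le\delta^{-2}+2(\|\nabla f(0)\|^2/2+R^2)\delta^{-2}$.
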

\begin{proof}
The proof is provided in Appendix~\ref{proof:complexity:penalty:orders}.
\end{proof}


\subsection{Proof of Lemma~\ref{lemma:pi}}
\label{proof:lemma:pi}
We denote $\rho_\delta=Z_\delta^{-1}e^{-V_\delta}$. Since $\nabla V_\delta$ is globally Lipschitz, $\nabla\rho_\delta=-\rho_\delta\nabla V_\delta$ almost everywhere. We first verify the asserted regularity of the state dependent drift rather than assuming it. Let
\begin{equation}
M_J:=\sup_{x\in\mathbb R^d}\|J(x)\|_{\mathrm{op}}<\infty
\end{equation}
and let $L_J$ be a Lipschitz constant of $J$ with respect to the operator norm. For $x,y\in B(0,R)$,
\begin{align}
\|B_J(x)-B_J(y)\| & = \|(I+J(x))\bigl(\nabla V_\delta(x)-\nabla V_\delta(y)\bigr) +\bigl(J(x)-J(y)\bigr)\nabla V_\delta(y)\|
\nonumber\\
&\leq(1+M_J)L_\delta\|x-y\| +L_J\|x-y\|\|\nabla V_\delta(y)\|
\nonumber\\
&\leq \left[(1+M_J)L_\delta +L_J\bigl(L_\delta R+\|\nabla V_\delta(0)\|\bigr)\right]\|x-y\|.
\end{align}
Thus $B_J$ is Lipschitz on every ball and hence locally Lipschitz. Moreover,
\begin{equation}
\|B_J(x)\| \leq(1+M_J) \bigl(L_\delta\|x\|+\|\nabla V_\delta(0)\|\bigr),
\label{eq:pi:drift:linear:growth}
\end{equation}
Hence the SDE has a unique non explosive strong solution, and its martingale problem is well posed. On $C_c^\infty(\mathbb R^d)$, its generator and distributional forward operator are
\begin{equation}
\mathcal L\varphi=-\langle B_J,\nabla\varphi\rangle+\Delta\varphi, \qquad \mathcal L^*\rho =\nabla\cdot\big((I+J)\nabla V_\delta\rho\big)+\Delta\rho.
\end{equation}
Substituting $\rho=\rho_\delta$ and using the product rule in weak form gives
\begin{align}
\mathcal{L}^*\rho_\delta &=\nabla\cdot(\rho_\delta\nabla V_\delta)+\Delta\rho_\delta +\nabla\cdot(\rho_\delta J\nabla V_\delta)
\nonumber\\
&=(\nabla\rho_\delta)^\top J\nabla V_\delta +\rho_\delta\nabla\cdot(J\nabla V_\delta)
\nonumber\\
&=-\rho_\delta(\nabla V_\delta)^\top J\nabla V_\delta +\rho_\delta\nabla\cdot(J\nabla V_\delta)=0,
\end{align}
where we use the fact $\nabla\cdot(\rho_\delta\nabla V_\delta)+\Delta\rho_\delta=0$. The last line uses skew symmetry and~\eqref{eq:J:compatibility}. Hence, for every $\varphi\in C_c^\infty(\mathbb R^d)$,
\begin{equation}
\int_{\mathbb R^d}\mathcal L\varphi(x)\,\pi_\delta(dx) =\langle\mathcal L^*\rho_\delta,\varphi\rangle=0.
\label{eq:pi:infinitesimal:invariance}
\end{equation}
Equation~\eqref{eq:pi:infinitesimal:invariance} makes the constant curve $\mu_t=\pi_\delta$ a probability solution of $\partial_t\mu_t=\mathcal L^*\mu_t$. By~\eqref{eq:pi:drift:linear:growth}, for every $T>0$,
$$
\int_0^T\!\int_{\mathbb R^d}\frac{1+|\langle B_J(x),x\rangle|}{(1+\|x\|)^2}\,\pi_\delta(dx)\,dt\leq T\left[1+(1+M_J)\left(L_\delta+\|\nabla V_\delta(0)\|\right)\right]<\infty.
$$
The superposition principle~\cite[Theorem 1.1]{bogachev2021superposition} therefore gives a solution of the $\mathcal L$ martingale problem with marginal $\pi_\delta$ at every time. Uniqueness in law identifies it with~\eqref{eq:sde}, so $\pi_\delta P_t=\pi_\delta$ for every $t\geq0$.

Finally, if each column of $J$ has zero divergence, then, almost everywhere,
\begin{equation}
\nabla\cdot(J\nabla V_\delta) =\sum_{i,j}(\partial_iJ_{ij})\partial_jV_\delta+ \sum_{i,j}J_{ij}\partial_{ij}^2V_\delta=0.
\end{equation}
The first sum vanishes by hypothesis. The second vanishes because the almost everywhere Hessian of a $C^{1,1}$ function is symmetric and $J$ is skew symmetric. The proof is complete.

\subsection{Proof of Theorem~\ref{thm:nu:pi:delta}}
\label{proof:thm:nu:pi:delta}
\begin{proof}[Proof of Theorem~\ref{thm:nu:pi:delta}]
For $k\eta<t\leq(k+1)\eta$, write $B_\delta(y):=(I+J(y))\nabla V_\delta(y)$ and define the continuous time interpolation by
$$
dX_t=-B_\delta(X_{k\eta})\,dt+\sqrt2\,dW_t.
$$
This interpolation agrees with~\eqref{eq:alg} at the grid points. For $s:=t-k\eta\in(0,\eta]$, integration gives
\begin{equation}
X_t=X_{k\eta}-sB_\delta(X_{k\eta}) +\sqrt2(W_t-W_{k\eta}).
\end{equation}
Lemma~\ref{lemma:uniform:bound} gives $\sup_k\mathbb E\|X_{k\eta}\|^2\leq C_x$ and $\sup_k\mathbb E\|B_\delta(X_{k\eta})\|^2\leq(1+M_J)^2C_{\nabla,2}$. Hence, on each interval,
$$
\mathbb E\|X_t\|^2\leq2C_x+2\eta^2(1+M_J)^2C_{\nabla,2}+2d\eta<\infty.
$$
Starting from $F_0<\infty$, apply Lemma~\ref{lem:entropy:identity} on each grid intervals. Its endpoint bound gives $F(p_{k\eta})<\infty$ by induction, and its closed interval conclusion gives
$$
F(p_\cdot)\in AC([k\eta,(k+1)\eta]),\qquad \int_{k\eta}^{(k+1)\eta}\!\int p_t\left\|\nabla\log\frac{p_t}{p_\delta^*}\right\|^2dx\,dt<\infty.
$$
For almost every $t\in(k\eta,(k+1)\eta)$, the differential identity in Lemma~\ref{lem:entropy:identity} gives
$$
\frac d{dt}F(p_t)=-\int_{\mathbb R^d}p_t(x)\left\|\nabla\log\frac{p_t(x)}{p_\delta^*(x)}\right\|^2dx+\mathbb E\left\langle\nabla\log\frac{p_t(X_t)}{p_\delta^*(X_t)},B_\delta(X_t)-B_\delta(X_{k\eta})\right\rangle.
$$
\textbf{Constant $J$.} In this case,
$$
B_\delta(X_t)-B_\delta(X_{k\eta})=(I+J)(\nabla V_\delta(X_t)-\nabla V_\delta(X_{k\eta})).
$$
Therefore
\begin{align}
\frac{d}{dt}F(p_t) &\leq-\frac12\int_{\mathbb R^d}p_t(x)\left\|\nabla\log\frac{p_t(x)}{p_\delta^*(x)}\right\|^2dx+\frac12\mathbb E\left\|(I+J)(\nabla V_\delta(X_t)-\nabla V_\delta(X_{k\eta}))\right\|^2
\nonumber\\
&\leq-\frac12\int_{\mathbb R^d}p_t(x)\left\|\nabla\log\frac{p_t(x)}{p_\delta^*(x)}\right\|^2dx+\frac12(1+M_J)^2L_\delta^2\mathbb E\|X_t-X_{k\eta}\|^2.
\end{align}
The second inequality uses $-\|u\|^2+\langle u,v\rangle \leq-\frac12\|u\|^2+\frac12\|v\|^2$, and the last uses $\|I+J\|_{\mathrm{op}}\leq1+M_J$ and the $L_\delta$ Lipschitz continuity of $\nabla V_\delta$. For $s=t-k\eta$,
\begin{align}
\mathbb E\|X_t-X_{k\eta}\|^2 &=s^2\mathbb E\|(I+J(X_{k\eta})) \nabla V_\delta(X_{k\eta})\|^2+2ds \leq s^2(1+M_J)^2C_{\nabla,2}+2ds.
\label{eq:increment:second:compact}
\end{align}
Here the cross term is zero after conditioning on $X_{k\eta}$, since $W_t-W_{k\eta}$ is centered and independent of $X_{k\eta}$. Lemma~\ref{lemma:uniform:bound} applies provided that
\begin{equation}
0<\eta\leq1\wedge\frac1{L_\delta(1+M_J)^2}.
\label{eq:stepsize:constant:moment:condition}
\end{equation}
The log Sobolev inequality gives
\begin{equation}
\int_{\mathbb R^d} \left\|\nabla\log\frac{p_t(x)}{p_\delta^*(x)}\right\|^2p_t(x)\,dx \geq2\rho_*F(p_t).
\end{equation}
Together with $s\leq\eta$, this yields
\begin{equation}
\frac{d}{dt}F(p_t) \leq-\rho_*F(p_t) +d(1+M_J)^2L_\delta^2\eta +\frac12(1+M_J)^4L_\delta^2C_{\nabla,2}\eta^2.
\label{eq:entropy:constant:differential}
\end{equation}

\textbf{State dependent $J$.} The drift difference satisfies
\begin{align}
&B_\delta(X_t)-B_\delta(X_{k\eta})
\nonumber\\
&\qquad=(I+J(X_{k\eta}))(\nabla V_\delta(X_t)-\nabla V_\delta(X_{k\eta}))+(J(X_t)-J(X_{k\eta}))\nabla V_\delta(X_t).
\end{align}
Hence Lemma~\ref{lem:entropy:identity} gives
\begin{align}
\frac{d}{dt}F(p_t) &=-\int_{\mathbb R^d}p_t(x)\left\|\nabla\log\frac{p_t(x)}{p_\delta^*(x)}\right\|^2dx
\nonumber\\
&\qquad+\mathbb E\left\langle\nabla\log\frac{p_t(X_t)}{p_\delta^*(X_t)},(I+J(X_{k\eta}))(\nabla V_\delta(X_t)-\nabla V_\delta(X_{k\eta}))\right\rangle
\nonumber\\
&\qquad+\mathbb E\left\langle\nabla\log\frac{p_t(X_t)}{p_\delta^*(X_t)},(J(X_t)-J(X_{k\eta}))\nabla V_\delta(X_t)\right\rangle.
\end{align}
Applying $|\langle u,v\rangle|\leq\frac14\|u\|^2+\|v\|^2$ to each of the last two terms, followed by the Lipschitz bounds for $\nabla V_\delta$ and $J$, gives
\begin{align}
\frac{d}{dt}F(p_t) & \leq -\frac12\int_{\mathbb R^d} \left\|\nabla\log\frac{p_t(x)}{p_\delta^*(x)}\right\|^2p_t(x)\,dx +\mathbb E\left\|(I+J(X_{k\eta})) (\nabla V_\delta(X_t)-\nabla V_\delta(X_{k\eta}))\right\|^2
\nonumber\\
& \qquad +\mathbb E\left\|(J(X_t)-J(X_{k\eta}))\nabla V_\delta(X_t)\right\|^2
\nonumber\\
&\leq-\frac12\int_{\mathbb R^d}\left\|\nabla\log\frac{p_t(x)}{p_\delta^*(x)}\right\|^2p_t(x)\,dx +(1+M_J)^2L_\delta^2\mathbb E\|X_t-X_{k\eta}\|^2
\nonumber\\
& \qquad +L_J^2\mathbb E\left[\|X_t-X_{k\eta}\|^2\|\nabla V_\delta(X_t)\|^2\right].
\label{eq:entropy:state:prebound}
\end{align}
Moreover,
\begin{equation}
\|\nabla V_\delta(X_t)\|^2 \leq2L_\delta^2\|X_t-X_{k\eta}\|^2 +2\|\nabla V_\delta(X_{k\eta})\|^2.
\label{eq:gradient:increment:compact}
\end{equation}
Since
\begin{equation}
X_t-X_{k\eta}=-s(I+J(X_{k\eta}))\nabla V_\delta(X_{k\eta}) +\sqrt2(W_t-W_{k\eta}),
\end{equation}
the inequalities $\|a+b\|^4\leq8\|a\|^4+8\|b\|^4$ and $\mathbb E\|\sqrt2(W_t-W_{k\eta})\|^4=4d(d+2)s^2$ give
\begin{align}
\mathbb E\|X_t-X_{k\eta}\|^4 &\leq8(1+M_J)^4s^4 \mathbb E\|\nabla V_\delta(X_{k\eta})\|^4+32d(d+2)s^2
\nonumber\\
&\leq8(1+M_J)^4C_{\nabla,4}^{\mathrm E}\eta^4 +32d(d+2)\eta^2,
\end{align}
and also,
\begin{align}
\mathbb E\bigl[\|X_t-X_{k\eta}\|^2 \|\nabla V_\delta(X_{k\eta})\|^2\bigr] &=\mathbb E\left[ \mathbb E(\|X_t-X_{k\eta}\|^2| X_{k\eta}) \|\nabla V_\delta(X_{k\eta})\|^2\right]
\nonumber\\
&=\mathbb E\left[ \left(s^2\|(I+J(X_{k\eta}))\nabla V_\delta(X_{k\eta})\|^2+2ds\right) \|\nabla V_\delta(X_{k\eta})\|^2\right]
\nonumber\\
&\leq(1+M_J)^2C_{\nabla,4}^{\mathrm E}\eta^2 +2dC_{\nabla,2}\eta.
\end{align}
Therefore~\eqref{eq:gradient:increment:compact} gives
\begin{align}
\mathbb E\left[ \|X_t-X_{k\eta}\|^2\|\nabla V_\delta(X_t)\|^2\right] &\leq2L_\delta^2\mathbb E\|X_t-X_{k\eta}\|^4 +2\mathbb E\left[ \|X_t-X_{k\eta}\|^2 \|\nabla V_\delta(X_{k\eta})\|^2\right]
\nonumber\\
&\leq16L_\delta^2(1+M_J)^4 C_{\nabla,4}^{\mathrm E}\eta^4 +64L_\delta^2d(d+2)\eta^2
\nonumber\\
&\qquad +2(1+M_J)^2C_{\nabla,4}^{\mathrm E}\eta^2 +4dC_{\nabla,2}\eta.
\label{eq:increment:gradient:mixed:red}
\end{align}
The moment bounds follow from Lemma~\ref{lemma:uniform:bound} provided that $\eta$ satisfies~\eqref{eq:moment:eta4:explicit}. Substituting \eqref{eq:increment:second:compact} and \eqref{eq:increment:gradient:mixed:red} in \eqref{eq:entropy:state:prebound}, and using
\begin{equation}
\int_{\mathbb R^d} \left\|\nabla\log\frac{p_t(x)}{p_\delta^*(x)}\right\|^2p_t(x)\,dx \geq2\rho_*F(p_t),
\end{equation}
yields
\begin{align}
\frac{d}{dt}F(p_t) & \leq-\rho_*F(p_t)+\left(2d(1+M_J)^2L_\delta^2+4dL_J^2C_{\nabla,2}\right)\eta
\nonumber\\
& \qquad +\left((1+M_J)^4L_\delta^2C_{\nabla,2}+64L_J^2L_\delta^2d(d+2) +2L_J^2(1+M_J)^2C_{\nabla,4}^{\mathrm E}\right)\eta^2
\nonumber\\
& \qquad +16L_J^2L_\delta^2(1+M_J)^4 C_{\nabla,4}^{\mathrm E}\eta^4.
\label{eq:entropy:state:differential}
\end{align}

Let $R_\eta$ be the sum of the positive terms on the right hand side of \eqref{eq:entropy:constant:differential} or \eqref{eq:entropy:state:differential}. For almost every $t\in(k\eta,(k+1)\eta)$,
\begin{equation}
\frac{d}{dt}\left(e^{\rho_*(t-k\eta)}F(p_t)\right) \leq e^{\rho_*(t-k\eta)}R_\eta.
\end{equation}
Integrating from $k\eta$ to $(k+1)\eta$ and then iterating the resulting recursion give
\begin{align}
F(p_{K\eta}) &\leq e^{-\rho_*K\eta}F_0 +\frac{1-e^{-\rho_*K\eta}}{\rho_*}R_\eta \leq e^{-\rho_*K\eta}F_0+\frac{R_\eta}{\rho_*}.
\label{eq:entropy:iteration:compact}
\end{align}

If $J$ is constant, the two terms in \eqref{eq:entropy:constant:differential} satisfy
\begin{align}
\frac{d(1+M_J)^2L_\delta^2\eta}{\rho_*} &\leq\frac{\ep^2}{4} \quad\text{provided}\quad 0<\eta\leq \frac{\rho_*\ep^2}{4d(1+M_J)^2L_\delta^2},
\\
\frac{(1+M_J)^4L_\delta^2C_{\nabla,2}\eta^2}{2\rho_*} &\leq\frac{\ep^2}{4}\quad\text{provided}\quad0<\eta\leq\left(\frac{\rho_*\ep^2}{2(1+M_J)^4L_\delta^2C_{\nabla,2}} \right)^{1/2}.
\label{eq:stepsize:constant:quadratic}
\end{align}
Combining~\eqref{eq:stepsize:constant:moment:condition}-\eqref{eq:stepsize:constant:quadratic}, define
\begin{equation}
\label{eq:stepsize:const:explicit}
\eta_{\mathrm{const}}(\ep):= 1\wedge\frac{1}{L_\delta(1+M_J)^2} \wedge\frac{\rho_*\ep^2}{4d(1+M_J)^2L_\delta^2} \wedge \left( \frac{\rho_*\ep^2} {2(1+M_J)^4L_\delta^2C_{\nabla,2}} \right)^{1/2}.
\end{equation}
Therefore, if $0<\eta\leq\eta_{\mathrm{const}}(\ep)$,
\begin{equation}
\frac{R_\eta}{\rho_*} \leq\frac{\ep^2}{4}+\frac{\ep^2}{4} =\frac{\ep^2}{2}.
\end{equation}

If $J$ is state dependent, the three positive terms in \eqref{eq:entropy:state:differential}, after division by $\rho_*$, are bounded by $\ep^2/6$ when
\begin{align}
0<\eta &\leq \frac{\rho_*\ep^2}{12d(1+M_J)^2L_\delta^2+24dL_J^2C_{\nabla,2}},
\label{eq:stepsize:state:linear}\\
0<\eta &\leq \left[\frac{\rho_*\ep^2}{6(1+M_J)^4L_\delta^2C_{\nabla,2}+384L_J^2L_\delta^2d(d+2)+12L_J^2(1+M_J)^2C_{\nabla,4}^{\mathrm E}}\right]^{1/2},
\\
0<\eta &\leq \left[\frac{\rho_*\ep^2}{96L_J^2L_\delta^2(1+M_J)^4C_{\nabla,4}^{\mathrm E}}\right]^{1/4}.
\label{eq:stepsize:state:quartic}
\end{align}
The last condition is unnecessary when $L_J=0$. Combining \eqref{eq:moment:eta4:explicit} and \eqref{eq:stepsize:state:linear}-\eqref{eq:stepsize:state:quartic}, define
\begin{align}
\label{eq:stepsize:sd:explicit}
\eta_{\mathrm{sd}}(\ep)&:= 1\wedge\frac{1}{L_\delta(1+M_J)^2} \wedge\frac{q_{U,\delta}}{64L_\delta^2} \wedge \left(\frac{q_{U,\delta}}{128L_\delta^4(1+M_J)^4}\right)^{1/3}\wedge \frac{\rho_*\ep^2}{12d(1+M_J)^2L_\delta^2+24dL_J^2C_{\nabla,2}}
\nonumber\\
&\qquad \wedge \left(\frac{\rho_*\ep^2}{6}\right)^{1/2} \left( (1+M_J)^4L_\delta^2C_{\nabla,2} +64L_J^2L_\delta^2d(d+2)+2L_J^2(1+M_J)^2C_{\nabla,4}^{\mathrm E} \right)^{-1/2}
\nonumber\\
&\qquad \wedge \left[\frac{\rho_*\ep^2}{96L_J^2L_\delta^2(1+M_J)^4C_{\nabla,4}^{\mathrm E}}\right]^{1/4},
\end{align}
where the last term is omitted if $L_J=0$. Thus, if $0<\eta\leq\eta_{\mathrm{sd}}(\ep)$,
\begin{equation}
\frac{R_\eta}{\rho_*} \leq\frac{\ep^2}{6}+\frac{\ep^2}{6}+\frac{\ep^2}{6} =\frac{\ep^2}{2}.
\end{equation}
Here $C_{\nabla,2}$, $q_{U,\delta}$, and $C_{\nabla,4}^{\mathrm E}$ are defined in \eqref{eq:moment:euler:second:constants}, \eqref{eq:moment:euler:qr:definition}, and \eqref{eq:moment:euler:fourth:constants}, respectively. For every $K$ allowed in Theorem~\ref{thm:nu:pi:delta},
\begin{equation}
e^{-\rho_*K\eta}F_0\leq\frac{\ep^2}{2}.
\end{equation}
Indeed, this follows from $2F_0\leq\ep^2$ for arbitrary $K\in\mathbb N_0$, and from~\eqref{eq:K:explicit:theorem} otherwise. Hence~\eqref{eq:entropy:iteration:compact} gives $F(p_{K\eta})\leq\ep^2$. Since $p_{K\eta}$ and $p_\delta^*$ are the densities of $\nu_K$ and $\pi_\delta$, Pinsker's inequality gives
\begin{equation}
\mathrm{TV}(\nu_K,\pi_\delta) \leq\sqrt{2F(p_{K\eta})} \leq\sqrt2\ep.
\end{equation}
The proof is complete.
\end{proof}

\subsection{Proof of Proposition~\ref{main:full:grad}}
\label{proof:main:full:grad}
Put $\delta=\ep^4$. For sufficiently small $\ep>0$, Lemma~\ref{lemma:complexity:penalty:orders} applies uniformly for $\alpha\in\{0,\ep^2\}$. Applying Theorem~\ref{thm:nu:pi:delta} with $(V_\delta,\pi_\delta)$ replaced by $(V_\delta^\alpha,\pi_\delta^\alpha)$ gives
\begin{align}
\mathrm{TV}(\nu_K,\pi) &\leq \mathrm{TV}(\nu_K,\pi_\delta^\alpha) +\mathrm{TV}(\pi_\delta^\alpha,\pi)\leq\sqrt2\ep+\mathrm{TV}(\pi_\delta^\alpha,\pi) =\widetilde{\mathcal O}(\ep),
\end{align}
where the last equality follows from~\eqref{eq:external:penalty:TV}.

\paragraph{Constant $J$.} Equations~\eqref{eq:moment:euler:first:uniform:constant}, \eqref{eq:moment:euler:second:constants}, and \eqref{eq:complexity:penalty:orders:statement} give
\begin{equation}
C_{\nabla,2} =\mathcal O\left( \delta^{-1}(\mathbb E_{\nu_0}U_\delta^\alpha(X_0)+\delta^{-1}+d)\right).
\end{equation}
Hence the entries in~\eqref{eq:stepsize:const:explicit} have the following lower bounds:
\begin{align}
1\wedge\frac1{L_\delta(1+M_J)^2} &=\Omega\left(\frac{\delta}{(1+M_J)^2}\right),
\nonumber\\
\frac{\rho_*\ep^2} {4d(1+M_J)^2(L_\delta)^2} &=\Omega\left(\frac{\delta^{5/2}} {d(1+M_J)^2}\right),
\nonumber\\
\left( \frac{\rho_*\ep^2} {2(1+M_J)^4(L_\delta)^2C_{\nabla,2}} \right)^{1/2} &=\Omega\left(\frac{\delta^{7/4}} {(1+M_J)^2 \sqrt{\mathbb E_{\nu_0}U_\delta^\alpha(X_0)+\delta^{-1}+d}}\right).
\end{align}
Since $\delta^{5/2}/d\leq\delta$, the minimum in \eqref{eq:stepsize:const:explicit} satisfies
\begin{equation}
\eta_{\mathrm{const}}(\ep)^{-1} =\mathcal O\left((1+M_J)^2 \max\left\{ \frac d{\ep^{10}}, \frac{\sqrt{\mathbb E_{\nu_0}U_\delta^\alpha(X_0)+\ep^{-4}+d}} {\ep^7} \right\}\right).
\label{eq:complexity:const:general:eta}
\end{equation}

\paragraph{State dependent $J$.} Suppose now that $J$ is state dependent and $\nu_0$ has finite fourth moment. From \eqref{eq:moment:euler:first:uniform:constant}, \eqref{eq:moment:euler:second:constants}, and \eqref{eq:complexity:penalty:orders:statement},
\begin{equation}
C_{\nabla,2} =\mathcal O\left( \delta^{-1}(\mathbb E_{\nu_0}U_\delta^\alpha(X_0)+\delta^{-1}+d)\right).
\end{equation}
The definition of $R_{U,\delta}$ in \eqref{eq:moment:euler:lyapunov:drift} gives
\begin{align}
\frac{4R_{U,\delta}}{q_{U,\delta}} &=\frac{2(r_{U,\delta}+2L_\delta d+8L_\delta)^2} {q_{U,\delta}^2} +\frac{64L_\delta^2d(d+2)}{q_{U,\delta}} =\mathcal O\left((\delta^{-1}+d)^2 +\delta^{-1}d(d+2)\right)
\nonumber\\
&=\mathcal O\left(\delta^{-2}+\delta^{-1}d^2\right),
\nonumber\\
C_{\nabla,4}^{\rm E} &=\mathcal O\left( \delta^{-2}\mathbb E_{\nu_0}U_\delta^\alpha(X_0)^2 +\delta^{-4}+\delta^{-3}d^2\right).
\end{align}
Therefore, the last three denominators in restrictions of \eqref{eq:stepsize:sd:explicit} satisfy the following bounds. Here we use $d\leq d^2$, $0<\delta\leq1$, and
\begin{align}
(1+M_J)^4 &\leq\left((1+M_J)^2\vee L_J^2\right)^2,\quad L_J^2(1+M_J)^2 \leq\left((1+M_J)^2\vee L_J^2\right)^2,
\nonumber\\
L_J^2(1+M_J)^4 &\leq\left((1+M_J)^2\vee L_J^2\right)^3.
\end{align}
Consequently,
\begin{align}
&2d(1+M_J)^2L_\delta^2+4dL_J^2C_{\nabla,2} =\mathcal O\left( \left((1+M_J)^2\vee L_J^2\right) d\delta^{-1}(\mathbb E_{\nu_0}U_\delta^\alpha(X_0)+\delta^{-1}+d)\right),
\label{eq:complexity:sd:general:B0}\\
&(1+M_J)^4L_\delta^2C_{\nabla,2} +64L_J^2L_\delta^2d(d+2) +2L_J^2(1+M_J)^2C_{\nabla,4}^{\rm E}
\nonumber\\
&\qquad =\mathcal O\left( \left((1+M_J)^2\vee L_J^2\right)^2 \delta^{-3}\left( \mathbb E_{\nu_0}U_\delta^\alpha(X_0)+\delta^{-1}+d^2 +\delta\mathbb E_{\nu_0}U_\delta^\alpha(X_0)^2 \right)\right),
\label{eq:complexity:sd:general:B1}\\
&96L_J^2L_\delta^2(1+M_J)^4C_{\nabla,4}^{\rm E} =\mathcal O\left( \left((1+M_J)^2\vee L_J^2\right)^3 \delta^{-4}\left( \mathbb E_{\nu_0}U_\delta^\alpha(X_0)^2 +\delta^{-2}+\delta^{-1}d^2 \right)\right).
\label{eq:complexity:sd:general:B3}
\end{align}
We next simplify the components of~\eqref{eq:stepsize:sd:explicit}. For the first four components in~\eqref{eq:stepsize:sd:explicit}, by \eqref{eq:complexity:penalty:orders:statement} in Lemma~\ref{lemma:complexity:penalty:orders}, $0<\delta\leq1$, and $1+M_J\geq1$, we obtain
\begin{align}
\label{eq:stepsize:sd:4}
1\wedge\frac1{L_\delta(1+M_J)^2} & =\Omega\left(\frac{\delta}{(1+M_J)^2}\right), \quad \frac{q_{U,\delta}}{64L_\delta^2}=\Omega\left(\frac{\delta^{-1}}{\delta^{-2}}\right) =\Omega(\delta),
\nonumber\\
\left( \frac{q_{U,\delta}} {128L_\delta^4(1+M_J)^4} \right)^{1/3} &=\Omega\left( \left(\frac{\delta^{-1}}{\delta^{-4}(1+M_J)^4}\right)^{1/3} \right) =\Omega\left(\frac{\delta}{(1+M_J)^{4/3}}\right).
\end{align}
Thus each of the first four entries is $\Omega(\delta/(1+M_J)^2)$, hence $\Omega(\ep^4/(1+M_J)^2)$ after letting $\delta=\ep^4$. For the fifth component in~\eqref{eq:stepsize:sd:explicit}, $\rho_*^{-1}=\mathcal O(1)$ and \eqref{eq:complexity:sd:general:B0} give
\begin{align}
&\frac{\rho_*\ep^2} {12d(1+M_J)^2L_\delta^2+24dL_J^2C_{\nabla,2}} =\Omega\left( \frac{\ep^2\delta} {d\left((1+M_J)^2\vee L_J^2\right)} \left( \mathbb E_{\nu_0}U_\delta^\alpha(X_0)+\delta^{-1}+d \right)^{-1} \right)
\nonumber\\
&\quad =\Omega\left( \frac{\ep^6} {d\left((1+M_J)^2\vee L_J^2\right)} \left( \mathbb E_{\nu_0}U_\delta^\alpha(X_0)+\ep^{-4}+d \right)^{-1} \right).
\label{eq:complexity:sd:general:threshold:linear}
\end{align}
For the sixth component in~\eqref{eq:stepsize:sd:explicit}, \eqref{eq:complexity:sd:general:B1} gives
\begin{align}
&\left(\frac{\rho_*\ep^2}{6}\right)^{1/2} \left( (1+M_J)^4L_\delta^2C_{\nabla,2} +64L_J^2L_\delta^2d(d+2) +2L_J^2(1+M_J)^2C_{\nabla,4}^{\rm E} \right)^{-1/2}
\nonumber\\
&\qquad=\Omega\left( \frac{\ep\delta^{3/2}} {\left((1+M_J)^2\vee L_J^2\right)\left[ \mathbb E_{\nu_0}U_\delta^\alpha(X_0)+\delta^{-1}+d^2 +\delta\mathbb E_{\nu_0}U_\delta^\alpha(X_0)^2 \right]^{1/2}} \right)
\nonumber\\
&\qquad=\Omega\left( \frac{\ep^7} {\left((1+M_J)^2\vee L_J^2\right)\left[ \mathbb E_{\nu_0}U_\delta^\alpha(X_0)+\ep^{-4}+d^2 +\ep^4\mathbb E_{\nu_0}U_\delta^\alpha(X_0)^2 \right]^{1/2}} \right).
\end{align}
If $L_J>0$, \eqref{eq:complexity:sd:general:B3} gives for the seventh component in~\eqref{eq:stepsize:sd:explicit}
\begin{align}
\left[ \frac{\rho_*\ep^2} {96L_J^2L_\delta^2(1+M_J)^4C_{\nabla,4}^{\rm E}} \right]^{1/4} & =\Omega\left( \frac{\ep^{1/2}\delta} {\left((1+M_J)^2\vee L_J^2\right)^{3/4}\left[ \mathbb E_{\nu_0}U_\delta^\alpha(X_0)^2 +\delta^{-2}+\delta^{-1}d^2 \right]^{1/4}} \right)
\nonumber\\
&=\Omega\left( \frac{\ep^{9/2}} {\left((1+M_J)^2\vee L_J^2\right)^{3/4}\left[ \mathbb E_{\nu_0}U_\delta^\alpha(X_0)^2 +\ep^{-8}+\ep^{-4}d^2 \right]^{1/4}} \right).
\label{eq:complexity:sd:general:threshold:quartic}
\end{align}
Since $\eta_{\mathrm{sd}}(\ep)$ is the minimum in \eqref{eq:stepsize:sd:explicit}, the preceding four moment bounds in~\eqref{eq:stepsize:sd:4} and \eqref{eq:complexity:sd:general:threshold:linear}- \eqref{eq:complexity:sd:general:threshold:quartic}, together with $(a\wedge b)^{-1}=a^{-1}\vee b^{-1}$, show that
\begin{align}
\eta_{\mathrm{sd}}(\ep)^{-1} &=\mathcal O\Bigg( (1+M_J)^2\ep^{-4} \vee \frac{d\left((1+M_J)^2\vee L_J^2\right)}{\ep^6}\left( \mathbb E_{\nu_0}U_\delta^\alpha(X_0)+\ep^{-4}+d \right)
\nonumber\\
&\qquad \vee \frac{(1+M_J)^2\vee L_J^2}{\ep^7}\left( \mathbb E_{\nu_0}U_\delta^\alpha(X_0)+\ep^{-4}+d^2 +\ep^4\mathbb E_{\nu_0}U_\delta^\alpha(X_0)^2 \right)^{1/2}
\nonumber\\
&\qquad \vee \frac{\left((1+M_J)^2\vee L_J^2\right)^{3/4}}{\ep^{9/2}}\left( \mathbb E_{\nu_0}U_\delta^\alpha(X_0)^2 +\ep^{-8}+\ep^{-4}d^2 \right)^{1/4} \Bigg).
\label{eq:complexity:sd:inverse:max}
\end{align}
Cauchy-Schwarz gives
\begin{align}
\mathbb E_{\nu_0}U_\delta^\alpha(X_0) &\leq\left( \mathbb E_{\nu_0}U_\delta^\alpha(X_0)^2 \right)^{1/2},
\nonumber\\
\frac d{\ep^6}\left( \mathbb E_{\nu_0}U_\delta^\alpha(X_0)+\ep^{-4}+d \right) &\leq\frac d{\ep^6}\left(\left(\mathbb E_{\nu_0}U_\delta^\alpha(X_0)^2\right)^{1/2} +\ep^{-4}+d\right).
\label{eq:complexity:sd:inverse:linear}
\end{align}
The square root inequality, Cauchy-Schwarz, and Young's inequality give
\begin{align}
&\frac1{\ep^7}\left[ \mathbb E_{\nu_0}U_\delta^\alpha(X_0)+\ep^{-4}+d^2 +\ep^4\mathbb E_{\nu_0}U_\delta^\alpha(X_0)^2 \right]^{1/2}
\nonumber\\
&\qquad\leq \frac1{\ep^7} \left(\mathbb E_{\nu_0}U_\delta^\alpha(X_0)\right)^{1/2} +\ep^{-9}+d\ep^{-7} +\ep^{-5}\left( \mathbb E_{\nu_0}U_\delta^\alpha(X_0)^2 \right)^{1/2}
\nonumber\\
&\qquad\leq \frac1{\ep^7} \left(\mathbb E_{\nu_0}U_\delta^\alpha(X_0)^2\right)^{1/4} +\ep^{-9}+d\ep^{-7} +\ep^{-5}\left( \mathbb E_{\nu_0}U_\delta^\alpha(X_0)^2 \right)^{1/2}
\nonumber\\
&\qquad\leq \frac d{2\ep^6} \left( \mathbb E_{\nu_0}U_\delta^\alpha(X_0)^2 \right)^{1/2} +\frac1{2\ep^8d}+\ep^{-9}+d\ep^{-7} +\ep^{-5}\left( \mathbb E_{\nu_0}U_\delta^\alpha(X_0)^2 \right)^{1/2}
\nonumber\\
&\qquad=\mathcal O\left( \frac d{\ep^6}\left( \left( \mathbb E_{\nu_0}U_\delta^\alpha(X_0)^2 \right)^{1/2} +\ep^{-4}+d \right)\right),
\label{eq:complexity:sd:inverse:quadratic}
\end{align}
where we used, for $0<\ep\leq1$ and $d\geq1$,
\begin{equation}
\frac1{\ep^8d}+\ep^{-9}+d\ep^{-7} \leq3d\ep^{-10}, \qquad \ep^{-5}\left( \mathbb E_{\nu_0}U_\delta^\alpha(X_0)^2 \right)^{1/2} \leq\frac d{\ep^6}\left( \mathbb E_{\nu_0}U_\delta^\alpha(X_0)^2 \right)^{1/2}.
\end{equation}
Moreover, since $U_\delta^\alpha\geq1$, we obtain
\begin{align}
\frac1{\ep^{9/2}}\left( \mathbb E_{\nu_0}U_\delta^\alpha(X_0)^2 +\ep^{-8}+\ep^{-4}d^2 \right)^{1/4} &\leq \ep^{-9/2}\left( \mathbb E_{\nu_0}U_\delta^\alpha(X_0)^2 \right)^{1/2} +\ep^{-13/2}+\ep^{-11/2}d^{1/2}
\nonumber\\
&=\mathcal O\left( \frac d{\ep^6}\left( \left(\mathbb E_{\nu_0}U_\delta^\alpha(X_0)^2\right)^{1/2} +\ep^{-4}+d\right)\right).
\label{eq:complexity:sd:inverse:quartic}
\end{align}
Here we used
\begin{equation}
\ep^{-9/2}\leq d\ep^{-6}, \qquad \ep^{-13/2}\leq d\ep^{-10}, \qquad \ep^{-11/2}d^{1/2}\leq d^2\ep^{-6}.
\end{equation}
Moreover,
\begin{equation}
\ep^{-4} \leq\frac d{\ep^6}\left( \left( \mathbb E_{\nu_0}U_\delta^\alpha(X_0)^2 \right)^{1/2} +\ep^{-4}+d \right).
\end{equation}
Combining \eqref{eq:complexity:sd:inverse:max}, \eqref{eq:complexity:sd:inverse:linear}, \eqref{eq:complexity:sd:inverse:quadratic}, and \eqref{eq:complexity:sd:inverse:quartic}, and using $(1+M_J)^2\leq(1+M_J)^2\vee L_J^2$ and $((1+M_J)^2\vee L_J^2)^{3/4} \leq(1+M_J)^2\vee L_J^2$, yields
\begin{equation}
\eta_{\mathrm{sd}}(\ep)^{-1} =\mathcal O\left( \left((1+M_J)^2\vee L_J^2\right) \frac d{\ep^6}\left( \left(\mathbb E_{\nu_0}U_\delta^\alpha(X_0)^2\right)^{1/2} +\ep^{-4}+d\right)\right).
\label{eq:complexity:sd:general:eta}
\end{equation}
For both constant and state dependent $J$, if $2F_0>\ep^2$, then
\begin{align}
K &=\left\lceil\frac1{\rho_*\eta} \log\frac{2F_0}{\ep^2}\right\rceil =\mathcal O\left(\eta^{-1} \left(1+\log\frac{2F_0}{\ep^2}\right)\right),
\label{eq:complexity:K:choice}
\end{align}
where $\rho_*^{-1}=\mathcal O(1)$ and $\eta^{-1}\geq1$ is either $\eta_{\mathrm{sd}}(\ep)^{-1}$ in~\eqref{eq:complexity:sd:general:eta} or $\eta_{\mathrm{const}}(\ep)^{-1}$in~\eqref{eq:complexity:const:general:eta}.
The proof is complete.


\subsection{Proof of Proposition~\ref{prop:PJ:metric:construction}}
\label{proof:prop:PJ:metric:construction}
The matrix $A_J$ is similar to $\widetilde A_J=H_*^{1/2}A_JH_*^{-1/2}=H_*+H_*^{1/2}J(x_*)H_*^{1/2}$. Since $J(x_*)^\top=-J(x_*)$, every eigenpair $\widetilde A_Jv=\zeta v$ with $v\in\mathbb C^d\setminus\{0\}$ satisfies
$$
\mathrm{Re}(\zeta)=\frac{v^*(\widetilde A_J+\widetilde A_J^*)v}{2\|v\|^2}=\frac{v^*H_*v}{\|v\|^2}\geq\lambda_{\min}(H_*)>0.
$$
This proves~\eqref{eq:PJ:spectral:lower:bound}. For $0<\alpha_J<\beta_J$, all eigenvalues of $\alpha_JI-A_J$ have negative real parts. The continuous Lyapunov theorem, see~\cite{boyd2009lyapunov}, gives a unique $P_J\succ0$ satisfying
$$
(A_J-\alpha_JI)^\top P_J+P_J(A_J-\alpha_JI)=I,\qquad A_J^\top P_J+P_JA_J=2\alpha_JP_J+I\succeq2\alpha_JP_J.
$$
By using~\eqref{eq:PJ:linearized:dynamics},
$$
\frac{d}{dt}\|z_t\|_{P_J}^2=-z_t^\top(A_J^\top P_J+P_JA_J)z_t\leq-2\alpha_J\|z_t\|_{P_J}^2,\qquad \|z_t\|_{P_J}\leq e^{-\alpha_Jt}\|z_0\|_{P_J}.
$$
For $J(x_*)=0$, choosing $P_0=H_*$ gives
$$
A_0^\top P_0+P_0A_0=2H_*^2\succeq2\lambda_{\min}(H_*)H_*.
$$
If $\beta_J>\lambda_{\min}(H_*)$, any $\alpha_J\in(\lambda_{\min}(H_*),\beta_J)$ yields the strict improvement in the adapted metric. The proof is complete.

\subsection{Proof of Lemma~\ref{lemma:H}}
\label{proof:lemma:H}
Let $D=B_J(x)-B_J(y)$. Since $H_\eta(x)-H_\eta(y)=x-y-\eta D$, \eqref{eq:strong:full:drift} gives
\begin{align}
\|H_\eta(x)-H_\eta(y)\|_{P_J}^2 &=\|x-y\|_{P_J}^2 -2\eta\langle x-y,D\rangle_{P_J} +\eta^2\|D\|_{P_J}^2
\nonumber\\
&\leq\|x-y\|_{P_J}^2 -2\alpha_{P,J}\eta\|x-y\|_{P_J}^2 +L_{P,J}^2\eta^2\|x-y\|_{P_J}^2
\nonumber\\
&\leq \left(1-\frac32\alpha_{P,J}\eta\right) \|x-y\|_{P_J}^2,
\label{eq:strong:onestep:squared}
\end{align}
where the last inequality follows from $0<\eta\leq\frac{\alpha_{P,J}}{2L_{P,J}^2}$. For $x\neq y$, Cauchy-Schwarz and~\eqref{eq:strong:full:drift} also imply
$$
\alpha_{P,J}\|x-y\|_{P_J}^2 \leq\langle x-y,D\rangle_{P_J} \leq\|x-y\|_{P_J}\|D\|_{P_J} \leq L_{P,J}\|x-y\|_{P_J}^2,
$$
and hence $\alpha_{P,J}\leq L_{P,J}$. Therefore,
\begin{equation}
\alpha_{P,J}\eta \leq\frac{\alpha_{P,J}^2}{2L_{P,J}^2} \leq\frac12, \qquad 1-\frac32\alpha_{P,J}\eta\geq\frac14.
\end{equation}
Taking square roots in~\eqref{eq:strong:onestep:squared} proves \eqref{eq:strong:onestep}. The proof is complete.


\subsection{Proof of Lemma~\ref{lemma:1:step:error}}
\label{proof:lemma:1:step:error}

Note that $L_J=0$ when $J$ is constant. Boundedness of $J$ and the $L_\delta$ Lipschitz continuity of $\nabla V_\delta$ give
\begin{equation}
\|(I+J(x))\nabla V_\delta(x)\| \leq(1+M_J)\bigl(L_\delta\|x\|+\|\nabla V_\delta(0)\|\bigr).
\end{equation}
Moreover, if $\|x\|\vee\|y\|\leq R_0$, then
\begin{align}
&\|(I+J(x))\nabla V_\delta(x)-(I+J(y))\nabla V_\delta(y)\| \leq \left((1+M_J)L_\delta+L_J\bigl(L_\delta R_0+\|\nabla V_\delta(0)\|\bigr)\right)\|x-y\|.
\end{align}
Thus the drift is locally Lipschitz with linear growth, so the SDE has a unique nonexplosive strong solution. Consider the continuous time nonreversible diffusion
\begin{equation}
dY_t = -\left(I + J(Y_t)\right)\nabla V_\delta(Y_t)dt + \sqrt{2}dW_t, \qquad Y_0 \sim \pi_\delta.
\end{equation}
We define the one step error by
\begin{equation}
\mathcal{R}_0:= \int_0^\eta \left(\left(I + J(Y_t)\right)\nabla V_\delta(Y_t) - \left(I + J(Y_0)\right)\nabla V_\delta(Y_0)\right)dt .
\end{equation}
All $L^2$ and $L^4$ norms in this proof are the random vector norms defined in~\eqref{eq:Lp:PJ:norms}. First, we decompose the drift difference as
\begin{align}
&\left(I + J(Y_t)\right)\nabla V_\delta(Y_t) - \left(I + J(Y_0)\right)\nabla V_\delta(Y_0) \nonumber
\\
&\qquad = \left(I + J(Y_t)\right) \left(\nabla V_\delta(Y_t)-\nabla V_\delta(Y_0)\right) + \left(J(Y_t)-J(Y_0)\right)\nabla V_\delta(Y_0).
\end{align}
By Minkowski's inequality,
\begin{align}
\|\mathcal R_0\|_{L^2} &\leq \int_0^\eta \left\| \left(I + J(Y_t)\right) \left(\nabla V_\delta(Y_t)-\nabla V_\delta(Y_0)\right) \right\|_{L^2}dt + \int_0^\eta \left\| \left(J(Y_t)-J(Y_0)\right)\nabla V_\delta(Y_0) \right\|_{L^2}dt .
\label{eq:decomposition}
\end{align}
We bound the two terms separately. For the first term, by the boundedness of $J$ and the $L_\delta$ smoothness of $V_\delta$, we obtain
\begin{equation}
\left\| \left(I+J(Y_t)\right) \left(\nabla V_\delta(Y_t)-\nabla V_\delta(Y_0)\right) \right\|_{L^2} \leq (1+M_J)L_\delta\|Y_t-Y_0\|_{L^2}.
\end{equation}
For constant $J$, the second term in the decomposition~\eqref{eq:decomposition} is zero. For state dependent $J$, use the $L_J$ Lipschitz continuity of $J$ and the definition of $G_4$ to obtain
\begin{align}
\left\|\left(J(Y_t)-J(Y_0)\right)\nabla V_\delta(Y_0)\right\|_{L^2} & \leq L_J \left(\mathbb E\left[|Y_t-Y_0|^2|\nabla V_\delta(Y_0)|^2\right] \right)^{1/2} \nonumber
\\
& \leq L_J \left(\mathbb E|Y_t-Y_0|^4\right)^{1/4}\left(\mathbb E|\nabla V_\delta(Y_0)|^4\right)^{1/4} \nonumber
\\
&=L_JG_4\|Y_t-Y_0\|_{L^4},
\end{align}
where we applied Cauchy-Schwarz to $|Y_t-Y_0|^2$ and $|\nabla V_\delta(Y_0)|^2$. Then, we bound the increments $Y_t-Y_0$ in $L^2$ and $L^4$. We have
$$
Y_t-Y_0 = -\int_0^t\left(I+J(Y_s)\right)\nabla V_\delta(Y_s)ds + \sqrt{2}W_t .
$$
Using Minkowski's inequality in $L^2$, we obtain
\begin{align}
\|Y_t-Y_0\|_{L^2} & \leq \int_0^t \left\| \left(I+J(Y_s)\right)\nabla V_\delta(Y_s)\right\|_{L^2}ds + \|\sqrt{2}W_t\|_{L^2} \nonumber
\\
& \leq (1+M_J) \int_0^t \|\nabla V_\delta(Y_s)\|_{L^2}ds + \sqrt{2dt}.
\end{align}
Since $Y_0\sim \pi_\delta$ and the diffusion is stationary, $Y_s\sim \pi_\delta$ for all $s\geq 0$. We get
\begin{equation}
\label{eq:L2:increment}
\|Y_t-Y_0\|_{L^2} \leq (1+M_J)G_2t + \sqrt{2dt},
\end{equation}
where we define $G_2$ in~\eqref{eq:strong:local:error:constant}. Consequently,
\begin{align}
\int_0^\eta \|Y_t-Y_0\|_{L^2}dt & \leq (1+M_J)G_2\int_0^\eta t\,dt + \sqrt{2d}\int_0^\eta t^{1/2}dt \nonumber
\\
& = \frac{1}{2}(1+M_J)G_2\eta^2 + \frac{2}{3}\sqrt{2d}\eta^{3/2}.
\end{align}
Using Minkowski's inequality in $L^4$, we have
\begin{align}
\|Y_t-Y_0\|_{L^4} & \leq \int_0^t \left\|\left(I+J(Y_s)\right)\nabla V_\delta(Y_s)\right\|_{L^4}ds + \|\sqrt{2}W_t\|_{L^4} \nonumber
\\
& \leq (1+M_J) \int_0^t \|\nabla V_\delta(Y_s)\|_{L^4}ds + \sqrt{2}\|W_t\|_{L^4}.
\end{align}
Since $Y_s\sim\pi_\delta$ for all $s\geq0$, $\|\nabla V_\delta(Y_s)\|_{L^4}=G_4$ where we define $G_4$ in~\eqref{eq:strong:local:error:sd:constant}. Moreover, $\|\sqrt{2}W_t\|_{L^4}=\sqrt{2}[d(d+2)]^{1/4}t^{1/2}$. Thus,
\begin{equation}
\label{eq:L4:increment}
\|Y_t-Y_0\|_{L^4} \leq (1+M_J)G_4t + \sqrt{2}[d(d+2)]^{1/4}t^{1/2}.
\end{equation}
Integrating over $t\in[0,\eta]$ gives
\begin{align}
\int_0^\eta \|Y_t-Y_0\|_{L^4}dt & \leq (1+M_J)G_4\int_0^\eta t\,dt + \sqrt{2}[d(d+2)]^{1/4} \int_0^\eta t^{1/2}dt \nonumber
\\
& = \frac12(1+M_J)G_4\eta^2 + \frac{2\sqrt2}{3}[d(d+2)]^{1/4}\eta^{3/2}.
\end{align}
\paragraph{Constant $J$.} In this case $J(Y_t)-J(Y_0)=0$. Hence the second integral in the decomposition of $\mathcal R_0$ vanishes, and
\begin{align}
\|\mathcal R_0\|_{L^2} &\leq(1+M_J)L_\delta\int_0^\eta\|Y_t-Y_0\|_{L^2}dt \nonumber
\\
&\leq(1+M_J)L_\delta\left( \frac12(1+M_J)G_2\eta^2 +\frac23\sqrt{2d}\eta^{3/2} \right) \nonumber
\\
&=\eta^{3/2}(1+M_J)L_\delta\left( \frac12(1+M_J)G_2\eta^{1/2} +\frac23\sqrt{2d} \right)
\nonumber\\
&\leq\eta^{3/2}(1+M_J)L_\delta\left( \frac12(1+M_J)G_2 +\frac23\sqrt{2d} \right) :=C_{\mathcal R}^{\rm const}\eta^{3/2},
\end{align}
where the last inequality uses $\eta^{1/2}\leq1$ and the last identity is the definition in~\eqref{eq:strong:local:error:constant}.

\paragraph{State dependent $J$.} In this case both integrals in the decomposition of $\mathcal R_0$ remain. Using~\eqref{eq:L2:increment} and~\eqref{eq:L4:increment}, we obtain
\begin{align}
\|\mathcal R_0\|_{L^2} &\leq(1+M_J)L_\delta\int_0^\eta\|Y_t-Y_0\|_{L^2}dt +L_JG_4\int_0^\eta\|Y_t-Y_0\|_{L^4}dt
\nonumber\\
&\leq\eta^{3/2}(1+M_J)L_\delta\left( \frac12(1+M_J)G_2\eta^{1/2} +\frac23\sqrt{2d} \right)
\nonumber\\
&\qquad +\eta^{3/2}L_JG_4\left( \frac12(1+M_J)G_4\eta^{1/2} +\frac{2\sqrt2}{3}[d(d+2)]^{1/4} \right)
\nonumber\\
&\leq\eta^{3/2}\Bigg( (1+M_J)L_\delta\left( \frac12(1+M_J)G_2 +\frac23\sqrt{2d} \right)
\nonumber\\
&\qquad +L_JG_4\left( \frac12(1+M_J)G_4 +\frac{2\sqrt2}{3}[d(d+2)]^{1/4} \right) \Bigg):=C_{\mathcal R}^{\rm sd}\eta^{3/2},
\end{align}
where the inequality again uses $\eta^{1/2}\leq1$ and the last identity is exactly~\eqref{eq:strong:local:error:sd:constant}. The proof is complete.

\subsection{Proof of Lemma~\ref{lemma:L:2}}
\label{proof:lemma:L:2}
We adapt the moment argument in the proof of \cite[Proposition~2.21]{gurbuzbalaban2024penalized} to the nonreversible case. Let $x_*$ be the unique minimizer of $V_{\delta}=f+S/\delta$. Lemma~\ref{lemma:min} gives $\|x_*\|\leq(1+c)R$, with $c$ defined there, and optimality gives $\nabla V_{\delta}(x_*)=0$. The potential $V_{\delta}$ is strongly convex with parameter $\mu$ and has an $L_{\delta}$ Lipschitz gradient. The initial potential assumption~\eqref{eq:strong:initial:potential} and strong convexity imply
\begin{align}
\mathbb E_{\nu_0}\|x_0\|^2 &\leq2\mathbb E_{\nu_0}\|x_0-x_*\|^2+2\|x_*\|^2 \leq\frac4\mu \mathbb E_{\nu_0} \left[V_\delta(x_0)-V_\delta(x_*)\right] +2\|x_*\|^2<\infty.
\label{eq:strong:initial:second:from:potential}
\end{align}
We next use $L_\delta$ smoothness to obtain
\begin{equation}
V_\delta\left(x_{k+1}\right) \leq V_\delta\left(x_k\right)+\left\langle\nabla V_\delta\left(x_k\right), x_{k+1}-x_k\right\rangle+\frac{L_\delta}{2}\left\|x_{k+1}-x_k\right\|^2.
\end{equation}
Taking conditional expectation given $x_k$ gives
\begin{align}
\mathbb{E}\left[\left\langle\nabla V_\delta\left(x_k\right), x_{k+1}-x_k\right\rangle | x_k\right] &=-\eta\left\langle\nabla V_\delta\left(x_k\right),\left(I+J\left(x_k\right)\right) \nabla V_\delta\left(x_k\right)\right\rangle \nonumber
\\
& =-\eta\left\|\nabla V_\delta\left(x_k\right)\right\|^2-\eta\left\langle\nabla V_\delta\left(x_k\right), J\left(x_k\right) \nabla V_\delta\left(x_k\right)\right\rangle \nonumber
\\
& = -\eta\left\|\nabla V_\delta\left(x_k\right)\right\|^2,
\end{align}
because $J(x_k)$ is skew symmetric and hence
$$
\left\langle\nabla V_\delta\left(x_k\right), J\left(x_k\right) \nabla V_\delta\left(x_k\right)\right\rangle=0.
$$
The conditional second moment of the increment satisfies
\begin{align}
\mathbb{E}\left[\left\|x_{k+1}-x_k\right\|^2 | x_k\right] & = \eta^2\left\|\left(I+J\left(x_k\right)\right) \nabla V_\delta\left(x_k\right)\right\|^2 \nonumber
\\
& \qquad +\eta^2 \mathbb{E}\left[\left\|\left(I+J\left(x_k\right)\right)\left(\widetilde\nabla f\left(x_k\right)-\nabla f\left(x_k\right)\right)\right\|^2 | x_k\right]+2 \eta d \nonumber
\\
& \leq \eta^2\left(1+M_J^2\right)\left\|\nabla V_\delta\left(x_k\right)\right\|^2 \nonumber
\\
& \qquad +2\eta^2\left(1+M_J^2\right) \sigma^2\left(L^2\left\|x_k\right\|^2+\|\nabla f(0)\|^2\right) +2 \eta d,
\end{align}
where skew symmetry implies, for every $v\in\mathbb R^d$,
$$
\left\|\left(I+J\left(x_k\right)\right) v\right\|^2=\|v\|^2+\left\|J\left(x_k\right) v\right\|^2 \leq\left(1+M_J^2\right)\|v\|^2.
$$
Taking expectation and using the stochastic gradient bound gives
\begin{align}
\mathbb{E} V_\delta\left(x_{k+1}\right) & \leq \mathbb{E} V_\delta\left(x_k\right)-\eta\left(1-\frac{\eta L_\delta\left(1+M_J^2\right)}{2}\right) \mathbb{E}\left\|\nabla V_\delta\left(x_k\right)\right\|^2 \nonumber
\\
& \qquad +\eta^2 L_\delta\left(1+M_J^2\right) \sigma^2\left(L^2 \mathbb{E}\left\|x_k\right\|^2+\|\nabla f(0)\|^2\right)+\eta L_\delta d.
\end{align}
Strong convexity of $V_\delta$ and the gradient domination inequality give
$$
\left\|\nabla V_\delta\left(x_k\right)\right\|^2 \geq 2 \mu\left(V_\delta\left(x_k\right)-V_\delta\left(x_*\right)\right),
$$
and strong convexity also gives
\begin{equation}
\label{eq:error:norm}
\left\|x_k-x_*\right\|^2 \leq \frac{2}{\mu}\left(V_\delta\left(x_k\right)-V_\delta\left(x_*\right)\right) .
\end{equation}
Therefore,
$$
\left\|x_k\right\|^2 \leq 2\left\|x_k-x_*\right\|^2+2\left\|x_*\right\|^2 \leq \frac{4}{\mu}\left(V_\delta\left(x_k\right)-V_\delta\left(x_*\right)\right)+2\left\|x_*\right\|^2 .
$$
Thus,
\begin{align}
& \mathbb{E} \left[V_\delta\left(x_{k+1}\right)-V_\delta\left(x_*\right)\right] \nonumber
\\
& \qquad \leq \left[1-2 \eta \mu\left(1-\frac{\eta L_\delta\left(1+M_J^2\right)}{2}\right)+\frac{4 \eta^2 L_\delta L^2\left(1+M_J^2\right) \sigma^2}{\mu}\right] \mathbb{E}\left[V_\delta\left(x_k\right)-V_\delta\left(x_*\right)\right] \nonumber
\\
& \qquad +2\eta^2 L^2L_{\delta}\left(1+M_J^2\right) \sigma^2\left\|x_*\right\|^2+\eta^2 L_\delta\left(1+M_J^2\right) \sigma^2\|\nabla f(0)\|^2+\eta L_\delta d.
\end{align}
Noticing $L\leq L_\delta$, it shows that the coefficient in the preceding recursion satisfies
\begin{align}
&1-2\eta\mu\left(1-\frac{\eta L_\delta(1+M_J^2)}2\right) +\frac{4\eta^2L_\delta L^2(1+M_J^2)\sigma^2}{\mu}
\nonumber\\
&\quad\leq 1-2\eta\mu +\eta^2\mu L_\delta(1+M_J^2) +\frac{4\eta^2L_\delta^3(1+M_J^2)\sigma^2}{\mu} =1-\beta_\eta,
\end{align}
where $\beta_{\eta} := 2\eta\mu\left(1-\eta L_\delta\left(1+M_J^2\right)\left(\frac{1}{2}+ \frac{2L_\delta^2\sigma^2}{\mu^2}\right)\right)$. We choose the stepsize $\eta$ such that
\begin{equation}
\eta \leq \frac{1}{L_\delta\left(1+M_J^2\right)\left(1 + \frac{4L_\delta^2\sigma^2}{\mu^2}\right)} \wedge \frac{1}{2\mu}.
\end{equation}
Therefore,
\begin{align}
\eta L_\delta(1+M_J^2) \left(\frac12+\frac{2L_\delta^2\sigma^2}{\mu^2}\right) &\leq\frac12, \quad \eta\mu\leq\beta_\eta<2\eta\mu\leq1.
\end{align}
With $C_{\mathrm{sg},\delta}$ defined in~\eqref{eq:strong:sg:constant}, the recursion becomes
$$
\mathbb{E}\left[V_\delta\left(x_{k+1}\right)-V_\delta\left(x_*\right)\right] \leq (1-\beta_\eta) \mathbb{E}\left[V_\delta\left(x_k\right)-V_\delta\left(x_*\right)\right]+\eta \left(\eta C_{\mathrm{sg},\delta} + L_\delta d\right).
$$
Iterating from $x_0\sim\nu_0$ yields
\begin{align}
\mathbb{E}\left[V_\delta\left(x_k\right)-V_\delta\left(x_*\right)\right] &\leq (1-\beta_{\eta})^k \mathbb E_{\nu_0} \left[V_\delta\left(x_0\right)-V_\delta\left(x_*\right)\right] +\eta(\eta C_{\mathrm{sg},\delta}+L_\delta d)\sum_{j=0}^{k-1}(1-\beta_\eta)^j
\nonumber\\
&\leq (1-\beta_{\eta})^k \mathbb E_{\nu_0} \left[V_\delta\left(x_0\right)-V_\delta\left(x_*\right)\right] +\frac{\eta C_{\mathrm{sg},\delta}+L_\delta d}{\mu},
\end{align}
where the last inequality uses $\beta_\eta\geq\eta\mu$. Therefore,~\eqref{eq:error:norm} gives, for all $k=1,2,\ldots$,
\begin{equation}
\mathbb{E}\left\|x_k-x_*\right\|^2 \leq \frac{2}{\mu}(1-\beta_{\eta})^k\mathbb E_{\nu_0} \left[V_\delta\left(x_0\right)-V_\delta\left(x_*\right)\right]+ \frac{2(\eta C_{\mathrm{sg},\delta} + L_\delta d)}{\mu^2} < \infty.
\end{equation}
Consequently,
\begin{equation}
\sup_k\mathbb{E}\left\| x_k \right\|^2 \leq 2\sup_k\mathbb{E}\left\|x_k-x_*\right\|^2 + 2(1+c)^2R^2.
\end{equation}
The proof is complete.

\subsection{Proof of Theorem~\ref{thm:wasserstein:mu}}
\label{proof:thm:wasserstein:mu}
We first apply Lemma~\ref{lemma:1:step:error} with $f=V_\delta$ and $\pi=\pi_\delta$, and let $(Y_t)_{t\geq0}$ be stationary. By \eqref{eq:strong:initial:second:from:potential} and \eqref{eq:moment:target:bounds}, we get
\begin{equation}
\mathcal W_{2,P_J}^2(\nu_0,\pi_\delta) \leq2\lambda_{\max}(P_J) \left(\mathbb E_{\nu_0}\|x_0\|^2+M_{2,\delta}\right)<\infty.
\end{equation}
Fix a finite cost initial coupling $(x_0,Y_0)$. On its product with the Brownian and oracle probability spaces, take $W$, $(w_k)_{k\geq0}$, and $(x_0,Y_0)$ mutually independent. Use $\xi_{k+1}:=\eta^{-1/2}(W_{(k+1)\eta}-W_{k\eta})$ in both processes. Denote
\begin{align}
H_\eta(x)&:=x-\eta B_J(x),\quad \zeta_k:=(I+J(x_k)) \left(\widetilde\nabla f(x_k)-\nabla f(x_k)\right),
\nonumber\\
\mathcal R_k&:=\int_{k\eta}^{(k+1)\eta} \left(B_J(Y_t)-B_J(Y_{k\eta})\right)dt, \qquad \Delta_k:=x_k-Y_{k\eta}.
\end{align}
Then
\begin{align}
x_{k+1} &=H_\eta(x_k)-\eta\zeta_k+\sqrt{2\eta}\xi_{k+1}, \quad Y_{(k+1)\eta}=H_\eta(Y_{k\eta})-\mathcal R_k+\sqrt{2\eta}\xi_{k+1},
\nonumber\\
\Delta_{k+1} &=H_\eta(x_k)-H_\eta(Y_{k\eta})+\mathcal R_k-\eta\zeta_k.
\label{eq:strong:coupling:delta:update}
\end{align}
Since $Y_{k\eta}\sim\pi_\delta$, Lemma~\ref{lemma:1:step:error} gives $\mathbb E\|\mathcal R_k\|^2\leq C_{\mathcal R}^2\eta^3$. Consequently,
\begin{align}
\mathbb E\|\mathcal R_k\|_{P_J}^2 &=\mathbb E\langle\mathcal R_k,P_J\mathcal R_k\rangle \leq\lambda_{\max}(P_J)\mathbb E\|\mathcal R_k\|^2 \leq\lambda_{\max}(P_J)C_{\mathcal R}^2\eta^3.
\label{eq:strong:coupling:remainder}
\end{align}
Let $ \mathcal F_k=\sigma\bigl(x_0,Y_0,w_0,\ldots,w_{k-1},W_s:0\leq s\leq(k+1)\eta\bigr). $ Then $x_k$, $Y_{k\eta}$, and $\mathcal R_k$ are $\mathcal F_k$ measurable, while $w_k$ is independent of $\mathcal F_k$. Assumption~\ref{assumption:sg} and~\eqref{eq:sigma:V} yield
\begin{align}
\mathbb E[\zeta_k|\mathcal F_k]=0\,, \quad \mathbb E\|\zeta_k\|_{P_J}^2\leq\lambda_{\max}(P_J)(1+M_J)^2\sigma_V^2d =\sigma_{P,J}^2d\,.
\label{eq:strong:sg:error:moment}
\end{align}
By~\eqref{eq:strong:full:drift} and Cauchy-Schwarz, we have $\alpha_{P,J}\leq L_{P,J}$ and $0<\alpha_{P,J}\eta \leq\frac{\alpha_{P,J}^2}{2L_{P,J}^2} \leq\frac12$. Thus $ 1-\frac32\alpha_{P,J}\eta\geq\frac14, $ so the parameter $\alpha_{P,J}\eta/[2(1-3\alpha_{P,J}\eta/2)]$ is positive. Then Cauchy-Schwarz and Young's inequality show that
\begin{align}
&2\mathbb E\left\langle H_\eta(x_k)-H_\eta(Y_{k\eta}),\mathcal R_k \right\rangle_{P_J} \nonumber
\\
&\quad\leq 2\mathbb E\left[ \|H_\eta(x_k)-H_\eta(Y_{k\eta})\|_{P_J} \|\mathcal R_k\|_{P_J}\right]
\nonumber\\
&\quad\leq \frac{\alpha_{P,J}\eta} {2(1-3\alpha_{P,J}\eta/2)} \mathbb E\|H_\eta(x_k)-H_\eta(Y_{k\eta})\|_{P_J}^2 +\frac{2(1-3\alpha_{P,J}\eta/2)} {\alpha_{P,J}\eta} \mathbb E\|\mathcal R_k\|_{P_J}^2.
\label{eq:strong:coupling:young:cross}
\end{align}
By~\eqref{eq:strong:onestep} and $\Delta_k=x_k-Y_{k\eta}$,
\begin{align}
\mathbb E\|H_\eta(x_k)-H_\eta(Y_{k\eta})\|_{P_J}^2 &\leq \left(1-\frac32\alpha_{P,J}\eta\right) \mathbb E\|x_k-Y_{k\eta}\|_{P_J}^2 = \left(1-\frac32\alpha_{P,J}\eta\right) \mathbb E\|\Delta_k\|_{P_J}^2.
\label{eq:strong:coupling:H:contraction}
\end{align}
Expanding the square and using \eqref{eq:strong:coupling:young:cross}, \eqref{eq:strong:coupling:H:contraction}, and \eqref{eq:strong:coupling:remainder}, we obtain
\begin{align}
& \mathbb E\|H_\eta(x_k)-H_\eta(Y_{k\eta}) +\mathcal R_k\|_{P_J}^2
\nonumber\\
&\quad= \mathbb E\|H_\eta(x_k)-H_\eta(Y_{k\eta})\|_{P_J}^2 + 2\mathbb E\left\langle H_\eta(x_k)-H_\eta(Y_{k\eta}),\mathcal R_k \right\rangle_{P_J} +\mathbb E\|\mathcal R_k\|_{P_J}^2
\nonumber\\
&\quad\leq \left(1+\frac{\alpha_{P,J}\eta} {2(1-3\alpha_{P,J}\eta/2)}\right) \mathbb E\|H_\eta(x_k)-H_\eta(Y_{k\eta})\|_{P_J}^2 +\left(1+\frac{2(1-3\alpha_{P,J}\eta/2)} {\alpha_{P,J}\eta}\right) \mathbb E\|\mathcal R_k\|_{P_J}^2
\nonumber\\
&\quad\leq \left(1+\frac{\alpha_{P,J}\eta} {2(1-3\alpha_{P,J}\eta/2)}\right) \left(1-\frac32\alpha_{P,J}\eta\right) \mathbb E\|\Delta_k\|_{P_J}^2+\left(1+\frac{2(1-3\alpha_{P,J}\eta/2)} {\alpha_{P,J}\eta}\right)\lambda_{\max}(P_J)C_{\mathcal R}^2\eta^3
\nonumber\\
&\quad=(1-\alpha_{P,J}\eta) \mathbb E\|\Delta_k\|_{P_J}^2 +\left(\frac{2}{\alpha_{P,J}\eta}-2\right) \lambda_{\max}(P_J)C_{\mathcal R}^2\eta^3
\nonumber\\
&\quad\leq(1-\alpha_{P,J}\eta) \mathbb E\|\Delta_k\|_{P_J}^2 +\frac{2\lambda_{\max}(P_J)C_{\mathcal R}^2}{\alpha_{P,J}}\eta^2.
\label{eq:strong:coupling:drift:remainder}
\end{align}
Since $H_\eta(x_k)-H_\eta(Y_{k\eta})+\mathcal R_k$ is $\mathcal F_k$ measurable, the tower property and \eqref{eq:strong:sg:error:moment} give
\begin{align}
&\mathbb E\left\langle H_\eta(x_k)-H_\eta(Y_{k\eta})+\mathcal R_k,\zeta_k \right\rangle_{P_J} =\mathbb E\left[ \left\langle H_\eta(x_k)-H_\eta(Y_{k\eta})+\mathcal R_k, \mathbb E[\zeta_k| \mathcal F_k] \right\rangle_{P_J}\right] =0.
\label{eq:strong:coupling:cross:term}
\end{align}
Expanding the square in~\eqref{eq:strong:coupling:delta:update} and using~\eqref{eq:strong:coupling:cross:term}, \eqref{eq:strong:coupling:drift:remainder}, and \eqref{eq:strong:sg:error:moment}, we obtain
\begin{align}
\mathbb E\|\Delta_{k+1}\|_{P_J}^2 &=\mathbb E\|H_\eta(x_k)-H_\eta(Y_{k\eta}) +\mathcal R_k\|_{P_J}^2 -2\eta\mathbb E\left\langle H_\eta(x_k)-H_\eta(Y_{k\eta})+\mathcal R_k,\zeta_k \right\rangle_{P_J} +\eta^2\mathbb E\|\zeta_k\|_{P_J}^2
\nonumber\\
&\leq(1-\alpha_{P,J}\eta)\mathbb E\|\Delta_k\|_{P_J}^2 +\left(\frac{2\lambda_{\max}(P_J)C_{\mathcal R}^2}{\alpha_{P,J}} +\sigma_{P,J}^2d\right)\eta^2.
\label{eq:strong:coupling:recursion}
\end{align}
For any initial coupling of $x_0\sim\nu_0$ and $Y_0\sim\pi_\delta$, iterating~\eqref{eq:strong:coupling:recursion} gives, for $K\geq1$,
\begin{align}
\mathbb E\|\Delta_K\|_{P_J}^2 &\leq(1-\alpha_{P,J}\eta)^K \mathbb E\|\Delta_0\|_{P_J}^2 +\left(\frac{2\lambda_{\max}(P_J)C_{\mathcal R}^2}{\alpha_{P,J}} +\sigma_{P,J}^2d\right)\eta^2 \sum_{j=0}^{K-1}(1-\alpha_{P,J}\eta)^j
\nonumber\\
&\leq(1-\alpha_{P,J}\eta)^K \mathbb E\|\Delta_0\|_{P_J}^2 +\eta\left( \frac{2\lambda_{\max}(P_J)C_{\mathcal R}^2}{\alpha_{P,J}^2} +\frac{\sigma_{P,J}^2d}{\alpha_{P,J}}\right),
\label{eq:strong:coupling:iteration}
\end{align}
where we use $ \sum_{j=0}^{K-1}(1-\alpha_{P,J}\eta)^j \leq\frac1{\alpha_{P,J}\eta}. $ By construction, $x_K\sim\nu_K$. Since $(Y_t)_{t\geq0}$ is stationary with invariant law $\pi_\delta$, $Y_{K\eta}\sim\pi_\delta$. Hence the joint law of $(x_K,Y_{K\eta})$ is a coupling of $\nu_K$ and $\pi_\delta$, and
\begin{align}
\mathcal W_{2,P_J}^2(\nu_K,\pi_\delta) &=\inf_{\gamma\in\Gamma(\nu_K,\pi_\delta)} \int_{\mathbb R^d\times\mathbb R^d} \|x-y\|_{P_J}^2\gamma(dx,dy) \leq\mathbb E\|x_K-Y_{K\eta}\|_{P_J}^2 =\mathbb E\|\Delta_K\|_{P_J}^2.
\label{eq:strong:coupling:W2:cost}
\end{align}
Combining~\eqref{eq:strong:coupling:iteration} and \eqref{eq:strong:coupling:W2:cost}, and then taking the infimum over all initial couplings of $\nu_0$ and $\pi_\delta$, we obtain
\begin{align}
\mathcal W_{2,P_J}^2(\nu_K,\pi_\delta) &\leq(1-\alpha_{P,J}\eta)^K \mathcal W_{2,P_J}^2(\nu_0,\pi_\delta) +\eta\left( \frac{2\lambda_{\max}(P_J)C_{\mathcal R}^2}{\alpha_{P,J}^2} +\frac{\sigma_{P,J}^2d}{\alpha_{P,J}}\right).
\label{eq:strong:coupling:W2:squared}
\end{align}
The same display is immediate for $K=0$. Moreover,
$$
(1-\alpha_{P,J}\eta)^{K/2} \leq e^{-\alpha_{P,J}K\eta/2}.
$$
Taking square roots gives
\begin{align}
\mathcal W_{2,P_J}(\nu_K,\pi_\delta) &\leq(1-\alpha_{P,J}\eta)^{K/2} \mathcal W_{2,P_J}(\nu_0,\pi_\delta) +\sqrt\eta\left( \frac{\sqrt{2\lambda_{\max}(P_J)}C_{\mathcal R}}{\alpha_{P,J}} +\frac{\sigma_{P,J}\sqrt d}{\sqrt{\alpha_{P,J}}}\right)
\nonumber\\
&\leq e^{-\alpha_{P,J}K\eta/2} \mathcal W_{2,P_J}(\nu_0,\pi_\delta) +\sqrt\eta\left( \frac{\sqrt{2\lambda_{\max}(P_J)}C_{\mathcal R}}{\alpha_{P,J}} +\frac{\sigma_{P,J}\sqrt d}{\sqrt{\alpha_{P,J}}}\right),
\label{eq:W:P:J}
\end{align}
Norm equivalence gives
\begin{equation}
\sqrt{\lambda_{\min}(P_J)}\mathcal W_2(\mu_1,\mu_2) \leq\mathcal W_{2,P_J}(\mu_1,\mu_2) \leq\sqrt{\lambda_{\max}(P_J)}\mathcal W_2(\mu_1,\mu_2),
\label{eq:strong:W2:norm:equivalence}
\end{equation}
and therefore
\begin{equation}
\mathcal W_2(\nu_K,\pi) \leq\frac{\mathcal W_{2,P_J}(\nu_K,\pi_\delta)} {\sqrt{\lambda_{\min}(P_J)}}+\mathcal W_2(\pi_\delta,\pi).
\end{equation}
Substituting~\eqref{eq:W:P:J} into the preceding display gives \eqref{eq:strong:W2:constrained}. The proof is complete.

\subsection{Proof of Theorem~\ref{thm:PJ:wasserstein:convergence}}
\label{proof:thm:PJ:wasserstein:convergence}
Lemma~\ref{lemma:PJ:sg:second:moment} gives $\sup_k\mathbb E\|x_k\|^2\leq C_x^{\mathrm{sg}}$, and Lemma~\ref{lemma:uniform:bound} gives the target moments required by Lemma~\ref{lemma:1:step:error}. Thus the synchronous coupling in the proof of Theorem~\ref{thm:wasserstein:mu} applies, with an arbitrary finite cost initial coupling of $\nu_0$ and $\pi_\delta$. Retain its notation $\Delta_k=x_k-Y_{k\eta}$, $\mathcal R_k$, and $\mathcal F_k$, and write
$$
\mathcal E_{k+1}=(I+J(x_k))(\widetilde\nabla f(x_k,w_k)-\nabla f(x_k)).
$$
The same independence and conditional unbiasedness give $\mathbb E[\mathcal E_{k+1}\mid\mathcal F_k]=0$. Here Assumption~\ref{assumption:sg} and skew symmetry yield
\begin{align}
\mathbb E\|\mathcal E_{k+1}\|_{P_J}^2&\leq2\lambda_{\max}(P_J)(1+M_J^2)\sigma^2\left(L^2\mathbb E\|x_k\|^2+\|\nabla f(0)\|^2\right)
\nonumber\\
&\leq2\lambda_{\max}(P_J)(1+M_J^2)\sigma^2\left(L^2C_x^{\mathrm{sg}}+\|\nabla f(0)\|^2\right).
\label{eq:PJ:coupling:sg:moment}
\end{align}
The drift and remainder estimate~\eqref{eq:strong:coupling:drift:remainder} uses only Lemmas~\ref{lemma:H} and~\ref{lemma:1:step:error}, so it remains valid under the present assumptions. Substituting~\eqref{eq:PJ:coupling:sg:moment} into the squared coupling update gives
\begin{align}
\mathbb E\|\Delta_{k+1}\|_{P_J}^2&=\mathbb E\|H_\eta(x_k)-H_\eta(Y_{k\eta})+\mathcal R_k\|_{P_J}^2+\eta^2\mathbb E\|\mathcal E_{k+1}\|_{P_J}^2
\nonumber\\
&\leq(1-\alpha_{P,J}\eta)\mathbb E\|\Delta_k\|_{P_J}^2+\frac{2\lambda_{\max}(P_J)C_{\mathcal R}^2}{\alpha_{P,J}}\eta^2
\nonumber\\
&\qquad+2\lambda_{\max}(P_J)(1+M_J^2)\sigma^2\left(L^2C_x^{\mathrm{sg}}+\|\nabla f(0)\|^2\right)\eta^2.
\end{align}
Iteration and minimization over the initial coupling, as in~\eqref{eq:strong:coupling:iteration}-\eqref{eq:strong:coupling:W2:squared}, give, for $K\in\mathbb N_0$,
\begin{align}
\mathcal W_{2,P_J}^2(\nu_K,\pi_\delta)&\leq e^{-\alpha_{P,J}K\eta}\mathcal W_{2,P_J}^2(\nu_0,\pi_\delta)
\nonumber\\
&\qquad+\eta\left[\frac{2\lambda_{\max}(P_J)C_{\mathcal R}^2}{\alpha_{P,J}^2}+\frac{2\lambda_{\max}(P_J)(1+M_J^2)\sigma^2}{\alpha_{P,J}}\left(L^2C_x^{\mathrm{sg}}+\|\nabla f(0)\|^2\right)\right].
\end{align}
Taking square roots and applying the norm equivalence~\eqref{eq:strong:W2:norm:equivalence} proves~\eqref{eq:PJ:W2:convergence}.

\subsection{Proof of Corollary~\ref{cor:PJ:complexity}}
\label{proof:cor:PJ:complexity}
To derive the complexity bound, write the two terms in~\eqref{eq:PJ:W2:convergence} as
\begin{align}
T_1(K,\eta) &:= \sqrt{\kappa(P_J)} \exp\left(-\frac12\alpha_{P,J}K\eta\right) \mathcal W_2(\nu_0,\pi_\delta),
\nonumber\\
T_2(\eta) &:=\frac{\sqrt{2\lambda_{\max}(P_J)}C_{\mathcal R}} {\alpha_{P,J}\sqrt{\lambda_{\min}(P_J)}}\sqrt\eta +\frac{\sqrt\eta}{\sqrt{\lambda_{\min}(P_J)}} \left( \frac{2\lambda_{\max}(P_J)(1+M_J^2)\sigma^2} {\alpha_{P,J}} \left( L^2C_x^{\mathrm{sg}} +\|\nabla f(0)\|^2 \right) \right)^{1/2}.
\end{align}
Using $(a+b)^2\leq2a^2+2b^2$ and the stepsize in \eqref{eq:PJ:complexity:accuracy:stepsize} gives
\begin{align}
T_2(\eta)^2 &\leq \frac{4\lambda_{\max}(P_J)\eta}{\lambda_{\min}(P_J)} \Bigg[ \frac{C_{\mathcal R}^2}{\alpha_{P,J}^2} +\frac{(1+M_J^2)\sigma^2}{\alpha_{P,J}} \left( L^2C_x^{\mathrm{sg}} +\|\nabla f(0)\|^2 \right) \Bigg] \leq\frac{\varepsilon^2}{4}.
\end{align}
Hence $T_2(\eta)\leq\varepsilon/2$. It remains to make $T_1(K,\eta)\leq\varepsilon/2$. If $2\sqrt{\kappa(P_J)}\mathcal W_2(\nu_0,\pi_\delta)\leq\varepsilon$, then
\begin{equation}
T_1(0,\eta) =\sqrt{\kappa(P_J)}\mathcal W_2(\nu_0,\pi_\delta) \leq\frac\varepsilon2.
\end{equation}
Otherwise, $2\sqrt{\kappa(P_J)}\mathcal W_2(\nu_0,\pi_\delta)>\varepsilon$, and the requirement $T_1(K,\eta)\leq\varepsilon/2$ is equivalent to
\begin{align}
\sqrt{\kappa(P_J)} \exp\left(-\frac12\alpha_{P,J}K\eta\right) \mathcal W_2(\nu_0,\pi_\delta) &\leq\frac{\varepsilon}{2}.
\end{align}
Equivalently,
\begin{align}
\frac12\alpha_{P,J}K\eta &\geq \log\left( \frac{2\sqrt{\kappa(P_J)} \mathcal W_2(\nu_0,\pi_\delta)} {\varepsilon} \right).
\end{align}
This is exactly~\eqref{eq:PJ:complexity:K:lower}. Thus $T_1(K,\eta)\leq\varepsilon/2$ in both cases. In the nontrivial case~\eqref{eq:PJ:complexity:initial:large}, choose
\begin{equation}
K=\left\lceil \frac{2}{\alpha_{P,J}\eta} \log\left( \frac{2\sqrt{\kappa(P_J)}\mathcal W_2(\nu_0,\pi_\delta)} {\varepsilon} \right) \right\rceil.
\end{equation}
For the stated choice of $\eta$,
\begin{align}
\frac{1}{\alpha_{P,J}\eta} &=\mathcal O\Bigg( \frac1{\alpha_{P,J}}\vee \frac{L_{P,J}^2}{\alpha_{P,J}^2}\vee \frac{L_\delta(1+M_J^2)}{\alpha_{P,J}} \vee \frac{L_\delta(1+M_J^2)\sigma^2L^2} {\alpha_{P,J}q_{U,\delta}c_{U,1}}
\nonumber\\
&\qquad \vee \frac{\lambda_{\max}(P_J)} {\alpha_{P,J}\lambda_{\min}(P_J)\varepsilon^2} \left[ \frac{C_{\mathcal R}^2}{\alpha_{P,J}^2} +\frac{(1+M_J^2)\sigma^2}{\alpha_{P,J}} \left( L^2C_x^{\mathrm{sg}} +\|\nabla f(0)\|^2 \right) \right] \Bigg).
\label{eq:PJ:complexity:inverse:stepsize}
\end{align}
Substitution of~\eqref{eq:PJ:complexity:inverse:stepsize} with the logarithmic factor hidden by $\widetilde{\mathcal O}$ proves \eqref{eq:PJ:complexity:K:order}. Finally, combining the two estimates yields
\begin{equation}
\mathcal W_2(\nu_K,\pi_\delta) \leq T_1(K,\eta)+T_2(\eta) \leq \varepsilon.
\end{equation}
And for $\mathcal W_2(\pi_\delta,\pi)\leq\varepsilon$, the triangle inequality then gives
\begin{equation}
\mathcal W_2(\nu_K,\pi) \leq \mathcal W_2(\nu_K,\pi_\delta)+\mathcal W_2(\pi_\delta,\pi) \leq 2\varepsilon.
\end{equation}
The proof is complete.

\subsection{Proof of Proposition~\ref{prop:PJ:penalty:hessian:block}}
\label{proof:prop:PJ:penalty:hessian:block}
We obtain from~\eqref{eq:PJ:penalty:block:definition},
$$
(I+J_a)H_\delta =\begin{pmatrix}
\lambda&a\Lambda_\delta\\
-a\lambda&\Lambda_\delta
\end{pmatrix}.
$$
Therefore,
\begin{align}
\det\left(zI-(I+J_a)H_\delta\right) &=z^2-(\lambda+\Lambda_\delta)z +(1+a^2)\lambda\Lambda_\delta,
\nonumber\\
\mu_{\delta,\pm}(a) &=\frac{\lambda+\Lambda_\delta \pm\sqrt{(\Lambda_\delta-\lambda)^2 -4a^2\lambda\Lambda_\delta}}2.
\label{eq:PJ:penalty:block:eigenvalues}
\end{align}
For fixed $a$,
\begin{equation}
\frac{(\Lambda_\delta-\lambda)^2 -4a^2\lambda\Lambda_\delta}{\Lambda_\delta^2} =1-\frac{2(1+2a^2)\lambda}{\Lambda_\delta} +\frac{\lambda^2}{\Lambda_\delta^2}\to1,\quad\Lambda_\delta\rightarrow\infty.
\end{equation}
The discriminant is positive for all sufficiently small $\delta$. The trace and determinant in \eqref{eq:PJ:penalty:block:eigenvalues} are positive, so both roots are positive. Rationalizing the smaller root gives
$$
\mu_{\delta,-}(a) =\frac{2(1+a^2)\lambda\Lambda_\delta} {\lambda+\Lambda_\delta +\sqrt{(\Lambda_\delta-\lambda)^2 -4a^2\lambda\Lambda_\delta}} \to(1+a^2)\lambda,\quad\Lambda_\delta\rightarrow\infty.
$$
Since the sum of the two roots is $\lambda+\Lambda_\delta$, the second limit in \eqref{eq:PJ:penalty:fixed:J:rates} follows from the preceding limit. Substitution of~\eqref{eq:PJ:penalty:tuned:J} into the discriminant in~\eqref{eq:PJ:penalty:block:eigenvalues} gives
$$
(\Lambda_\delta-\lambda)^2 -4a_\delta^2\lambda\Lambda_\delta=0.
$$
Both roots consequently equal $(\lambda+\Lambda_\delta)/2$. Moreover,
$$
a_\delta^2 =\frac{\Lambda_\delta}{4\lambda}-\frac12 +\frac{\lambda}{4\Lambda_\delta},
$$
which proves~\eqref{eq:PJ:penalty:tuned:rate}.

\subsection{Proof of Proposition~\ref{prop:PJ:penalty:Euler:invariant:bias}}
\label{proof:prop:PJ:penalty:Euler:invariant:bias}
Write $M=I-\eta(I+J_a)H_\delta$, with $a$ equal to the fixed coefficient, $0$, or $a_\delta$. Proposition~\ref{prop:PJ:penalty:hessian:block} and~\eqref{eq:PJ:penalty:Euler:bias:stepsize} give $0<\mu_{\delta,\pm}(a)<\lambda+\Lambda_\delta$ and $|1-\eta\mu_{\delta,\pm}(a)|<1$, so $\rho(M)<1$. The discrete Lyapunov theorem~\cite{boyd2009lyapunov} gives
\begin{equation}
\Sigma_{\delta,a}=2\eta\sum_{j=0}^\infty M^j(M^j)^\top=M\Sigma_{\delta,a}M^\top+2\eta I.
\end{equation}
Thus $\mathcal N(0,\Sigma_{\delta,a})$ is invariant for the centered recursion. Iterating that recursion, $M^kZ_0\to0$ in probability and the Gaussian noise covariance tends to $\Sigma_{\delta,a}$, so every initial law converges to this Gaussian. Consequently, $\gamma_\delta^a=\mathcal N(x_\delta^*,\Sigma_{\delta,a})$ is the unique invariant law. Since $\pi_\delta=\mathcal N(x_\delta^*,H_\delta^{-1})$ and
$$
MH_\delta^{-1}M^\top+2\eta I=H_\delta^{-1}+\eta^2(I+J_a)H_\delta(I-J_a),
$$
the same Lyapunov formula applied to the covariance difference gives
\begin{equation}
\Sigma_{\delta,a}-H_\delta^{-1}=\eta^2\sum_{j=0}^\infty M^j(I+J_a)H_\delta(I-J_a)(M^j)^\top\succeq0.
\label{eq:PJ:penalty:Euler:bias:series}
\end{equation}
Coupling $X\sim\pi_\delta$ with $X+G$, where $G\sim\mathcal N(0,\Sigma_{\delta,a}-H_\delta^{-1})$ is independent of $X$, yields
$$
\mathcal W_2^2(\gamma_\delta^a,\pi_\delta)\leq\mathbb E\|G\|^2=\mathrm{Tr}(\Sigma_{\delta,a}-H_\delta^{-1}).
$$

\paragraph{Fixed coefficient $a\neq0$.} The eigenvector matrix
$$
R_{\delta,a}=\begin{pmatrix}1&\dfrac{a\Lambda_\delta}{\mu_{\delta,+}(a)-\lambda}\\ \dfrac{\mu_{\delta,-}(a)-\lambda}{a\Lambda_\delta}&1\end{pmatrix},\quad (I+J_a)H_\delta R_{\delta,a}=R_{\delta,a}\operatorname{diag}(\mu_{\delta,-}(a),\mu_{\delta,+}(a)),
$$
satisfies $R_{\delta,a}\to\left(\begin{smallmatrix}1&a\\0&1\end{smallmatrix}\right)$ as $\delta\to0$ by~\eqref{eq:PJ:penalty:fixed:J:rates}. Hence, for all sufficiently small $\delta$,
$$
\|R_{\delta,a}\|_{\mathrm{op}}\|R_{\delta,a}^{-1}\|_{\mathrm{op}}\leq2(2+a^2),\qquad \mu_{\delta,-}(a)\geq\frac{(1+a^2)\lambda}{2}.
$$
Since $\eta(\mu_{\delta,-}(a)+\mu_{\delta,+}(a))<2$, we have $0<\eta\mu_{\delta,-}(a)<1$ and $|1-\eta\mu_{\delta,+}(a)|\leq1-\eta\mu_{\delta,-}(a)$. Diagonalization therefore gives
\begin{equation}
\|M^j\|_{\mathrm{op}}\leq\|R_{\delta,a}\|_{\mathrm{op}}\|R_{\delta,a}^{-1}\|_{\mathrm{op}}(1-\eta\mu_{\delta,-}(a))^j\leq2(2+a^2)(1-\eta\mu_{\delta,-}(a))^j.
\label{eq:PJ:penalty:Euler:fixed:power:bound}
\end{equation}
Using~\eqref{eq:PJ:penalty:Euler:bias:series} and $\mathrm{Tr}((I+J_a)H_\delta(I-J_a))=(1+a^2)(\lambda+\Lambda_\delta)$,
$$
\begin{aligned}
\mathcal W_2^2(\gamma_\delta^a,\pi_\delta)&\leq\eta^2(1+a^2)(\lambda+\Lambda_\delta)\sum_{j=0}^\infty\|M^j\|_{\mathrm{op}}^2\\
&\leq\frac{4(2+a^2)^2\eta^2(1+a^2)(\lambda+\Lambda_\delta)}{1-(1-\eta\mu_{\delta,-}(a))^2}\\
&\leq\frac{4(2+a^2)^2\eta(1+a^2)(\lambda+\Lambda_\delta)}{\mu_{\delta,-}(a)}\leq8(2+a^2)^2\frac{\eta(\lambda+\Lambda_\delta)}\lambda.
\end{aligned}
$$

\paragraph{Reversible case $a=0$.} The diagonal covariance equation and $\eta<2/(\lambda+\Lambda_\delta)$ give
$$
\begin{aligned}
\mathcal W_2^2(\gamma_\delta^0,\pi_\delta)&\leq\mathrm{Tr}(\Sigma_{\delta,0}-H_\delta^{-1})=\frac{\eta}{2-\eta\lambda}+\frac{\eta}{2-\eta\Lambda_\delta}<\frac{\eta(\lambda+\Lambda_\delta)}{2\Lambda_\delta}+\frac{\eta(\lambda+\Lambda_\delta)}{2\lambda}\leq\frac{\eta(\lambda+\Lambda_\delta)}\lambda.
\end{aligned}
$$

\paragraph{Tuned coefficient $a_\delta$.} Put $c=1-\eta(\lambda+\Lambda_\delta)/2\in(0,1)$. By~\eqref{eq:PJ:penalty:tuned:J},
$$
H_\delta^{1/2}MH_\delta^{-1/2}=cI-\frac{\eta(\Lambda_\delta-\lambda)}2\begin{pmatrix}-1&1\\-1&1\end{pmatrix},\qquad \begin{pmatrix}-1&1\\-1&1\end{pmatrix}^{\!2}=0.
$$
Thus, for $j\geq1$,
$$
\begin{aligned}
H_\delta^{1/2}M^jH_\delta^{-1/2}&=c^jI-\frac{j\eta(\Lambda_\delta-\lambda)}2c^{j-1}\begin{pmatrix}-1&1\\-1&1\end{pmatrix},\\
\|H_\delta^{1/2}M^jH_\delta^{-1/2}\|_{\mathrm{op}}^2&\leq2c^{2j}+2j^2\eta^2(\Lambda_\delta-\lambda)^2c^{2j-2}.
\end{aligned}
$$
Since $1-c^2\geq\eta(\lambda+\Lambda_\delta)/2$, summing gives
$$
\begin{aligned}
\sum_{j=0}^\infty\|H_\delta^{1/2}M^jH_\delta^{-1/2}\|_{\mathrm{op}}^2&\leq\frac2{1-c^2}+\frac{2\eta^2(\Lambda_\delta-\lambda)^2(1+c^2)}{(1-c^2)^3}\\
&\leq\frac4{\eta(\lambda+\Lambda_\delta)}+\frac{32}{\eta(\lambda+\Lambda_\delta)}=\frac{36}{\eta(\lambda+\Lambda_\delta)}.
\end{aligned}
$$
Finally,
$$
\|H_\delta^{1/2}(I+J_{a_\delta})H_\delta^{1/2}\|_{\mathrm F}^2=\lambda^2+\Lambda_\delta^2+\frac{(\Lambda_\delta-\lambda)^2}{2}\leq\frac32(\lambda+\Lambda_\delta)^2,
$$
so~\eqref{eq:PJ:penalty:Euler:bias:series} yields
$$
\begin{aligned}
\mathcal W_2^2(\gamma_\delta^{a_\delta},\pi_\delta)&\leq\eta^2\sum_{j=0}^\infty\|M^j(I+J_{a_\delta})H_\delta^{1/2}\|_{\mathrm F}^2\\
&\leq\frac{\eta^2}{\lambda}\|H_\delta^{1/2}(I+J_{a_\delta})H_\delta^{1/2}\|_{\mathrm F}^2\sum_{j=0}^\infty\|H_\delta^{1/2}M^jH_\delta^{-1/2}\|_{\mathrm{op}}^2\\
&\leq\frac{\eta^2}{\lambda}\frac{3(\lambda+\Lambda_\delta)^2}{2}\frac{36}{\eta(\lambda+\Lambda_\delta)}=54\frac{\eta(\lambda+\Lambda_\delta)}\lambda.
\end{aligned}
$$
The proof is complete.

\subsection{Proof of Proposition~\ref{prop:PJ:penalty:Euler:acceleration}}
\label{proof:prop:PJ:penalty:Euler:acceleration}
For each of the three coefficients, put $M=I-\eta(I+J_a)H_\delta$ and use the corresponding stepsize in~\eqref{eq:PJ:penalty:Euler:accuracy:stepsizes}. Couple the centered chain with a stationary copy using the same Gaussian increments. Then
$$
Z_k-\widetilde Z_k=M^k(Z_0-\widetilde Z_0),\qquad \mathcal W_2(\nu_k^a,\gamma_\delta^a)\leq\|M^k\|_{\mathrm{op}}\mathcal W_2(\nu_0^a,\gamma_\delta^a).
$$
Proposition~\ref{prop:PJ:penalty:Euler:invariant:bias} and the chosen stepsizes give $\mathcal W_2(\gamma_\delta^a,\pi_\delta)\leq\varepsilon/2$ in all three cases. Hence
\begin{equation}
\mathcal W_2(\nu_k^a,\pi_\delta)\leq\|M^k\|_{\mathrm{op}}\mathcal W_2(\nu_0^a,\gamma_\delta^a)+\frac\varepsilon2.
\label{eq:PJ:penalty:Euler:common:coupling}
\end{equation}

\paragraph{Fixed coefficient $a\neq0$.} By~\eqref{eq:PJ:penalty:Euler:fixed:power:bound} and the bound $\mu_{\delta,-}(a)\geq(1+a^2)\lambda/2$ established there,
$$
\|M^k\|_{\mathrm{op}}\leq2(2+a^2)(1-\eta\mu_{\delta,-}(a))^k\leq2(2+a^2)e^{-k\eta(1+a^2)\lambda/2}.
$$

\paragraph{Reversible case $a=0$.} Since $\eta\leq(\lambda+\Lambda_\delta)^{-1}$,
$$
\|M^k\|_{\mathrm{op}}=\|(I-\eta H_\delta)^k\|_{\mathrm{op}}=(1-\eta\lambda)^k\leq e^{-k\eta\lambda}.
$$

\paragraph{Tuned coefficient $a_\delta$.} For $0<\eta\leq(\lambda+\Lambda_\delta)^{-1}$, put $c=1-\eta(\lambda+\Lambda_\delta)/2\in[1/2,1)$. The matrix power formula in the proof of Proposition~\ref{prop:PJ:penalty:Euler:invariant:bias} gives, for $k\geq1$,
$$
\|H_\delta^{1/2}M^kH_\delta^{-1/2}\|_{\mathrm{op}}\leq c^k+k\eta(\Lambda_\delta-\lambda)c^{k-1}.
$$
Since $-\log c\geq\eta(\lambda+\Lambda_\delta)/2$ and $c\geq1/2$,
$$
k\eta(\Lambda_\delta-\lambda)c^{k/2-1}\leq2k\eta(\lambda+\Lambda_\delta)e^{-k\eta(\lambda+\Lambda_\delta)/4}\leq\frac8e.
$$
Consequently, for $k\geq0$,
\begin{equation}
\begin{aligned}
\|M^k\|_{\mathrm{op}}&\leq\sqrt{\frac{\Lambda_\delta}{\lambda}}\|H_\delta^{1/2}M^kH_\delta^{-1/2}\|_{\mathrm{op}}\\
&\leq\sqrt{\frac{\Lambda_\delta}{\lambda}}\left(1+\frac8e\right)c^{k/2}\leq\sqrt{\frac{\Lambda_\delta}{\lambda}}\left(1+\frac8e\right)e^{-k\eta(\lambda+\Lambda_\delta)/4}.
\end{aligned}
\end{equation}
In particular,
\begin{equation}
\mathcal W_2(\nu_k^{a_\delta},\gamma_\delta^{a_\delta})\leq\sqrt{\frac{\Lambda_\delta}{\lambda}}\left(1+\frac8e\right)e^{-k\eta_{\mathrm{tuned}}(\lambda+\Lambda_\delta)/4}\mathcal W_2(\nu_0^{a_\delta},\gamma_\delta^{a_\delta}).
\end{equation}
Substituting the three matrix bounds into~\eqref{eq:PJ:penalty:Euler:common:coupling}, the respective conditions on $K$ in~\eqref{eq:PJ:penalty:Euler:fixed:complexity}, \eqref{eq:PJ:penalty:Euler:reversible:complexity}, and~\eqref{eq:PJ:penalty:Euler:tuned:complexity} make the first term at most $\varepsilon/2$. This proves all three assertions.

\section{Proof Details of Supporting Lemmas}

We provide proofs of supporting lemmas in this section.


\subsection{Proof of Lemma~\ref{lemma:uniform:bound}}
\label{proof:fourth:moment}
Lemma~\ref{lemma:target:moments:dissipative} proves~\eqref{eq:moment:target:bounds} and~\eqref{eq:moment:target:gradient:bounds}. Lemma~\ref{lemma:euler:lyapunov:bounds} gives the two Lyapunov recursions used for the Euler iterates. For every unit vector $u$ and $t>0$,
$$
m_\delta-\frac{b_\delta}{t^2}\leq\frac{\langle tu,\nabla V_\delta(tu)\rangle}{t^2}\leq L_\delta+\frac{g_\delta}{t}.
$$
Hence $m_\delta\leq L_\delta$, and the stepsize condition for the second moment gives
\begin{equation}
0<q_{U,\delta}=\frac{m_\delta^2}{c_{U,2}}\leq\frac{2L_\delta^2}{L_\delta+1}\leq2L_\delta,\qquad 0<\frac{q_{U,\delta}\eta}{2}\leq L_\delta\eta\leq\frac1{(1+M_J)^2}\leq1.
\label{eq:moment:first:coefficient}
\end{equation}
Iteration of~\eqref{eq:moment:euler:first:recursion} therefore yields
\begin{equation}
\sup_{k\geq0}\mathbb E U_\delta(X_{k\eta})\leq\max\left\{\mathbb E_{\nu_0}U_\delta(X_0),\frac{r_{U,\delta}+2L_\delta d}{q_{U,\delta}}\right\}=:C_{U,\delta}^{\rm E}.
\label{eq:moment:euler:first:uniform:constant}
\end{equation}
Define
\begin{equation}
C_x:=\frac{C_{U,\delta}^{\rm E}}{c_{U,1}},\qquad C_{\nabla,2}:=2L_\delta C_{U,\delta}^{\rm E}.
\label{eq:moment:euler:second:constants}
\end{equation}
The bounds in~\eqref{eq:cU1:explicit} and~\eqref{eq:euler:gradient:by:potential} prove~\eqref{eq:2nd:uniform:bound:general}.

Under~\eqref{eq:moment:eta4:explicit}, $\mathbb E_{\nu_0}U_\delta(X_0)^2\leq c_{U,2}^2\mathbb E_{\nu_0}(1+\|X_0\|^2)^2<\infty$, and~\eqref{eq:moment:first:coefficient} gives $0\leq1-q_{U,\delta}\eta/4<1$. Iteration of~\eqref{eq:moment:euler:lyapunov:drift} gives
\begin{equation}
\sup_{k\geq0}\mathbb E U_\delta(X_{k\eta})^2\leq\max\left\{\mathbb E_{\nu_0}U_\delta(X_0)^2,\frac{4R_{U,\delta}}{q_{U,\delta}}\right\}=:C_{U,2,\delta}^{\mathrm E}.
\label{eq:moment:euler:U:fourth:uniform}
\end{equation}
Consequently,
\begin{align}
\sup_{k\geq0}\mathbb E\|X_{k\eta}\|^4&\leq c_{U,1}^{-2}C_{U,2,\delta}^{\mathrm E}=:C_{4,\delta}^{\mathrm E},
\label{eq:moment:euler:fourth:state}\\
\sup_{k\geq0}\mathbb E\|\nabla V_\delta(X_{k\eta})\|^4&\leq4L_\delta^2C_{U,2,\delta}^{\mathrm E}=:C_{\nabla,4}^{\mathrm E}.
\label{eq:moment:euler:fourth:constants}
\end{align}
This proves~\eqref{eq:euler:fourth:moment:bound}.


\subsection{Proof of Lemma~\ref{lemma:PJ:sg:second:moment}}
\label{proof:lemma:PJ:sg:second:moment}
For the update~\eqref{eq:pnsgld}, write
\begin{align}
x_{k+1}&=x_k+\Delta_{k+1}^{\rm sg},
\nonumber\\
\Delta_{k+1}^{\rm sg} &=-\eta(I+J(x_k)) \left[ \nabla V_\delta(x_k) +\widetilde\nabla f(x_k)-\nabla f(x_k) \right] +\sqrt{2\eta}\xi_{k+1}.
\label{eq:moment:sg:increment}
\end{align}
Conditioning on $x_k=x$, the centering in Assumption~\ref{assumption:sg} and the skew symmetry of $J(x)$ give
\begin{align}
&\mathbb E\left[ \left\langle\nabla V_\delta(x), \Delta_{k+1}^{\rm sg}\right\rangle \vert x_k=x\right] =-\eta\|\nabla V_\delta(x)\|^2,
\label{eq:moment:sg:linear}\\
&\mathbb E\left[ \|\Delta_{k+1}^{\rm sg}\|^2 \vert x_k=x\right] =\eta^2\mathbb E\Bigl[ \bigl\|(I+J(x)) \bigl(\nabla V_\delta(x)+\widetilde\nabla f(x_k)-\nabla f(x)\bigr) \bigr\|^2\,\Bigm|\,x_k=x\Bigr] +2d\eta
\nonumber\\
&\qquad \leq\eta^2(1+M_J^2) \Bigl[ \|\nabla V_\delta(x)\|^2 +2\sigma^2 \left(L^2\|x\|^2+\|\nabla f(0)\|^2\right) \Bigr]+2d\eta.
\label{eq:moment:sg:quadratic}
\end{align}
Indeed, the mixed term containing $\widetilde\nabla f(x_k)-\nabla f(x_k)$ has conditional expectation zero, and
\begin{equation}
\|I+J(x)\|_{\mathrm{op}}^2 =1+\|J(x)\|_{\mathrm{op}}^2 \leq1+M_J^2.
\end{equation}
Applying $L_\delta$ smoothness of $U_\delta$ to \eqref{eq:moment:sg:increment}, taking the conditional expectation, and using~\eqref{eq:moment:sg:linear} and \eqref{eq:moment:sg:quadratic}, we obtain
\begin{align}
\mathbb E\left[U_\delta(x_{k+1})| x_k=x\right] &\leq U_\delta(x) -\eta\left(1-\frac{L_\delta\eta(1+M_J^2)}2\right) \|\nabla V_\delta(x)\|^2
\nonumber\\
&\qquad +L_\delta\eta^2(1+M_J^2)\sigma^2 \left(L^2\|x\|^2+\|\nabla f(0)\|^2\right) +L_\delta d\eta.
\end{align}
Under~\eqref{eq:PJ:sg:moment:stepsize}, the coefficient of $\|\nabla V_\delta(x)\|^2$ is at least $1/2$. Therefore \eqref{eq:grad:lower:by:U} and~\eqref{eq:cU1:explicit} give
\begin{align}
\mathbb E\left[U_\delta(x_{k+1})| x_k=x\right] &\leq U_\delta(x) -\frac\eta2\left(q_{U,\delta}U_\delta(x)-r_{U,\delta}\right)+\frac{L_\delta\eta^2(1+M_J^2)\sigma^2L^2} {c_{U,1}}U_\delta(x)
\nonumber\\
&\qquad+L_\delta\eta^2(1+M_J^2)\sigma^2 \|\nabla f(0)\|^2+L_\delta d\eta
\nonumber\\
&\leq\left(1-\frac{q_{U,\delta}\eta}{4}\right) U_\delta(x)+\left( \frac{r_{U,\delta}}2+L_\delta d +L_\delta(1+M_J^2)\sigma^2\|\nabla f(0)\|^2 \right)\eta.
\label{eq:moment:sg:potential:recursion}
\end{align}
Since $q_{U,\delta}\leq2L_\delta$ by \eqref{eq:moment:first:coefficient} and $\eta\leq[L_\delta(1+M_J^2)]^{-1}$, $ 0<\frac{q_{U,\delta}\eta}{4}\leq\frac12. $ Define
\begin{equation}
C_x^{\mathrm{sg}} :=\frac1{c_{U,1}}\max\Bigg\{ \mathbb E_{\nu_0}U_\delta(x_0), \frac4{q_{U,\delta}} \Bigg[ \frac{r_{U,\delta}}2+L_\delta d +L_\delta(1+M_J^2)\sigma^2\|\nabla f(0)\|^2 \Bigg]\Bigg\}<\infty.
\label{eq:Cx:sg:constant}
\end{equation}
Taking expectations in~\eqref{eq:moment:sg:potential:recursion} and iterating the resulting geometric recursion give
\begin{equation}
\sup_{k\geq0}\mathbb E U_\delta(x_k) \leq c_{U,1}C_x^{\mathrm{sg}}.
\end{equation}
Finally, $\|x\|^2\leq U_\delta(x)/c_{U,1}$ from \eqref{eq:cU1:explicit} proves~\eqref{eq:PJ:sg:second:moment}. The proof is complete.


\subsection{Proof of Lemma~\ref{lemma:dissipative:S}}
\label{proof:lemma:dissipative:S}
Let $\mathcal P_{\mathcal K}$ be the Euclidean projection onto $\mathcal K$. By Lemma~\ref{lemma:squared:distance:penalty}, the projection formula and nonexpansiveness give
\begin{align}
\nabla S_{\mathcal K}(x) &=2(x-\mathcal P_{\mathcal K}x),
\nonumber\\
\|\nabla S_{\mathcal K}(x)-\nabla S_{\mathcal K}(y)\| &\leq2\|x-y\|+2\|\mathcal P_{\mathcal K}x-\mathcal P_{\mathcal K}y\| \leq4\|x-y\|.
\end{align}
Since $\|\mathcal P_{\mathcal K}x\|\leq R$,
\begin{align}
\langle x,\nabla S_{\mathcal K}(x)\rangle &=2\|x\|^2-2\langle x,\mathcal P_{\mathcal K}x\rangle
\nonumber\\
&\geq2\|x\|^2-2R\|x\| =\|x\|^2+(\|x\|-R)^2-R^2 \geq\|x\|^2-R^2,
\end{align}
which proves~\eqref{eq:penalty:dissipative:corrected}.


\subsection{Proof of Lemma~\ref{lemma:dissipative}}
\label{proof:lemma:dissipative}
The smoothness constant follows from
\begin{equation}
\|\nabla V_\delta(x)-\nabla V_\delta(y)\| \leq\left(L+\frac\ell\delta\right)\|x-y\|.
\end{equation}
Moreover,
\begin{align}
\langle x,\nabla f(x)\rangle &=\langle x,\nabla f(x)-\nabla f(0)\rangle +\langle x,\nabla f(0)\rangle
\nonumber\\
&\geq-L\|x\|^2-\|x\|\|\nabla f(0)\|
\nonumber\\
&\geq-\left(L+\frac12\right)\|x\|^2 -\frac12\|\nabla f(0)\|^2.
\end{align}
Adding $\delta^{-1}\langle x,\nabla S(x)\rangle \geq\delta^{-1}(m_S\|x\|^2-b_S)$ gives
\begin{equation}
\langle x,\nabla V_\delta(x)\rangle \geq m_\delta\|x\|^2-b_\delta.
\end{equation}


\subsection{Proof of Lemma~\ref{lemma:ghz:transfer}}
\label{proof:lemma:ghz:transfer}
Let $\pi^\alpha$ be the Gibbs law with density proportional to $e^{-f}$ on $\mathcal C^\alpha$. We estimate separately the error from replacing $\mathcal C$ by $\mathcal C^\alpha$ and the error from replacing the constraint by the squared distance penalty. We then apply Lemma~\ref{lem:penalty:uniform:lsi} to bound the log Sobolev constant. Throughout the proof, write
$$
G_{\mathcal C}:=\sup_{x\in\mathcal C}\|\nabla f(x)\|,\qquad E_{\mathcal C}:=\exp\left\{\sup_{\mathcal C}f-\inf_{\mathcal C}f\right\}.
$$

\textbf{The change in the constraint set.} If $h$ is strongly convex, choose $\beta>0$ such that $h-\beta\|\cdot\|^2/2$ is convex. Otherwise, set $\beta=0$. When $\beta=0$, assumption~\eqref{eq:ghz:strict:feasibility} gives $h(0)<0$. When $\beta>0$, closedness of $\mathcal C$ and Assumption~\ref{assumption:C} give $\overline{B(0,r)}\subseteq\mathcal C$. Thus, for any unit vector $\mathbf e$, $h(r\mathbf e)\leq0$ and $h(-r\mathbf e)\leq0$, and strong convexity yields
$$
\begin{aligned}
h(0)&\leq\frac{h(r\mathbf e)+h(-r\mathbf e)}2-\frac\beta8\|r\mathbf e-(-r\mathbf e)\|^2\leq-\frac{\beta r^2}2<0.
\end{aligned}
$$
In both cases, $\tau:=-h(0)>0$. For $0<\alpha\leq\tau/R^2$ and $x\in\mathcal C$, convexity and $\|x\|\leq R$ give
\begin{equation}
\begin{aligned}
h\left(\left(1-\frac{\alpha R^2}{2\tau}\right)x\right)+\frac\alpha2\left\|\left(1-\frac{\alpha R^2}{2\tau}\right)x\right\|^2
&\leq\left(1-\frac{\alpha R^2}{2\tau}\right)h(x)+\frac{\alpha R^2}{2\tau}h(0)+\frac{\alpha R^2}{2}\\
&\leq-\frac{\alpha R^2}{2}+\frac{\alpha R^2}{2}=0.
\end{aligned}
\end{equation}
Since $1-\alpha R^2/(2\tau)\geq1/2$ and $\mathcal C^\alpha\subseteq\mathcal C$, this proves
$$
B(0,r/2)\subseteq\left(1-\frac{\alpha R^2}{2\tau}\right)\mathcal C\subseteq\mathcal C^\alpha\subseteq\mathcal C.
$$
The two Gibbs laws have the same weight $e^{-f}$ on $\mathcal C^\alpha$, so
$$
\pi^\alpha=\pi(\cdot\mid\mathcal C^\alpha),\qquad \frac{d\pi^\alpha}{d\pi}=\frac{\mathbf1_{\mathcal C^\alpha}}{\pi(\mathcal C^\alpha)}.
$$
Splitting the total variation integral over $\mathcal C^\alpha$ and its complement gives
$$
\begin{aligned}
\mathrm{TV}(\pi^\alpha,\pi)&=\frac12\int_{\mathcal C}\left|\frac{\mathbf1_{\mathcal C^\alpha}}{\pi(\mathcal C^\alpha)}-1\right|d\pi=\frac12\left[\left(\frac1{\pi(\mathcal C^\alpha)}-1\right)\pi(\mathcal C^\alpha)+\pi(\mathcal C\setminus\mathcal C^\alpha)\right]=\pi(\mathcal C\setminus\mathcal C^\alpha).
\end{aligned}
$$
Using the set inclusion above, volume scaling, and $1-(1-u)^d\leq du$ for $0\leq u\leq1$, we obtain
$$
\begin{aligned}
\mathrm{TV}(\pi^\alpha,\pi)&=\frac{\int_{\mathcal C\setminus\mathcal C^\alpha}e^{-f(x)}\,dx}{\int_{\mathcal C}e^{-f(x)}\,dx}\leq E_{\mathcal C}\frac{|\mathcal C\setminus\mathcal C^\alpha|}{|\mathcal C|}\leq E_{\mathcal C}\left[1-\left(1-\frac{\alpha R^2}{2\tau}\right)^d\right]\leq\frac{E_{\mathcal C}d\alpha R^2}{2\tau}.
\end{aligned}
$$
For $\alpha=0$, $\mathcal C^0=\mathcal C$ and $\pi^0=\pi$, so the same bound holds with both sides zero.

\textbf{The error from the distance penalty.} Put $D=\mathcal C^\alpha$. The preceding argument gives $B(0,r/2)\subseteq D\subseteq\mathcal C$ in both cases. Since $S^\alpha(x)=\operatorname{dist}(x,D)^2$, define the masses inside and outside $D$ by
$$
Z_D:=\int_D e^{-f(x)}\,dx,\qquad Z_{\rm out}:=\int_{D^c}e^{-f(x)-\operatorname{dist}(x,D)^2/\delta}\,dx.
$$
For $x\notin D$, let $p=\mathcal P_Dx$ and $s=\|x-p\|$. Smoothness, $p\in\mathcal C$, and $\delta L\leq1$ imply
$$
\begin{aligned}
f(x)+\frac{s^2}{\delta}&\geq f(p)+\langle\nabla f(p),x-p\rangle+\left(\frac1\delta-\frac L2\right)s^2\geq\inf_{\mathcal C}f-G_{\mathcal C}s+\frac{s^2}{2\delta}\geq\inf_{\mathcal C}f+\frac{s^2}{4\delta}-\delta G_{\mathcal C}^2.
\end{aligned}
$$
The last inequality uses $G_{\mathcal C}s\leq s^2/(4\delta)+\delta G_{\mathcal C}^2$. Together with $Z_D\geq|D|e^{-\sup_{\mathcal C}f}$, it gives
$$
\frac{Z_{\rm out}}{Z_D}\leq\frac{E_{\mathcal C}e^{\delta G_{\mathcal C}^2}}{|D|}\int_{D^c}e^{-\operatorname{dist}(x,D)^2/(4\delta)}\,dx.
$$
To bound the remaining integral, use $B(0,r/2)\subseteq D$ and convexity to obtain
$$
D+t\overline{B(0,1)}\subseteq(1+2t/r)D,\qquad F_D(t):=\frac{|[D+t\overline{B(0,1)}]\setminus D|}{|D|}\leq(1+2t/r)^d-1.
$$
Indeed, for $x\in D$ and $\|u\|\leq1$,
$$
\frac{x+tu}{1+2t/r}=\frac r{r+2t}x+\frac{2t}{r+2t}\frac r2u\in D.
$$
For $s\geq0$, $e^{-s^2/(4\delta)}=\int_s^\infty [t/(2\delta)]e^{-t^2/(4\delta)}\,dt$. Applying this identity with $s=\operatorname{dist}(x,D)$ and exchanging the nonnegative integrals by Tonelli's theorem gives
$$
\begin{aligned}
\frac1{|D|}\int_{D^c}e^{-\operatorname{dist}(x,D)^2/(4\delta)}\,dx&=\frac1{|D|}\int_{D^c}\int_0^\infty\frac{t}{2\delta}e^{-t^2/(4\delta)}\mathbf1_{\{\operatorname{dist}(x,D)\leq t\}}\,dt\,dx\\
&=\int_0^\infty\frac{t}{2\delta}e^{-t^2/(4\delta)}\frac{|\{x\notin D:\operatorname{dist}(x,D)\leq t\}|}{|D|}\,dt\\
&=\int_0^\infty\frac{t}{2\delta}e^{-t^2/(4\delta)}F_D(t)\,dt.
\end{aligned}
$$
The last equality follows from $\{x\notin D:\operatorname{dist}(x,D)\leq t\}=[D+t\overline{B(0,1)}]\setminus D$ and the definition of $F_D$. We now use $F_D(t)\leq(1+2t/r)^d-1$ and integrate by parts to obtain
$$
\begin{aligned}
\int_0^\infty\frac{t}{2\delta}e^{-t^2/(4\delta)}F_D(t)\,dt
&\leq\int_0^\infty\frac{t}{2\delta}e^{-t^2/(4\delta)}\bigl[(1+2t/r)^d-1\bigr]\,dt=\frac{2d}r\int_0^\infty(1+2t/r)^{d-1}e^{-t^2/(4\delta)}\,dt.
\end{aligned}
$$
Here the boundary term $e^{-t^2/(4\delta)}[(1+2t/r)^d-1]$ vanishes both at $t=0$ and as $t\to\infty$. Finally, $(1+2t/r)^{d-1}\leq e^{2(d-1)t/r}$ and completing the square yield
$$
\begin{aligned}
\int_0^\infty(1+2t/r)^{d-1}e^{-t^2/(4\delta)}\,dt
&\leq e^{4\delta(d-1)^2/r^2}\int_0^\infty e^{-(t-4\delta(d-1)/r)^2/(4\delta)}\,dt\leq2\sqrt{\pi\delta}\,e^{4\delta(d-1)^2/r^2}.
\end{aligned}
$$
In particular, $Z_{\rm out}<\infty$. Since $\pi_\delta^\alpha(\cdot\mid D)=\pi^\alpha$, the conditional distribution calculation above gives
\begin{equation}
\begin{aligned}
\mathrm{TV}(\pi_\delta^\alpha,\pi^\alpha)&=\frac{Z_{\rm out}}{Z_D+Z_{\rm out}}\leq\frac{Z_{\rm out}}{Z_D}\leq\frac{4\sqrt\pi E_{\mathcal C}d}r\sqrt\delta\exp\left[\delta\left(G_{\mathcal C}^2+\frac{4(d-1)^2}{r^2}\right)\right].
\end{aligned}
\end{equation}
Combining the two errors, for fixed dimension and problem data, gives
\begin{equation}
\begin{aligned}
\mathrm{TV}(\pi_\delta^\alpha,\pi)&\leq\mathrm{TV}(\pi_\delta^\alpha,\pi^\alpha)+\mathrm{TV}(\pi^\alpha,\pi)=\mathcal O_d(\sqrt\delta+\alpha)=\mathcal O_d(\ep^2),
\end{aligned}
\label{eq:ghz:penalty:triangle}
\end{equation}
since $\delta=\ep^4$ and $\alpha\in\{0,\ep^2\}$ as in~\eqref{eq:ghz:transfer}.

\textbf{The log Sobolev constant.} Fix $\varrho>0$, independently of $\ep$. With $B_h$ as defined in~\eqref{eq:ghz:subgradient:bound}, formula~\eqref{eq:ghz:mu:orders} becomes
$$
\mu_{\ep^4}^{\alpha}=\begin{cases}\displaystyle\frac{2\beta\varrho}{\ep^4(B_h+\beta\varrho)}-L,&\beta>0,\quad\alpha=0,\\\displaystyle\frac{2\varrho}{\ep^2[B_h+\ep^2(R+\varrho)]}-L,&\beta=0,\quad\alpha=\ep^2.
\end{cases}
$$
Both expressions tend to infinity as $\ep\to0$. Thus $\mu_{\ep^4}^{\alpha}\geq2$ for sufficiently small $\ep$, and Lemma~\ref{lem:penalty:uniform:lsi} yields
$$
\rho_*^{-1}\leq\exp\left\{(R+\varrho)^2\bigl((1+L)+(1+L)^2\bigr)\right\}=\mathcal O(1).
$$
Together with~\eqref{eq:ghz:penalty:triangle}, this proves~\eqref{eq:ghz:transfer}.

\subsection{Proof of Lemma~\ref{lemma:complexity:penalty:orders}}
\label{proof:complexity:penalty:orders}
Since $h(0)\leq0$ and $\alpha\geq0$, the closed convex set $\mathcal C^\alpha$ satisfies
$$
0\in\mathcal C^\alpha\subseteq\mathcal C\subseteq B(0,R),\qquad S^\alpha(0)=0,\qquad \nabla S^\alpha(0)=0.
$$
Lemma~\ref{lemma:dissipative:S}, applied to $\mathcal K=\mathcal C^\alpha$, gives the same constants $(\ell,m_S,b_S)=(4,1,R^2)$ for every $\alpha\geq0$. The calculation in Lemma~\ref{lemma:dissipative} therefore gives~\eqref{eq:diss:const} for $V_\delta^\alpha$ with these constants. Hence, whenever $0<\delta\leq1$ and $\delta(L+1/2)\leq1/2$, we may take
\begin{align}
L_\delta=L+\frac4\delta \leq\frac{L+4}{\delta}, \quad \frac1{2\delta}\leq m_\delta =\frac1\delta-L-\frac12 \leq\frac1\delta, \quad b_\delta=\frac12\|\nabla f(0)\|^2+\frac{R^2}{\delta} \leq\frac{\|\nabla f(0)\|^2/2+R^2}{\delta}.
\label{eq:complexity:Lmb:scaling}
\end{align}
Consequently, uniformly in $\alpha$,
\begin{equation}
L_\delta=\mathcal O(\delta^{-1}), \qquad m_\delta=\Theta(\delta^{-1}), \qquad b_\delta=\mathcal{O}(\delta^{-1}).
\end{equation}
Since $V_\delta^\alpha(0)=f(0)$ and $\|\nabla V_\delta^\alpha(0)\|=\|\nabla f(0)\|$, the definitions in \eqref{eq:moment:quad:constants}, followed by \eqref{eq:complexity:Lmb:scaling}, give
\begin{align}
c_\delta &=|f(0)|+2b_\delta +\frac{L_\delta b_\delta}{m_\delta} +\frac{\|\nabla f(0)\|^2}{2m_\delta}
\nonumber\\
&\leq |f(0)| +\frac{\|\nabla f(0)\|^2+2R^2}{\delta} +\frac{2(L+4)(\|\nabla f(0)\|^2/2+R^2)}{\delta} +\|\nabla f(0)\|^2\delta =\mathcal O(\delta^{-1}),
\\
c_{U,2} &=\max\left\{ 1+|f(0)|+c_\delta+\frac12\|\nabla f(0)\|^2, \frac{L_\delta+1}{2}\right\} =\mathcal O(\delta^{-1}).
\end{align}
By~\eqref{eq:moment:euler:qr:definition}, $q_{U,\delta}=(m_\delta)^2/c_{U,2}$ and $r_{U,\delta}=(m_\delta)^2+2m_\delta b_\delta$. Therefore
\begin{align}
q_{U,\delta} &=\Omega(\delta^{-1}), \quad r_{U,\delta} \leq\delta^{-2} +2\left(\frac12\|\nabla f(0)\|^2+R^2\right)\delta^{-2} =\mathcal O(\delta^{-2}).
\end{align}
The implied constants depend only on $R,L,|f(0)|$, and $\|\nabla f(0)\|$, so all bounds are uniform in $\alpha$. The proof is complete.

\section{Supporting Lemmas}

\subsection{Moment bounds under dissipativity}
\begin{lemma}
\label{lemma:target:moments:dissipative}
Let $V_\delta$ have an $L_\delta$ Lipschitz gradient and be $(m_\delta,b_\delta)$ dissipative with $m_\delta>0$. Define
\begin{equation}
g_\delta:=\|\nabla V_\delta(0)\|,\qquad c_\delta:=|V_\delta(0)|+2b_\delta+\frac{L_\delta b_\delta}{m_\delta}+\frac{g_\delta^2}{2m_\delta}.
\label{eq:moment:quad:constants}
\end{equation}
Then
\begin{equation}
V_\delta(x)\geq\frac{m_\delta}{4}\|x\|^2-c_\delta,\qquad x\in\mathbb R^d.
\label{eq:quad:lower:fourth}
\end{equation}
In particular, $Z_\delta:=\int_{\mathbb R^d}e^{-V_\delta(x)}dx<\infty$. If $X$ has density $Z_\delta^{-1}e^{-V_\delta}$, then
\begin{equation}
\mathbb E\|X\|^2\leq\frac{d+b_\delta}{m_\delta}=:M_{2,\delta},\qquad \mathbb E\|X\|^4\leq\frac{(d+b_\delta)(d+2+b_\delta)}{m_\delta^2}=:M_{4,\delta},
\label{eq:moment:target:constants}
\end{equation}
and
\begin{align}
\mathbb E\|\nabla V_\delta(X)\|^2\leq2L_\delta^2M_{2,\delta}+2g_\delta^2, \qquad \mathbb E\|\nabla V_\delta(X)\|^4\leq8L_\delta^4M_{4,\delta}+8g_\delta^4.
\end{align}
\end{lemma}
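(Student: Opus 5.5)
The plan is to prove the four assertions of Lemma~\ref{lemma:target:moments:dissipative} in order: the quadratic lower bound, finiteness of $Z_\delta$, the moment bounds, and the gradient moments.

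\emph{Quadratic lower bound.} For the lower bound \eqref{eq:quad:lower:fourth}, write $V_\delta(x)=V_\delta(0)+\int_0^1\langle x,\nabla V_\delta(tx)\rangle\,dt$. For $t\geq t_0>0$, dissipativity applied at the point $tx$ gives $\langle x,\nabla V_\delta(tx)\rangle\geq m_\delta t\|x\|^2-b_\delta/t$. On the short interval $[0,t_0]$, we instead use the smoothness bound $\langle x,\nabla V_\delta(tx)\rangle\geq -g_\delta\|x\|-L_\delta t\|x\|^2$. Choosing $t_0$ suitably (for example $t_0$ of order $\sqrt{m_\delta/L_\delta}\wedge 1/2$, or splitting by whether $\|x\|^2$ exceeds $b_\delta/m_\delta$) and absorbing the linear term $g_\delta\|x\|$ with Young's inequality, $g_\delta\|x\|\leq \frac{m_\delta}{8}\|x\|^2+\frac{2g_\delta^2}{m_\delta}$, should yield $\frac{m_\delta}{4}\|x\|^2$ minus a constant. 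The constant must then be matched to the stated $c_\delta=|V_\delta(0)|+2b_\delta+L_\delta b_\delta/m_\delta+g_\delta^2/(2m_\delta)$.

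This bookkeeping is the step I expect to be fiddly. A cleaner route to the same constant is a case split. When $\|x\|^2\geq 2b_\delta/m_\delta$, the radial derivative along $s\mapsto sx/\|x\|$ is at least $m_\delta s/2$ for $s\geq\sqrt{2b_\delta/m_\delta}$. Inside the ball of radius $\sqrt{2b_\delta/m_\delta}$, the smoothness bound $V_\delta(y)\geq V_\delta(0)-g_\delta\|y\|-\frac{L_\delta}{2}\|y\|^2$ costs at most $L_\delta b_\delta/m_\delta+g_\delta^2/(2m_\delta)+b_\delta$. Integrating the radial derivative outside the ball and collecting terms gives \eqref{eq:quad:lower:fourth}. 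Once the lower bound holds, $Z_\delta<\infty$ is immediate by comparison with a Gaussian integral.

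\emph{Moment bounds.} For $M_{2,\delta}$ I would use the integration-by-parts identity $\mathbb E[\langle X,\nabla V_\delta(X)\rangle]=\mathbb E[\nabla\cdot X]=d$. This is justified by $\nabla(e^{-V_\delta})=-\nabla V_\delta e^{-V_\delta}$, which holds almost everywhere since $V_\delta\in C^{1,1}$. The boundary terms vanish because of the Gaussian tail from \eqref{eq:quad:lower:fourth}; to be rigorous, integrate against a cutoff $\chi_R$ and let $R\to\infty$ by dominated convergence. Dissipativity then gives $m_\delta\mathbb E\|X\|^2-b_\delta\leq d$, which is exactly $M_{2,\delta}$.

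For the fourth moment, apply the same identity to the vector field $\|x\|^2x$. Its divergence is $(d+2)\|x\|^2$, so $\mathbb E[\|X\|^2\langle X,\nabla V_\delta(X)\rangle]=(d+2)\mathbb E\|X\|^2$. Dissipativity yields $m_\delta\mathbb E\|X\|^4\leq (d+2+b_\delta)\mathbb E\|X\|^2\leq (d+2+b_\delta)(d+b_\delta)/m_\delta$, which is $M_{4,\delta}$.

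\emph{Gradient moments.} These follow from $\|\nabla V_\delta(x)\|\leq L_\delta\|x\|+g_\delta$, combined with $(a+b)^2\leq 2a^2+2b^2$ and $(a+b)^4\leq 8a^4+8b^4$.
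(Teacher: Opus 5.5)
Your proposal is correct and follows essentially the paper's proof: the case split at $r_0=\sqrt{2b_\delta/m_\delta}$ with the smoothness bound inside the ball, integration by parts against $x$ and $\|x\|^2x$ (whose divergence is $(d+2)\|x\|^2$) for the two moments, and the linear growth $\|\nabla V_\delta(x)\|\leq L_\delta\|x\|+g_\delta$ for the gradient moments. The only difference is cosmetic. You bound the radial derivative pointwise by $m_\delta s/2$ for $s\geq r_0$, which loses $b_\delta/2$; the paper instead integrates exactly and uses $\log u\leq u^2/2$, which loses $b_\delta$. Both losses fit within $c_\delta$. Your first ``$t_0$'' route, with Young constant $2g_\delta^2/m_\delta$, would not reproduce $c_\delta$, so the case split is the one to keep.
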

\begin{proof}
If $b_\delta=0$, radial integration gives $V_\delta(x)-V_\delta(0)\geq m_\delta\|x\|^2/2$. Suppose $b_\delta>0$, put $r_0=(2b_\delta/m_\delta)^{1/2}$, and first let $\|x\|\geq r_0$ and $y=r_0x/\|x\|$. Dissipativity and $\log u\leq u^2/2$ for $u\geq1$ give
$$
V_\delta(x)-V_\delta(y)\geq\int_{r_0/\|x\|}^1\left(m_\delta t\|x\|^2-\frac{b_\delta}{t}\right)dt\geq\frac{m_\delta}{4}\|x\|^2-b_\delta,
$$
it gives
\begin{equation}
V_\delta(y)\geq V_\delta(0)-g_\delta r_0-\frac{L_\delta r_0^2}{2}\geq V_\delta(0)-\frac{g_\delta^2}{2m_\delta}-b_\delta-\frac{L_\delta b_\delta}{m_\delta}.
\end{equation}
If $\|x\|<r_0$, smoothness and Young's inequality give
\begin{align}
V_\delta(x)&\geq V_\delta(0)-g_\delta\|x\|-\frac{L_\delta}{2}\|x\|^2
\geq\frac{m_\delta}{4}\|x\|^2+V_\delta(0)-\frac{g_\delta^2}{2m_\delta}-\frac{3b_\delta}{2}-\frac{L_\delta b_\delta}{m_\delta}.
\end{align}
These estimates prove~\eqref{eq:quad:lower:fourth}. Now Lipschitz continuity gives
\begin{equation}
\|\nabla V_\delta(x)\|\leq L_\delta\|x\|+g_\delta.
\label{eq:moment:gradient:pointwise}
\end{equation}
By~\eqref{eq:quad:lower:fourth}, $e^{-V_\delta(x)}\leq e^{c_\delta}e^{-m_\delta\|x\|^2/4}$. Together with~\eqref{eq:moment:gradient:pointwise}, this makes the following integrals absolutely convergent and ensures that the polynomial boundary terms in integration by parts vanish. Dissipativity and $\partial_j e^{-V_\delta}=-(\partial_jV_\delta)e^{-V_\delta}$ give
$$
\begin{aligned}
m_\delta\mathbb E\|X\|^2
&\leq b_\delta+\mathbb E\langle X,\nabla V_\delta(X)\rangle\\
&=b_\delta-\frac1{Z_\delta}\sum_{j=1}^d\int_{\mathbb R^d}x_j\partial_j e^{-V_\delta(x)}\,dx\\
&=b_\delta+\frac1{Z_\delta}\sum_{j=1}^d\int_{\mathbb R^d}e^{-V_\delta(x)}\,dx=d+b_\delta.
\end{aligned}
$$
Multiplying the dissipativity inequality by $\|x\|^2$ and using $\partial_j(\|x\|^2x_j)=\|x\|^2+2x_j^2$, we similarly obtain
\begin{equation}
\begin{aligned}
m_\delta\mathbb E\|X\|^4
&\leq b_\delta\mathbb E\|X\|^2+\mathbb E\left[\|X\|^2\langle X,\nabla V_\delta(X)\rangle\right]\\
&=b_\delta\mathbb E\|X\|^2-\frac1{Z_\delta}\sum_{j=1}^d\int_{\mathbb R^d}\|x\|^2x_j\partial_j e^{-V_\delta(x)}\,dx\\
&=b_\delta\mathbb E\|X\|^2+\frac1{Z_\delta}\sum_{j=1}^d\int_{\mathbb R^d}(\|x\|^2+2x_j^2)e^{-V_\delta(x)}\,dx\\
&=(d+2+b_\delta)\mathbb E\|X\|^2\leq\frac{(d+b_\delta)(d+2+b_\delta)}{m_\delta}.
\end{aligned}
\end{equation}
Dividing by $m_\delta>0$ proves~\eqref{eq:moment:target:constants}. Finally,~\eqref{eq:moment:gradient:pointwise} gives
\begin{align}
\mathbb E\|\nabla V_\delta(X)\|^2&\leq2L_\delta^2M_{2,\delta}+2g_\delta^2, \quad \mathbb E\|\nabla V_\delta(X)\|^4\leq8L_\delta^4M_{4,\delta}+8g_\delta^4.
\end{align}
The proof is complete.
\end{proof}


\subsection{Lyapunov bounds for the Euler scheme}
\begin{lemma}
\label{lemma:euler:lyapunov:bounds}
Assume the conditions of Lemma~\ref{lemma:target:moments:dissipative}, let $J$ be Borel measurable with $J(x)^\top=-J(x)$ and $\|J(x)\|_{\mathrm{op}}\leq M_J$, and set
$$
U_\delta(x):=1+V_\delta(x)+c_\delta.
$$
Define
\begin{equation}
c_{U,1}:=\min\left\{1,\frac{m_\delta}{4}\right\},\qquad U_\delta(x)\geq c_{U,1}(1+\|x\|^2),
\label{eq:cU1:explicit}
\end{equation}
\begin{equation}
c_{U,2}:=\max\left\{1+|V_\delta(0)|+c_\delta+\frac{g_\delta^2}{2},\frac{L_\delta+1}{2}\right\},\qquad U_\delta(x)\leq c_{U,2}(1+\|x\|^2),
\label{eq:cU2:explicit}
\end{equation}
and
\begin{equation}
q_{U,\delta}:=\frac{m_\delta^2}{c_{U,2}},\qquad r_{U,\delta}:=m_\delta^2+2m_\delta b_\delta.
\label{eq:moment:euler:qr:definition}
\end{equation}
For the Euler transition in~\eqref{eq:alg}, if $0<\eta\leq1\wedge[L_\delta(1+M_J)^2]^{-1}$, then
\begin{equation}
\mathbb E\left[U_\delta(X_{(k+1)\eta})\mid X_{k\eta}=x\right]\leq\left(1-\frac{q_{U,\delta}\eta}{2}\right)U_\delta(x)+\left(\frac{r_{U,\delta}}2+L_\delta d\right)\eta.
\label{eq:moment:euler:first:recursion}
\end{equation}
If~\eqref{eq:moment:eta4:explicit} also holds, then
\begin{equation}
\mathbb E\left[U_\delta(X_{(k+1)\eta})^2\mid X_{k\eta}=x\right]\leq\left(1-\frac{q_{U,\delta}\eta}{4}\right)U_\delta(x)^2+R_{U,\delta}\eta,
\label{eq:moment:euler:lyapunov:drift}
\end{equation}
where
$$
R_{U,\delta}:=\frac{(r_{U,\delta}+2L_\delta d+8L_\delta)^2}{2q_{U,\delta}}+16L_\delta^2d(d+2).
$$
\end{lemma}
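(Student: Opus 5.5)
The plan is to derive everything from three pointwise facts about $V_\delta$: the quadratic lower bound \eqref{eq:quad:lower:fourth}, the descent lemma for an $L_\delta$ smooth function, and dissipativity. Each step is then a one-step conditional expectation computation for the Euler increment $\Delta:=X_{(k+1)\eta}-x=-\eta(I+J(x))\nabla V_\delta(x)+\sqrt{2\eta}\xi$.

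\textbf{Step 1: the sandwich bounds on $U_\delta$.} For \eqref{eq:cU1:explicit}, I would combine \eqref{eq:quad:lower:fourth} with the definition of $U_\delta$ to get
$$
U_\delta\geq1+\tfrac{m_\delta}{4}\|x\|^2\geq c_{U,1}(1+\|x\|^2).
$$
For \eqref{eq:cU2:explicit}, I would use $V_\delta(x)\leq V_\delta(0)+g_\delta\|x\|+\tfrac{L_\delta}{2}\|x\|^2$ together with $g_\delta\|x\|\leq g_\delta^2/2+\|x\|^2/2$. This yields $U_\delta\leq(1+|V_\delta(0)|+c_\delta+g_\delta^2/2)+\tfrac{L_\delta+1}{2}\|x\|^2$, which is at most $c_{U,2}(1+\|x\|^2)$.

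\textbf{Step 2: two gradient inequalities in terms of $U_\delta$.} For the lower bound, I would start from dissipativity and Cauchy--Schwarz with Young:
$$
m_\delta\|x\|^2-b_\delta\leq\|x\|\|\nabla V_\delta\|\leq\tfrac{m_\delta}{2}\|x\|^2+\tfrac{1}{2m_\delta}\|\nabla V_\delta\|^2.
$$
This gives $\|\nabla V_\delta\|^2\geq m_\delta^2\|x\|^2-2m_\delta b_\delta$. Inserting $\|x\|^2\geq U_\delta/c_{U,2}-1$ then gives
$$
\|\nabla V_\delta(x)\|^2\geq q_{U,\delta}U_\delta(x)-r_{U,\delta}.
$$
For the upper bound, I would use the standard consequence of $L_\delta$ smoothness, $\|\nabla V_\delta\|^2\leq2L_\delta(V_\delta-\inf V_\delta)$. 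Since \eqref{eq:quad:lower:fourth} gives $\inf V_\delta\geq-c_\delta$, this yields $\|\nabla V_\delta(x)\|^2\leq2L_\delta U_\delta(x)$.

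\textbf{Step 3: the first recursion \eqref{eq:moment:euler:first:recursion}.} I would apply the descent lemma and take the conditional expectation. The linear term is $-\eta\|\nabla V_\delta\|^2$, because skew symmetry kills $\langle\nabla V_\delta,J\nabla V_\delta\rangle$. The quadratic term satisfies $\mathbb E\|\Delta\|^2\leq\eta^2(1+M_J)^2\|\nabla V_\delta\|^2+2d\eta$. Under $\eta L_\delta(1+M_J)^2\leq1$ the net gradient coefficient is at most $-\eta/2$. Applying the lower bound of Step 2 then gives exactly \eqref{eq:moment:euler:first:recursion}.

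\textbf{Step 4: the squared recursion \eqref{eq:moment:euler:lyapunov:drift}.} I would write $D:=U_\delta(X_{(k+1)\eta})-U_\delta(x)$ and expand
$$
U_\delta(X_{(k+1)\eta})^2=U_\delta^2+2U_\delta D+D^2.
$$
The cross term is handled by Step 3, giving at most $-q_{U,\delta}\eta U_\delta^2+(r_{U,\delta}+2L_\delta d)\eta U_\delta$.

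For $D^2$, I would use the two-sided descent bound $|D|\leq\|\nabla V_\delta\|\|\Delta\|+\tfrac{L_\delta}{2}\|\Delta\|^2$, so that
$$
\mathbb E D^2\leq2\|\nabla V_\delta\|^2\,\mathbb E\|\Delta\|^2+\tfrac{L_\delta^2}{2}\mathbb E\|\Delta\|^4.
$$
Here $\mathbb E\|\Delta\|^4\leq8\eta^4(1+M_J)^4\|\nabla V_\delta\|^4+32d(d+2)\eta^2$. Replacing $\|\nabla V_\delta\|^2$ by $2L_\delta U_\delta$ from Step 2, $\mathbb E D^2$ becomes a sum of four kinds of terms:
\begin{itemize}
\item $\mathcal O(L_\delta^2\eta^2)U_\delta^2$;
\item $\mathcal O(L_\delta^4(1+M_J)^4\eta^4)U_\delta^2$;
\item $8L_\delta d\eta\,U_\delta$;
\item $16L_\delta^2d(d+2)\eta^2$.
\end{itemize}

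\textbf{Main obstacle: the absorption bookkeeping.} The two $U_\delta^2$ terms must be absorbed into $\tfrac{q_{U,\delta}\eta}{4}U_\delta^2$. This is exactly what the conditions $\eta\leq q_{U,\delta}/(64L_\delta^2)$ and $\eta^3\leq q_{U,\delta}/(128L_\delta^4(1+M_J)^4)$ in \eqref{eq:moment:eta4:explicit} are for, and the constants must be tracked so that the two absorbed pieces total at most $\tfrac{q_{U,\delta}\eta}{2}U_\delta^2$. After that, the coefficient of $U_\delta^2$ is at most $1-\tfrac{q_{U,\delta}\eta}{2}$, and the remaining linear term is at most $(r_{U,\delta}+2L_\delta d+8L_\delta)\eta U_\delta$; the $8L_\delta$ there is a crude overbound that gives room for slack in the absorption constants. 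Young's inequality splits this linear term as
$$
(r_{U,\delta}+2L_\delta d+8L_\delta)\eta U_\delta\leq\tfrac{q_{U,\delta}\eta}{4}U_\delta^2+\frac{(r_{U,\delta}+2L_\delta d+8L_\delta)^2}{q_{U,\delta}}\eta.
$$
Combined with the constant $16L_\delta^2d(d+2)\eta^2\leq16L_\delta^2d(d+2)\eta$ (using $\eta\leq1$), this gives the coefficient $1-q_{U,\delta}\eta/4$ and a remainder of the stated form $R_{U,\delta}\eta$. The factor $\tfrac12$ in the first part of $R_{U,\delta}$ would come from using a slightly sharper Young split with weight $\tfrac{q_{U,\delta}}{2}$.
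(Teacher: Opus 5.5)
Your Steps 1--3 match the paper's proof. Step 4 has a genuine gap: the Cauchy--Schwarz bound $|\langle\nabla V_\delta(x),\Delta\rangle|\le\|\nabla V_\delta(x)\|\|\Delta\|$ is too lossy to close the recursion with the stated stepsize condition and the stated constant.

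Carrying that bound out gives $2\|\nabla V_\delta\|^2\mathbb E\|\Delta\|^2\le 8L_\delta^2(1+M_J)^2\eta^2U_\delta^2+8L_\delta d\,\eta U_\delta$. This causes three problems.
\begin{itemize}
\item The first term has an extra factor $(1+M_J)^2$, which your bullet $\mathcal O(L_\delta^2\eta^2)$ hides. The condition $\eta\le q_{U,\delta}/(64L_\delta^2)$ only bounds it by $(1+M_J)^2q_{U,\delta}\eta/8$. No combination of the conditions in \eqref{eq:moment:eta4:explicit} removes that factor, so for $M_J>0$ you cannot reach the coefficient $1-q_{U,\delta}\eta/4$.
\item The second term carries a factor $d$. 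Your linear coefficient is therefore $r_{U,\delta}+10L_\delta d$, which exceeds $r_{U,\delta}+2L_\delta d+8L_\delta$ for $d\ge2$. The $8L_\delta$ in $R_{U,\delta}$ is not slack.
\item Your bookkeeping is inconsistent. A Young split with weight $q_{U,\delta}/2$, which is the source of the $1/(2q_{U,\delta})$, consumes $\tfrac{q_{U,\delta}\eta}{2}U_\delta^2$. So the absorbed pieces must total at most $\tfrac{q_{U,\delta}\eta}{4}U_\delta^2$, not $\tfrac{q_{U,\delta}\eta}{2}U_\delta^2$.
\end{itemize}

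The missing idea is to compute the linear part exactly rather than bound it. Skew symmetry gives $\langle\nabla V_\delta,B_\delta\rangle=\|\nabla V_\delta\|^2$. Moreover, $\langle\nabla V_\delta(x),\xi\rangle$ has variance $\|\nabla V_\delta(x)\|^2$, not $d\|\nabla V_\delta(x)\|^2$. Hence
\[
\mathbb E\langle\nabla V_\delta(x),\Delta\rangle^2=\eta^2\|\nabla V_\delta(x)\|^4+2\eta\|\nabla V_\delta(x)\|^2\le4L_\delta^2\eta^2U_\delta(x)^2+4L_\delta\eta U_\delta(x).
\]
Now use $D^2\le2\langle\nabla V_\delta,\Delta\rangle^2+\tfrac{L_\delta^2}{2}\|\Delta\|^4$ together with your fourth moment bound. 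The two stepsize conditions then bound the two $U_\delta^2$ pieces by exactly $\tfrac{q_{U,\delta}\eta}{8}U_\delta^2$ each, so
\[
\mathbb E D^2\le\tfrac{q_{U,\delta}\eta}{4}U_\delta^2+8L_\delta\eta U_\delta+16L_\delta^2d(d+2)\eta^2 .
\]
Adding $U_\delta^2+2U_\delta\,\mathbb E D$ leaves
\[
\bigl(1-\tfrac{3q_{U,\delta}\eta}{4}\bigr)U_\delta^2+(r_{U,\delta}+2L_\delta d+8L_\delta)\eta U_\delta+16L_\delta^2d(d+2)\eta^2 .
\]
Young's inequality with weight $q_{U,\delta}/2$, together with $\eta\le1$, then gives \eqref{eq:moment:euler:lyapunov:drift} with exactly the stated $R_{U,\delta}$. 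This is the route the paper takes.
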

\begin{proof}
The lower bound in~\eqref{eq:cU1:explicit} follows from~\eqref{eq:quad:lower:fourth}. Smoothness at the origin and Young's inequality give
$$
U_\delta(x)\leq1+|V_\delta(0)|+c_\delta+\frac{g_\delta^2}{2}+\frac{L_\delta+1}{2}\|x\|^2,
$$
which proves~\eqref{eq:cU2:explicit}. The descent inequality for an $L_\delta$ smooth function and $\inf U_\delta\geq1$ give
\begin{equation}
\|\nabla V_\delta(x)\|^2\leq2L_\delta U_\delta(x).
\label{eq:euler:gradient:by:potential}
\end{equation}
Moreover,
$$
m_\delta\|x\|^2-b_\delta\leq\|x\|\|\nabla V_\delta(x)\|\leq\frac{m_\delta}{2}\|x\|^2+\frac1{2m_\delta}\|\nabla V_\delta(x)\|^2.
$$
Using~\eqref{eq:cU2:explicit}, we obtain
\begin{equation}
\|\nabla V_\delta(x)\|^2\geq m_\delta^2\|x\|^2-2m_\delta b_\delta\geq q_{U,\delta}U_\delta(x)-r_{U,\delta}.
\label{eq:grad:lower:by:U}
\end{equation}
Condition on $X_{k\eta}=x$ and write
$$
B_\delta(x):=(I+J(x))\nabla V_\delta(x),\qquad \Delta=-\eta B_\delta(x)+\sqrt{2\eta}\xi,\qquad \xi\sim\mathcal N(0,I_d).
$$
Then $X_{(k+1)\eta}=x+\Delta$, and all expectations below are over $\xi$. The matrix $J(x)$ is fixed in this conditional calculation, even when $J$ depends on the state. The pointwise assumptions $J(x)^\top=-J(x)$ and $\|J(x)\|_{\mathrm{op}}\leq M_J$ give
$$
\begin{aligned}
\langle\nabla V_\delta(x),B_\delta(x)\rangle
&=\|\nabla V_\delta(x)\|^2+\langle\nabla V_\delta(x),J(x)\nabla V_\delta(x)\rangle=\|\nabla V_\delta(x)\|^2,\\
\|B_\delta(x)\|^2&\leq\|I+J(x)\|_{\mathrm{op}}^2\|\nabla V_\delta(x)\|^2\leq(1+M_J)^2\|\nabla V_\delta(x)\|^2.
\end{aligned}
$$
Thus the following estimates hold for both constant and state dependent $J$. By the smoothness inequality and the bound on $\eta$, we get
\begin{align}
\mathbb E[U_\delta(x+\Delta)]&\leq U_\delta(x)-\eta\|\nabla V_\delta(x)\|^2+\frac{L_\delta}{2}\left(\eta^2\|B_\delta(x)\|^2+2d\eta\right)
\nonumber\\
&\leq U_\delta(x)-\frac\eta2\|\nabla V_\delta(x)\|^2+L_\delta d\eta.
\label{eq:moment:euler:first:drift}
\end{align}
Combining this with~\eqref{eq:grad:lower:by:U} proves~\eqref{eq:moment:euler:first:recursion}. It remains to prove~\eqref{eq:moment:euler:lyapunov:drift}. By~\eqref{eq:euler:gradient:by:potential} and the standard Gaussian identity $\mathbb E\|\xi\|^4=d(d+2)$,
\begin{align}
\mathbb E\langle \nabla V_\delta(x),\Delta\rangle^2&=\eta^2\|\nabla V_\delta(x)\|^4+2\eta\|\nabla V_\delta(x)\|^2\leq4L_\delta^2\eta^2U_\delta(x)^2+4L_\delta\eta U_\delta(x),
\nonumber\\
\mathbb E\|\Delta\|^4&\leq8\eta^4\|B_\delta(x)\|^4+32\eta^2d(d+2)
\leq32L_\delta^2(1+M_J)^4\eta^4U_\delta(x)^2+32\eta^2d(d+2).
\end{align}
Since
$$
\left|U_\delta(x+\Delta)-U_\delta(x)-\langle \nabla V_\delta(x),\Delta\rangle\right|\leq\frac{L_\delta}{2}\|\Delta\|^2,
$$
the two additional stepsize bounds in~\eqref{eq:moment:eta4:explicit} imply
\begin{equation}
\mathbb E\left[(U_\delta(x+\Delta)-U_\delta(x))^2\right]\leq\frac{q_{U,\delta}\eta}{4}U_\delta(x)^2+8L_\delta\eta U_\delta(x)+16L_\delta^2d(d+2)\eta^2.
\label{eq:moment:euler:squared:increment}
\end{equation}
On the other hand,~\eqref{eq:moment:euler:first:drift} and~\eqref{eq:grad:lower:by:U} give
$$
2U_\delta(x)\mathbb E[U_\delta(x+\Delta)-U_\delta(x)]\leq-q_{U,\delta}\eta U_\delta(x)^2+(r_{U,\delta}+2L_\delta d)\eta U_\delta(x).
$$
Combining this inequality with~\eqref{eq:moment:euler:squared:increment} and using
$$
(r_{U,\delta}+2L_\delta d+8L_\delta)U_\delta(x)\leq\frac{q_{U,\delta}}2U_\delta(x)^2+\frac{(r_{U,\delta}+2L_\delta d+8L_\delta)^2}{2q_{U,\delta}}
$$
and $\eta\leq1$ proves~\eqref{eq:moment:euler:lyapunov:drift}. The proof is complete.
\end{proof}


\subsection{Entropy estimate for the Euler interpolation}
\begin{lemma}
\label{lem:entropy:identity}
Under the regularity, dissipativity, boundedness, and compatibility assumptions on $V_\delta$ and $J$ in Theorem~\ref{thm:nu:pi:delta}, write $B_\delta=(I+J)\nabla V_\delta$ and $F(\mu)=\mathrm{KL}(\mu\Vert\pi_\delta)$. Let $Z\sim\nu$ satisfy $\mathbb E\|Z\|^2<\infty$ and $F(\nu)<\infty$. For Brownian motion $W$ independent of $Z$, set $X_t=Z-tB_\delta(Z)+\sqrt2W_t$, $0\leq t\leq\eta$. Let $p_t$ be its density for $t>0$, let $p_\delta^*$ be the density of $\pi_\delta$, and set $F(p_0):=F(\nu)$ and
$$
I(t):=\int_{\mathbb R^d}p_t(x)\left\|\nabla_x\log\frac{p_t(x)}{p_\delta^*(x)}\right\|^2\,dx.
$$
Then
\begin{equation}
F(p_t)\leq F(p_0)+\frac14\int_0^t\mathbb E\|B_\delta(X_s)-B_\delta(Z)\|^2\,ds,
\label{eq:entropy:endpoint:bound}
\end{equation}
$F(p_\cdot)$ is absolutely continuous on $[0,\eta]$, $\int_0^\eta I(t)\,dt<\infty$, and, for almost every $t\in(0,\eta)$,
\begin{equation}
\frac d{dt}F(p_t)=-I(t)+\mathbb E\left\langle\nabla\log\frac{p_t(X_t)}{p_\delta^*(X_t)},B_\delta(X_t)-B_\delta(Z)\right\rangle.
\label{eq:entropy:closed:interval:identity}
\end{equation}
\end{lemma}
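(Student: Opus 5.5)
We plan to follow the standard Vempala--Wibisono interpolation argument, adapted to a nonreversible, frozen drift. Conditionally on $Z=z$, $X_t$ is Gaussian with mean $z-tB_\delta(z)$ and covariance $2tI$. Hence for $t>0$ the law $p_t$ is a Gaussian mixture: it is smooth and strictly positive, and its derivatives are controlled by Gaussian moments. The pair $(X_t,Z)$ has a joint density, and the conditional drift $b_t(x):=\mathbb E[B_\delta(Z)\mid X_t=x]$ is well defined. The first step is to show that $p_t$ solves, in the weak sense on $(0,\eta]$,
$$\partial_tp_t=\nabla\cdot\bigl(p_tb_t\bigr)+\Delta p_t .$$
We get this from Itô's formula applied to test functions $\varphi(X_t)$ and then conditioning on $X_t$. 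Rewriting the right-hand side around $\pi_\delta$ gives
$$\partial_tp_t=\nabla\cdot\Bigl(p_t\nabla\log\frac{p_t}{p_\delta^*}\Bigr)+\nabla\cdot\bigl(p_t(b_t-B_\delta)\bigr)-\nabla\cdot(p_tJ\nabla V_\delta),$$
using $B_\delta=\nabla V_\delta+J\nabla V_\delta$.

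Next, formally differentiating $F(p_t)=\int p_t\log(p_t/p_\delta^*)$ and integrating by parts gives three contributions: $-I(t)$; the cross term $\int p_t\langle\nabla\log(p_t/p_\delta^*),B_\delta-b_t\rangle$, which by the tower property equals $\mathbb E\langle\nabla\log(p_t/p_\delta^*)(X_t),B_\delta(X_t)-B_\delta(Z)\rangle$; and a skew term $\int p_t\langle\nabla\log(p_t/p_\delta^*),J\nabla V_\delta\rangle$. The skew term vanishes by the invariance computation of Lemma~\ref{lemma:pi}. Indeed, $\int p_t\langle\nabla\log p_t,J\nabla V_\delta\rangle=-\int p_t\nabla\cdot(J\nabla V_\delta)=0$ by the compatibility condition \eqref{eq:J:compatibility}, and $\langle\nabla V_\delta,J\nabla V_\delta\rangle=0$ by skew symmetry. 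This yields \eqref{eq:entropy:closed:interval:identity}. Then $-\|u\|^2+\langle u,v\rangle\le\|v\|^2/4$ gives $\tfrac{d}{dt}F(p_t)\le\tfrac14\mathbb E\|B_\delta(X_t)-B_\delta(Z)\|^2$, and more precisely $\tfrac{d}{dt}F\le -\tfrac12 I+\tfrac12\mathbb E\|\cdot\|^2$. The right-hand side is integrable in $t$ because $B_\delta$ has linear growth, $\mathbb E\|Z\|^2<\infty$, and $\sup_t\mathbb E\|X_t\|^2<\infty$. Integrating over $[s,t]$ for $s>0$ and letting $s\downarrow0$ then gives \eqref{eq:entropy:endpoint:bound}, absolute continuity, and $\int_0^\eta I<\infty$.

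The main obstacle is justifying these formal manipulations. We need (a) integrability of $p_t|\log p_t|$ and of $p_t\|\nabla\log p_t\|^2$, and vanishing of the boundary terms in the integrations by parts on $\mathbb R^d$. We also need (b) continuity of $F(p_t)$ at $t=0$, including lower semicontinuity of the KL divergence. Both are delicate because $\nabla V_\delta$ is only Lipschitz and $J$ is only Lipschitz, so the compatibility condition holds only in the distributional sense. We would first work on $[s,t]$ with $s>0$, where Gaussian smoothing gives $\nabla\log p_t$ with bounds of the form $\mathbb E\|\nabla\log p_t(X_t)\|^2\lesssim d/t$. We would multiply by smooth cutoffs $\chi_R$, using the second moments and the quadratic lower bound \eqref{eq:quad:lower:fourth} on $V_\delta$ to send $R\to\infty$. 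The distributional identity $\int\nabla\cdot(J\nabla V_\delta)\,\psi=0$ is applied with $\psi=p_t\chi_R\in C_c^1$, after mollification. Near $t=0$, we would use $F(p_t)\le F(p_0)+\int_0^t\cdots$, obtained from the bound on $[s,t]$, together with lower semicontinuity: $p_s\to p_0$ weakly gives $\liminf F(p_s)\ge F(p_0)$. To get the matching upper limit, we would use joint convexity of the KL divergence along the Gaussian convolution, or equivalently the data-processing inequality applied to the Markov kernel $z\mapsto X_s$. This gives $F(p_s)\le F(p_0)+o(1)$, because the kernel moves $\pi_\delta$ by an amount that is $O(s)$ in KL.
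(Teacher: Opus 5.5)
\textbf{Verdict: genuine gap, at $t=0$.}

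\textbf{The $t>0$ part.} This is essentially the paper's argument. It has the forward equation with conditional drift $\mathbb E[B_\delta(Z)\mid X_t=x]$, the tower-property form of the cross term, and the cancellation of the skew term by compatibility and $\langle\nabla V_\delta,J\nabla V_\delta\rangle=0$. The paper does that cancellation by Gaussian integration by parts conditional on $Z$. It justifies differentiating $F(p_t)$ on $[a,b]\subset(0,\eta]$ by the Ambrosio--Gigli--Savar\'e entropy chain rule rather than by cutoffs. The skew term in your rewritten Fokker--Planck equation should carry a plus sign; this is harmless, since it integrates to zero.

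\textbf{Where the argument fails.} The endpoint bound \eqref{eq:entropy:endpoint:bound}, absolute continuity on the closed interval, and $\int_0^\eta I(t)\,dt<\infty$ all require $\limsup_{s\downarrow0}F(p_s)\le F(p_0)$. The argument you propose for this does not work. Data processing for the Euler kernel $K_s(z,\cdot)=\mathcal N(z-sB_\delta(z),2sI)$ gives only $\mathrm{KL}(\nu K_s\Vert\pi_\delta K_s)\le F(\nu)$. You need $\mathrm{KL}(\nu K_s\Vert\pi_\delta)$ instead. Knowing that $\pi_\delta K_s$ is $O(s)$-close to $\pi_\delta$ \emph{in KL} does not connect the two, because relative entropy has no triangle inequality. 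The correction term is $\mathbb E_{p_s}\log\bigl(d(\pi_\delta K_s)/d\pi_\delta\bigr)$. Controlling it needs a pointwise bound on the log density ratio, integrated against $p_s$, not a KL bound. Your plan does not supply one, and it is awkward for state dependent $J$ because $z\mapsto z-sB_\delta(z)$ need not be injective. Joint convexity has the same problem: it compares mixtures over a common mixing law, and $\pi_\delta$ is not such a mixture of kernels close to $K_s(z,\cdot)$. As written, your step of integrating over $[s,t]$ and letting $s\downarrow0$ is therefore circular.

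\textbf{The missing idea.} Compare with a kernel that leaves $\pi_\delta$ exactly invariant. By Lemma~\ref{lemma:pi}, $\pi_\delta=\pi_\delta P_s$ for the diffusion semigroup $P_s$. The chain rule for relative entropy, applied to the joint laws of the pair (initial point, position at time $s$), then gives
$F(p_s)=\mathrm{KL}(\nu K_s\Vert\pi_\delta P_s)\le F(\nu)+\int\nu(dz)\,\mathrm{KL}\bigl(K_s(z,\cdot)\Vert P_s(z,\cdot)\bigr)$,
and a Girsanov argument controls the last term. The paper does this on path space in one step. The path law of the interpolation has relative entropy at most $F(\nu)+\frac14\int_0^t\mathbb E\|B_\delta(X_s)-B_\delta(Z)\|^2\,ds$ with respect to the stationary diffusion law. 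The terminal KL is bounded by the path KL. This gives \eqref{eq:entropy:endpoint:bound} directly for every $t$, and hence the needed upper limit. Lower semicontinuity and your Young-inequality estimate on $[a,b]$ then give $\int_0^\eta I<\infty$ and absolute continuity up to $t=0$.
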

\begin{proof}
Write $B=B_\delta$ and $D(t)=\mathbb E\|B(X_t)-B(Z)\|^2$. Linear growth of $B$ and independence give
$$
\sup_{0\leq t\leq\eta}\mathbb E\|X_t\|^2\leq2\mathbb E\|Z\|^2+2\eta^2\mathbb E\|B(Z)\|^2+2d\eta<\infty,
\qquad
\sup_{0\leq t\leq\eta}D(t)<\infty.
$$
In particular, $\int_0^tD(s)\,ds\to0$ as $t\to0$. Fix $t\in(0,\eta]$. Let $\mathsf P$ be the path law of $X$ on $[0,t]$, and let $\mathsf Q$ be the stationary diffusion law from Lemma~\ref{lemma:pi}. Their initial laws are $\nu$ and $\pi_\delta$, and their drifts are $-B(\omega_0)$ and $-B(\omega_s)$. Local Lipschitz continuity and linear growth of $B$ ensure well posedness of the reference SDE. The energy condition holds under both laws because $\int_0^t\|B(\omega_s)-B(\omega_0)\|^2\,ds\leq4t\sup_{s\leq t}\|B(\omega_s)\|^2<\infty$ for every continuous path. Thus~\cite[Lemma 4.4(i) and Remark 4.5]{lacker2023hierarchies}, with diffusion coefficient $\sqrt2I$, gives
$$
\mathrm{KL}(\mathsf P\Vert\mathsf Q)\leq F(\nu)+\frac12\mathbb E\int_0^t\left\|\frac{B(X_s)-B(Z)}{\sqrt2}\right\|^2\,ds=F(\nu)+\frac14\int_0^tD(s)\,ds.
$$
The terminal laws are $p_t(x)\,dx$ and $\pi_\delta$. The chain rule for relative entropy~\cite[Section 4.3, equation (4.1)]{lacker2023hierarchies} decomposes the path entropy into the terminal entropy and a nonnegative conditional entropy:
$$
\begin{aligned}
\mathrm{KL}(\mathsf P\Vert\mathsf Q)
&=F(p_t)+\int_{\mathbb R^d}\mathrm{KL}\bigl(\mathsf P(\cdot\mid\omega_t=x)\Vert\mathsf Q(\cdot\mid\omega_t=x)\bigr)p_t(x)\,dx\geq F(p_t).
\end{aligned}
$$
Here the conditional laws describe the path given its terminal value. Combining the two inequalities proves~\eqref{eq:entropy:endpoint:bound}. Since $X_t-Z=-tB(Z)+\sqrt2W_t$, independence gives
$$
\mathcal W_2^2(\mathcal L(X_t),\nu)\leq\mathbb E\|X_t-Z\|^2=t^2\mathbb E\|B(Z)\|^2+2dt\rightarrow0.
$$
Lower semicontinuity of relative entropy and~\eqref{eq:entropy:endpoint:bound} yield
$$
F(\nu)\leq\liminf_{t\rightarrow0}F(p_t)\leq\limsup_{t\rightarrow0}F(p_t)\leq F(\nu),
$$
hence $F(p_t)\rightarrow F(p_0)$. For $t>0$, Gaussian smoothing gives a positive smooth density and
$$
\nabla\log p_t(x)=-\frac1{2t}\mathbb E[X_t-Z+tB(Z)\mid X_t=x],
\qquad
\int_{\mathbb R^d}p_t\|\nabla\log p_t\|^2dx\leq\frac d{2t}.
$$
It\^o's formula gives the forward equation
$$
\partial_t p_t(x)=\Delta p_t(x)+\nabla_x\cdot\left(p_t(x)\mathbb E[B(Z)\mid X_t=x]\right)
$$
in the distributional sense. On every $[a,b]\subset(0,\eta]$, the score estimate, the second moment bound, and conditional Jensen's inequality verify the finite Fisher information and kinetic energy hypotheses of the entropy chain rule in~\cite[Theorem 8.3.1, Lemma 8.4.2, Proposition 10.3.18, and Theorem 10.4.13]{ambrosio2005gradient}. Since $D^2V_\delta\succeq-L_\delta I$, it follows that $F(p_\cdot)\in AC([a,b])$ and, for almost every $t\in[a,b]$,
$$
\begin{aligned}
\frac d{dt}F(p_t)
&=-\int_{\mathbb R^d}\nabla\log\frac{p_t(x)}{p_\delta^*(x)}\cdot\left(\nabla p_t(x)+p_t(x)\mathbb E[B(Z)\mid X_t=x]\right)dx\\
&=-I(t)+\mathbb E\left\langle\nabla\log\frac{p_t(X_t)}{p_\delta^*(X_t)},\nabla V_\delta(X_t)-B(Z)\right\rangle.
\end{aligned}
$$
Compatibility and skew symmetry give $\nabla\cdot(J\nabla V_\delta)=0$ almost everywhere and $\langle\nabla V_\delta,J\nabla V_\delta\rangle=0$. The Gaussian score identity and conditional Gaussian integration by parts~\cite[Lemma 2]{stein1981estimation} therefore give
$$
\begin{aligned}
\mathbb E\left\langle\nabla\log\frac{p_t(X_t)}{p_\delta^*(X_t)},J(X_t)\nabla V_\delta(X_t)\right\rangle
&=\mathbb E\langle\nabla\log p_t(X_t),J(X_t)\nabla V_\delta(X_t)\rangle\\
&=-\frac1{2t}\mathbb E\langle X_t-Z+tB(Z),J(X_t)\nabla V_\delta(X_t)\rangle\\
&=-\mathbb E[\nabla\cdot(J\nabla V_\delta)(X_t)]=0.
\end{aligned}
$$
Since $B=\nabla V_\delta+J\nabla V_\delta$, this proves~\eqref{eq:entropy:closed:interval:identity} on $(0,\eta)$. Young's inequality gives, for $0<a<b\leq\eta$,
$$
F(p_b)+\frac12\int_a^bI(t)dt\leq F(p_a)+\frac12\int_a^bD(t)dt.
$$
Letting $a\rightarrow0$ and using $F(p_a)\rightarrow F(p_0)$ gives
$$
\int_0^\eta I(t)dt\leq2F(p_0)+\int_0^\eta D(t)dt<\infty.
$$
Moreover,
$$
\int_0^\eta\left|\mathbb E\left\langle\nabla\log\frac{p_t(X_t)}{p_\delta^*(X_t)},B(X_t)-B(Z)\right\rangle\right|dt\leq\left(\int_0^\eta I(t)dt\right)^{1/2}\left(\int_0^\eta D(t)dt\right)^{1/2}<\infty.
$$
Thus the integrated identity extends to $a=0$, and $F(p_\cdot)\in AC([0,\eta])$. The proof is complete.
\end{proof}

\subsection{A uniform log Sobolev bound for the penalized target}
\begin{lemma}[A uniform log Sobolev bound for $\pi_\delta^\alpha$]
\label{lem:penalty:uniform:lsi}
Under the geometric and smoothness assumptions and the notation $\mathcal C^\alpha$, $S^\alpha$, and $V_\delta^\alpha$ of Lemma~\ref{lemma:ghz:transfer}, let $\beta$ be a positive strong convexity modulus of $h$ when available, and set $\beta=0$ otherwise. Let $\alpha\geq0$ satisfy $\alpha+\beta>0$, and fix $\varrho>0$. Define
\begin{equation}
B_h:=\sup\{\|v\|:x\in\mathcal C,\ v\in\partial h(x)\}<\infty,
\label{eq:ghz:subgradient:bound}
\end{equation}
and
\begin{equation}
\mu_\delta^\alpha:=\frac{2(\alpha+\beta)\varrho}{\delta[B_h+\alpha R+(\alpha+\beta)\varrho]}-L.
\label{eq:ghz:mu:orders}
\end{equation}
If $\mu_\delta^\alpha\geq2$, then $e^{-V_\delta^\alpha}$ is integrable and $\pi_\delta^\alpha$ satisfies the log Sobolev inequality in~\eqref{eq:LSI:convention} with
\begin{equation}
\rho_*\geq e^{-C_*},\qquad C_*:=(R+\varrho)^2\bigl((1+L)+(1+L)^2\bigr).
\end{equation}
In particular, this bound is independent of $\alpha$ and $\delta$ whenever $\mu_\delta^\alpha\geq2$.
\end{lemma}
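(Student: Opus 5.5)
Following the usual route for log Sobolev inequalities of nonconvex perturbations of convex potentials, we combine strong convexity outside a ball with a Holley--Stroock perturbation inside it. Write $V=V_\delta^\alpha=f+S^\alpha/\delta$ and $D=\mathcal C^\alpha$, so that $D\subseteq\mathcal C\subseteq B(0,R)$. The central claim is that $S^\alpha/\delta$ is uniformly convex, with modulus comparable to $\mu_\delta^\alpha+L$, on the region $\{x:\operatorname{dist}(x,D)\geq\varrho\}$ or on its natural analogue. The mechanism is the curvature of $D$. If $\alpha+\beta>0$, the defining function $h_\alpha:=h+\frac\alpha2\|\cdot\|^2$ is $(\alpha+\beta)$ strongly convex, and its subgradients on $D$ are bounded by $B_h+\alpha R$. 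A strongly convex sublevel set of this kind has boundary curvature bounded below by roughly $(\alpha+\beta)/(B_h+\alpha R)$. Equivalently, $D$ is an intersection of balls of radius at most $\mathcal R_D:=(B_h+\alpha R)/(\alpha+\beta)$.

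\textbf{Step 1.} We prove a Hessian lower bound for the distance penalty away from $D$. For $x\notin D$ with $s=\operatorname{dist}(x,D)$, we have $\nabla^2(\operatorname{dist}^2)\succeq 2\frac{s}{s+\mathcal R_D}I$ in the a.e. (Alexandrov) sense. This comes from comparison with the supporting ball of radius $\mathcal R_D$ containing $D$ that touches at the projection $\mathcal P_Dx$: on that ball, $\operatorname{dist}^2$ has tangential curvature $2s/(s+\mathcal R_D)$ and radial curvature $2$. For $s\geq\varrho$ this gives
\[
\nabla^2 V\succeq\Big(\frac{2(\alpha+\beta)\varrho}{\delta[B_h+\alpha R+(\alpha+\beta)\varrho]}-L\Big)I=\mu_\delta^\alpha I\succeq 2I .
\]

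\textbf{Step 2.} We build a globally convex surrogate. Set $\widetilde V:=V$ outside $B(0,R+\varrho)$, and inside that ball replace $V$ by a modification that is $1$ strongly convex with $\sup|V-\widetilde V|$ controlled. A convenient choice is $\widetilde V=V+\frac{1+L}{2}\big((R+\varrho)^2-\|x\|^2\big)_+$, smoothed if needed. On $B(0,R+\varrho)$ this adds curvature $(1+L)$ to $f$ (whose Hessian is at least $-L$) and to the convex $S^\alpha/\delta$, which yields $\nabla^2\widetilde V\succeq I$. I expect the hard part to be gluing at the sphere without losing convexity. If it fails, I would instead invoke a known perturbation result: strong convexity $\kappa\geq 2$ outside a ball of radius $R+\varrho$, together with semiconvexity $-L$ inside, gives LSI with constant at least $e^{-C}$ with $C$ of order $(R+\varrho)^2(L+L^2)$. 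An example is the Ma et al. / Cattiaux--Guillin type lemma, whose constant has exactly the displayed form $(R+\varrho)^2((1+L)+(1+L)^2)$.

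\textbf{Step 3.} Bakry--\'Emery gives $\rho\geq1$ for $\widetilde V$. The oscillation $\sup(V-\widetilde V)-\inf(V-\widetilde V)$ is at most $C_*$, with $C_*$ chosen to absorb both the quadratic bump $(1+L)(R+\varrho)^2$ and the smoothing or gluing error $(1+L)^2(R+\varrho)^2$. Holley--Stroock then gives $\rho_*\geq e^{-C_*}$ in the convention \eqref{eq:LSI:convention}. Integrability of $e^{-V}$ follows from the quadratic growth in Step 1, or directly from Lemma~\ref{lemma:target:moments:dissipative}.

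The main obstacle is Step 1: making the curvature of $\operatorname{dist}(\cdot,D)^2$ rigorous for a nonsmooth convex $D$ defined through a max of convex functions. This is delicate at points where $\mathcal P_Dx$ lies on a corner, or where the ball radius comparison fails because $\partial h$ is multivalued. There I would argue directly with $\nabla S=2(x-\mathcal P_Dx)$ and a monotonicity estimate $\langle\nabla S(x)-\nabla S(y),x-y\rangle\geq 2\frac{\varrho}{\varrho+\mathcal R_D}\|x-y\|^2$ for $x,y$ outside the $\varrho$ neighborhood. This is obtained from the strong convexity inequality $h_\alpha(y)\geq h_\alpha(x)+\langle v,y-x\rangle+\frac{\alpha+\beta}{2}\|y-x\|^2$ applied to the projections, which avoids second derivatives entirely.
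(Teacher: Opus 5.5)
Your Step~2 fails as written, and the difficulty you anticipated at the sphere is not a technicality. First, $\frac{1+L}{2}\bigl((R+\varrho)^2-\|x\|^2\bigr)_+$ has Hessian $-(1+L)I$ on the open ball, so it removes curvature instead of adding it. Second, suppose all you know inside $B(0,R+\varrho)$ is $\nabla^2V\succeq-LI$. Then no modification with $\widetilde V=V$ outside the ball can be $1$ strongly convex. In one dimension, take $V''=-L$ on $(-a,a)$ and $\psi=0$ off $(-a,a)$. Convexity of $V+\psi$ with modulus $1$ needs $\psi''\geq1+L$ on $(-a,a)$. It also needs no downward jump of $(V+\psi)'$ at $\pm a$, i.e.\ $\psi'(-a^+)\geq0\geq\psi'(a^-)$, which contradicts $\psi'(a^-)>\psi'(-a^+)$.

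The quadratic bump has to be tapered off \emph{outside} the ball, in a collar where $V$ is $\mu_\delta^\alpha$ strongly convex and can absorb the taper's negative radial curvature. That taper costs the term $2(R+\varrho)^2(1+L)^2/\mu_\delta^\alpha$, and $\mu_\delta^\alpha\geq2$ is used exactly to bound it by $(1+L)^2(R+\varrho)^2$; it is not a smoothing error. So your fallback is not optional. Once you adopt it, your argument is the paper's:
\begin{itemize}
\item strong convexity of $V_\delta^\alpha$ outside $B(0,R+\varrho)$ plus global $(-L)$ convexity;
\item Lemma D.3 of \cite{gurbuzbalaban2024penalized} with $m=1$, which gives a $1$ strongly convex $U$ with $\sup(U-V_\delta^\alpha)-\inf(U-V_\delta^\alpha)\leq2(R+\varrho)^2\bigl(\frac{1+L}2+\frac{(1+L)^2}{\mu_\delta^\alpha}\bigr)\leq C_*$;
\item Bakry--\'Emery and Holley--Stroock.
\end{itemize}

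Where you genuinely differ is Step~1. The paper cites Lemma D.1 and Corollary D.2 of \cite{gurbuzbalaban2024penalized} for strong convexity outside $B(0,R+\varrho)$. Your projection argument proves it directly, on the larger set $\{\operatorname{dist}(\cdot,\mathcal C^\alpha)\geq\varrho\}$, and it is correct. Let $g=h+\frac\alpha2\|\cdot\|^2$, $p=\mathcal P_{\mathcal C^\alpha}x$, $q=\mathcal P_{\mathcal C^\alpha}y$, with $x,y$ at distance at least $\varrho$ from $\mathcal C^\alpha$.
\begin{itemize}
\item Slater's condition $g(0)=h(0)<0$ gives $x-p=tv$ with $t>0$, $v\in\partial g(p)$, and $0<\|v\|\leq B_h+\alpha R$.
\item Strong convexity of $g$ at $p$, evaluated at $q$, gives $\langle x-p,p-q\rangle\geq\frac{\operatorname{dist}(x,\mathcal C^\alpha)}{2\mathcal R_D}\|p-q\|^2$.
\item Adding the symmetric inequality yields $\|p-q\|\leq\frac{\mathcal R_D}{\mathcal R_D+\varrho}\|x-y\|$, hence your monotonicity bound.
\end{itemize}
After dividing by $\delta$ and subtracting $L$, this reproduces $\mu_\delta^\alpha$ exactly. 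Your route makes the lemma self-contained and explains the form of $\mu_\delta^\alpha$; the citation only buys brevity.

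One minor point: integrability does not follow directly from Lemma~\ref{lemma:target:moments:dissipative}. The hypothesis $\mu_\delta^\alpha\geq2$ does not force $\delta<(L+1/2)^{-1}$ when $L>1$. Use the bounded perturbation of $U$ instead, as the paper does, or the radial quadratic growth from Step~1.
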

\begin{proof}
The compactness of $\mathcal C$ and local boundedness of the subdifferential of the finite convex function $h$ give $B_h<\infty$. Put $g=h+\alpha\|\cdot\|^2/2$. Since $\mathcal C^\alpha\subseteq\mathcal C\subseteq B(0,R)$,
$$
\sup_{x\in\mathcal C^\alpha}\sup_{v\in\partial g(x)}\|v\|\leq B_h+\alpha R.
$$
Applying~\cite[Lemma D.1 and Corollary D.2]{gurbuzbalaban2024penalized} to $\mathcal C^\alpha=\{g\leq0\}$ shows that $V_\delta^\alpha$ is $\mu_\delta^\alpha$ strongly convex outside $B(0,R+\varrho)$ and is globally $(-L)$ convex. If $\mu_\delta^\alpha\geq2$,~\cite[Lemma D.3]{gurbuzbalaban2024penalized}, applied with $m=1$, gives a $1$ strongly convex function $U$ such that
$$
\begin{aligned}
&\sup_{x\in\mathbb R^d}\bigl(U(x)-V_\delta^\alpha(x)\bigr)-\inf_{x\in\mathbb R^d}\bigl(U(x)-V_\delta^\alpha(x)\bigr)\\
&\qquad\qquad\leq2(R+\varrho)^2\left(\frac{1+L}{2}+\frac{(1+L)^2}{\mu_\delta^\alpha}\right)\leq(R+\varrho)^2\bigl((1+L)+(1+L)^2\bigr)=C_*.
\end{aligned}
$$
The bounded perturbation also implies that $e^{-V_\delta^\alpha}$ is integrable. The Bakry-\'Emery criterion gives a log Sobolev constant at least $1$ for the Gibbs law with potential $U$. The Holley-Stroock bounded perturbation argument used in \cite[Supplementary Information, Appendix B.1,
Lemma~3, equation~(10)]{ma2019sampling} then gives
$$
\rho_*\geq\exp\left\{-\sup_{x\in\mathbb R^d}\bigl(U(x)-V_\delta^\alpha(x)\bigr)+\inf_{x\in\mathbb R^d}\bigl(U(x)-V_\delta^\alpha(x)\bigr)\right\}\geq e^{-C_*}.
$$
The proof is complete.
\end{proof}


\clearpage
\begingroup \small
\renewcommand{\arraystretch}{1.55} \setlength{\tabcolsep}{3pt}
\begin{longtable}{p{0.15\textwidth}p{0.67\textwidth}>{\raggedleft\arraybackslash}p{0.13\textwidth}}
\emph{Symbol} & \emph{Definition} & \emph{Source}\\
\hline \endfirsthead
\emph{Symbol} & \emph{Definition} & \emph{Source}\\
\hline \endhead \hline \endfoot $M_J$ & $\displaystyle\sup_{x\in\mathbb R^d}\|J(x)\|_{\mathrm{op}}\leq M_J$ &
\eqref{eq:non-convex:J:bounds}\\
$\rho_*$ & $\mathrm{KL}(q\Vert\pi_\delta)\leq(2\rho_*)^{-1} \int\|\nabla\log(q/p_\delta^*)\|^2q$ &
\eqref{eq:LSI:convention}\\
$F_0$ & $F_0:=\mathrm{KL}(p_0\Vert\pi_\delta)$ &
\eqref{eq:init:entropy}\\
$m_\delta,b_\delta,L_\delta$ & $\displaystyle m_\delta:=-L-\frac12+\frac{m_S}{\delta},\quad b_\delta:=\frac12\|\nabla f(0)\|^2+\frac{b_S}{\delta},\quad L_\delta:=L+\frac\ell\delta$ &
\eqref{eq:diss:const}\\
$g_\delta$ & $g_\delta:=\|\nabla V_\delta(0)\|$ &
\eqref{eq:moment:quad:constants}\\
$c_\delta$ & $\displaystyle c_\delta:=|V_\delta(0)|+2b_\delta +\frac{L_\delta b_\delta}{m_\delta} +\frac{g_\delta^2}{2m_\delta}$ &
\eqref{eq:moment:quad:constants}\\
$U_\delta$ & $U_\delta:=1+c_\delta+V_\delta$ &
\eqref{eq:init:potential:moment}\\
$M_{2,\delta}$ & $\displaystyle M_{2,\delta}:=\frac{d+b_\delta}{m_\delta}$ &
\eqref{eq:moment:target:constants}\\
$M_{4,\delta}$ & $\displaystyle M_{4,\delta}:=\frac{(d+b_\delta)(d+2+b_\delta)}{m_\delta^2}$ &
\eqref{eq:moment:target:constants}\\
$c_{U,1}$ & $\displaystyle c_{U,1}:=\min\left\{1,\frac{m_\delta}{4}\right\}$ &
\eqref{eq:cU1:explicit}\\
$c_{U,2}$ & $\displaystyle c_{U,2}:=\max\left\{ 1+|V_\delta(0)|+c_\delta+\frac{g_\delta^2}{2}, \frac{L_\delta+1}{2}\right\}$ &
\eqref{eq:cU2:explicit}\\
$q_{U,\delta}$ & $\displaystyle q_{U,\delta}:=\frac{m_\delta^2}{c_{U,2}}$ &
\eqref{eq:moment:euler:qr:definition}\\
$r_{U,\delta}$ & $r_{U,\delta}:=m_\delta^2+2m_\delta b_\delta$ &
\eqref{eq:moment:euler:qr:definition}\\
$C_{U,\delta}^{\rm E}$ & $\displaystyle C_{U,\delta}^{\rm E}:=\max\left\{ \mathbb E_{\nu_0}U_\delta(X_0), \frac{r_{U,\delta}+2L_\delta d}{q_{U,\delta}}\right\}$ &
\eqref{eq:moment:euler:first:uniform:constant}\\
$C_x$ & $\displaystyle C_x:=\frac{C_{U,\delta}^{\rm E}}{c_{U,1}}$ &
\eqref{eq:moment:euler:second:constants}\\
$C_{\nabla,2}$ & $C_{\nabla,2}:=2L_\delta C_{U,\delta}^{\rm E}$ &
\eqref{eq:moment:euler:second:constants}\\
$R_{U,\delta}$ & $\displaystyle R_{U,\delta}:= \frac{(r_{U,\delta}+2L_\delta d+8L_\delta)^2}{2q_{U,\delta}} +16L_\delta^2d(d+2)$ &
\eqref{eq:moment:euler:lyapunov:drift}\\
$C_{U,2,\delta}^{\rm E}$ & $\displaystyle C_{U,2,\delta}^{\rm E}:=\max\left\{ \mathbb E_{\nu_0}U_\delta(X_0)^2, \frac{4R_{U,\delta}}{q_{U,\delta}}\right\}$ &
\eqref{eq:moment:euler:U:fourth:uniform}\\
$C_{4,\delta}^{\rm E}$ & $C_{4,\delta}^{\rm E}:=c_{U,1}^{-2}C_{U,2,\delta}^{\rm E}$ &
\eqref{eq:moment:euler:fourth:state}\\
$C_{\nabla,4}^{\rm E}$ & $C_{\nabla,4}^{\rm E}:=4L_\delta^2C_{U,2,\delta}^{\rm E}$ &
\eqref{eq:moment:euler:fourth:constants}\\
$C_x^{\mathrm{sg}}$ & $\displaystyle\begin{aligned}
C_x^{\mathrm{sg}}:=\frac1{c_{U,1}}\max\Bigg\{&\mathbb E_{\nu_0}U_\delta(x_0),\,\frac4{q_{U,\delta}}\Bigl[\frac{r_{U,\delta}}2+L_\delta d+L_\delta(1+M_J^2)\sigma^2\|\nabla f(0)\|^2\Bigr]\Bigg\}
\end{aligned}$ &
\eqref{eq:Cx:sg:constant}\\
$\kappa(P_J)$ & $\displaystyle\kappa(P_J):=\frac{\lambda_{\max}(P_J)}{\lambda_{\min}(P_J)}$ &
\eqref{eq:PJ:condition:number}\\
$\beta_J$ & $\displaystyle\beta_J:=\min_{1\leq i\leq d}\mathrm{Re}\lambda_i((I+J(x_*))H_*)$ &
\eqref{eq:beta:J}\\
$\alpha_J$ & $0<\alpha_J<\beta_J$ &
\eqref{eq:PJ:linear:contraction}\\
$\alpha_{P,J},L_{P,J}$ & $\begin{aligned} \langle x-y,B_J(x)-B_J(y)\rangle_{P_J}
&\geq\alpha_{P,J}\|x-y\|_{P_J}^2,\\
\|B_J(x)-B_J(y)\|_{P_J} &\leq L_{P,J}\|x-y\|_{P_J}
\end{aligned}$ &
\eqref{eq:strong:full:drift}\\
$L_{P,J}$ & $\displaystyle L_{P,J}:=\sqrt{\kappa(P_J)} (1+\|J\|_{\mathrm{op}})L_\delta,\quad J(x)\equiv J$ &
\eqref{eq:PJ:lip}\\
$G_2$ & $G_2:=\left(\mathbb E\|\nabla V_{\delta}(Y_0)\|^2\right)^{1/2}$ &
\eqref{eq:strong:local:error:constant}\\
$C_{\mathcal R}^{\rm const}$ & $\displaystyle C_{\mathcal R}^{\rm const}:=(1+M_J)L_{\delta} \left[\frac12(1+M_J)G_2+\frac23\sqrt{2d}\right]$ &
\eqref{eq:strong:local:error:constant}\\
$G_4$ & $G_4:=\left(\mathbb E\|\nabla V_{\delta}(Y_0)\|^4\right)^{1/4}$ &
\eqref{eq:strong:local:error:sd:constant}\\
$C_{\mathcal R}^{\rm sd}$ & $\displaystyle C_{\mathcal R}^{\rm sd}:=C_{\mathcal R}^{\rm const} +L_JG_4\left[\frac12(1+M_J)G_4 +\frac{2\sqrt2}{3}[d(d+2)]^{1/4}\right]$ &
\eqref{eq:strong:local:error:sd:constant}\\
$C_{\mathcal R}$ & $\displaystyle C_{\mathcal R}:=
\begin{cases}
C_{\mathcal R}^{\rm const},&J(x)\equiv J,\\
C_{\mathcal R}^{\rm sd},&J(x)\not\equiv J
\end{cases}$ &
\eqref{eq:local:error:choice}\\
$C_{\mathrm{sg},\delta}$ & $\displaystyle C_{\mathrm{sg},\delta}:= 2L_\delta L^2(1+M_J^2)\sigma^2\|x_*\|^2 +L_\delta(1+M_J^2)\sigma^2\|\nabla f(0)\|^2$ &
\eqref{eq:strong:sg:constant}\\
$\sigma_V^2$ & $\displaystyle\begin{aligned} \sigma_V^2:=\frac{2\sigma^2}{d}\Bigg[L^2\Bigg(
&\frac4\mu\mathbb E_{\nu_0}[V_\delta(x_0)-V_\delta(x_*)]\\
&+\frac{4(\eta C_{\mathrm{sg},\delta}+L_\delta d)}{\mu^2} +2(1+c)^2R^2\Bigg) +\|\nabla f(0)\|^2\Bigg]
\end{aligned}$ &
\eqref{eq:sigma:V}\\
$\sigma_{P,J}$ & $\sigma_{P,J}:=\sqrt{\lambda_{\max}(P_J)}(1+M_J)\sigma_V$ &
\eqref{eq:strong:coupling:sigma}\\
$\eta_{\mathrm{const}}(\ep)$ & $\displaystyle\eta_{\mathrm{const}}(\ep):= 1\wedge\frac1{L_\delta(1+M_J)^2} \wedge\frac{\rho_*\ep^2}{4d(1+M_J)^2L_\delta^2} \wedge\left[ \frac{\rho_*\ep^2}{2(1+M_J)^4L_\delta^2C_{\nabla,2}} \right]^{1/2}$ &
\eqref{eq:stepsize:const:explicit}\\
$\eta_{\mathrm{sd}}(\ep)$ & $\begin{aligned}
\eta_{\mathrm{sd}}(\ep)&:=1\wedge\frac1{L_\delta(1+M_J)^2}\wedge\frac{q_{U,\delta}}{64L_\delta^2}\wedge\left[\frac{q_{U,\delta}}{128L_\delta^4(1+M_J)^4}\right]^{1/3}\\
&\qquad\wedge\frac{\rho_*\ep^2}{12d(1+M_J)^2L_\delta^2+24dL_J^2C_{\nabla,2}}\wedge\left[\frac{\rho_*\ep^2}{96L_J^2L_\delta^2(1+M_J)^4C_{\nabla,4}^{\rm E}}\right]^{1/4}\\
&\qquad\wedge\left(\frac{\rho_*\ep^2}{6}\right)^{1/2}\Bigl[(1+M_J)^4L_\delta^2C_{\nabla,2}+64L_J^2L_\delta^2d(d+2)+2L_J^2(1+M_J)^2C_{\nabla,4}^{\rm E}\Bigr]^{-1/2}
\end{aligned}$ &
\eqref{eq:stepsize:sd:explicit}\\
$B_h$ & $B_h:=\sup\{\|y\|:x\in\mathcal C,\ y\in\partial h(x)\}$ &
\eqref{eq:ghz:subgradient:bound}\\
$\mu_\delta^\alpha$ & $\displaystyle\mu_\delta^\alpha:= \frac{2(\alpha+\beta)\varrho} {\delta[B_h+\alpha R+(\alpha+\beta)\varrho]}-L$ &
\eqref{eq:ghz:mu:orders}\\
\end{longtable}
\endgroup
\end{document}